\begin{document}

\title{Online Learning Demands in Max-min Fairness}

\newcommand{\cmusymbol}{{\text{\incmtt{1}}}}
\newcommand{\ricesymbol}{{\text{\incmtt{2}}}}
% \newcommand{\ricesymbol}{\flat}

% \author{\name Kirthevasan Kandasamy$\,^\cmusymbol$  \email kandasamy@cs.cmu.edu \\
\author{\name Kirthevasan Kandasamy \email kandasamy@eecs.berkeley.edu \\
%        \addr 
%        University of California, Berkeley,
%        Berkeley, CA 94709, USA
%        \AND
       \name Gur-Eyal Sela  \email ges@eecs.berkeley.edu \\
%        \AND
       \name Joseph E Gonzalez \email jegonzal@eecs.berkeley.edu \\
%        \AND
       \name Michael I Jordan \email jordan@cs.berkeley.edu \\
%        \AND
       \name Ion Stoica \email istoica@cs.berkeley.edu  \\
       \addr 
       RISELab, University of California, Berkeley,
       CA 94723, USA \\
      }

\editor{-}

\maketitle

\algnewcommand\AlgFunction{\item[\textbf{Function}]}
\algnewcommand\AlgClass{\item[\textbf{Class}]}
\newcommand{\AlgReturn}{\textbf{return} }
\newcommand{\AlgAttribute}{\textbf{attribute}}
\newcommand{\AlgAttributes}{\textbf{attributes}\; }

\algnewcommand\algorithmicmethod{\textbf{method}}
\algdef{SE}[METHOD]{Method}{EndMethod}%
   [2]{\algorithmicmethod\ \textproc{#1}\ifthenelse{\equal{#2}{}}{}{(#2)}}%
   {\algorithmicend\ \algorithmicmethod}%

\algnewcommand\algorithmicprivmethod{\textbf{private-method}}
\algdef{SE}[METHOD]{PrivMethod}{EndPrivMethod}%
   [2]{\algorithmicprivmethod\ \textproc{#1}\ifthenelse{\equal{#2}{}}{}{(#2)}}%
   {\algorithmicend\ \algorithmicprivmethod}%

\algnewcommand{\IfThenElse}[3]{% \IfThenElse{<if>}{<then>}{<else>}
  \State \algorithmicif\ #1,\ \; \algorithmicthen\ #2,\  \; \algorithmicelse\ #3}

\algtext*{EndMethod} % Remove End method text
\algtext*{EndPrivMethod} % Remove End method text
\algtext*{EndIf} % Remove End method text
\algtext*{EndFor} % Remove End method text
\algtext*{EndWhile} % Remove End method text

\newcommand{\insertDefnSep}{\vspace{0.25in}}
\newcommand{\insertDefnSmallSep}{\vspace{0.10in}}

\newcommand{\insertAlgoMMF}{
% \newcommand{\mmfwrapfigwidthwidth}{2.60in}
% \begin{wrapfigure}{H}{\mmfwrapfigwidthwidth}
% \begin{minipage}{\mmfwrapfigwidthwidth}
% \vspace{-.35in}
\begin{algorithm}[t]
\vspace{0.02in}

\begin{algorithmic}[1]
\Require entitlements $\{\entitli\}_{i=1}^n$, reported demands $\{\demi\}_{i=1}^n$.
\State $r\leftarrow 1, \quad \entitl\leftarrow 1,
          \quad S\leftarrow \{1,\dots, n\}, \quad \alloc\leftarrow \zerov_n$.
        \label{lin:reinit}
% \For {$j$ in ascending order of $\{\demi/\entitli\}_{i=1}^n$}
% \For {$j$ in ascending order of $\{\frac{\demi}{\entitli}\}_{i=1}^n$}
\For {$j$ in ascending order of $\frac{\demi}{\entitli}$}
    \If{$\demj < r \frac{\entitlj}{\entitl}$}
        \label{lin:mmfifcondn}
        \State $\allocj \leftarrow \demj$.
%         \Comment{Allocation for agent $j$}
        \label{lin:allocdemand}
%         \State $S\leftarrow S \backslash \{j\}$,
%         \State
%         $r\leftarrow r - \demi, \quad \entitl\leftarrow \entitl-\entitli$.
        \State
        $S\leftarrow S \backslash \{j\},
            \quad r\leftarrow r - \demi, \quad \entitl\leftarrow \entitl-\entitli$.
        \label{lin:reupdate}
    \Else
        \Comment{Allocate proportionally to all remaining agents and exit.}
%         \State $\allock \leftarrow r\frac{\entitlk}{\entitl}$ for all remaining agents $k$.
%         \State For all remaining agents $k$, \newline
%            \vphantom{T}\hspace{0.6in} $\allock \leftarrow r\frac{\entitlk}{\entitl}$.
        \State $\allock \leftarrow r\frac{\entitlk}{\entitl}$ for all $k\in S$.
        \label{lin:allocsaturate}
        \State \textbf{Break}.
    \EndIf
\EndFor \\
\Return $\alloc$
\end{algorithmic}
\caption{$\;$\mmf \label{alg:mmf}}
\end{algorithm}
% \end{minipage}
% \vspace{-0.2in}
% \end{wrapfigure}
}

\newcommand{\insertAlgoMMFLearnSP}{
\begin{algorithm}[t]
\vspace{0.02in}

\begin{algorithmic}[1]
% \Require entitlements $\{\entitli\}_{i}$, function $r'$,
%         function \explphase, user class definition \userclass.
\Require entitlements $\{\entitli\}_{i}$, definitions for the function $r'$, 
        the function \explphase, and the user class \userclass.
\State \uci $\leftarrow$ \userclass.\ucinit\callemptys for all users $i$.
\Comment{Instantiate \userclasss for all users}
% \For {$j$ in ascending order of $\{\demi/\entitli\}_{i=1}^n$}
% \For {$j$ in ascending order of $\{\frac{\demi}{\entitli}\}_{i=1}^n$}
\For{$q = 1, 2, \dots$}
%     \State \parbox[t]{\dimexpr\textwidth-\leftmargin-\labelsep-\labelwidth}{%
%             Execute exploration phase and compute upper bounds $\{\udubi\}_{i\in[n]}$
%             on the unit demand for each agent.
%         \strut}
    \State \explphase($\{\uci\}_i$) \Comment{Execute exploration phase}
        \label{lin:spudiub}
    \State $\udubi\leftarrow$ \uci.\ucgetudub\callemptys \, for each user $i$.
            \Comment{Compute upper bound for unit demand}
    \For{$r=1,\dots,r'(q)$},
        \State $\{\voli\}_{i}\leftarrow$ obtain loads from agents.
        \State $\alloc\leftarrow\text{\mmf}(\{\entitli\}_{i}, \{\voli\times\udubi\}_{i})$.
        \Comment{\mmf{} from Algorithm~\ref{alg:mmf}}
        \label{lin:spallocub}
        \State  Allocate according to $\alloc$.
     \EndFor
\EndFor
\end{algorithmic}
\caption{$\;$\mmflearnsp \label{alg:mmflearnsp}}
% \caption{\label{alg:mmflearnsp}}
\end{algorithm}
}

\newcommand{\insertAlgoMMFLearnNSP}{
\begin{algorithm}[t]
\vspace{0.02in}

\begin{algorithmic}[1]
\Require entitlements $\{\entitli\}_{i}$, user class definition \userclass.
\State \uci $\leftarrow$ \userclass.\ucinit\callempty for all users $i$.
\Comment{Instantiate  for all users}
\State Allocate $\entitli$ to each user and collect rewards and SGCs $\{(X_i,
        \sigmai)\}_{i}$ from all users.
                 \Comment{Round 1}
\State \uci.\ucrecordfb($\alloci/\voli$, $X_i$, $\sigmai$) for each user $i$.
% \For {$j$ in ascending order of $\{\demi/\entitli\}_{i=1}^n$}
% \For {$j$ in ascending order of $\{\frac{\demi}{\entitli}\}_{i=1}^n$}
\For{$t=2,3,\dots$},
        \State \parbox[t]{\dimexpr\textwidth-\leftmargin-\labelsep-\labelwidth}{%
               $\udi\leftarrow$ \uci.\ucgetudrec\callempty \, for each user $i$.
            \Comment{Obtain recommendation for unit demand}
                \strut}
        \label{lin:nspudirec}
        \State $\{\voli\}_{i}\leftarrow$ obtain loads from agents.
        \State $\alloc\leftarrow\text{\mmf}(\{\entitli\}_{i}, \{\voli\times\udi\}_{i})$.
        \Comment{\mmf{} from Algorithm~\ref{alg:mmf}}
        \label{lin:nspalloc}
        \State \parbox[t]{\dimexpr\textwidth-\leftmargin-\labelsep-\labelwidth}{%
                Allocate according to $\alloc$ and collect rewards and SGCs
                $\{(X_i, \sigmai)\}_{i}$ from all users.
        \strut}
        \State \uci.\ucrecordfb($\alloci/\voli$, $X_i$, $\sigmai$) for each user $i$.
\EndFor
\end{algorithmic}
\caption{$\;$\mmflearnnsp \label{alg:mmflearnnsp}}
% \caption{\label{alg:mmflearnnsp}}
\end{algorithm}
}

\newcommand{\insertBothMMFLearnAlgos}{
\begin{table}
\begin{minipage}[t]{2.63in}
  \vspace{0pt}
  \insertAlgoMMFLearnSP
\end{minipage}%
  \hspace{0.05in}
\begin{minipage}[t]{2.8in}
  \vspace{0pt}
  \insertAlgoMMFLearnNSP
\end{minipage}
  \vspace{-0.1in}
\end{table}
}

\newcommand{\insertTableResultsSummary}{
\newcommand{\insertbetweentablegaps}{\multicolumn{4}{l}{} \\[-0.05in]}
\begin{table}[t]
\centering
\begin{tabular}{l|c|c|c}
\toprule
 &\begin{tabular}{@{}c@{}} Efficiency \\ $\LOTT \in \bigOtilde(\textrm{?}) $\end{tabular} &
  \begin{tabular}{@{}c@{}} Fairness \\ $\UeiiT - \UiT \in \bigOtilde(\textrm{?}) $\end{tabular} &
  \begin{tabular}{@{}c@{}} Strategy-proofness \\ $\UpiiT - \UiT \in \bigOtilde(\textrm{?}) $\end{tabular} 
    \\
\midrule
\insertbetweentablegaps
\multicolumn{4}{l}{1. Deterministic Feedback} \\
\midrule
 \begin{tabular}{@{}c@{}} Algorithms~\ref{alg:mmflearnsp} \&~\ref{alg:detfbspmodel}
    \\ Theorem~\ref{thm:detfbsp}\end{tabular} &
 $n^{\nicefrac{3}{2}}\sqrtT$ & 
 $\sqrtn\sqrtT$ &
  $\leq 0$
 \\
\midrule
 \begin{tabular}{@{}c@{}} Algorithms~\ref{alg:mmflearnsp} \&~\ref{alg:detfbmodel}
    \\ Theorem~\ref{thm:detsp}\end{tabular} &
    $n\log(T)$ &
    $n\log(T)$ &
    $1$
 \\
\midrule
 \begin{tabular}{@{}c@{}} Algorithms~\ref{alg:mmflearnnsp} \&~\ref{alg:detfbmodel}
    \\ Theorem~\ref{thm:detnsp}\end{tabular} &
    $n$ &
    $1$ &
     --
 \\
\midrule
\insertbetweentablegaps
\multicolumn{4}{l}{2. Stochastic Feedback with Parametric Payoffs (bounds hold w/p $\geq 1-\delta$)} \\
\midrule
 \begin{tabular}{@{}c@{}} Algorithms~\ref{alg:mmflearnsp} \&~\ref{alg:glmfbmodel}
    \\ Theorem~\ref{thm:glmsp}\end{tabular} &
%  $n\sqrt{\log\left(\frac{nT}{\delta}\right)}T^{\nicefrac{2}{3}}$ &
 $n T^{\nicefrac{2}{3}}$ &
 $\leq \; 0$  &
 $\leq \; 0$ 
 \\
\midrule
 \begin{tabular}{@{}c@{}} Algorithms~\ref{alg:mmflearnnsp} \&~\ref{alg:glmfbmodel}
    \\ Theorem~\ref{thm:glmnsp}\end{tabular} &
%  $n\sqrt{\log\left(\frac{nT}{\delta}\right)}T^{\nicefrac{1}{2}}$ &
 $nT^{\nicefrac{1}{2}}$ &
 $\leq \; 0$  &
%   $nT^{\nicefrac{1}{2}}\log(nT/\delta) \;\,$ BNIC
%  \begin{tabular}{@{}c@{}} $n\sqrt{\log\left(\frac{nT}{\delta}\right)}T^{\nicefrac{1}{2}}$ 
%     \\ BNIC\end{tabular} 
 $n T^{\nicefrac{1}{2}}$,\quad BNIC
 \\
\midrule
\insertbetweentablegaps
\multicolumn{4}{l}{3. Stochastic Feedback with Nonparametric Payoffs
    (bounds hold w/p $\geq 1-\delta$)} \\
\midrule
 \begin{tabular}{@{}c@{}} Algorithms~\ref{alg:mmflearnsp}
    \&~\ref{alg:nonparfbmodelpart1}-\ref{alg:nonparfbmodelpart4}
    \\ Theorem~\ref{thm:nonparsp}\end{tabular} &
%     $\frac{1}{G^{\nicefrac{3}{2}}}n^{\nicefrac{4}{3}}\sqrt{\log\left(\frac{nT}{\delta}\right)}T^{\nicefrac{2}{3}}$
% &
%     $\frac{1}{G^{\nicefrac{3}{2}}}n\sqrt{\log\left(\frac{nT}{\delta}\right)}T^{\nicefrac{2}{3}}$
% &
%     $\frac{1}{G^{\nicefrac{3}{2}}}\sqrt{\log\left(\frac{nT}{\delta}\right)}T^{\nicefrac{2}{3}}$
    \quad$G^{\nicefrac{-3}{2}}\,n^{\nicefrac{4}{3}}T^{\nicefrac{2}{3}}$\quad\quad
&
    $\nonth T^{\nicefrac{2}{3}}$
&
    $\nmtwth T^{\nicefrac{2}{3}}$
%  $nT^{\nicefrac{2}{3}}\log(nT/\delta)$ &
%  $\leq \; 0$  &
%  $\leq \; 0$ 
 \\
\midrule
 \begin{tabular}{@{}c@{}} Algorithms~\ref{alg:mmflearnnsp}
\&~\ref{alg:nonparfbmodelpart1}-\ref{alg:nonparfbmodelpart4}
    \\ Theorem~\ref{thm:nonparnsp}\end{tabular} &
    $G^{\nicefrac{-3}{2}}\,nT^{\nicefrac{1}{2}}$
%     $\frac{1}{G^{\nicefrac{3}{2}}}n\sqrt{\log\left(\frac{nT}{\delta}\right)}T^{\nicefrac{1}{2}}$
%  \, \begin{tabular}{@{}c@{}} $ \frac{1}{G^3\epsG^3}\,n\log\left(\frac{nT}{\delta}\right) \;+ $
%     \\ $\frac{1}{G^{\nicefrac{3}{2}}}n\sqrt{\log\left(\frac{nT}{\delta}\right)}T^{\nicefrac{1}{2}}$ \end{tabular} 
&
    $G^{\nicefrac{-3}{2}}\,T^{\nicefrac{1}{2}}$
%     $\frac{1}{G^{\nicefrac{3}{2}}}\sqrt{\log\left(\frac{nT}{\delta}\right)}T^{\nicefrac{1}{2}}$
%  \begin{tabular}{@{}c@{}} $ \frac{1}{G^3\epsG^3}\,\log\left(\frac{nT}{\delta}\right) \;+ $
%     \\ $\frac{1}{G^{\nicefrac{3}{2}}}\sqrt{\log\left(\frac{nT}{\delta}\right)}T^{\nicefrac{1}{2}}$ \end{tabular}
     &
     --
 \\
\bottomrule
\end{tabular}
\caption{%
Summary of asymptotic rates for efficiency, fairness, and strategy-proofness
for the three feedback models when using different mechanisms,
where we have hidden any logarithmic dependence on $T$ when there are polynomial terms.
BNIC indicates that the mechanism is (asymptotically) Bayes-Nash incentive-compatible.
When the bounds hold non-asymptotically, we write 
`$\leq 0$', e.g. $\UeiiT-\UiT\leq 0$ in both mechanisms for the second model.
In the third model, the bounds hold for any $G>\ntg$ (Definition~\ref{def:ntg}).
% and $\epsG$ is as defined in Definition~\ref{def:ntg}.
\label{tb:resultssummary}
}
\end{table}
}

\newcommand{\insertAlgoDetFbSPModel}{
\begin{algorithm}[t]
\vspace{0.02in}

\begin{algorithmic}[1]
\AlgFunction $r'(q)$ \\
    \Return $nq$

\insertDefnSep

\AlgFunction \explphase($\{\uci\}_i$)
\For{$i=1,\dots,n$}
    \State $\voli\leftarrow$ obtain load from user $i$.
    \State $h\leftarrow \lceil \log_2(q+1)\rceil$, \quad
           $k\leftarrow 2q - 2^h + 1$.
    \State $\udi \leftarrow \udmax k /2^h$.
    \State Allocate $\udi\times\voli$ to user $i$ only and
            obtain feedback $X_i$.
    \State \uci.\ucrecordfb($\udi$, $X_i$).
\EndFor

\insertDefnSep

\AlgClass \userclass \\
\AlgAttributes $\udub$ \Comment{Upper bound on the unit demand}
\insertDefnSmallSep
\Method{\ucinit\callempty}{}
\State $\ucself.\udub\leftarrow \udmax$.
\EndMethod

\insertDefnSmallSep

\Method{\ucgetudub\callempty}{}
\Comment{Main interface for Algorithm~\ref{alg:mmflearnsp}}
\State \AlgReturn $\ucself.\udub$
\EndMethod

\insertDefnSmallSep

\Method{\ucrecordfb($\normalloc$, $X_i$)}{}
\Comment{SGCs are not relevant for deterministic feedback}
\label{lin:ucrecordfbssp}
\If{$X_i > \threshi$}
\State $\ucself.\udub \leftarrow \min(\ucself.\udub,  \;\normalloc)$
\label{lin:detfbspupdate}
\EndIf
\EndMethod

\end{algorithmic}
\caption{$\;$Definitions for the Deterministic Feedback Model with  Strategy-Proofness%
\label{alg:detfbspmodel}}
% \caption{\label{alg:mmflearnnsp}}
\end{algorithm}
}

\newcommand{\insertAlgoDetFbModel}{
\begin{algorithm}[t]
\vspace{0.02in}

\begin{algorithmic}[1]
\AlgFunction $r'(q)$ \\
    \Return $\lfloor e^q \rfloor$

\insertDefnSep

\AlgFunction \explphase($\{\uci\}_i$)
\For{$j=1,\dots,2n$}
    \State $i\leftarrow \; j\mod n$
    \State $\voli\leftarrow$ obtain load from user $i$.
    \State $\udi\leftarrow$ \uci.\ucgetudrecforub().
    \State Allocate $\udi\times\voli$ to user $i$ only and obtain feedback $X_i$.
    \State \uci.\ucrecordfb($\udi$, $X_i$)
\EndFor

\insertDefnSep

\AlgClass \userclass \\
\AlgAttributes $\udub, \udlb$ \Comment{Upper and lower bounds on unit demand}
\insertDefnSmallSep
\Method{\ucinit\callempty}{}
\State $\ucself.\udlb\leftarrow 0$, \;$\ucself.\udub\leftarrow \udmax$.
\EndMethod

\insertDefnSmallSep

\Method{\ucgetudub\callempty}{}
\Comment{Main interface for Algorithm~\ref{alg:mmflearnsp}}
\State \AlgReturn $\ucself.\udub$
\EndMethod

\insertDefnSmallSep

\Method{\ucgetudrec\callempty}{}
\Comment{Main interface for Algorithm~\ref{alg:mmflearnnsp}}
\State \AlgReturn $\frac{1}{2}\left(\ucself.\udlb + \ucself.\udub \right)$
\EndMethod

\insertDefnSmallSep

\Method{\ucrecordfb($\normalloc$, $X_i$)}{}
\Comment{SGCs are not relevant for deterministic feedback}
% \label{lin:ucrecordfb}
\label{lin:ucrecordfbsp}
\If{$X_i < \threshi$}
\State $\ucself.\udlb \leftarrow \max(\ucself.\udlb, \; \normalloc)$
\Else
\State $\ucself.\udub \leftarrow \min(\ucself.\udub,  \;\normalloc)$
\EndIf
\EndMethod

\end{algorithmic}
\caption{$\;$Definitions for the Deterministic Feedback Model \label{alg:detfbmodel}}
% \caption{\label{alg:mmflearnnsp}}
\end{algorithm}
}

\newcommand{\insertAlgoGLMFbModel}{
\begin{algorithm}[t]
\vspace{0.02in}

\begin{algorithmic}[1]
\AlgFunction $r'(q)$ \\
    \Return $\lfloor 5q^{\nicefrac{1}{2}} / 6 \rfloor$

\insertDefnSep

\AlgFunction \explphase($\{\uci\}_i$)
\State $\{\voli\}_{i}\leftarrow$ obtain loads from agents.
\State  Allocate $\entitli$ to each user $i$  and  obtain feedback $X_i$.
\State \uci.\ucrecordfb($\entitli/\voli$, $X_i$, $\sigmai$) for each user $i$.

\insertDefnSep

\AlgClass \userclass \\
\AlgAttributes $\Dcal, \udub$ \Comment{Past data $(\Dcal)$ and upper bound on unit demand $(udub)$}
\insertDefnSmallSep
\Method{\ucinit\callempty}{}
\State $\ucself.\udub\leftarrow \udmax$.
\EndMethod

\insertDefnSmallSep

\Method{\ucgetudub\callempty}{}
\Comment{Main interface for Algorithm~\ref{alg:mmflearnsp}}
\State \AlgReturn $\ucself.\udub$
\EndMethod

\insertDefnSmallSep

\Method{\ucgetudrec\callempty}{}
\Comment{Main interface for Algorithm~\ref{alg:mmflearnnsp}}
\State \AlgReturn $\ucself.\udub$
\EndMethod

\insertDefnSmallSep

\Method{\ucrecordfb($\normalloc$, $X_i$, $\sigmai$)}{}
\State Add $(\normalloci, X_i, \sigmai)$ to $\ucself.\Dcal$.
\State $\ucself.\udub \leftarrow$ Compute upper bound as described in~\eqref{eqn:thetaitdefn}
and~\eqref{eqn:udglmconfint}.
\EndMethod

\end{algorithmic}
\caption{$\;$Definitions for the Stochastic Feedback Model with Parametric
Payoffs \label{alg:glmfbmodel}}
% \caption{\label{alg:mmflearnnsp}}
\end{algorithm}
}

\newcommand{\insertAlgoNonparFbModelOne}{
\begin{algorithm}[t]
\vspace{0.02in}

\begin{algorithmic}[1]
\AlgFunction $r'(q)$ \\
\label{lin:nonparrpq}
    \Return $\lfloor 5nq^{\nicefrac{1}{2}} / 6 \rfloor$

\insertDefnSep

\AlgFunction \explphase($\{\uci\}_i$)
\label{lin:nonparexplphase}
\For{$i=1,\dots,n$}
%     \State $i\leftarrow \; j\mod n$
    \State $\voli\leftarrow$ obtain load from user $i$.
    \State $\udi\leftarrow$ \uci.\ucgetudrecforub().
    \State Allocate $\udi\times\voli$ to user $i$ only and obtain feedback $X_i$ and SGC
                $\sigmai$.
    \State \uci.\ucrecordfb($\udi$, $X_i$, $\sigmai$)
\EndFor
% \State $\{\voli\}_{i}\leftarrow$ obtain loads from agents.
% \State  Allocate $\entitli$ to each user $i$  and  obtain feedback $X_i$.
% \State \uci.\ucrecordfb($\entitli/\voli$, $X_i$, $\sigmai$) for each user $i$.

\insertDefnSep

\AlgClass \userclass \\
\AlgAttributes $\Tcal, \VS, \fbar, \flb, \fub, \Blb, \Bub, \hub, \kub$
\Comment{%
$\Tcal$ stores the expanded tree, $\VS$ stores}
\Statex \Comment{the sum of inverse squared SGCs. $\fbar, \flb, \fub, \Blb, \Bub$ are used to
compute the lower/}
\Statex \Comment{upper bounds.
$(\hub,\kub)$ are used in computing an upper confidence bound for $\udtruei$.}
% \Statex \Comment{constants, $\fbar, \flb, \fub, \Blb, \Bub$ are used to
% compute the lower/upper bounds.}
% \Comment{%
% \parbox[t]{.6\linewidth}{
% $\Tcal$ stores the expanded tree.
% $\VS$ stores the sum of sub-Gaussian constants.
% $\flb, \fub, \Blb, \Bub$ store the lower/upper bounds for each node.
% }}

\insertDefnSmallSep
\Method{\ucinit\callempty}{}
\State $\ucself.\Tcal \leftarrow \{(0, 1)\}$.
\State $\ucself.\Blbhhkk{0}{1} \leftarrow 0$, \quad $\ucself.\Bubhhkk{0}{1} \leftarrow 1$.
\State \ucself.\ucexpandnode($(0,1)$).
\Comment{Line~\ref{lin:ucexpandnode}}
\EndMethod

\insertDefnSmallSep
\Method{\ucgetudub\callempty}{}
\label{lin:nonparucgetudub}
\Comment{Main interface for Algorithm~\ref{alg:mmflearnsp}}
%     \State $(h, k)\leftarrow \ucself.\ucubtraverse()$
%     \State $(\hub, \kub)\leftarrow \ucself.\ucubtraverse()$
% \Comment{Line~\ref{lin:nonparucubtraverse}}
    \State \AlgReturn \; $\udmax \kub/2^{\hub}$.
\Comment{Right-side boundary of $\Ihkub$. $(\hub,\kub)$ is updated in Line~\ref{lin:hkubupdate}}
\EndMethod

\insertDefnSmallSep
\Method{\ucgetudrec\callempty}{}
\label{lin:nonparucgetudrec}
\Comment{Main interface for Algorithm~\ref{alg:mmflearnnsp}}
    \If{$t=\ttilde$}
        \State $\ucself.\ucrefreshboundsintree\callempty$
        \label{lin:callrefreshinucgetudrec}
    \Comment{Line~\ref{lin:ucrefreshboundsintree}}
    \EndIf
    \State $(h, k)\leftarrow (0, 1)$
        \Comment{\insrfont{is-a-leaf} returns true if $(h,k)$ is a leaf of $\Tcal$}
    \While{\;\textbf{not} $\insrfont{is-a-leaf}(\ucself.\Tcal, (h, k))$ \;\textbf{and}\;
           $\ucself.\VShk \geq \tauht$}
        \If{$\ucself.\ucgetBval(h+1,2k-1) >  \ucself.\ucgetBval(h+1, 2k)$}
    \Comment{Line~\ref{lin:ucgetBval}}
            \State $(h, k) \leftarrow (h+1, 2k-1)$
        \Else
            \State $(h, k) \leftarrow (h+1, 2k)$
        \EndIf
    \EndWhile
    \State \AlgReturn \;An arbitrary point in $\Ihk$.
\EndMethod

\algstore{nonparfbmodel}
\end{algorithmic}
\caption{$\;$Definitions for the Stochastic Feedback Model with Nonparametric 
Payoffs -- Part I\label{alg:nonparfbmodelpart1}}
% \caption{\label{alg:mmflearnnsp}}
\end{algorithm}
}

\newcommand{\insertAlgoNonparFbModelTwo}{
\begin{algorithm}[t]
\begin{algorithmic}[1]
\algrestore{nonparfbmodel}

\insertDefnSmallSep
\Method{\ucrecordfb($\normalloc$, $X_i$, $\sigmai$)}{}
\label{lin:nonparucrecordfb}
    \State $(h, k)\leftarrow (0, 1)$
%     \State $\ucself.\ucassigntonode((0, 1), \normalloc, X_i, \sigmai)$
%     \Comment{Line~\ref{lin:ucassigntonode}}
%     \While{\;\textbf{not} $\insrfont{is-a-leaf}(\ucself.\Tcal, (h, k))$ \;\textbf{and}\;
%            $\ucself.\VShk \geq \tauht$}
    \While{$(h,k) \in \ucself.\Tcal$ \;\textbf{and}\;
           $\ucself.\VShk \geq \tauht$}
    \Comment{While loop computes $\Pit$}
        \label{lin:stopPit}
    \State $\ucself.\ucassigntonode((h, k), \normalloc, X_i, \sigmai)$
    \Comment{Line~\ref{lin:ucassigntonode}}
        \IfThenElse{$\normalloc<\frac{1}{2}(\lhk+\rhk)$}%
                   {$(h, k) \leftarrow (h+1, 2k-1)$}%
                   {$(h, k) \leftarrow (h+1, 2k)$}
    \EndWhile
    \State \ucself.\ucupdateboundsonpathtoroot($(h,k)$)
        \label{lin:callupdateboundsonpath}
    \Comment{Line~\ref{lin:ucupdateboundsonpathtoroot}}
%     \Comment{Update $\flb,\fub,\Blb,\Bub$ for all nodes in $P$}
    \If{$\insrfont{is-a-leaf}(h,k)$ \textbf{and}
           $\ucself.\VShk \geq \tauht$}
        \State \ucself.\ucexpandnode($(h,k)$)
\Comment{Line~\ref{lin:ucexpandnode}}
    \EndIf
\EndMethod

\insertDefnSmallSep
\Method{\ucgetudrecforub\callempty}{}
\label{lin:nonparucgetudrecforub}
\Comment{Used by \explphases (line~\ref{lin:nonparexplphase})}
    \State $(\hub, \kub)\leftarrow \ucself.\ucubtraverse()$
    \label{lin:hkubupdate}
\Comment{Line~\ref{lin:nonparucubtraverse}}
    \State \AlgReturn \;An arbitrary point in $\Ihkub$.
\EndMethod

\insertDefnSmallSep
\PrivMethod{\ucubtraverse\callempty}{}
\label{lin:nonparucubtraverse}
    \State $(h, k)\leftarrow (0, 1)$
        \Comment{\insrfont{is-a-leaf} returns true if $(h,k)$ is a leaf of $\Tcal$}
    \While{\;\textbf{not} $\insrfont{is-a-leaf}(\ucself.\Tcal, (h, k))$ \textbf{and}
           $\ucself.\VShk \geq \tauht$}
        \IfThenElse{$\ucself.\Blbhhkk{h+1}{2k} \geq \thresh$}%
                   {$(h, k) \leftarrow (h+1, 2k-1)$}
                   {$(h, k) \leftarrow (h+1, 2k-1)$} \hspace{-0.10in}
            \label{lin:nonparucubtraverseifcondn}
%         \If{$\ucself.\Blbhhkk{h+1}{2k} \geq \thresh$}
%             \label{lin:nonparucubtraverseifcondn}
%             \State $(h, k) \leftarrow (h+1, 2k-1)$
%         \Else
%             \State $(h, k) \leftarrow (h+1, 2k)$
%         \EndIf
    \EndWhile
\State   \AlgReturn $(h,k)$
\EndPrivMethod

\insertDefnSmallSep
\PrivMethod{\ucgetBval$(h, k)$}{}
\label{lin:ucgetBval}
\State \AlgReturn $\ucself.\min(\ucself.\Bubhk-\thresh,
                          \thresh-\ucself.\Blbhk)$
\EndPrivMethod

\insertDefnSmallSep
\PrivMethod{\ucexpandnode($(h, k)$)}{}
\label{lin:ucexpandnode}
\State $\ucself.\Tcal \leftarrow \ucself.\Tcal \cup \{(h+1, 2k-1), (h+1, 2k)\}$.
\State $(\ell, u)\leftarrow \ucself.\ucgetboundsforunexpandednode((h+1, 2k-1))$
    \Comment{Line~\ref{lin:ucgetboundsforunexpandednode}}
\State $\ucself.\Blbhhkk{h+1}{2k-1} \leftarrow \ell$,
        \quad $\ucself.\Bubhhkk{h+1}{2k-1} \leftarrow u$.
\State $\ucself.\Blbhhkk{h+1}{2k} \leftarrow \ell$, \quad
        $\ucself.\Bubhhkk{h+1}{2k} \leftarrow u$.
\State \AlgReturn $\ucself.\udub$
\EndPrivMethod

\insertDefnSmallSep
\PrivMethod{\ucupdateboundsfornodesatsamedepth($(h, k)$)}{}
\label{lin:ucupdateboundsfornodesatsamedepth}
    \For{$k'=k+1,\dots,2^h$}
%         \If{$\ucself.\Blbhhkk{h}{k'} < \ucself.\Blbhk$}
%             \State $\ucself.\Blbhhkk{h}{k'} \leftarrow \ucself.\Blbhk$
%         \Else
%             \State \textbf{break}
%         \EndIf
        \IfThenElse{$\ucself.\Blbhhkk{h}{k'} < \ucself.\Blbhk$}%
                   {$\ucself.\Blbhhkk{h}{k'} \leftarrow\ucself.\Blbhk$}
                   {\textbf{break}}
    \EndFor
    \For{$k'=k-1,\dots,1$}
%         \If{$\ucself.\Bubhhkk{h}{k'} > \ucself.\Bubhk$}
%             \State $\ucself.\Bubhhkk{h}{k'} \leftarrow \ucself.\Bubhk$
%         \Else
%             \State \textbf{break}
%         \EndIf
        \IfThenElse{$\ucself.\Bubhhkk{h}{k'} > \ucself.\Bubhk$}%
                   {$\ucself.\Bubhhkk{h}{k'} \leftarrow \ucself.\Bubhk$}%
                   {\textbf{break}}
    \EndFor
\EndPrivMethod

\algstore{nonparfbmodeltwo}
\end{algorithmic}
\caption{$\;$Definitions for the Stochastic Feedback Model with Nonparametric 
Payoffs -- Part II \label{alg:nonparfbmodelpart2}}
% \caption{\label{alg:mmflearnnsp}}
\end{algorithm}
}

\newcommand{\insertAlgoNonparFbModelThree}{
\begin{algorithm}[t]
\begin{algorithmic}[1]
\algrestore{nonparfbmodeltwo}

\insertDefnSmallSep
\PrivMethod{$\ucassigntonode((h, k), \normalloc, X_i, \sigmai)$}{}
\label{lin:ucassigntonode}
%     \State $\ucself.\fbarhk \leftarrow (\ucself.\VShk\cdot\ucself.\fbarhk + X_i/\sigmai^2)/
%                                         (\VShk + \sigmai^2)$
    \If{$\ucself.\VShk = 0$}
        \State $\ucself.\fbarhk \leftarrow X_i$
        \State $\ucself.\VShk \leftarrow \sigmai^{-2}$
    \Else
        \State $\ucself.\fbarhk \leftarrow \frac{\ucself.\VShk\cdot\ucself.\fbarhk + X_i/\sigmai^2}
                                            {\VShk + \sigmai^{-2}}$
        \State $\ucself.\VShk \leftarrow \ucself.\VShk + \sigmai^{-2}$
    \EndIf
%     \State $\ucself.\ucupdatefvals((h,k))$
% \EndPrivMethod
% 
% \insertDefnSmallSep
% \PrivMethod{$\ucupdatefvals((h, k)$}{}
%     \State $\ucself.\flbhk \leftarrow \ucself.\fbarhk -
%                 \frac{\betattilde}{\ucself.\VShk^{\nicefrac{1}{2}}} - \frac{L}{2^h}$
%     \State $\ucself.\fubhk \leftarrow \ucself.\fbarhk +
%                 \frac{\betattilde}{\ucself.\VShk^{\nicefrac{1}{2}}} + \frac{L}{2^h}$
    \State $\ucself.\flbhk \leftarrow \ucself.\fbarhk -
                \betattilde\;\ucself.\VShk^{-\nicefrac{1}{2}} - L\cdot2^{-h}$
    \State $\ucself.\fubhk \leftarrow \ucself.\fbarhk +
                \betattilde\;\ucself.\VShk^{-\nicefrac{1}{2}} + L\cdot2^{-h}$
\EndPrivMethod

\insertDefnSmallSep
\PrivMethod{\ucupdateboundsonpathtoroot($(h,k)$)}{}
\label{lin:ucupdateboundsonpathtoroot}
    \If{\insrfont{is-a-leaf}$(\ucself.\Tcal, (h, k))$}
        \State $(\ell, u)\leftarrow \ucself.\ucgetboundsforunexpandednode((h+1, 2k-1))$
    \Comment{Line~\ref{lin:ucgetboundsforunexpandednode}}
        \State $\ucself.\Blbhk\leftarrow \max(\ucself.\flbhk, \; \ucself.\Blbhk, \ \ell)$
        \State $\ucself.\Bubhk\leftarrow \min(\ucself.\fubhk, \; \ucself.\Bubhk, \; u)$
        \State $\ucself.\ucupdateboundsfornodesatsamedepth((h, k))$
    \Comment{Line~\ref{lin:ucupdateboundsfornodesatsamedepth}}
        \State $(h, k)\leftarrow (h-1, \lfloor(k+1)/2\rfloor)$ \Comment{Set $(h, k)$ to its parent}
    \EndIf
    \While{$h\neq -1$} \Comment{Stop when you reach $(0,1)$}
        \State $\ucself.\Blbhk\leftarrow \max(\ucself.\flbhk, \;  \ucself.\Blbhk, \; 
                                              \ucself.\Blbhhkk{h+1}{2k-1})$
        \State $\ucself.\Bubhk\leftarrow \min(\ucself.\fubhk, \;  \ucself.\Bubhk, \; 
                                              \ucself.\Bubhhkk{h+1}{2k})$
        \State $\ucself.\ucupdateboundsfornodesatsamedepth((h, k))$
    \Comment{Line~\ref{lin:ucupdateboundsfornodesatsamedepth}}
        \State $(h, k)\leftarrow (h-1, \lfloor(k+1)/2\rfloor)$ \Comment{Set $(h, k)$ to its parent}
        
    \EndWhile
\EndPrivMethod

\insertDefnSmallSep
\Method{$\ucgetconfinterval(\normalloc)$}{}
\label{lin:ucgetconfinterval}
    \State $(h,k) \leftarrow (0, 1)$
    \State $\bcheck\leftarrow 0, \quad \bhat\leftarrow 1$
    \While{$(h,k)\in\ucself.\Tcal$}
        \State $\bcheck \leftarrow \max(\bcheck, \ucself.\Blbhk), \quad
                \bhat \leftarrow \min(\bhat, \ucself.\Bubhk)$
%         \If{$\normalloc<\frac{1}{2}(\lhk+\rhk)$}
%             \State $(h, k) \leftarrow (h+1, 2k-1)$
%         \Else
%             \State $(h, k) \leftarrow (h+1, 2k)$
%         \EndIf
        \IfThenElse{$\normalloc<\frac{1}{2}(\lhk+\rhk)$}%
                   {$(h, k) \leftarrow (h+1, 2k-1)$}%
                   {$(h, k) \leftarrow (h+1, 2k)$}
    \EndWhile
    \State $(l,u) \leftarrow \ucself.\ucgetboundsforunexpandednode(h, k)$
    \Comment{Line~\ref{lin:ucgetboundsforunexpandednode}}
    \State $\bcheck \leftarrow \max(\bcheck, l), \quad
            \bhat \leftarrow \min(\bhat, u)$
    \State \AlgReturn $(\bcheck, \bhat)$
\EndMethod

\algstore{nonparfbmodelthree}
\end{algorithmic}
\caption{$\;$Definitions for the Stochastic Feedback Model with Nonparametric 
Payoffs -- Part III \label{alg:nonparfbmodelpart3}}
% \caption{\label{alg:mmflearnnsp}}
\end{algorithm}
}

\newcommand{\insertAlgoNonparFbModelFour}{
\begin{algorithm}[t]
\begin{algorithmic}[1]
\algrestore{nonparfbmodelthree}

\insertDefnSmallSep
\PrivMethod{\ucgetboundsforunexpandednode($(h, k)$)}{}
\label{lin:ucgetboundsforunexpandednode}
\If{$h > \uchmax(\ucself.\Tcal)$}
    \Comment{$\uchmax(\Tcal) = \max\{h\,;\; (h,k) \text{ has been expanded in } \Tcal\}$}
    \State \AlgReturn $(0, 1)$
\Else
    \State $\ell\leftarrow$ $\Blbhhkk{h}{k'}$ where $k'$ is the largest $k''<k$ such that
            $(h,k'')$ has been expanded.
    \State $u\leftarrow$ $\Bubhhkk{h}{k'}$ where $k'$ is the smallest $k''>k$ such that
            $(h,k'')$ has been expanded.
    \State $(\ell', u')\leftarrow \ucself.\ucgetboundsforunexpandednode(h+1, 2k-1)$
    \Comment{Recurse}
    \State \AlgReturn $(\max(\ell, \ell'), \min(u, u'))$
\EndIf
\EndPrivMethod

\insertDefnSmallSep
\PrivMethod{\ucrefreshboundsintree\callempty}{}
\label{lin:ucrefreshboundsintree}
    \For{$(h,k) \in \ucself.\Tcal$} \Comment{Update $\flb,\fub$ values of all nodes with new
$\ttilde$ value}
    \State $\ucself.\flbhk \leftarrow \ucself.\fbarhk -
                \betattilde\;\ucself.\VShk^{-\nicefrac{1}{2}} - L\cdot2^{-h}$
    \State $\ucself.\fubhk \leftarrow \ucself.\fbarhk +
                \betattilde\;\ucself.\VShk^{-\nicefrac{1}{2}} + L\cdot2^{-h}$
    \EndFor
    \For{$h = \uchmax(\ucself.\Tcal), \dots, 0$}
    \Comment{$\uchmax(\Tcal) = \max\{h\,;\; (h,k) \text{ has been expanded in } \Tcal\}$}
        \State $\bcheckmax\leftarrow 0$
        \Comment{Set $\Blb$ and ensure it is non-decreasing right to left}
        \For{$k$ in increasing order among expanded nodes $(h,k)$ at height $h$}
            \If{$\insrfont{is-a-leaf}((h,k),\ucself.\Tcal)$}
                \State $(l,u) \leftarrow \ucself.\ucgetboundsforunexpandednode(h+1, 2k-1)$
                \State $\bcheck \leftarrow \max(\ucself.\flbhk, \ucself.\Blbhk, l)$
            \Else
                \State $\bcheck \leftarrow \max(\ucself.\flbhk, \ucself.\Blbhk,
                                                       \ucself.\Blbhhkk{h+1}{2k-1})$
            \EndIf
            \State $\ucself.\Blbhk \leftarrow \max(\bcheckmax, \bcheck)$
            \State $\bcheckmax \leftarrow \ucself.\Blbhk$
        \EndFor
        \State $\bhatmin\leftarrow 1$
        \Comment{Set $\Bub$ and ensure it is non-increasing left to right}
        \For{$k$ in decreasing order among expanded nodes $(h,k)$ at height $h$}
            \If{$\insrfont{is-a-leaf}((h,k),\ucself.\Tcal)$}
                \State $(l,u) \leftarrow \ucself.\ucgetboundsforunexpandednode(h+1, 2k-1)$
                \State $\bhat \leftarrow \min(\ucself.\fubhk, \ucself.\Bubhk, u)$
            \Else
                \State $\bhat \leftarrow \min(\ucself.\fubhk, \ucself.\Bubhk,
                                                       \ucself.\Bubhhkk{h+1}{2k})$
            \EndIf
            \State $\ucself.\Bubhk \leftarrow \min(\bhatmin, \bhat)$
            \State $\bhatmin \leftarrow \ucself.\Bubhk$
        \EndFor
    \EndFor
\EndPrivMethod

\end{algorithmic}
\caption{$\;$Definitions for the Stochastic Feedback Model with Nonparametric 
Payoffs -- Part IV \label{alg:nonparfbmodelpart4}}
% \caption{\label{alg:mmflearnnsp}}
\end{algorithm}
}

\newcommand{\imarrwthree}{1.755in}
\newcommand{\imhspthree}{-0.00in}
\newcommand{\imarrwtwosubfig}{2.31in}
\newcommand{\imarrwtwo}{2.71in}
\newcommand{\imhsptwo}{0.25in}
\newcommand{\imtextspace}{-0.10in}
\newcommand{\imcaptionspace}{-0.05in}

\newcommand{\insertFigTreeIllus}{
\newcommand{\treesimulwidth}{5.5in}
\begin{figure*}
\centering
\includegraphics[width=\treesimulwidth]{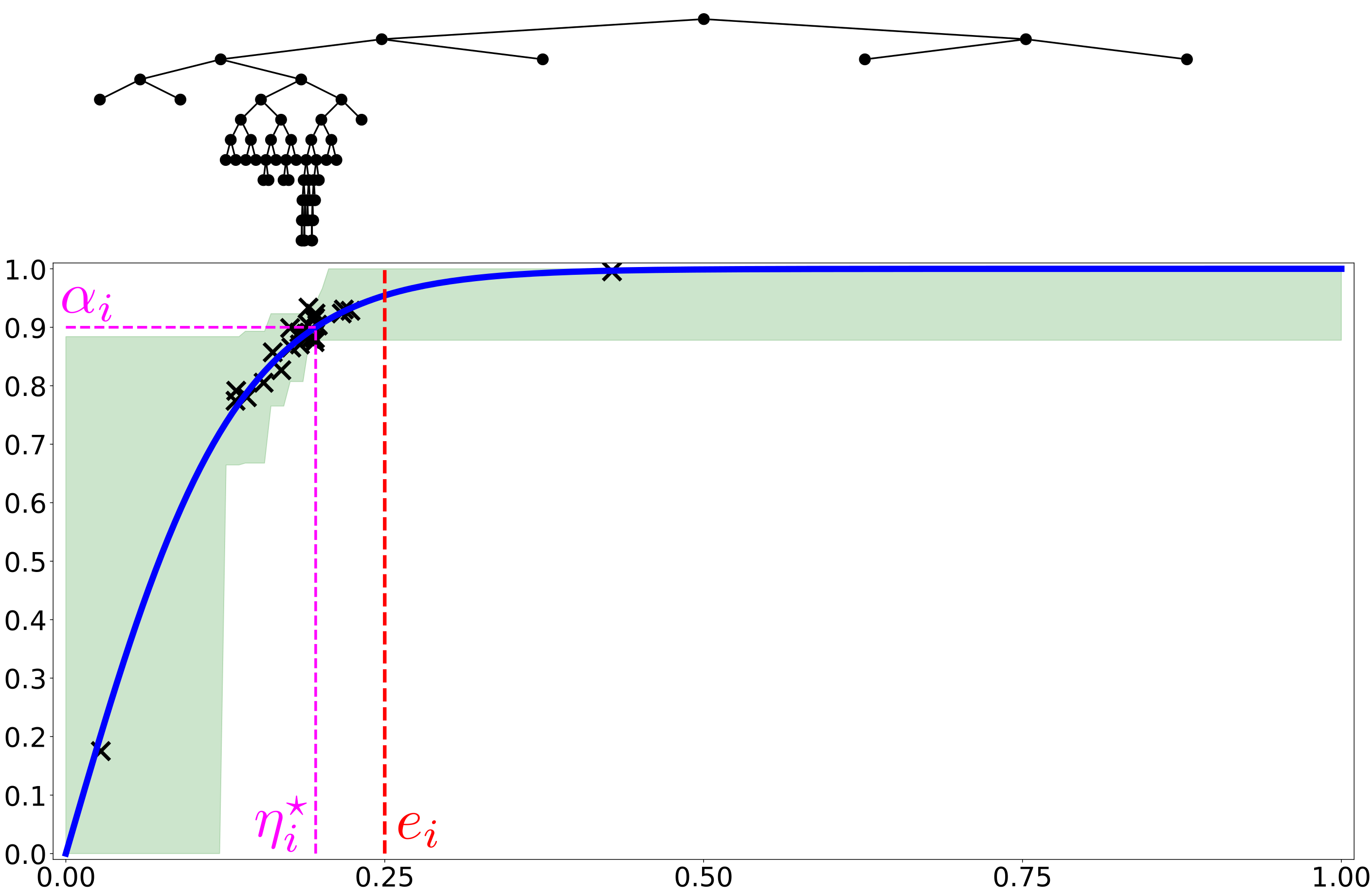}
\\[0.35in]
\includegraphics[width=\treesimulwidth]{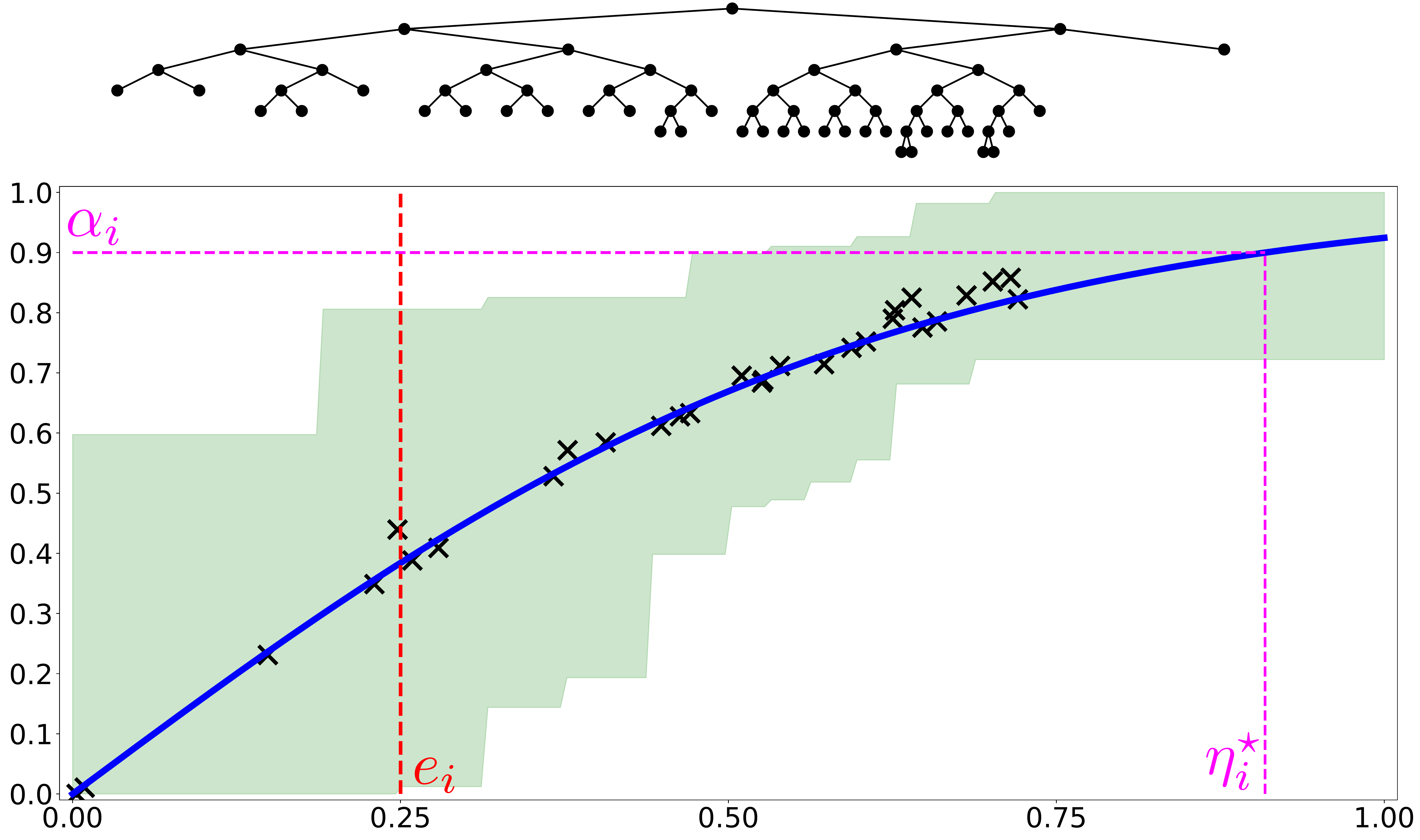}
\caption{%
% \small
An illustration of the nonparametric tree-based estimator for two users whose demands
$\udtruei$ are smaller and larger then their entitlement in the top and bottom figures respectively.
Here, $\entitli$ and $\udtruei$ denote the entitlement and unit demand.
The blue curve is the payoff $\payoffi$ and the $\times$'s are the data collected,
i.e. allocation-reward pairs.
The shaded region represents the confidence interval for the payoff, which is
computed using the \ucgetconfinterval{} method in line~\ref{lin:ucgetconfinterval} of
Algorithms~\ref{alg:nonparfbmodelpart1}-\ref{alg:nonparfbmodelpart4}.
In this simulation, we had four users with equal entitlement, and $\threshi=0.9, \volit=1$
for all $i,t$.
% \\
% In the first case, the tree is expanded
% deep around $\udtruei$ enabling us to accurately estimate the user's demand.
% In the latter case, the demand is large; due to resource contention and fairness constraints,
% we are not able to allocate many resources to this user and accurately estimate her demand. \\
% Observe that the confidence intervals are monotonic; for instance, in the bottom figure one would
% expect the confidence intervals in the $(0.75, 1)$ interval to be large due to the lack of data.
% However, we are able to use monotonicity to clip the lower confidence interval.
\label{fig:treeillus}
}
\vspace{\imtextspace}
\end{figure*}
}

\newcommand{\insertFigPayoffIllus}{
  \begin{wrapfigure}{r}{2.4in} 
    \centering 
    \vspace{-0.38in}
    \includegraphics[width=2.4in]{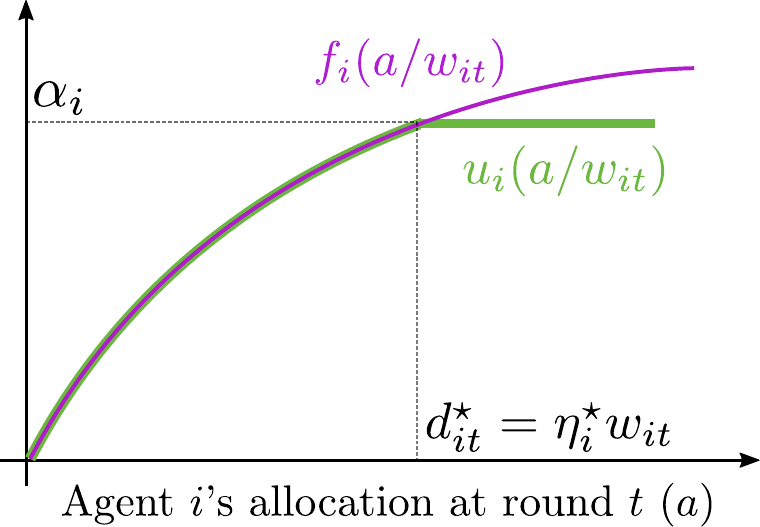}
    \vspace{-0.36in}
  \end{wrapfigure}
}

\newcommand{\insertFigSynResults}{
\begin{figure*}
\centering
\includegraphics[width=\imarrwthree]{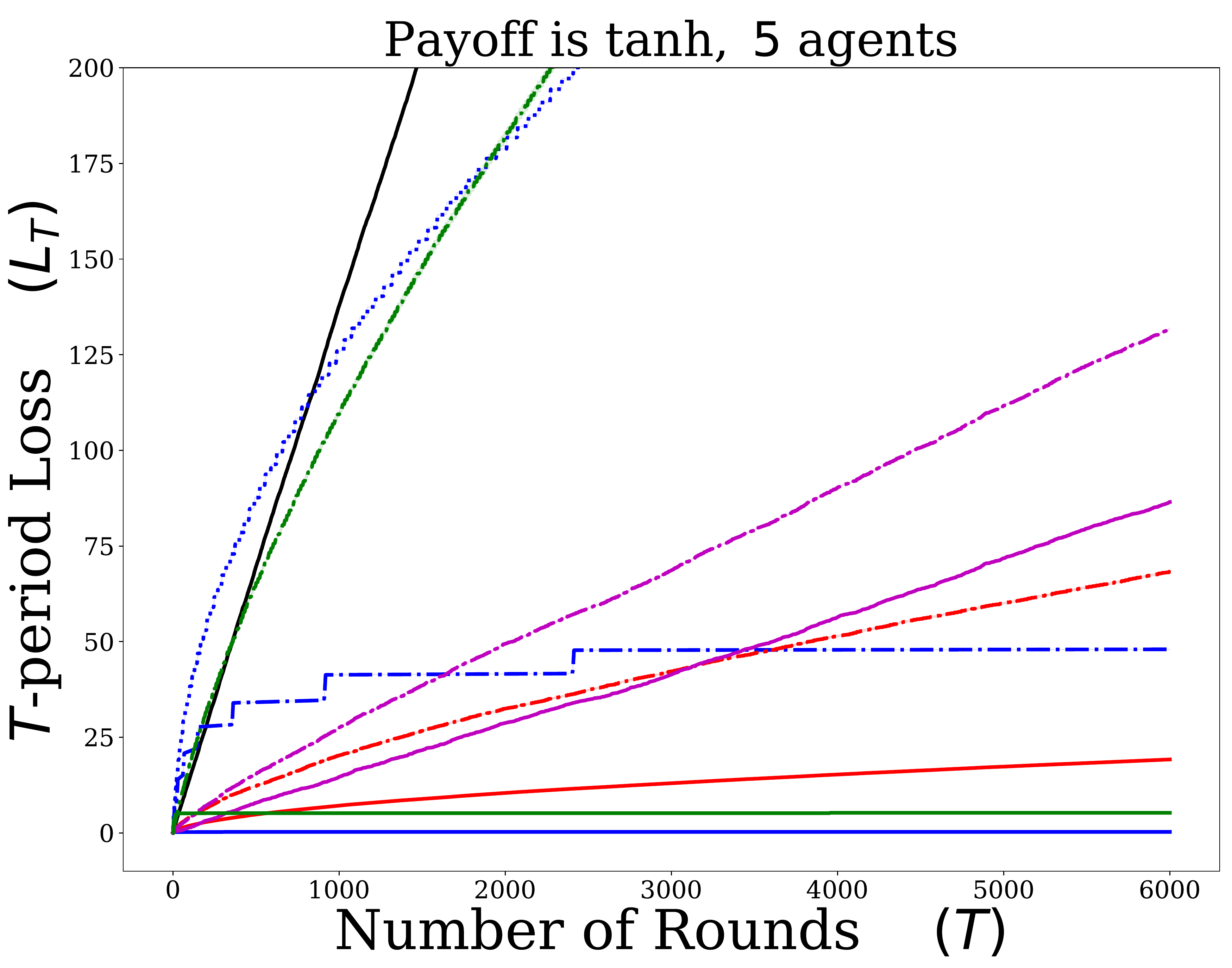}
\hspace{\imhspthree}
\includegraphics[width=\imarrwthree]{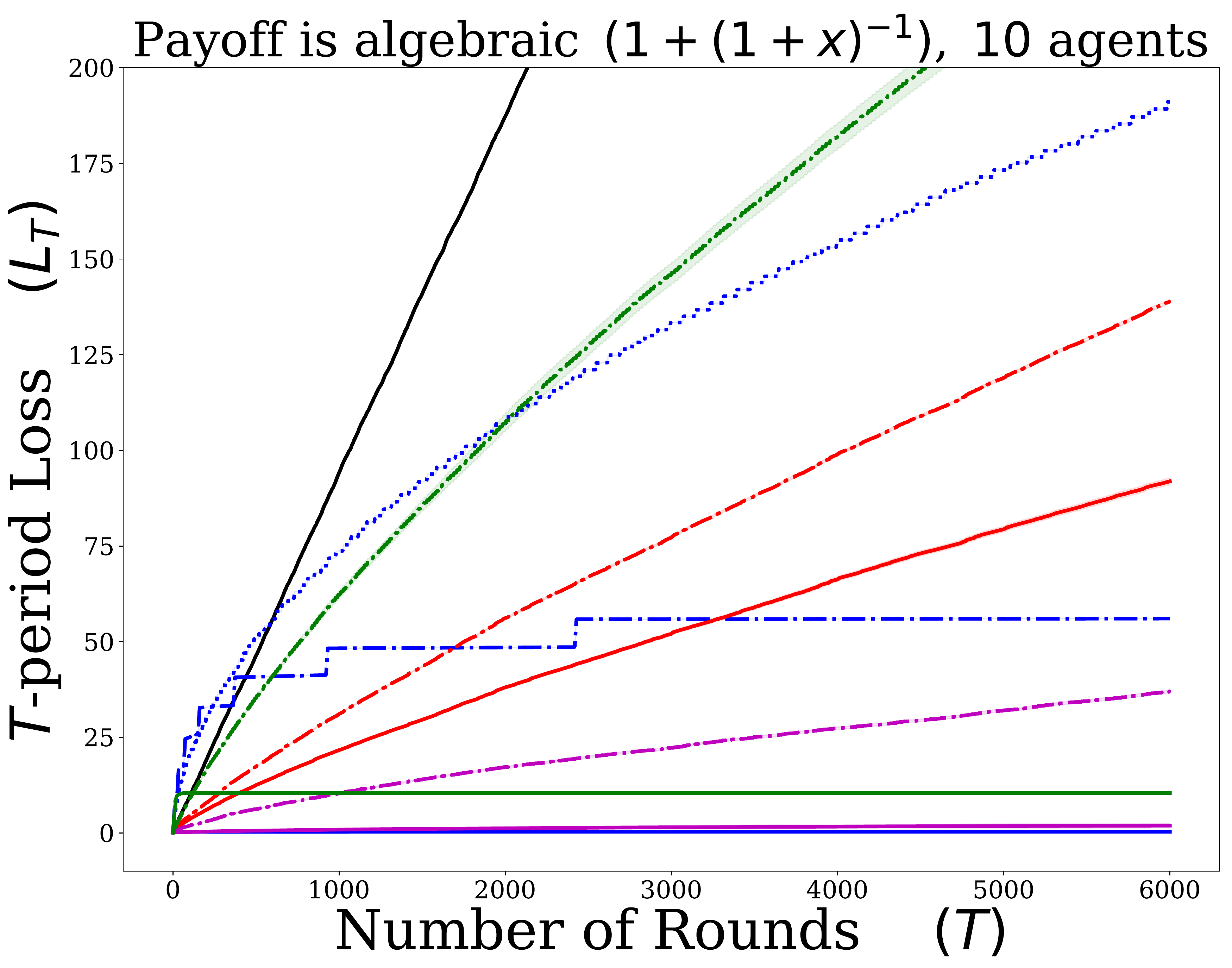}
\hspace{\imhspthree}
\includegraphics[width=\imarrwthree]{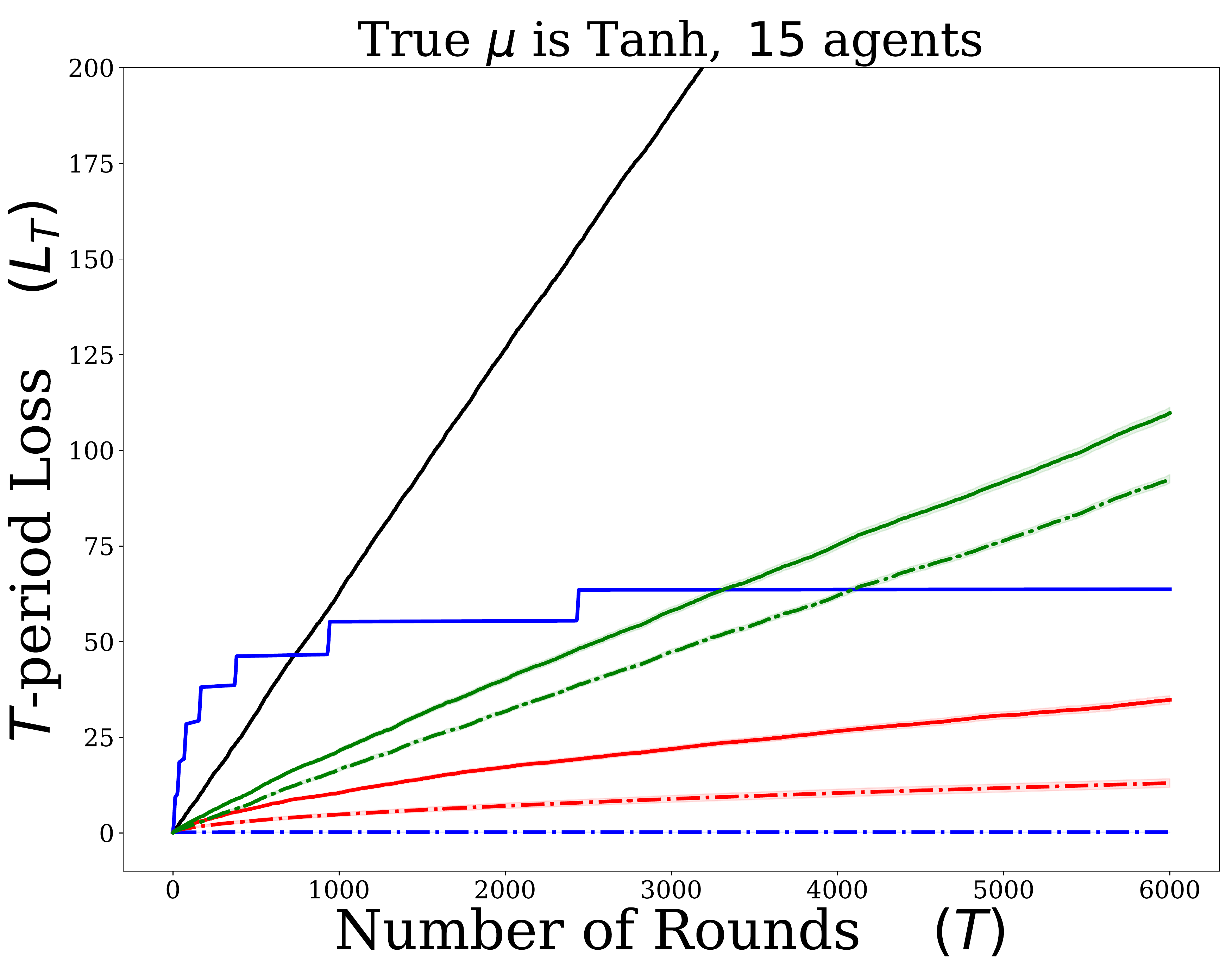}
\vspace{\imcaptionspace}
\caption{
\small
Results on the synthetic experiments.
The rewards are drawn from models where the payoffs are $\tanh$ functions or
algebraic functions of the form $1-(1+x)^{-1}$, as indicated in the title.
We plot the loss on the $y$-axis (lower is better).
All methods were executed for $T=2000$ rounds.
% The curves for the strategy-proof mechanism (Algorithm~\ref{alg:mmflearnsp})
% are shown via solid lines (SP in the legend), while the curves for the mechanism without
% strategy-proofness are shown via dashed lines (NSP in the legend).
% The red curves
% In the legend, \incmtt{Det} or \incmtt{Stoch} indicates if the model used was deterministic or
% stochastic,
% \incmtt{Tanh} or \incmtt{Alg} indicates if the $\mu$ used for the stochastic model was $\tanh$ or
% $1-(1+x)^{-1}$,
% and \incmtt{SP} or \incmtt{NSP} indicates if we used the mechanism with
% (Algorithm~\ref{alg:mmflearnsp}) or without (Algorithm~\ref{alg:mmflearnnsp}) strategy-proofness.
All figures were averaged over 10 runs and the shaded region  (not visible on all curves)
indicates one standard error.
\label{fig:synthetic}
}
\vspace{\imtextspace}
\end{figure*}
}

\newcommand{\insertFigPredServ}{
  \begin{wrapfigure}{r}{2.0in} 
    \centering 
    \vspace{-0.27in}
    \includegraphics[width=1.9in]{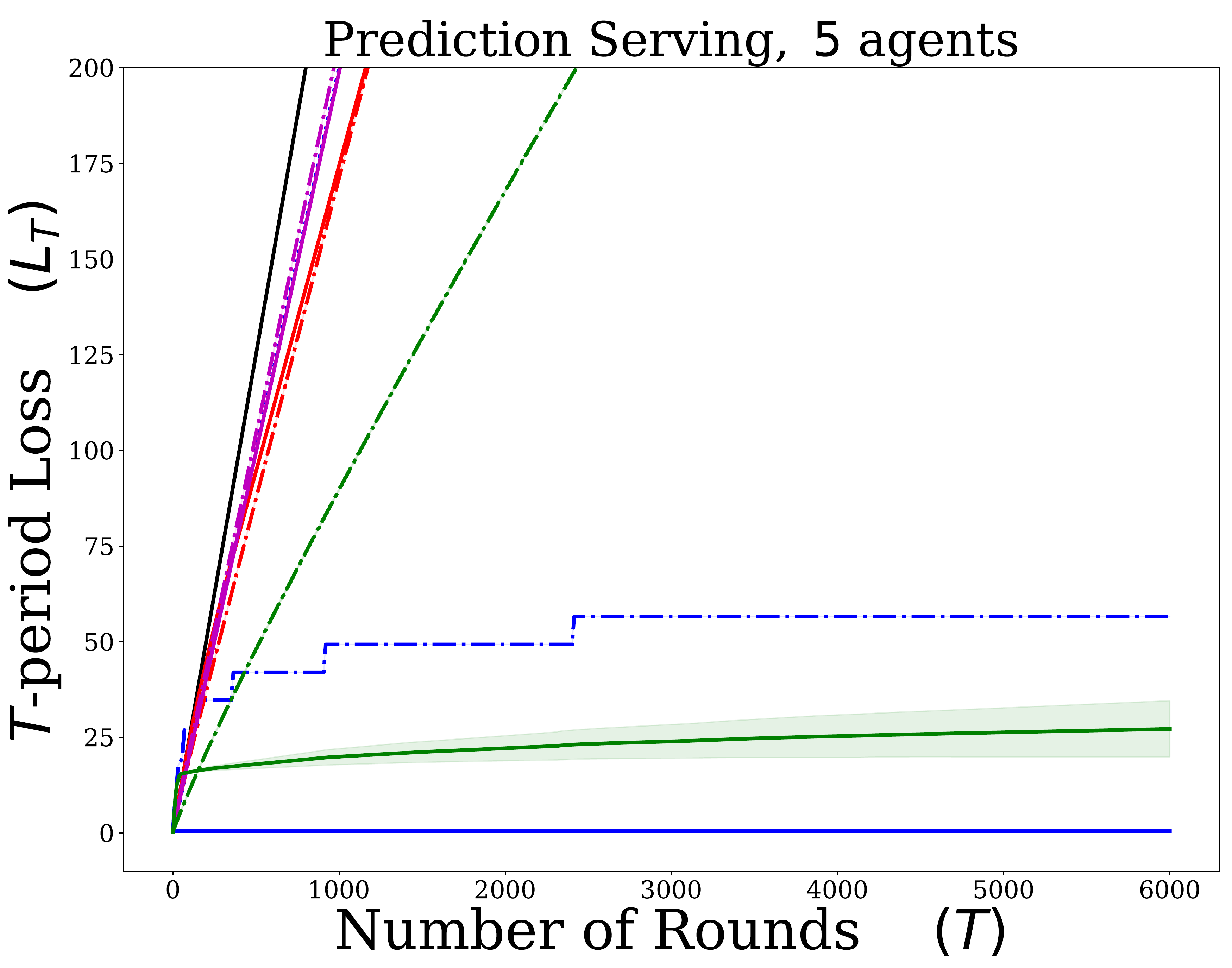}
    \vspace{-0.08in}
    \caption{\small
            Results on the prediction serving task.
%             See caption under Figure~\ref{fig:synthetic}
%              for more details.
            See Figure~\ref{fig:synthetic}
             for more details.
            \label{fig:predserv}}
    \vspace{-0.22in}
  \end{wrapfigure}
}

\newcommand{\insertFigExpResults}{
\begin{figure*}
\centering
\subfigure[]{
    \includegraphics[width=\imarrwtwo]{figs/tanh-5_losses}
    \label{fig:tanh5}
}
\hspace{\imhsptwo}
\subfigure[]{
    \includegraphics[width=\imarrwtwo]{figs/vpi-10_losses}
    \label{fig:vpi10}
}
\\
\subfigure[]{
    \includegraphics[width=\imarrwtwo]{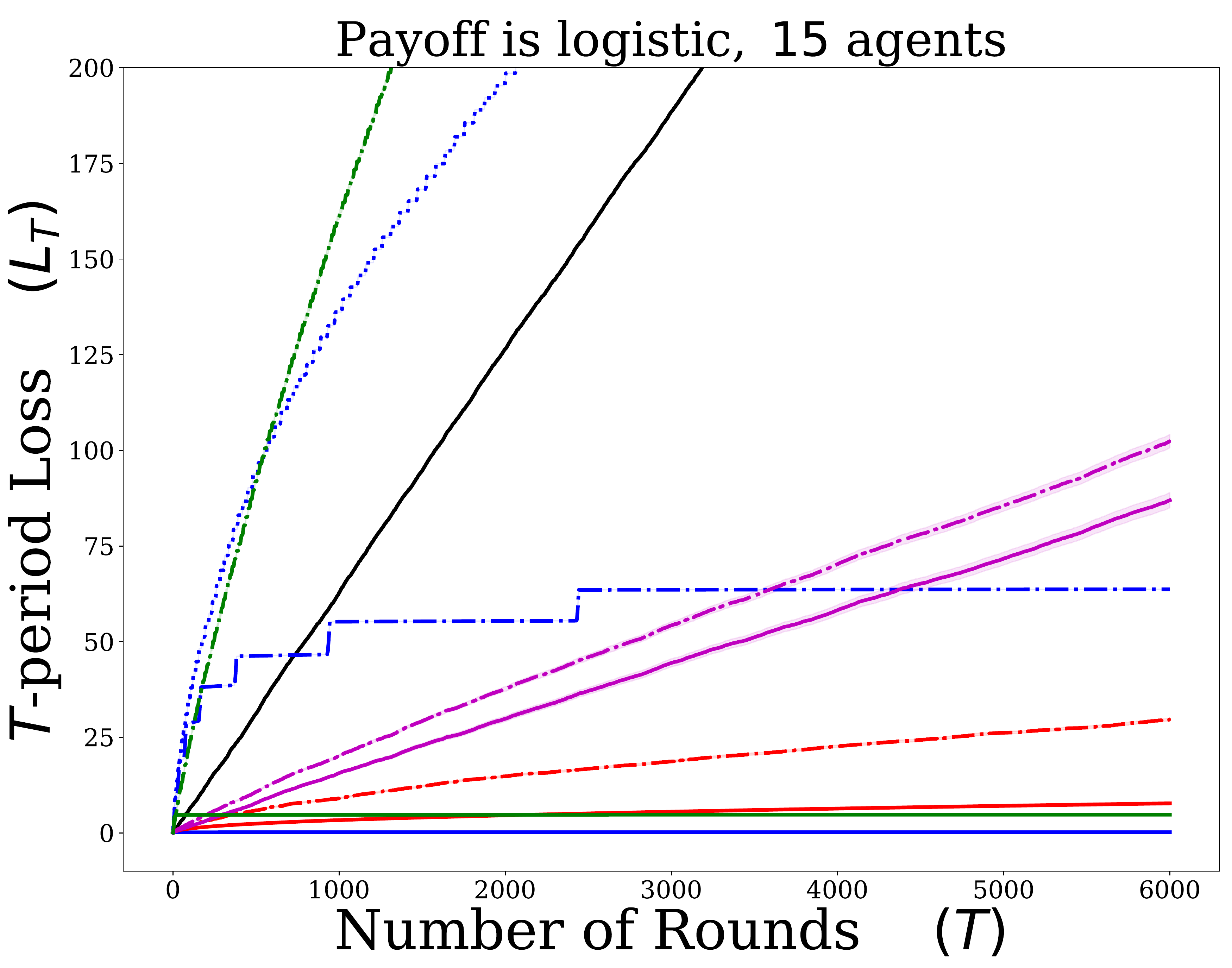}
    \label{fig:logistic15}
}
\hspace{\imhsptwo}
\subfigure[]{
    \includegraphics[width=\imarrwtwo]{figs/predserv-5_losses}
    \label{fig:predserve}
}
\\
\subfigure[]{
    \includegraphics[width=4.7in]{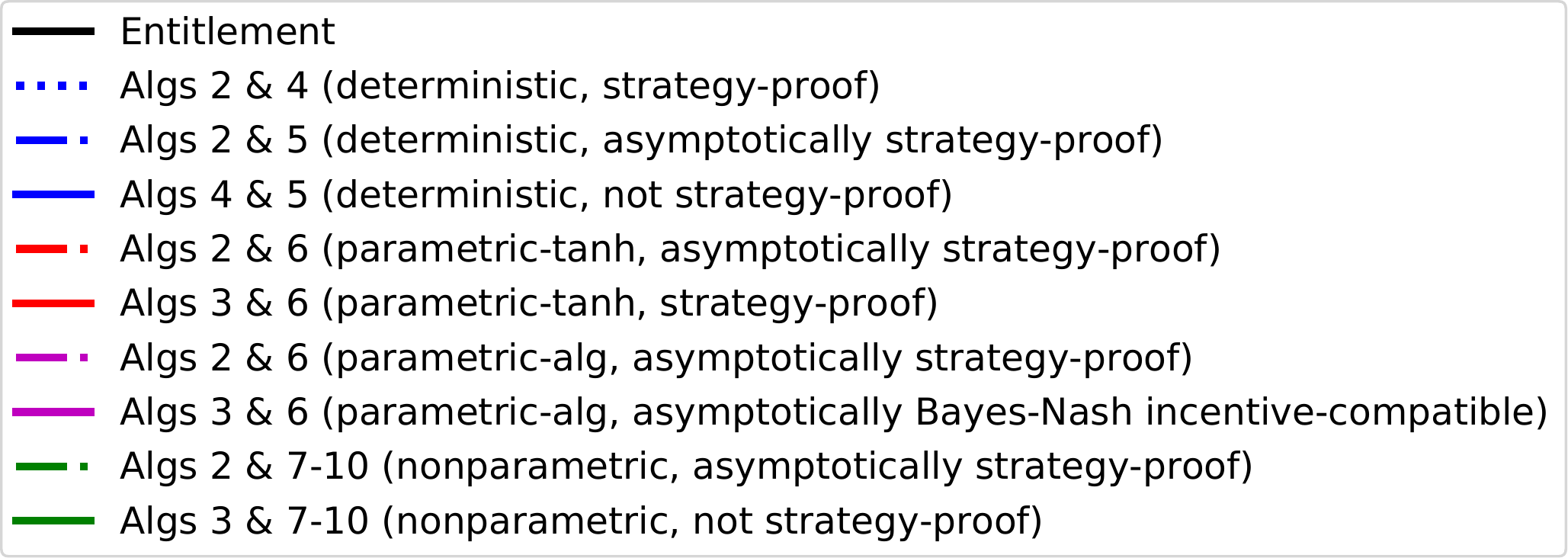}
    \label{fig:legend}
}
\vspace{\imcaptionspace}
\caption{%
Figures~\subref{fig:tanh5}--\subref{fig:logistic15}: Results on the synthetic experiments.
The rewards are drawn from models where the payoffs are $\tanh$ functions,
algebraic functions of the form $1-(1+x)^{-1}$,
or logistic functions, as indicated in the title.
Figure~\subref{fig:predserve}: Results on the prediction-serving task.
Figure~\subref{fig:legend}: Legend for figures~\subref{fig:tanh5}--\subref{fig:predserve}.
In all figures, we plot the loss on the $y$-axis (lower is better).
All figures were averaged over 5 runs and the shaded region  (not visible in most curves)
indicates one standard error.
\label{fig:synthetic}
}
\vspace{\imtextspace}
\end{figure*}
}

\newcommand{\insertFigNTGIllus}{
\begin{figure*}
\centering
\includegraphics[height=1.6in]{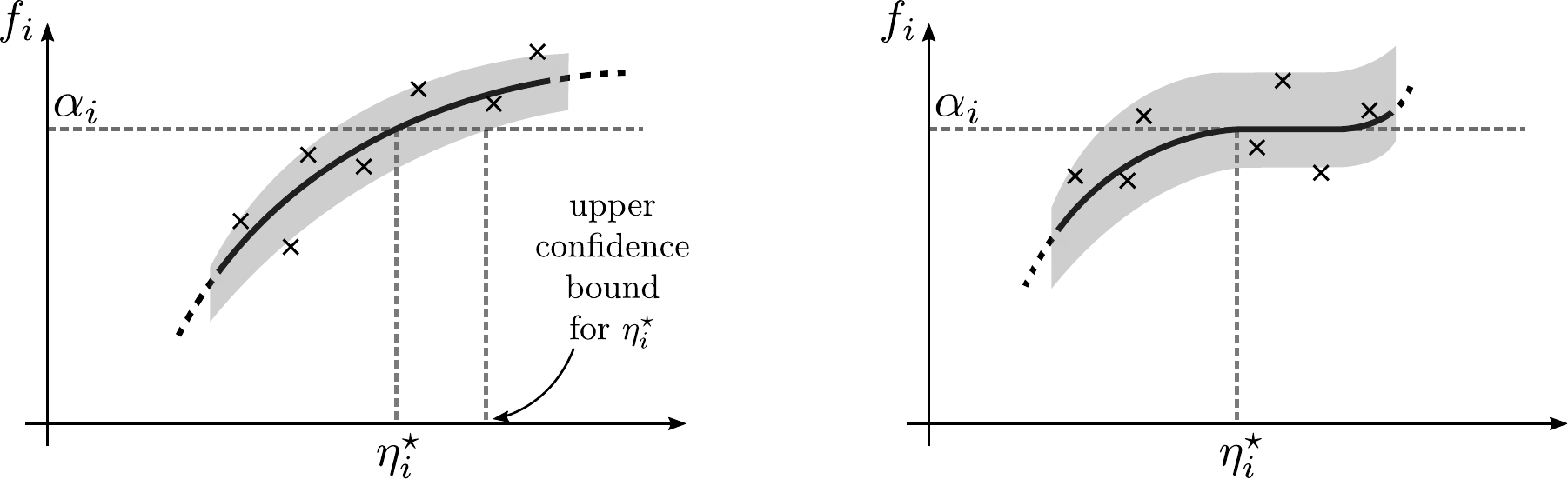}
\caption{
\small
The payoff curve, data ($\times$'s) and confidence intervals (shaded region) for two different
scenarios.
Left: A payoff curve with positive NTG where it is possible to accurately estimate the unit demand
$\udtruei$ when we get more data.
Right: A payoff curve where the NTG is $0$; here, we may not be able to tightly upperbound
$\udtruei$ since the lower confidence bound for $\payoffi$ will necessarily be smaller than
$\threshi$ for some $x>\udtruei$.
\label{fig:ntgillus}
}
\vspace{\imtextspace}
\end{figure*}
}

\begin{abstract}
We describe mechanisms for the allocation of a scarce resource among multiple users
in a way that is efficient, fair, and strategy-proof, but when users do not know their resource
requirements.
The mechanism is repeated for multiple rounds and a user's requirements can change on each round.
At the end of each round, users provide
feedback about the allocation they received, enabling the mechanism to learn user
preferences over time.
Such situations are common in the shared usage of a compute cluster among many users in an
organisation,
where all teams may not precisely know the amount of resources needed to execute their jobs.
By understating their requirements, users will receive less than they need and
consequently not achieve their goals.
By overstating them, they may siphon
away precious resources that could be useful to others in the organisation.
We formalise this task of online learning in fair division
via notions of efficiency, fairness, and strategy-proofness applicable to this setting,
and study this problem under three types of feedback:
when the users' observations are deterministic, when they are stochastic and follow a parametric
model, and when they are stochastic and nonparametric.
We derive mechanisms inspired by the classical max-min fairness procedure that
achieve these requisites, and  quantify the extent to which they are
achieved via asymptotic rates.
We corroborate these insights with an experimental evaluation on synthetic problems
and a web-serving task.

\textbf{Keywords:} Fair division, Mechanism design, Strategy-proofness, Fairness, Online learning
\end{abstract}

% By understating their requirements, users run the risk of receiving less than they need and
% consequently not achieving their goals.

\section{Introduction}
\label{sec:intro}

The fair division of a finite resource among a set of rational agents (users) is a well-studied
problem in game theory~\citep{procaccia2013cake}.
In this paper, we study a formalism for fair division, when users have certain resource
\emph{demands};
a user's utility for the amount of resources they receive increases up to this demand, but
does not increase thereafter.
Such use cases arise when sharing computational resources among different applications within a
computer~\citep{linuxfq2020}, or when sharing among different human users in shared-computing
platforms~\citep{verma2015large,psc2020,lbl2020,boutin2014apollo}.
% Such use cases arise frequently in the shared usage of computational resources in industrial and
% academic data centres~\citep{verma2015large,psc2020,lbl2020,boutin2014apollo}.
Each user has an entitlement (fair share) to a resource;
however, some users' demands will be smaller than their
entitlement, while some users' demands might be larger.
Hence, allocating the resource simply in proportion to their entitlements will result in unused
resources allocated to the former agents that could be have been allocated to the latter agents
with an unmet demand.
Users may request some amount of resources, which may not necessarily be equal their demand,
 from a mechanism
responsible for allocating this resource among the users.
Fair division is the design of mechanisms for allocating a scarce resource in a way that
is \emph{efficient}, meaning that no resources are left unused when there is an unmet demand,
\emph{fair}, meaning that an agent participating in the mechanism
is at least as happy as when she can only use her fair share,
and
\emph{strategy-proof}, meaning that agents are incentivised to report their true demands when
requesting resources.

Max-min Fairness (\mmf) is one of the most popular mechanisms for fair division that
satisfies the above desiderata.
Since being first introduced in the networking literature~\citep{demers1989analysis},
it has been used in a plethora of applications such as
scheduling data-centre jobs~\citep{chen2018scheduling,ghodsi2013choosy},
load balancing~\citep{nace2008max},
fair queueing in the Linux OS~\citep{linuxfq2020} and packet processing~\citep{ghodsi2012multi},
sharing wireless~\citep{huang2001max} and data-centre~\citep{shieh2011sharing} networks,
and many more~\citep{hahne1991round,li2015energy,liu2013max}.
Moreover, \mmfs and its variants have been implemented in popular open source platforms
such as Hadoop~\citep{hadoopfs}, Spark~\citep{zaharia2010spark}, and
Mesos~\citep{hindman2011mesos}.

% In a departure from prior work in this space, we study 
To the best of our knowledge,
all mechanisms for fair division in the literature, including \mmf, assume that users know their
demands.
However, in many practical applications, users typically care about achieving a certain
desired level of performance.
While their demands are determined by this performance level, users often have
difficulty in translating their performance requirements to resource requirements
since real world systems can be complex and hard to model~\citep{venkataraman2016ernest}.
In this work, we propose shifting the burden of estimating these demands from
the user to the mechanism, and doing so in a manner that satisfies efficiency, fairness, and
strategy-proofness.
As an example,
consider an organisation where a compute cluster
is shared by users who are serving live web traffic.
Each user wishes to meet a certain service level objective (SLO), such as a given threshold 
on the fraction of queries completed within a specified time limit.
Applying a mechanism such as \mmfs and expecting an efficient allocation requires that all users
precisely know their demands (the amount of resources needed to meet their SLOs).
If a user understates this demand, %this is disadvantageous to the team as
she risks not meeting her own SLOs.
If she instead overstates her demand, she may take away precious resources from other users
who could have used those resources to achieve their SLOs,
resulting in an inefficient allocation.
Prior applied research suggests that while modern data centres operate well below capacity
(usually 30-50\%), most users are unable to meet their
SLOs~\citep{rzadca2020autopilot,delimitrou2013paragon,delimitrou2014quasar}.
% The most common reason for this is that users, by nature, tend to be conservative and request more
% resources
% than needed; some of them end up receiving more than they need, hence depriving others
% who could have put that resource to good use.

In this work, we design a multi-round mechanism for such
instances when agents may not know their demands to satisfy their performance objectives.
At the beginning of each round,
the agents report the load of the traffic they need to serve in that round,
then the mechanism assigns an allocation to each user based on past information
while accounting for the load.
At the end of the
round, agents provide feedback on the allocation (e.g. the extent to which their SLOs were
achieved).
Satisfying strategy-proofness and fairness is more challenging in
this setting.
Since an agent reports feedback on each round, it provides her more
opportunity to
manipulate outcomes than typical settings for fair allocation where she reports a single demand.
Additionally, in order to find an efficient allocation, the mechanism needs to estimate the
demands of all agents; when doing so it risks violating the fairness criterion, especially
% by accidentally under-allocating for 
for agents whose demands are less than their entitlements.

More generally,
fairness is an important topic  that has garnered
attention in recent times in the machine learning community.
Fairness can be construed in many ways, and this paper studies a concrete instantiation
that arises in resource allocation.
While this topic has been studied in the game theory literature fairly extensively, our paper
focuses on the learning problem when users do not know their resource requirements.
As we will see shortly, in our setting,
fairness and efficiency can be conflicting: an exactly fair allocation can result in worse outcomes
to all agents;
however, by considering a weaker notions of fairness which hold probabilistically and/or
asymptotically, we can achieve outcomes that are
beneficial to everyone.
% This paper
% describes methods for achieving efficiency while satisfying such fairness constraints.
We believe that many of these ideas can be applied in various other online learning settings where
similar fairness constraints arise.

% Our contributions are as follows.
% First, we present an online learning formulation for fair division and
This manuscript is organised as follows.
In Sections~\ref{sec:fairdivision} and~\ref{sec:mmf},
we briefly review fair division and describe the \mmfs algorithm.
% We state many useful properties of \mmf in Section~\ref{sec:pfmmf}.
In Sections~\ref{sec:onlinefairdivision} and~\ref{sec:fbmodels}, we formalise 
online learning in fair division and
define notions of efficiency, fairness, and
strategy-proofness which are applicable in this setting.
We also propose three feedback models motivated by practical use cases.
In Section~\ref{sec:methods}, we
describe our mechanisms and present our theoretical results,
quantifying how fast they learn via asymptotic rates.
% Third, we empirically evaluate the proposed methods in synthetic and real experiments.
In Section~\ref{sec:experiments},
we evaluate the proposed methods empirically in synthetic experiments
and a web-serving task.
All proofs are given in the Appendix.
% Section~\ref{sec:proofs} presents all proofs.

% \textbf{Related Work:}
\subsection*{Related Work}

In addition to the many practical applications described above,
the fair allocation of resources
has  inspired a line of theoretical work.
The include mechanisms for dynamically changing demands%
~\citep{freeman2018dynamic,tang2014long,cole2013mechanism},
for allocating multiple resource types%
~\citep{gutman2012fair,parkes2015beyond,ghodsi2011dominant,li2013egalitarian},
and when there is a stream of resources~\citep{aleksandrov2017pure}.
None of these works consider the problem of learning agents' demands when they are unknown.
% More generally, these methods fall under the umbrella of mechanism design without currency
% which has been studied in various
% contexts~\citep{guo2009competitive,abdulkadirouglu2013trust,gorokh2019monetary}.

There is a long line of work in the intersection of online learning and mechanism
design~\citep{amin2013learning,mansour2015bayesian,athey2013efficient,nazerzadeh2008dynamic,%
babaioff2014characterizing}.
The majority of them focus on auction-like settings,
and assume that agents know their
preferences---such as their value for items in an auction---and the goal of the mechanism is to
elicit those preferences truthfully.
Some work has studied instances where the agents do not know their preferences,
but can learn them via repeated participations in a mechanism.
Some examples include~\citet{weed2016online}, where an agent learns to bid in a repeated Vickrey
auction, and~\citet{liu2019competing}, where agents on one side of a matching market learn their
preferences for alternatives on the other side.
In contrast, here, learning happens on the side of the mechanism,
imposing minimal burden on agents who may not be very sophisticated.
% Therefore, the onus is on the mechanism to recommend different allocations to each agent so that they
% can effectively learn their preferences,
% while ensuring that agents have the right incentives to report their feedback truthfully so that
% the mechanism can learn these preferences.
Therefore, the onus is on the mechanism to ensure that all agents sufficiently explore all
allocations, while
ensuring that they are incentivised to report their feedback truthfully, so that
the mechanism can learn these preferences.
This is similar to~\citet{kandasamy2020mechanism} who study VCG mechanism design with bandit
feedback where a mechanism
 chooses outcomes and prices for the users; the users in turn provide feedback about the outcomes
which the mechanism uses when determining future outcomes and prices.
% their allocation. 
The novelty in our work relative to existing literature is our focus
on combining online learning with fair allocation.

\section{Problem Setup}
\label{sec:setup}

In this section, we will first review fair allocation and describe \mmf, one of the most
common methods for fair allocation.
We will then describe the learning problem.

\subsection{Fair Division}
\label{sec:fairdivision}

% \textbf{Fair Division:}
There are $n$ agents sharing a resource of size $1$.
Agent $i$ has an entitlement $\entitli$ to the resource, where $\entitli>0$ and
$\sum_{i=1}^n\entitli=1$.
At any given instant, let $\demtruei$ denote the true demand for user $i$,
where $\demtruei \geq 0$ for all users $i$.
Continuing with the example from Section~\ref{sec:intro},
this resource could be a compute cluster shared by $n$ users in an organisation.
The entitlements are set by the management depending on whether a user's workload consists
of time-sensitive live traffic or offline job processing, and $\demtruei$ is the amount of the
resource user $i$ needs to achieve her SLOs.
% In some use-cases $\entitli$ could denote the contribution of each
In some use cases, $\entitli$ may represent the contribution of each
agent to a federated system,  such as in universities where it could be set based on the
contribution of each research group to purchase a cluster.

In a mechanism for fair allocation, each agent reports their demand $\demi$ 
(not necessarily truthfully) to the mechanism;
the mechanism returns an allocation vector $\alloc\in\RR_+^n$, where $\alloci$ is the amount of the
resource allocated to agent $i$.
Here, $\sum_{i=1}^n \alloc_i \leq 1$.
Let $\utili:\RR_+\rightarrow\RR_+$ denote agent $i$'s utility function where $\utili(a)$ is the
value agent $i$ derives if the mechanism allocates an amount $a$ of the resource to her.
She wishes to receive resources up to
her demand, but has no value for resources beyond her demand;
i.e. her utility $\utili$ is strictly increasing up to $\demi$,
but $\utili(a) = \utili(\demi)$ for all $a>\demi$.
% \emph{(i)\, Efficiency:} 

% A mechanism for fair division should ideally satisfy the following three desiderata:
The literature on fair division typically considers three desiderata for a mechanism:
% A mechanism for fair division should ideally satisfy the following three desiderata:
\emph{(i)} efficiency,
\emph{(ii)} fairness,
\emph{(iii)} strategy-proofness.
\emph{Efficiency} means that there are no unused resources when there is an unmet demand.
% Formally, 
% let $\dem\in\RR_+^n$ be a vector of demands and $\alloc\in\RR_+^n$ be a vector
% of allocations output by a mechanism.
To define it formally,
let $\lotur, \lotor, \lotud$ be as defined below for given $\dem,\alloc\in\RR_+^n$:
% is the amount of unallocated resources in this instance,
\begin{align*}
\lotur(\alloc) = 1-\sum\nolimits_{i=1}^n \alloc_i,
% \hspace{0.1in}
\hspace{0.2in}
% \lotor(\dem,\alloc) = \sum_{i\in[n]} \max(\dem_i - \alloc_i, 0),
\lotor(\dem, \alloc) = \sum\nolimits_{i=1}^n (\alloc_i - \dem_i)^+,
\hspace{0.2in}
% \hspace{0.1in}
% \lotud(\dem,\alloc) = \sum_{i\in[n]} \max(\alloc_i - \dem_i, 0).
\lotud(\dem, \alloc) = \sum\nolimits_{i=1}^n (\dem_i  - \alloc_i)^+.
\label{eqn:lotdefn}
\numberthis
\end{align*}
Here, $y^+=\max(y, 0)$.
If $\demtrue\in\RR_+^n$ is the vector of true demands and $\alloc\in\RR_+^n$ is a vector
of allocations output by a mechanism,
then $\lotur(\alloc)$ is the amount of unallocated resources in this instance,
$\lotor(\demtrue,\alloc)$ is the sum of over-allocated resources (allocated over a
user's demand) and
$\lotud(\demtrue,\alloc)$ is the sum of unmet demands (allocated under a user's demand). 
Then, $\lotur(\alloc) + \lotor(\demtrue,\alloc)$ is the total amount of resources that are not
being used, and
$\lot(\demtrue,\alloc) = \min(\lotur(\alloc) + \lotor(\demtrue,\alloc), \lotud(\demtrue,\alloc))$
is the amount of resources that are not being used but could have been used to improve the utility
of some agent.
A mechanism is efficient if, for all $\demtrue$, and when all agents report their true
demands,  $\;\lot(\demtrue,\alloc) = 0$.
% It is not hard to see that an efficient mechanism is also Pareto-optimal, in that one user's
An efficient mechanism is Pareto-optimal, in that one user's
utility can be increased only by decreasing the utility of another.
Next, a mechanism is \emph{fair} (also known as sharing incentive or individual rationality)
if the utility a truthful user derives from an allocation is at least as
much as if she had been allocated her entitlement;
i.e., for all $i, a, d$,  $\utili(\alloc_i) \geq \utili(\entitli)$;
recall that the allocation an agent receives depends on the demands reported by the other agents.
Finally, a mechanism is \emph{strategy-proof} if no agent benefits by misreporting their demands.
That is, consider any agent $i$ and fix the demands reported by all other agents.
Let $\alloc^\star$ be the allocation returned by the mechanism when agent $i$
reports her true demand $\demtruei$,
and $\alloc$ be the allocation vector when she reports any other demand $\dem$.
Strategy-proofness means that for any agent $i$ and for all reported demands
from other agents, $\utili(\alloc_i^\star) \geq \utili(\alloc_i)$.

Observe that the above formalism does not assume that agents' utilities are comparable,
i.e. we make no interpersonal comparisons of utility~\citep{hammond1990interpersonal}.
The utilities
are used solely to specify an agent's preferences over different allocations.
% This
Therefore, notions such as utilitarian welfare which accumulates the utilities of all agents are not
meaningful in this setting.

\subsection{Max-min Fairness}
\label{sec:mmf}
\insertAlgoMMF
% \textbf{Max-min Fairness:}
Algorithm~\ref{alg:mmf} describes max-min fairness, a popular mechanism for fair division.
First,
it allocates the demands to users whose demands are small relative to their entitlement;
for all other agents whose demands cannot be satisfied simultaneously,
it allocates in proportion to their entitlements.
The following theorem shows that \mmfs satisfies the above desiderata;
its proof is given in Appendix~\ref{sec:pfmmf}, where we have also established other useful
properties of \mmf.

\insertprethmspacing
\begin{theorem}
\label{thm:mmf}
\emph{\mmf} (Algorithm~\ref{alg:mmf}) is efficient, fair, and strategy-proof.
\end{theorem}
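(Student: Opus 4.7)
\textbf{Proof plan for Theorem~\ref{thm:mmf}.} My plan is to re-express Algorithm~\ref{alg:mmf} in a more tractable ``water-filling'' form and then verify the three desiderata against that equivalent characterization.

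\textbf{Step 1: Equivalent characterization.} I will first show that the output $\alloc$ of Algorithm~\ref{alg:mmf} can be written as
\[
\alloc_i \;=\; \min\bigl(\demi,\; \alpha\,\entitli\bigr), \qquad i=1,\dots,n,
\]
where $\alpha\in[1,\infty]$ is the unique value (taken to be $+\infty$ if $\sum_i \demi\leq 1$) that makes $\sum_i \min(\demi,\alpha\entitli)$ equal to $\min(1,\sum_i \demi)$. The proof is by induction on the loop iterations: I will show that the invariant $r/\entitl$ is non-decreasing, that the order chosen (ascending $\demi/\entitli$) is precisely the order in which the water-filling saturates users at their demand, and that when the \textbf{else} branch fires the current $r/\entitl$ is exactly the common multiplier $\alpha$ applied to all remaining users. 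Once this is established, the three desiderata reduce to monotonicity properties of $\alpha$.

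\textbf{Step 2: Efficiency.} Either all demands are met (the loop completes the \textbf{if} branch for every user, so $\lotud(\dem,\alloc)=0$), or the \textbf{else} branch fires and the remaining resource $r$ is split proportionally among $S$. In the latter case, a short calculation using the sort order gives $\demj\geq r\entitlj/\entitl$ for all $j\in S$, hence $\alloc_j\leq \demj$ (no over-allocation) and all of $r$ is distributed, so $\lotur(\alloc)+\lotor(\dem,\alloc)=0$. Either way, $\lot(\demtrue,\alloc)=0$.

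\textbf{Step 3: Fairness.} It suffices to show $\alpha\geq 1$ whenever $\alpha<\infty$, since then $\alloc_i\geq\min(\demi,\entitli)$ and hence $\utili(\alloc_i)\geq\utili(\entitli)$ by monotonicity of $\utili$. At $\alpha=1$, $\sum_i \min(\demi,\entitli)\leq \sum_i \entitli = 1$, so the defining equation $\sum_i\min(\demi,\alpha\entitli)=1$ (when finite) forces $\alpha\geq 1$.

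\textbf{Step 4: Strategy-proofness.} Fix user $i$ and all other reports. Note that $\alpha$ is a non-increasing function of $\demi$ (increasing $\demi$ only increases the summands $\min(\demi,\alpha\entitli)$ pointwise in $\alpha$). I will split on the truthful allocation:
\begin{itemize}
\item If the truthful allocation $\alloc_i^\star=\demtruei$, the utility is already $\utili(\demtruei)$, the maximum, so no misreport helps.
\item If $\alloc_i^\star=\alpha^\star\entitli<\demtruei$, consider a report $\demi'$. When $\demi'\geq\demtruei$, the new $\alpha'\leq\alpha^\star$, so $\alloc_i=\min(\demi',\alpha'\entitli)\leq\alpha^\star\entitli=\alloc_i^\star$. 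When $\demi'\in[\alloc_i^\star,\demtruei)$, at $\alpha=\alpha^\star$ user $i$'s tentative allocation is still $\alpha^\star\entitli$, so $\alpha'=\alpha^\star$ and the allocation is unchanged. When $\demi'<\alloc_i^\star$, the new allocation is $\leq\demi'<\alloc_i^\star\leq\demtruei$, and $\utili$ is strictly increasing on $[0,\demtruei]$, so utility strictly decreases.
\end{itemize}

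The main obstacle will be Step 1, the equivalence between the greedy loop and the water-filling characterization; in particular, ensuring that the order-of-processing argument correctly identifies the pivot at which the \textbf{else} branch fires with the unique multiplier $\alpha$. Once that equivalence is nailed down, Steps 2--4 are short case analyses that lean on the monotonicity of $\alpha$ in each reported demand.
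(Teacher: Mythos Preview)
Your proposal is correct and takes a genuinely different route from the paper. The paper does not first pass to a closed-form water-filling characterization; instead it works directly with the loop of Algorithm~\ref{alg:mmf}, establishing a sequence of operational properties (the invariant $r/e\geq 1$, that no user is allocated more than her report, that allocations are monotone in reported demand, and that any report above the realized allocation leaves it unchanged) and then assembles efficiency, fairness, and strategy-proofness from those. Your approach front-loads the work into Step~1: once $\alloc_i=\min(\demi,\alpha\entitli)$ is established, fairness is the one-line observation $\alpha\geq 1$, and strategy-proofness reduces to monotonicity of $\alpha$ in a single coordinate, which is cleaner than the paper's chain of properties. The trade-off is that your Step~1 (matching the greedy pivot to the unique $\alpha$) absorbs essentially the same invariant reasoning the paper does piecemeal; the paper's route avoids ever naming $\alpha$ but pays for it with more case analysis in the strategy-proofness argument. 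One small thing to keep tidy in your write-up: in Step~4, make explicit that when $\alloc_i^\star<\demtruei$ you necessarily have $\alpha^\star<\infty$, so the monotonicity-of-$\alpha$ argument applies without worrying about the $\alpha=\infty$ regime.
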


As an example, consider $4$ users with equal entitlements and true
demands $\{0.1, 0.28, 0.4, 0.5\}$.
\mmfs returns the allocation $\{0.1, 0.28, 0.31, 0.31\}$.
% In contrast, had we allocated the resources equally, agent 2 will not have met her demand,
Instead, had we allocated the resources equally according to their entitlements,
i.e. $\{0.25, 0.25, 0.25, 0.25\}$, agent 2 will not have met her demand,
agents 3 and 4 will not have received as much, while agent 1 will have been
sitting on 0.15 of the resource---this is fair and trivially strategy-proof, but inefficient.

\subsection{Online Estimation of Demands}
\label{sec:onlinefairdivision}

% \textbf{Online Estimation of Demands:}
% We now describe the problem of learning the demands online in this setting.  
We begin
our formalism for the online learning version of the fair division problem with
a description of the environment.  
% Our formalism for the online learning version of the fair allocation problem begins with
% a description of the environment.  
We consider a multi-round setting, where,
on round $t$, agent $i$ faces a load $\volit$ with (unknown)
demand $\demtrueit$.
In order to be able to effectively learn, we need a form of 
feedback for each agent which informs us of the agent's utility, and additionally be able to
relate the loads from different time steps.
To this end,
we define an (unknown) \emph{payoff} function $\payoffi:\RR_+\rightarrow \RR$ for each agent
$i$, which will characterise the agent's utility function.
When user $i$ receives an allocation $\allocit$ on round $t$, she observes a reward $\Xit$.
We will consider different feedback models where these rewards can be deterministic,
in which case $\Xit=\payoffi(\allocit/\volit)$, or they can be stochastic,
in which case $\Xit$
is drawn from a $\sigmait$ sub-Gaussian distribution with mean $\payoffi(\allocit/\volit)$.
% whose expectation is $\payoffi(\allocit/\volit)$.
% 

The payoff $\payoffi$ is a \emph{non-decreasing} function of the amount of resources
allocated per unit load.
% $\payoffi$ is an increasing function and satisfies $\payoffi(0) = 0$.
The agent wishes to achieve a certain (known) threshold payoff $\threshi$ and hence,
her true demand at round
$t$ is given by $\demtrueit = \volit\payoffinvi(\threshi)$.
This implies that each agent's true demand increases proportionally with the load, with
the demand per unit load, or \emph{unit demand} for short, being $\payoffinvi(\threshi)=:\udtruei$.
We will assume that $\udtruei\in[0, \udmax]$, where $\udmax$ is known.

\insertFigPayoffIllus
The agent's utility $\utili:[0, \udmax]\rightarrow\RR_+$, as illustrated in the figure to the right,
is also a function of the amount of resources allocated per unit load which increases
\emph{strictly} up to
$\udtruei$, and does not increase beyond $\udtruei$.
In the figure, we have shown  $\payoffi(a) = \utili(a)$ for $a<\udtruei$,
although this is not
necessary---we only require that $\utili$ be increasing up to $\udtruei$ and remain flat thereafter.
Additionally, we will assume that $\utili$ is Lipschitz continuous with Lipschitz constant $\Lipi$.
% It is worth emphasising that while we require $\utili$ 

We will make some mild assumptions so as to avoid degenerate cases in our analysis.
First, we will assume that both $\{\volit\}_{i,t}$ and $\{\sigmait\}_{i,t}$ are fixed sequences.
Second, for all $i, t$,
$\volit\in (\volmin, \volmax]$ and $\sigmait \in (\sigmamin, \sigmamax]$ for some
$\volmin, \sigmamin>0$ and $\volmax, \sigmamax<\infty$; $\volmin,\volmax,\sigmamin,\sigmamax$ need
not be known.
% The lower bound conditions on $\volit,\sigmait$ are mostly for simplicity as it avoids having to
We comment more on the bounded-from-below conditions on $\volit,\sigmait$ in our proofs.
Third, we will assume $\volmax\,\udmax \leq 1$, which states that at the very least each
agent should be able to meet
their demands if they have the entire resource to themselves;
this assumption can also be relaxed, and we will comment further in our proofs.
The following example helps us motivate the above formalism.

\insertprethmspacing
\begin{example}[Web serving]
\label{eg:webserving}
\emph{
Continuing with the example from Section~\ref{sec:intro}, say that the SLO of each team
is to ensure that a given threshold, say $0.95$, of the queries are completed on time on average.
The load $\volit$ is the number of queries agent $i$ receives in round $t$.
Her reward $\Xit\in[0, 1]$ is the fraction of queries completed on time.
The probability each query will succeed increases with the amount of resources allocated per query;
specifically, the success of each query is a Bernoulli event with probability
$\payoffi(\allocit/\volit)$.
% We therefore have, $\EE[\Xit] = \payoffi(\allocit/\volit)$ and moreover the demand of each agent
Therefore, $\payoffi(\allocit/\volit)=\EE[\Xit]$ denotes the expected fraction of queries completed
on time.
The utility of the agent $\utili = \min(\payoffi, 0.95)$ increases with this expected fraction but
is capped at $0.95$. Hence, her demand is $\demtrueit = \volit\payoffinvi(0.95)$.
Finally, since $\Xit$ is a sum of $\volit$ bounded random variables, it is
$\sigmait=1/(2 \sqrt{\volit})$ sub-Gaussian.
In web services, it is common to set SLOs via such thresholds since the amount of resources needed
to complete all queries on time could be possibly infinite. Moreover, since the quality of the
overall  service is usually bottlenecked by external
factors~\citep{mogul2019nines}, there is
little value to exceeding such a threshold.
}
\end{example}

% \textbf{Set up:}
In an online mechanism for fair allocation with unknown demands, each agent states their threshold
$\threshi$ ahead of time.
At the beginning of each round,
each agent reports their load $\volit$ to the mechanism, then the mechanism returns an allocation
vector,
and at the end of the round each agent reports their reward $\Xit$
% (not necessarily truthfully)
back to the mechanism.
The mechanism should use this feedback to estimate demands in an online fashion and quickly converge
to an efficient ellocation in a manner that is fair and strategy-proof.
As we will see, achieving these desiderata exactly is quite challenging in our setting and hence
we will define asymptotic variants to make the problem tractable.

\textbf{Efficiency:}
% % First, consider efficiency.
% \emph{(i) Asymptotic Efficiency:}
We define the \emph{loss} $\LOTT$ to be the sum of resources left on the table over $T$ rounds:
\begin{align*}
% \LOTT := \sum_{t=1}^T \lot(\demtrueit, \allocit)\in \littleO(T),
\LOTT := \sum\nolimits_{t=1}^T \lot(\demtrueit, \allocit),
\hspace{0.25in}
\text{where }
\lot(\demtrueit,\allocit) \hspace{-0.02in}=\hspace{-0.02in} \min(\lotur(\allocit)
\hspace{-0.02in}+\hspace{-0.02in} \lotor(\demtrueit,\allocit),
\lotud(\demtrueit,\allocit)).
\numberthis
\label{eqn:LOTTdefn}
\end{align*}
Recall the definitions of $\lotur,\lotor,\lotud$ from~\eqref{eqn:lotdefn}.
A mechanism is \emph{asymptotically efficient} if, when all agents are reporting truthfully,
$\LOTT\in\littleO(T)$.
We will say that a mechanism is \emph{probably asymptotically efficient} if this holds with
probability at least $1-\delta$, where $\delta\in(0,1)$ is pre-specified.

\textbf{Fairness:}
% \emph{(ii) Asymptotic Fairness:}
Let $\UiT,\UeiiT$, respectively be the sum of an agent's utilities when
she participates in the mechanism truthfully for $T$ rounds, and when
she has her entitlement to herself.
Precisely,
\begin{align*}
\numberthis
\label{eqn:UiT}
\UiT = \sum\nolimits_{t=1}^T \utili\left(\allocit/\volit\right),
% \UiT = \sum_{t=1}^T \utili\left(\allocit/\volit\right),
% \UiT = \sum_{t=1}^T \utili\left(\frac{\allocit}{\volit}\right),
\hspace{0.7in}
\UeiiT = \sum\nolimits_{t=1}^T \utili\left(\entitli/\volit\right),
% \UeiiT = \sum_{t=1}^T \utili\left(\entitli/\volit\right),
% \UeiiT = \sum_{t=1}^T \utili\left(\frac{\entitli}{\volit}\right),
\end{align*}
A mechanism is \emph{asymptotically fair} if, $\UeiiT - \UiT \in \littleO(T)$.
% A mechanism is \emph{probably fair} if $\UeiiT - \UiT < 0$ with
% probability at least $1-\delta$, where $\delta\in(0,1)$ is pre-specified.
Similarly, a mechanism is \emph{probably asymptotically fair} if  $\UeiiT - \UiT \in \littleO(T)$
with probability at least $1-\delta$, where $\delta\in(0,1)$ is pre-specified.
% 
% To understand why we consider asymptotic fairness, observe that exact fairness can be
% trivially achieved by allocating in proportion to each user's entitlements on each round.
% But achieving an efficient allocation requires that the mechanism accurately estimates the
To understand why we consider an asymptotic version of fairness,
observe that achieving an efficient allocation
requires that the mechanism accurately estimates the demands.
This is especially the case for agents whose demand is lower than their entitlement since the excess
resources can be allocated to other users who might need them.
However, in doing so, it invariably risks allocating less than the demand and consequently violating
fairness.
Asymptotic fairness means that these violations will vanish over time.

\textbf{Strategy-proofness:}
We will inroduce a variety of strategy-proofnes definitions in this paper.
% , in addition to the
% (exact) notion of strategy-proofness we saw in Section~\ref{sec:fairdivision}.
To define them,
let $\UiT$ be as defined in~\eqref{eqn:UiT}.
Let $\pi$ be an arbitrary (non-truthful) policy that an agent may follow and let $\UpiiT$ be the sum
of utilities when following this policy.
% when an agent
% is following any other non-truthful policy $\pi$.
% Moreover, fix the policies adopted by all the other agents.
% A mechanism is strategy-proof if $\UiT \geq \UpiiT$ for any policy $\pi$ for all $T\geq 1$.
A mechanism is \emph{strategy-proof} if $\UpiiT - \UiT \leq 0$ regardless of the behaviour of
the other agents.
A mechanism is \emph{probably strategy-proof} if the same holds  with
probability at least $1-\delta$, where $\delta\in(0,1)$ is given.
A mechanism is \emph{asymptotically strategy-proof} if $\UpiiT - \UiT \in \littleO(T)$
regardless of the behaviour of the other agents.
It is \emph{probably asymptotically strategy-proof} if this holds with
probability at least $1-\delta$.
A mechanism is \emph{probably asymptotically Bayes-Nash incentive-compatible} if
$\UpiiT - \UiT \in \littleO(T)$ with probability at least $1-\delta$ when all other agents are
being truthful.
In the above definition, an agent may adopt a non-truthful policy $\pi$ by
misreporting her threshold $\threshi$ at the beginning, or by misreporting
her load $\volit$ or the reward $\Xit$ on any round $t$.
For example, with the intention of getting more resources,
she could inflate the mechanism's estimate of her demand by
overstating her threshold or load, or understating her reward.
An agent could also
 be strategic over multiple rounds by adaptively using the information she gained when reporting
her loads and rewards. 
% with the hope of getting more resources.
% An agent could possibly
%  be strategic over multiple rounds, say, by incurring low utilities
% in early rounds with the hope of gaining in the long run. 
% Finally, in many use-cases, strategy-proofness may not be a constraint.
% For instance, in Example~\ref{eg:webserving}, the SLOs may be set by the management, while an
% organisation's central monitoring system might be able to directly observe the load and rewards.
% Therefore, we will also consider algorithms that do not need to satisfy strategy-proofness, but
% do need to satisfy fairness.

% Accordingly, $\pi$ is any policy that maps an agent's past information to values
% $\.

We focus on the above notions of incentive-compatibility since 
achieving dominant-strategy incentive-compatibility can be challenging
in multi-round
mechanisms~\citep{babaioff2014characterizing,babaioff2013multi,kandasamy2020mechanism}.
Typically, authors circumvent this difficulty by adopting Bayesian notions which assume
that agent values are drawn from \emph{known} prior beliefs.
However, such Bayesian assumptions may not be realistic, and even if it were, the prior beliefs
may not be known in practice~\citep{schummer2004almost}.
In this work, we rely on definitions of incentive-compatibility which hold with high probabilty
and/or asymptotically in order to make the problem tractable.
If a fair allocation mechanism is asymptotically strategy-proof,
the maximum utility an agent may gain by
not being truthful vanishes over time.
In many use cases, it is reasonable to assume that agents would be truthful if the benefit
of deviating is negligible.
Prior work has similarly explored different concepts of approximate
incentive-compatibility in various mechanism design
problems when achieving dominant truthfulness is not
possile~\citep{lipton2003playing,kojima2010incentives,roberts1976incentives,feder2007approximating,daskalakis2006note}.
Moreover,~\citet{kandasamy2020mechanism} and~\citet{nazerzadeh2008dynamic} study asymptotic
strategy-proofness when learning in auctions.

Finally, we note that in some use cases, we may wish to learn without any strategy-proofness
constraints while still satisfying fairness.
For instance, in Example~\ref{eg:webserving}, the SLOs may be set by the management, while an
organisation's central monitoring system might be able to directly observe the load and rewards.
Therefore, we will also consider mechanisms for online learning without any strategy-proofness
guarantees.
In particular, we find that when we relax the strategy-proofness requirements, the rates for
efficiency and fairness improve.

% The strategy-proofness requirement is straightforward to define:
% being truthful is a (weakly)
% dominant strategy for all agents, % A strategy-proof mechanism should prevent all such situations.

\subsection{Feedback Models}
\label{sec:fbmodels}

All that is left to do to complete the problem set up is to specify the feedback model,
i.e., a form for the payoffs $\{\payoffi\}_i$ and the (probabilistic) model for the rewards $\Xit$.
In this paper, we will consider the following three models.

\begin{enumerate}
\item \textbf{Deterministic Feedback:}
Our first model is the simplest of the three:
all agents deterministically observe the payoff for their allocation, i.e.,
$\Xit = \payoffi(\allocit/\volit)$ on all rounds.

\item \textbf{Stochastic Feedback with Parametric Payoffs:}
Here, the rewards are stochastic, where $\Xit$ has expectation
$\payoffi(\allocit/\volit)$ and is $\sigmait$ sub-Gaussian. 
For all users, $\payoffi$ has parametric form $\payoffi(a) = \payoffthetatruei(a)
=\mu(a\thetatruei)$,
where $\thetatruei>0$ is an unknown user-specific parameter and
$\mu:\RR_+\rightarrow\RR$ is a known increasing function.

\item \textbf{Stochastic Feedback with Nonparametric Payoffs:}
Here, $\Xit$ has expectation
$\payoffi(\allocit/\volit)$ and is $\sigmait$ sub-Gaussian. 
% Moreover,  for all users $i$, $\payoffi$ satisfies the following smoothness
% condition for known $\Lf$.
% For all $a_1, a_2 \in [0,\udmax]$,
% \begin{align*}
% % \forall\,a_1, a_2\in [0,\udmax],\quad
% \left|\threshi - f(a_1)\right| \leq  \left|\threshi - f(a_2)\right| +
%             \max\left( \left|\threshi - f(a_2)\right|,
%             \; \frac{\Lf}{\udmax} \left|a_1 - a_2\right| \right).
% \numberthis\label{eqn:nonparfbmodel}
% \end{align*}
% \begin{align*}
% \forall\,a_1, a_2\in [0,\udmax],\quad
% \left|f(a_1) - f(a_2)\right| \leq \max\left( \left|f(a_1) - \threshi\right|,
%             \; \frac{\Lf}{\udmax} \left|a_1 - a_2\right| \right).
% \numberthis\label{eqn:nonparfbmodel}
% \end{align*}
Moreover,
for all users $i$, $\payoffi$ is $(\Lf/\udmax)$--Lipschitz continuous, where
$\Lf$ is known.
That is,
\begin{align*}
\forall\,a_1, a_2\in [0,\udmax],\quad
\left|f(a_1) - f(a_2)\right| \leq 
            \; \frac{\Lf}{\udmax} \left|a_1 - a_2\right|.
\numberthis\label{eqn:nonparfbmodel}
\end{align*}
\end{enumerate}

% In this next section, we propose three different models.
% In the first, $\Xit$ is deterministically equal to $\payoffi(\allocit)$,
% in the second and third, $\Xit$ has expectation $\payoffi(\allocit)$, where $\payoffi$ belongs
% to a parametric and nonparametric family of functions respectively.

There are no additional assumptions required for the first model.
For the second model,
we chose a parametric family of the above form since it is a straightforward way to model increasing
functions, as necessitated by our problem set up.
% For instance,
In Example~\ref{eg:webserving}, an appropriate choice for $\mu$ could be $\mu(x)=\tanh(x)$
or $\mu(x) = 1 - (1+x)^{-1}$, which are bounded and increasing.
Moreover, since they are concave, they can model situations which exhibit a diminishing
returns property when more resources are allocated, as is commonly the case in
practice~\citep{venkataraman2016ernest}.
In the second model, we will assume that we know a lower bound on the parameter $\thetatruei$
and the derivative of $\mu$.
Such regularity conditions are common in parametric models in the online learning
literature~\citep{filippi2010parametric,li2017provably,chaudhuri2015convergence}. 
This is stated formally in Assumption~\ref{asm:glm}.

\insertprethmspacing
\begin{assumption}
\label{asm:glm}
There exists known $\thetamin>0$ such that 
$\thetatruei\geq\thetamin$ for all $i$.
Moreover,
\[
\kappamudot \defeq \inf_{x\in[0,\udmax]} \frac{\ud\mu(x)}{\ud x} \geq 0.
\]
\end{assumption}

% In the third model, the condition is satisfied if each payoff is 
% $(\Lf/\udmax)$--Lipschitz continuous.
In the third model,
We will additionally require that for each user, the payoff increases sharply at
her demand, which we formalise by defining the near-threshold gradient (NTG)
of a non-decreasing function at a given point.

\insertprethmspacing
\begin{definition}[Near-threshold Gradient]
\label{def:ntg}
The near threshold gradient of a non-decreasing function
$f:\RR\rightarrow \RR$ at $\eta\in\RR$ with
$f(\eta)=\alpha$ is defined as,
\emph{
\begin{align*}
% \ntg \defeq
{\rm NTG}(f, \eta) &\defeq
\sup\bigg\{G \geq 0; \; \exists \epsilon>0 \text{ such that, }\;
\forall\,a\in(\eta-\epsilon, \eta+\epsilon),
% \\  &\hspace{1.5in}
|f(x)-\alpha| \geq \,\frac{G}{\udmax}|a-\eta| \bigg\}.
% \numberthis\label{eqn:nonparlipcondn}
\end{align*}
}
Moreover, for $G \geq {\rm NTG}(f, \eta)$, define $\epsG$ as follows:
\emph{
\begin{align*}
% \ntg \defeq
\epsG &\defeq
\sup\bigg\{\epsilon \geq 0; \;
\forall\,a\in(\eta-\epsilon, \eta+\epsilon),
% \\  &\hspace{1.5in}
|f(x)-\alpha| \geq \,\frac{G}{\udmax}|a-\eta| \bigg\}.
% \numberthis\label{eqn:nonparlipcondn}
\end{align*}
}
\end{definition}

\insertFigNTGIllus

If $f$ is differentiable at $\eta$, then 
${\rm NTG}(f, \eta) = f'(\eta)$.
Assumption~\ref{asm:ntg} below states that the NTG should be positive for all payoffs at their
respective demands.
To understand why such an assumption is necessary, assume that $\payoffi$ increases up to
the demand but is flat thereafter.
Then, any lower confidence bound for the payoff at any $a>\udtruei$ will necessarily be smaller
than $\threshi$ and
therefore, we will not be able to estimate this demand with any confidence.
We have illustrated this in Figure~\ref{fig:ntgillus}.
% We now state our results.

\insertprethmspacing
\begin{assumption}
\label{asm:ntg}
There exists $\ntg>0$ such that,
for all users $i$, ${\rm NTG}(\payoffi, \threshi) \geq \ntg$.
% the near threshold gradient of $\payoffi$ at $\udtruei$ 
% satisfies is at least $\ntg$
% where $\ntg>0$.
% There exists known $\thetamin>0$ such that 
% $\thetatruei\geq\thetamin$ for all $i$.
% Moreover,
% \[
% \kappamudot \defeq \inf_{x\in[0,\udmax]} \frac{\ud\mu(x)}{\ud x} \geq 0.
% \]
\end{assumption}

In this paper, we study the above models in order to delineate what can be achieved under
various assumptions and under various strategy-proofness constraints (see
Table~\ref{tb:resultssummary}).
In real world settings, the deterministic model can be unsuitable since feedback can be noisy. 
Empirically, we find that the nonparametric model most meaningfully reflects practical settings,
and consequently the corresponding algorithms outperform its parametric counterparts.
% For these reasons, we wish to champion the nonparametric 
However, the proof intuitions are simpler to understand in the
deterministic and parametric models.

This completes the formulation of online learning in fair allocation.
Next, we present our mechanisms.

\insertAlgoMMFLearnSP
\insertAlgoMMFLearnNSP

\section{Mechanisms \& Theoretical Results}
\label{sec:methods}

% Our algorithm which satisfies strategy-proofness is outlined in
Our mechanisms are outlined in Algorithms~\ref{alg:mmflearnsp} and~\ref{alg:mmflearnnsp}
where we have hidden the round index subscript (e.g. $t$) for simplicity.
In order to present a general framework for all models, we have abstracted out some of
components of the mechanism.
The most important of these is the definition of a user class \userclasss whose definition
depends on the feedback model.
Its main purpose is to estimate the unit demands of each user from past data.
\userclasss is instantiated for each user in the system.

Algorithm~\ref{alg:mmflearnsp} outlines a framework for online learning in this
environment, which, generally speaking, provides stronger strategy-proofness guarantees.
It operates over a sequence of brackets, indexed by $q$.
Each bracket begins with an exploration phase, during which the mechanism tries different
allocations for each user, collects the observed rewards, and reports them to the
\userclass.
Then, on round $t$, it computes \emph{upper (confidence) bounds} $\{\udubit\}_{i\in[n]}$ for the
unit loads
$\{\udtruei\}_{i\in[n]}$ of all agents, based on data collected only from
previous exploration phases;
for this, it uses the \ucgetudubs method of \userclass.
This is then followed by $r'(q)$ rounds during which the mechanism chooses the allocation
by invoking \mmfs as a subroutine;
when doing so, it sets the demand of agent $i$ to be $\volit\times\udubit$.
The number of exploration rounds remains fixed while $r'(q)$ increases with the
bracket index $q$.
In addition to the entitlements and the user class definition, the algorithm accepts
two function definitions as inputs which depend on the feedback model:
a function $r'$ which specifies the length of the second phase for each bracket
and a function \explphases which implements the exploration phase for the model.
% implements methods to compute upper bounds for the unit demands and record feedback (from
% allocations in the explore phase).
% The function $r'$, the exploration phase, and the computation of the upper bound will depend
% on the specific feedback model.

% We use an upper confidence
By using an upper confidence bound as the reported demand, the mechanism errs on
the side of caution.
Choosing these allocations conservatively\footnote{%
This can be contrasted with optimistic methods in the bandit literature which may take actions
optimistically.}
is necessary to ensure both strategy-proofness and fairness.
Intuitively, if the mechanism promises not to under-estimate a user's demands, there is less incentive
for the user to inflate their resource requirements by not being truthful.
% Additionally, we also reduce the risk that we might 
Similarly,
we also show that fairness is less likely to be violated if we do not
under-estimate user demands in \mmf.
As we will see, being conservative comes at a loss in efficiency,
reflected via worse rates for the loss.

\insertTableResultsSummary

Algorithm~\ref{alg:mmflearnnsp} outlines a framework for online learning which, generally
speaking, provides weaker strategy-proofness guarantees, but stronger efficiency and
fairness guarantees than Algorithm~\ref{alg:mmflearnsp}.
% the strategy-proofness constraintj
In round $1$, it allocates in proportion to each agent's entitlements.
In each subsequent round $t$, it computes recommendations $\{\udit\}_{i\in[n]}$ 
for the unit demands using data from previous rounds; 
for this, it invokes the \ucgetudrecs method of \userclass.
% it is instructive to think of $\udi$ as an estimate of $\udtruei$ and is computed
% using the \ucgetudrecs method of \userclass.
% $\udit$, which is computed using the \ucgetudrecs method of \userclass, can be considered an
% estimate of $\udtruei$.
It then determines the allocations via \mmfs by setting the demand of agent $i$
to be $\volit\times\udit$.
Finally it collects and records the rewards and sub-gaussian constants (SGCs) from all agents.
% In addition to the entitlements, the algorithm takes in the definition of a user class which
% implements methods for computing a recommendation (estimate) for the unit demand and recording
% feedback.
% The computation of the recommendation depends on the feedback model.

Table~\ref{tb:resultssummary} summarises the main theoretical results in this paper.
For each of the three feedback models, we have presented the rates for
asymptotic efficiency, fairness, and strategy-proofness when using either
Algorithm~\ref{alg:mmflearnsp} or~\ref{alg:mmflearnnsp} along with model-specific definitions
for \userclass, $r'$, and \explphases which we will describe shortly.

In the next three subsections,
we present the algorithms for the three feedback models in detail.
When describing \userclass, we will use OOP-style pseudocode.
Therefore, \algorithmicmethod{} refers to a function that can be called using an instance of this
class, while \algorithmicprivmethod{} refers to a function that can only be called by other 
methods of the instant.
An \AlgAttribute{} refers to a variable belonging to the instant that can be accessed by all
methods.
When referring to an attribute or method from within a class, we will use the prefix \ucself.
Finally, \algorithmicfunction{} refers to function that can be defined and called without a class.
For brevity, we will assume that all input arguments and variables available to the mechanism
are also available to the model-specific functions and classes (e.g. $t, q, \udmax$).
Within each user class, we will hide the user index subscript  (e.g. $i$).
Moreover, the \AlgAttributes within a user class are round-indexed versions which will be updated
at the end of each round.
We will hide the round index subscript as we have done in Algorithms~\ref{alg:mmflearnsp}
and~\ref{alg:mmflearnnsp}; however our discussion and proofs will make the round index explicit.
% We will hide the round index subscript and moreover, within a user
% class, also hide the user index subscript (e.g. $i$).
% We next present the three feedback models, and describe the specifics for these two mechanisms
% for each model.

% \insertBothMMFLearnAlgos

\subsection{Deterministic Feedback}
\label{sec:detfbmodel}

% We will make two assumptions in this setting.
% First, for all $i$, $\udtruei\leq\udmax$ where $\udmax$ is known.
% Second, for the strategy-proof mechanism only, we will assume that the mechanism can request an
% agent to throttle their load,
% i.e., serve only a fraction $\volsymbol' (<\voli)$ of their load $\voli$ for that round.
% The agent's utility can be as low as $0$ if this is the case.
% case---we will discuss this assumption in detail shortly.

With deterministic feedback, we will present mechanisms for learning under three different
strategy-proofness constraints: non-asymptotic strategy-proofness, asymptotic strategy-proofnes,
% and asymptotic Bayes-Nash incentive-compatibility.
and without strategy-proofness.
As we will see,
the rate for the loss (efficiency) will improve as this strategy-proofness constraint is weakened.

\subsubsection{Learning with non-asymptotic strategy-proofness}
\label{sec:detfbssp}

In this setting, we will use Algorithm~\ref{alg:mmflearnsp} along with the
user class \userclasss and functions $r'$, \explphases as given
in Algorithm~\ref{alg:detfbspmodel}.
In the user class for user $i$, we maintain an attribute $\udubit$
(recall that we have suppressed user and time indices)
which is an upper bound on the user's unit demand $\udtruei$.
Each exploration consists of $n$ rounds, where each agent receives one round each;
in the $q$\ssth phase, it uses
$\udmax k/2^h$ as the unit demand for the agent on this round, where $h=\lceil \log_2(q+1)\rceil$,
and $k = 2q - 2^h + 1$.
That is, on round $t$ for such an agent, the allocation is
$\volit \times \udmax k/2^h$, where $\volit$ is the load of the agent for that round.
If the received feedback $X_i$ was larger than the threshold $\threshi$,
we update $\udubit$ to the minimum of the previous value and $\udmax k/2^h$
(line~\ref{lin:detfbspupdate}).

After the $\udubit$ values are updated for each agent in an exploration phase,
we use this value
for the remainder of the bracket in line~\ref{lin:spallocub} of Algorithm~\ref{alg:mmflearnsp}.
Finally, in this setting, we use $r'(q) = q$.
We have the following theorem for this mechanism.

\insertAlgoDetFbSPModel

\insertprethmspacing
\begin{theorem}
\label{thm:detfbsp}
% Under the above assumptions for deterministic feedbacks,
Under the deterministic feedback model, Algorithm~\ref{alg:mmflearnsp}, when using the definitions
in Algorithm~\ref{alg:detfbspmodel}, satisfies the following.
Under truthful reporting from all agents, it is asymptotically efficient with
% $\LOTT \leq 2 n \log(4T) + 33n^{1+2\log(2)}$ for all $T>2n$.
\emph{
\[
% \LOTT \leq  6\sqrt{2}\udmax\volmax n^{\nicefrac{3}{2}} \sqrtT \in \bigO(n^{\nicefrac{3}{2}}\sqrtT).
\LOTT \leq  10 n^{\nicefrac{3}{2}} \sqrtT \in \bigO(n^{\nicefrac{3}{2}}\sqrtT).
\]}
It is asymptotically fair with,
\emph{
\[
\UeiiT - \UiT \leq 2\sqrt{2}\Lipi \udmax \sqrtn \sqrtT \in \bigO(\sqrtn\sqrtT).
\]
}
for all agents $i$.
% The following statements are true for Algorithm~\ref{alg:mmflearnsp} after $T>2n+2$ rounds.
Finally,
it is (non-asymptotically) strategy-proof, i.e. for all agents $i$ and all policies $\pi$,
$\UpiiT - \UiT \leq 0$ for all $T\geq 1$.
\end{theorem}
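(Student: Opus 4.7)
The plan is to prove the three claims using a common upper-confidence-bound (UCB) invariant. First I would show by induction that, under truthful reporting, the stored attribute $\udubit$ satisfies $\udubit \geq \udtruei$ at every round: it is initialised to $\udmax$ and is only reduced in line~\ref{lin:detfbspupdate} to a probed value $\udi$ when the deterministic reward $X_i = \payoffi(\udi) > \threshi = \payoffi(\udtruei)$, which forces $\udi \geq \udtruei$ by monotonicity of $\payoffi$. Next, I would observe that the dyadic probing schedule inside \explphase enumerates the nodes of a complete binary tree level by level, so after $q$ brackets the probed values tile $[0,\udmax]$ at resolution at most $2\udmax/q$, yielding $\udubit - \udtruei \leq 2\udmax/q$ during any exploitation round in bracket $q$. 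Finally, summing the bracket lengths $n(q+1)$ gives $Q = \Omega(\sqrt{T/n})$ for the bracket index at round $T$.

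For efficiency, I would decompose $\LOTT$ into exploration and exploitation contributions. An exploration round contributes at most $\lot \leq 2$ trivially (since only one user is served and $\lotur+\lotor \leq 2$), and there are $nQ = O(\sqrt{nT})$ such rounds. In an exploitation round of bracket $q$, MMF is invoked with reported demands $\volit \udubit \geq \volit \udtruei$; since MMF never allocates a user beyond their reported demand, $\lot \leq \lotor \leq n\volmax(\udubit - \udtruei) \leq 2n\volmax\udmax/q \leq 2n/q$ using $\volmax \udmax \leq 1$. Summing over the $nq$ exploitation rounds of bracket $q$ gives $O(n^2)$, and summing over $q = 1, \dots, Q$ yields $O(n^2 Q) = O(n^{3/2}\sqrt{T})$, dominating the exploration term.

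For fairness, the key step is the MMF property $\alloci \geq \min(\volit \udubit, \entitli)$, which follows from the monotonicity of $r/\entitl$ during the MMF loop and which I would borrow from the proof of Theorem~\ref{thm:mmf}. Combined with $\udubit \geq \udtruei$, this gives $\utili(\alloci/\volit) \geq \utili(\entitli/\volit)$ in every exploitation round: if $\volit\udtruei \geq \entitli$ then $\alloci \geq \entitli$ and monotonicity of $\utili$ finishes the argument; otherwise $\alloci/\volit \geq \udtruei$ saturates $\utili$. Hence exploitation rounds contribute no fairness loss, and the only contribution comes from the $nQ = O(\sqrt{nT})$ exploration rounds, each losing at most $\Lipi\udmax$ by Lipschitzness of $\utili$ on $[0,\udmax]$, producing the stated $O(\Lipi\udmax\sqrt{nT})$ bound.

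For strategy-proofness I would argue pointwise per round. In any exploration round the current-round allocation to user $i$ is fixed at $\volit\udi$ or $0$ independently of $i$'s reports, so misreporting can only influence future values of the stored $\udub$ attribute. For exploitation rounds, any policy that drives $\udub$ below $\udtruei$ strictly hurts user $i$ because MMF then allocates at most $\volit \udub < \volit\udtruei$, strictly lowering the capped utility; so it suffices to consider policies for which $\udub \geq \udtruei$. Under truthful reporting, either (a) user $i$ lies in the low-demand MMF category and receives $\volit \udubit \geq \volit \udtruei$, already saturating $\utili$, or (b) $i$ is in the proportional-share category with allocation $r\entitli/\entitl$. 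In case (a), no policy can improve the saturated utility; even if a misreport pushes $i$ into the high category with a strictly larger allocation, the utility stays at $\utili(\udtruei)$. In case (b), any inflated report keeps $i$ in the proportional-share pool with the allocation $r\entitli/\entitl$ unchanged (because the users peeled off as low depend only on their own ratios and on $r/\entitl$, which are unaffected by enlarging $i$'s ratio), while any deflated report weakly decreases the allocation by MMF monotonicity. The main obstacle here is verifying this invariance in case (b) carefully, since reordering $i$'s position in the sort permutes the processing of intermediate-ratio users; this I would resolve by invoking MMF monotonicity together with the observation that $r$ and $\entitl$ at the proportional-share step are invariant under enlargement of $i$'s ratio. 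Summing the pointwise inequalities over $t$ gives $\UpiiT - \UiT \leq 0$ for every $T \geq 1$.
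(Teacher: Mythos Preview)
Your efficiency and fairness arguments are essentially the paper's: the UCB invariant, the dyadic resolution bound $\udubit-\udtruei\le 2\udmax/q_t$, and the exploration/exploitation split all match. The only cosmetic difference is that you sum the exploitation loss bracket-by-bracket ($nq$ rounds times $2n/q$ per round, giving $O(n^2)$ per bracket and $O(n^2 Q)=O(n^{3/2}\sqrt T)$ total), whereas the paper converts the per-bracket resolution into a per-round bound $O(n^{3/2}/\sqrt t)$ via Lemma~\ref{lem:qTdetsp} and sums over $t$. Both arrive at the same rate. Fairness is handled identically in both: exploitation rounds are exactly fair because the reported demand is an upper bound (Lemma~\ref{lem:spfairness} in the paper, your MMF inequality $\alloci\ge\min(\volit\udubit,\entitli)$), and the $nQ$ exploration rounds lose at most $\Lipi\udmax$ each.

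The strategy-proofness argument is where you and the paper diverge. The paper routes everything through Lemma~\ref{lem:spsp}: for each exploitation round, it compares $\utili(\allocpiit/\volit)$ first to the allocation obtained by reporting the \emph{true} demand $\udtruei$ (which dominates any report by Properties~\ref{pro:reportlessthandemand} and~\ref{pro:reportmorethandemand}), and then observes that reporting $\udtruei$ and reporting the truthful upper bound $\udubit\ge\udtruei$ yield identical utility (Property~\ref{pro:reportmorethandemand} again). Since other agents' $\udubj$ depend only on their own feedback in the fixed exploration schedule, this comparison is clean, and the exploration terms vanish because $\allocpiit=\allocit$ there. Your route---first reducing to policies with $\udub\ge\udtruei$, then splitting into low-demand versus proportional-share---is correct but more laborious: the reduction step (``driving $\udub$ below $\udtruei$ strictly hurts'') is not quite proved as stated, since truthful can also allocate below $\volit\udtruei$ in the proportional-share case; what you actually need there is Property~\ref{pro:allocreportlessthandemand} (raising a report never lowers the allocation), which makes the reduction go through. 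Your case~(b) invariance claim is exactly Property~\ref{pro:alloclessthandemand}. So your argument is a hands-on reproof of the MMF properties the paper packages into Lemma~\ref{lem:spsp}; it works, but the paper's two-step comparison via the true-demand allocation is shorter and sidesteps the domination reduction entirely.
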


While the above algorithm is asymptotically efficient, the rates are fairly slow for
deterministic observations, with the loss growing at rate $\bigO(\sqrtT)$.
This is not surprising, since we collect feedback only during the exploration phase,
where we essentially perform grid search for each agent separately.
This is due to the stringent strategy-proofness requirement---achieving a
(non-asymptotically) strategy-proof solution requires that the allocations used to estimate an
agent's demand do not depend on previous reports of any agent.
In what follows, we will demonstrate that the rate for the loss can be considerably improved
to have either logarithmic or no dependence on $T$ by considering weaker strategy-proofness
criteria.

\subsubsection{Learning with asymptotic strategy-proofness}
\label{sec:detfbsp}

In this section, we will study learning under deterministic feedback while satisfying asymptotic
strategy-proofness.
The user class \userclasss and the functions $r'$ and \explphases for this model are given
in Algorithm~\ref{alg:detfbmodel}.
% We will describe these components for with and without the strategy-proofness constraints.
We will describe these components individually.

% \textbf{With strategy-proofness constraint:}
% First, for the strategy-proof case (Algorithm~\ref{alg:mmflearnsp}), the exploration phase
The exploration phase in Algorithm~\ref{alg:mmflearnsp}
performs binary search for the $\udtruei$ value for each agent separately.
It maintains upper and lower bounds $(\udlbit, \udubit)$ for each agent initialised to
$(0, \udmax)$.
Each exploration phase consists of $2n$ rounds, where it performs two steps of binary search per
agent.
On a round $t$ for agent $i$ within this exploration phase,
the mechanism tries $\udit = (\udlbit+\udubit)/2$ for agent $i$.
All other agents $j$ do not receive an allocation during this round, i.e. $\udjt=0$.
If $\Xit<\threshi$, it updates the lower bound for the next round to $\udit$ and does not
change the upper bound, and vice versa if $\Xit\geq \threshi$.
% the upper bound as it is
% It then performs an update:
% $\udlbiitt{i,}{t+1} \leftarrow \udit$ if $X_i < \threshi$,
% and
% $\udubiitt{i,}{t+1} \leftarrow \udit$ if $X_i \geq \threshi$.
% \begin{align*}
% (\udlbi, \udubi) \leftarrow \begin{cases}
% (\udi, \udubi) &\text{if } X_i = 0, \\
% (\udlbi, \udi) &\text{if } X_i = 1.
% \end{cases}
% (\udlbi, \udubi) \leftarrow (\udpi, \udubi) \quad\text{if } X_i < \threshi,
% \hspace{0.3in}
% (\udlbi, \udubi) \leftarrow (\udlbi, \udpi) \quad\text{if } X_i \geq \threshi.
% \label{eqn:detupdatesp}
% \numberthis
% \end{align*}
After the two rounds for the agent, the $\udubit$ value computed as above is used
for the remainder of the bracket in line~\ref{lin:spallocub} of Algorithm~\ref{alg:mmflearnsp}.
Finally, for this model we use $r'(q) = \lfloor e^q\rfloor$.

We have the following theorem for this mechanism, whose proofs for the loss $\LOTT$ and fairness use
the simple fact that
binary search converges at an exponential rate.

\insertAlgoDetFbModel

\insertprethmspacing
\begin{theorem}
\label{thm:detsp}
% Under the above assumptions for deterministic feedbacks,
Under the deterministic feedback model, Algorithm~\ref{alg:mmflearnsp}, when using the definitions
in Algorithm~\ref{alg:detfbmodel}, satisfies the following.
Under truthful reporting from all agents, it is asymptotically efficient with
% $\LOTT \leq 2 n \log(4T) + 33n^{1+2\log(2)}$ for all $T>2n$.
\emph{
\[
\LOTT \leq 2n\log(eT) + 67 \volmax\udmax n^{2.39}  \in \bigO(n\log T).
\]
for all $T\geq 2n+3$.}
It is asymptotically fair with
\emph{
\[
\UeiiT - \UiT \leq 2n\Lipi\udmax\log(eT) \in \bigO(n\log T).
\]
}
for all agents $i$.
% The following statements are true for Algorithm~\ref{alg:mmflearnsp} after $T>2n+2$ rounds.
It is asymptotically strategy-proof with
\emph{
\[
\UpiiT - \UiT \leq  \Lipi \udmax \in \bigO(1).
\]
}
for all policies $\pi$ and all $T\geq 1$.
% Moreover, the following are true for all $T>2n+2$
\end{theorem}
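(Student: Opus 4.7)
The plan is to exploit the exponential convergence of binary search in \explphase. By induction on the updates in \ucrecordfb, I would first establish the invariant $\udtruei \in [\udlbit, \udubit]$ at all times, and that after $q$ completed exploration phases each user has undergone $2q$ binary-search steps, so that $\udubit - \udtruei \leq \udmax \cdot 4^{-q}$. Since the total number of rounds up to and including bracket $Q$ is $\sum_{q=1}^Q(2n + \lfloor e^q \rfloor) \geq e^Q/(e-1)$, this yields $Q \leq \log(eT) + \bigO(1)$ whenever $T$ is large enough (in particular $T \geq 2n + 3$ ensures the first bracket completes).

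For $\LOTT$, I would split into exploration and allocation contributions. In any exploration round only a single agent is served, so the per-round loss is trivially $\leq 1$ and the cumulative exploration loss is $\leq 2nQ \leq 2n\log(eT)$. For an allocation round in bracket $q$, I would invoke a stability lemma for \mmf{} (to be established in Appendix~\ref{sec:pfmmf}): when reported demands $\dem$ overestimate the true demand coordinatewise, the per-round loss is bounded by $\sum_i(\dem_i - \demtruei)$. This gives $\bigO(n\volmax\udmax\cdot 4^{-q})$ per round and $\bigO(n\volmax\udmax(e/4)^q)$ per bracket; summing the convergent geometric series across all brackets, with the early brackets in which $e^q$ has not yet overtaken $n$ absorbed into the $n^{2.39}$ prefactor, produces the claimed second term. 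Fairness decomposes the same way. During any allocation round, $\udubit \geq \udtruei$ together with a monotonicity property of \mmf{} (allocations are non-decreasing in reported demand) and Theorem~\ref{thm:mmf} imply $\utili(\allocit/\volit) \geq \utili(\entitli/\volit)$, so no fairness gap accrues. During the $2n-2$ unserved rounds of each exploration phase, the per-round gap is at most $\utili(\entitli/\volit) \leq \Lipi\udmax$; this accumulates to $\leq 2nQ\Lipi\udmax \leq 2n\Lipi\udmax\log(eT)$.

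For strategy-proofness, the key observation is that a truthful agent's utility is already at its cap $\utili(\udtruei)$ in every allocation round: because $\udubit \geq \udtruei$, \mmf{} allocates enough to saturate her utility, and an inflated report cannot further increase this saturated value. Hence any non-truthful policy $\pi$ can strictly gain only from the exploration probe rounds in which the truthful probe $\udit$ lies below $\udtruei$. Using the exponential contraction of the binary-search interval---the two probes in bracket $q$ lie within $\udmax\cdot 4^{-(q-1)}/2$ and $\udmax\cdot 4^{-(q-1)}/4$ of $\udtruei$ respectively---the cumulative truthful shortfall across all probe rounds is at most $\Lipi\udmax\cdot (3/4)\sum_{q\geq 1} 4^{-(q-1)} = \Lipi\udmax$, which is the most $\pi$ can hope to recover. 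The main obstacle will be rigorously establishing the two \mmf{} properties (loss stability and allocation monotonicity under overestimated demands) that underlie the allocation-round bounds in both the efficiency and fairness arguments; once these are in hand, the three claims reduce to the geometric-series bookkeeping above.
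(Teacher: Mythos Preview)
Your efficiency and fairness arguments follow the paper's approach: the same exploration/allocation split, the same \mmf{} over-estimation bound (Lemma~\ref{lem:ubinstloss}) for allocation-round losses, and the same exploration-round counting for fairness (Lemma~\ref{lem:spfairness}). Your per-bracket geometric series $\sum_q n\volmax\udmax(e/4)^q$ is in fact tighter than the paper's per-round $t^{-\log 4}$ summation and already gives an $O(n)$ allocation-phase contribution, so the ``early brackets absorbed into $n^{2.39}$'' remark is unnecessary---you beat the stated constant outright.

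The strategy-proofness argument has a genuine gap. Your premise that ``a truthful agent's utility is already at its cap $\utili(\udtruei)$ in every allocation round'' is false: when the sum of (over-estimated) reported demands exceeds the resource, \mmf{} may give agent~$i$ strictly less than $\udtruei\volit$, so her utility sits below cap and a deviating strategy could in principle do better in that round. The correct reason allocation rounds contribute nonpositively to $\UpiiT - \UiT$ is structural and is the content of Lemma~\ref{lem:spsp}: in \explphase{} of Algorithm~\ref{alg:detfbmodel}, the probe allocated to each agent~$j\neq i$ depends only on $j$'s own binary-search state, so agent~$i$'s policy cannot alter the other agents' upper bounds $\udubj$; with those fixed across the truthful and $\pi$ scenarios, single-round \mmf{} strategy-proofness (Properties~\ref{pro:reportlessthandemand} and~\ref{pro:reportmorethandemand}) gives $\utili(\allocpiit/\volit) \leq \utili(\allocit/\volit)$ for every $t\notin\Expl$. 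Once you establish this exploration-independence step, your geometric-series bound on the probe rounds is correct and matches the paper exactly.
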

% \insertpostthmspacing

When compared to Theorem~\ref{thm:detfbsp}, the rate for efficiency and fairness have improved
significantly from $\bigO(\sqrtT)$ to $\bigO(\log T)$. 
However, the strategy-proofness guarantee is slightly weak:
the maximum gain in the sum of a user's utilities in $T$ rounds when deviating from
the truth, is at most a constant.

\subsubsection{Learning without Strategy-proofness Constraints}
% \subsubsection{Learning with asymptotic Bayes-Nash incentive-compatibility}
\label{sec:detfbwsp}

% In this section, we will study learning under deterministic feedback while satisfying asymptotic
% strategy-proofness.
We will use Algorithm~\ref{alg:mmflearnnsp} with the same \userclasss as in
Algorithm~\ref{alg:detfbmodel}.
% The user class \userclasss and the functions $r'$ and \explphases for this model are given
% 
% \textbf{Without strategy-proofness constraint:}
Similar to above, the algorithm maintains upper and lower bounds
$(\udlbit, \udubit)$ for each agent initialised to $(0, \udmax)$.
In line~\ref{lin:nspudirec} of Algorithm~\ref{alg:mmflearnnsp},
it sets the recommendation to $\udit = (\udlbit+\udubit)/2$.
When it receives feedback, it updates the upper and lower bounds as follows.
\begin{align*}
\left(\udlbiitt{i}{t+1}, \udubiitt{i}{t+1}\right) \leftarrow
\begin{cases}
\left(\max(\allocit/\volit, \udlbi), \, \udubit\right) &\quad\text{if } \Xit < \threshi, \\
\left(\udlbit, \, \min(\allocit/\volit, \udubi)\right) &\quad\text{if } \Xit \geq \threshi.
\end{cases}
% \hspace{0.4in}
% \udubiitt{i}{t+1} \leftarrow \min(\allocit/\volit, \udubi) \quad\text{if } \Xit \geq \threshi.
\label{eqn:detupdatensp}
\numberthis
\end{align*}
Here, $a\in\RR^n$ is the allocation vector chosen by \mmfs in line~\ref{lin:nspalloc}
when we use  $\{\udit\volit\}_{i=1}^n$ as the reported demands.
Despite some similarities to the strategy-proof case in computing $\udi$, there are
some important differences.
Crucially, as $\udi$ is only used to set the demands for \mmf, there is no guarantee that the agent
will be able to experience this allocation.
% Second, we do not throttle the load as we did previously.
Consequently, we might not be able to estimate the demands of some agents.
For instance, if an agent's demands $\demtrueit = \volit\udtruei$ are large on all rounds,
then the agent might never be able to experience
this allocation due to contention for the limited resource.
Therefore, we cannot estimate $\udtruei$ accurately as $\Xit<\threshi$ on all rounds.
Fortunately, we can still effectively learn in this environment provided we 
estimate the demands of those agents whose demands are, loosely speaking, small.
We have the following theorem.

\insertprethmspacing
\begin{theorem}
\label{thm:detnsp}
Under the deterministic feedback model, Algorithm~\ref{alg:mmflearnnsp}, when using the definitions
in Algorithm~\ref{alg:detfbmodel}, satisfies the following.
Under truthful reporting from all agents, %it is  asymptotically efficient with
\emph{
\[\LOTT \leq 1 + 2n\volmax\udmax \in \bigO(n).
\]}
Therefore, it is asymptotically efficient.
Moreover, it is asymptotically fair.
Precisely, for any user $i$, and for all $T\geq 1$,
% It is asymptotically fair with
\emph{
\[
\UeiiT - \UiT \leq \Lipi\udmax \in \bigO(1).
\]}
\end{theorem}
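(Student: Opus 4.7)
The plan is to use the interval widths $w_{it} = \udubit - \udlbit$ as a potential function; these are non-increasing in $t$. By induction on $t$ using the update rules in Algorithm~\ref{alg:detfbmodel} and the fact that $\payoffi$ is strictly increasing up to $\udtruei$ (so under deterministic feedback $X_{it} < \threshi$ is equivalent to $\allocit/\volit < \udtruei$), I would first establish the invariant $\udlbit \leq \udtruei \leq \udubit$.

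For efficiency, I would show the per-round inequality $\lot(\demtrueit, \allocit) \leq \sum_i \volmax (w_{it} - w_{i,t+1})$ for all $t \geq 2$ by case analysis on whether \mmf{} saturates the resource. If $\sum_i \allocit < 1$, every agent is served their reported demand so $\allocit = \udit \volit$; here $\lot \leq \lotud$, and for each under-allocated agent the update sets $\udlbiitt{i}{t+1} = \udit$, giving $\volit(\udtruei - \udit) \leq \volit(\udubit - \udit) = \volit(w_{it} - w_{i,t+1})$. If $\sum_i \allocit = 1$, then $\lotur = 0$ so $\lot \leq \lotor$; for each over-allocated agent $\udubiitt{i}{t+1} = \allocit/\volit$, and combining $\allocit/\volit \leq \udit = (\udlbit + \udubit)/2$ with the invariant $\udtruei \geq \udlbit$ yields $\allocit/\volit - \udtruei \leq \udubit - \allocit/\volit = w_{it} - w_{i,t+1}$. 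Telescoping over $t$ and summing over $i$ gives $\sum_{t \geq 2} \lot_t \leq n\volmax\udmax$, and together with the trivial round-1 bound $\lot_1 \leq 1$ this establishes the claimed $\bigO(n)$ efficiency bound.

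For fairness, a per-round violation $\utili(\entitli/\volit) > \utili(\allocit/\volit)$ requires both $\allocit < \entitli$ and $\allocit/\volit < \udtruei$. The former forces $i$ to be fully served by \mmf{}, because any capped agent receives $\geq \entitli$ (the \mmf{} watermark $r/\entitl$ is non-decreasing across iterations and begins at $1$, a short inductive argument). Hence $\allocit = \udit \volit$ with $\udit < \udtruei$, so $X_{it} < \threshi$, the lower bound updates to $\udit$, and the interval halves. By Lipschitz continuity and the cap on $\utili$ at $\udtruei$, the per-violation loss is at most $\utili(\udtruei) - \utili(\udit) \leq \Lipi(\udtruei - \udit) \leq \Lipi w_{it}/2$, and because widths only shrink between violation rounds, summing over violation rounds for agent $i$ gives a geometric series bounded by $\Lipi \udmax$.

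The main obstacle is the over-allocation sub-case in which the over-allocated agent is capped by \mmf{} rather than fully served; here the interval does not halve in the simple way it does in the fully served case, and one must exploit the strict inequality $\allocit/\volit < \udit$ together with the invariant $\udlbit \leq \udtruei$ to bound the per-agent over-allocation by the width decrease. The remaining steps are routine case analysis and telescoping.
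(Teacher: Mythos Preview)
Your argument is correct. The invariant $\udlbit \leq \udtruei \leq \udubit$ holds as you say, and your per-round potential inequality $\lot_t \leq \volmax\sum_i (w_{it}-w_{i,t+1})$ goes through in both cases. In the saturated case your key step $2\normallocit \leq \udlbit+\udubit \leq \udtruei+\udubit$ handles the capped sub-case exactly as you outline (since $\normallocit \leq \udit$ always holds by Property~\ref{pro:allocnomorethandemand}), so the ``main obstacle'' you flag is not actually a gap. Telescoping then yields $\LOTT \leq 1 + n\volmax\udmax$, which is in fact tighter than the stated $1+2n\volmax\udmax$.

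Your efficiency proof differs from the paper's. The paper invokes Lemma~\ref{lem:nsploss} to split the loss into an over-allocation sum $\sum_{i,t}(\allocit-\demtrueit)^+$ and an under-allocation-when-fully-served sum $\sum_{i,t}\indfone(\allocit=\demit)(\demtrueit-\allocit)^+$, and then bounds each sum separately by a geometric-halving argument along the subsequence of rounds contributing to that sum; this yields $n\volmax\udmax$ per sum and hence the factor~$2$. Your potential-function route replaces the two subsequence arguments by a single telescoping sum over all rounds, which is both shorter and loses less (one $n\volmax\udmax$ instead of two). The paper's decomposition, on the other hand, is reused in the stochastic-model proofs where a clean per-round potential drop is not available, so its extra generality pays off elsewhere.

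For fairness your argument coincides with the paper's: both identify that a violation forces $\allocit<\entitli$, hence $\normallocit=\udit$ by Properties~\ref{pro:lessthanentitl}--\ref{pro:morethanentitl}, hence the interval halves; the geometric sum then gives $\Lipi\udmax$. This is exactly the route taken via Lemma~\ref{lem:nspfairness}.
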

When compared to Algoithm~\ref{alg:mmflearnsp}, Algorithm~\ref{alg:mmflearnnsp}
has better asymptotic bounds for efficiency and fairness, and in particular, it does
not grow with $T$.
On the flip side, Algorithm~\ref{alg:mmflearnnsp} is not strategy-proof.

\subsection{Stochastic Feedback with Parametric Payoffs}
\label{sec:glmfbmodel}

In this section, we will study  learning user demands under asymptotic strategy-proofness and
Bayes-Nash incentive-compatibility.
Algorithm~\ref{alg:glmfbmodel} defines \userclasss and the functions $r'$ and \explphases
for this model which we will use in both cases.

We begin by describing a procedure to construct
% confidence intervals for the parameters $\{\thetatruei\}_i$ and the unit demands $\{\udtruei\}_i$.
confidence intervals for the unit demands $\{\udtruei\}_i$ from past data.
Let $\{(\allocis, \Xis)\}_{s\in D_t}$
be a dataset of allocation--reward pairs for user $i$
where $D_t\subset \{1,\dots,t-1\}$ is a subset of the first $t-1$ time indices.
Then define,
\begin{align*}
\thetait = \argmin_{\theta \geq \thetamin}\,
\left| \sum_{s\in D_t}
\frac{\allocis}{\volis\sigmasqis}\left(
\mu\Big(\frac{\allocis\theta}{\volis}\Big) - \Xis\right)
\right|,
\hspace{0.3in}
\Asqit := \sum_{s\in D_t} \frac{\allocis^2}{\volis^2\sigmasqis}.
\label{eqn:thetaitdefn}
\numberthis
\end{align*}
Here, $\thetait$ can be interpreted as an estimate for $\thetatruei$ using the data in round indices
$D_t$.
% We can then use $\thetait$ and $\Ait$ to construct confidence intervals
Recall, from Section~\ref{sec:setup},
 that for the stochastic models we wish our results to hold with a target success probability of
at least $1-\delta$, where $\delta$ is an input to the mechanism.
We now define $\betat$ as follows,
\begin{align*}
\betat = (5/\kappamudot)\sqrt{\log(n\pi^2|D_t|^2/6\delta)}
\;\;\in\;\bigO(\log(n|D_t|/\delta))
\numberthis
\label{eqn:betatglm}
\end{align*}
We then define
$(\thetalbit, \thetaubit), (\udlbit, \udubit)$
as shown below, which are confidence intervals
for $\thetatruei$ and $\udtruei$ respectively (Lemma~\ref{lem:confinterval}).
We have:
% Recall that we are aiming for a high-probability result, where the failure probability $\delta$
% is an input to the mechanism.
% The parameters $\betat$ are set based on $\delta$ separately for the two algorithms.
% We have:
\begin{align*}
(\thetalbit, \thetaubit) =
\left(\min\left\{\thetamin, \,\thetait - \frac{\betat}{\Ait}\right\},
        \;\thetait + \frac{\betat}{\Ait} \right),
\hspace{0.2in}
(\udlbit, \udubit) =
\left(\frac{\muinv(\threshi)}{\thetaubit}, \,\frac{\muinv(\threshi)}{\thetalbit} \right).
% \hspace{0.3in}
% \Ait := \sum_{s\in Q} \frac{\allocis^2}{\volis^2\sigmasqis}
\label{eqn:udglmconfint}
\numberthis
\end{align*}
% Readers familiar with the literature on generalised linear models 
% will recognise $\muinv$ as the inverse link function of the exponential family $\muinv$
% We now present the algorithms and theorems.
% From a practical perspective, a parametric model of the above form is motivated by the fact that
% it contains many functions which exhibit a diminishing returns property that
% Some examples for such functions include $\mu(x)=\tanh(x)$ if we require $\mu$ to be bounded such
% as in Example~\ref{eg:webserving}, or \mu(x) =

\insertAlgoGLMFbModel

\subsubsection{Learning with asymptotic strategy-proofness}
\label{sec:glmfbsp}

% \textbf{With strategy-proofness constraint:}
The exploration phase in Algorithm~\ref{alg:mmflearnsp} consists of just a single round where the
mechanism allocates in proportion to the entitlements,
while the latter phase in bracket $q$ is executed for
$r'(q) = \lfloor 5q^{\nicefrac{1}{2}}/6 \rfloor$ rounds.
The upper confidence bound $\udubi$ for $\udtruei$ in line~\ref{lin:spudiub} is
set to be $\udubit$ as computed in~\eqref{eqn:udglmconfint},
where we set $D_t$ to be the time indices corresponding to the exploration phase;
observe that in Algorithm~\ref{alg:mmflearnsp} and
Algorithm~\ref{alg:glmfbmodel}, the dataset $\Dcal_{it}$ is updated in the
\ucrecordfbs method which is called in \explphase.
% and
% $\betat = \sqrt{2\log(n\pi^2|D_t|^2/6\delta)/\kappamudot}$.
% The following theorem states the properties of the algorithm.
The following theorem states the properties of Algorithm~\ref{alg:mmflearnsp} under this model.

\insertprethmspacing
\begin{theorem}
\label{thm:glmsp}
Assume that the rewards follow the parametric feedback model outlined in Section~\ref{sec:fbmodels}
and that it satisfies Assumption~\ref{asm:glm}.
Then Algorithm~\ref{alg:mmflearnsp}, when using the definitions
in Algorithm~\ref{alg:detfbmodel}, satisfies the following statements with 
 probability greater than $1-\delta$.
Under truthful reporting from all agents,
it is asymptotically efficient with,
\[
\LOTT \leq 3\Ttwth + 3Cn\betaT\Ttwth
\;\in\; \bigO\left(n\Ttwth\sqrt{\log(nT/\delta)}\right),
\]
for all $T > 0$.
Here, 
% $C = \frac{ \sqrt{2} \left(\max_i \udi\right) \volmax^2 \sigmamax}
\emph{$C = \frac{ \sqrt{2} \udmax \volmax^2 \sigmamax}
    {\thetamin \left(\min_i \entitli\right)}$}.
Moreover,  $\UeiiT \leq \UiT$ for all users $i$ and for all $T\geq 1$.
Finally, for any user $i$ and for any policy $\pi$,
$\UpiiT - \UiT \leq 0$ for all $T\geq 1$.
Therefore, Algorithm~\ref{alg:mmflearnsp} is probably fair and
probably strategy-proof.
\end{theorem}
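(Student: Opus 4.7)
The plan is to show all three conclusions hold simultaneously on a single high-probability good event $E$ defined by the validity of the parametric confidence intervals constructed in~\eqref{eqn:udglmconfint}.

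\textbf{Step 1 (Good event).} I will first establish that
\[
E = \bigcap_{i \in [n]} \bigcap_{t \geq 1} \{\udtruei \in [\udlbit, \udubit]\}
\]
has probability at least $1-\delta$. Since the map $\theta \mapsto \muinv(\threshi)/\theta$ is monotone decreasing, it suffices to show $\thetatruei \in [\thetalbit, \thetaubit]$, which in turn follows from $|\thetait - \thetatruei| \le \betat/\Ait$. Under Assumption~\ref{asm:glm} (in particular, $\mu$ has derivative at least $\kappamudot$), a self-normalized martingale concentration argument for GLM estimators --- applied to the sub-Gaussian noise $X_{is}-\mu(\alpha_{is}\thetatruei/\volis)$ with the design points $\alpha_{is}/\volis$ --- yields this bound at a single $(i,t)$ with failure probability at most $6\delta/(n\pi^2|D_t|^2)$. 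The $\betat$ in~\eqref{eqn:betatglm} is calibrated precisely for this. A union bound over users (factor $n$) and over all possible dataset sizes $|D_t|\in\{1,2,\ldots\}$ (using $\sum_{k=1}^\infty 1/k^2 = \pi^2/6$) closes the budget at $\delta$.

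\textbf{Step 2 (Efficiency).} On $E$ the reported demands $\udubit\volit$ dominate the true demands $\demtrueit$ for all $i,t$. I bound per-round loss via $\lot(\demtrueit,\allocit)\le\lotur(\allocit)+\lotor(\demtrueit,\allocit)$: each user saturated by \mmf with inflated demands is over-allocated by exactly $(\udubit-\udtruei)\volit$, and the remaining resource slack is handled by inspecting \mmf's allocation structure. A mean-value-theorem argument translates the $\theta$-interval width $\betat/\Ait$ to a unit-demand width of order $\muinv(\threshi)\betat/(\Ait\thetamin^2\kappamudot)$. During the exploration phase of each bracket, every user is allocated $\entitli$, so after $q$ exploration rounds $\Asqit\ge q\entitli^2/(\volmax^2\sigmamax^2)$, giving width of order $\betat/\sqrt{q}$. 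Summing the per-round loss over the $r'(q)=\lfloor 5\sqrt{q}/6\rfloor$ non-exploration rounds of bracket $q$ gives $O(n\betaT)$ per bracket, and over $Q\sim T^{\nicefrac{2}{3}}$ brackets (from $T=\sum_q (1+r'(q))\sim Q^{\nicefrac{3}{2}}$) yields the claimed $O(n\betaT T^{\nicefrac{2}{3}})$ term. The $Q$ exploration rounds themselves contribute the additive $O(T^{\nicefrac{2}{3}})$ term; the constant $C$ absorbs $\udmax\volmax^2\sigmamax/(\thetamin\min_i\entitli)$.

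\textbf{Step 3 (Fairness and strategy-proofness).} Both rest on one-shot \mmf properties (Theorem~\ref{thm:mmf}) combined with $E$. Because $\udubit\ge\udtruei$ on $E$ and \mmf allocations are non-decreasing in one's own reported demand (with others' demands held fixed), the allocation $\allocit$ under the mechanism weakly exceeds what the user would receive from a one-shot \mmf call with true demand $\demtrueit$. Since that one-shot allocation has utility at least $\utili(\entitli/\volit)$ (fairness of \mmf), and $\utili$ is non-decreasing and capped at $\utili(\udtruei)$, we get $\utili(\allocit/\volit)\ge\utili(\entitli/\volit)$ per round, hence $\UeiiT\le\UiT$. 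For strategy-proofness, fix any policy $\pi$; it induces some sequence of reported quantities that the mechanism converts into a per-round reported unit demand $\tilde u_{it}$. One-shot strategy-proofness of \mmf yields $\utili(\alpha_i^{\mathrm{mmf}}(\demtrueit)/\volit)\ge\utili(\alpha_i^{\mathrm{mmf}}(\tilde u_{it}\volit)/\volit)$ with others' demands fixed; combined with the monotonicity lower bound on the truthful utility above and the $\udtruei$-cap in $\utili$, truthful utility dominates the deviation round by round, so $\UpiiT\le\UiT$.

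\textbf{Main obstacle.} The technically hardest step is Step~1: obtaining an anytime self-normalized concentration bound for $\thetait$ with the right $\betat$ scaling, combined with the double union bound over users and over sample sizes $|D_t|$. A subsidiary delicate step is the passage from the $\theta$-interval width through $\mu^{-1}$ to the unit-demand width in Step~2, where $\thetamin$, $\kappamudot$, $\volmax$, $\sigmamax$, and $\min_i\entitli$ enter the constant $C$; the monotonicity lemma for \mmf needed in Step~3 is elementary but must be stated carefully, as \mmf's case-split between saturated and unsaturated users is sensitive to how one perturbs a single demand.
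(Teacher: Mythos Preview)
Your overall approach matches the paper's: establish the confidence-interval event (Lemma~\ref{lem:confinterval}), then deduce efficiency via Lemma~\ref{lem:ubinstloss} and the bracket-counting Lemma~\ref{lem:qTglm}, and fairness/strategy-proofness from one-shot \mmf{} properties. Two points need attention.

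\textbf{Strategy-proofness has a genuine gap.} Your round-by-round argument invokes one-shot strategy-proofness of \mmf{} ``with others' demands fixed,'' but in a multi-round mechanism this is not automatic: agent $i$'s past misreports could in principle corrupt the mechanism's estimates $\udubjt$ for $j\neq i$, thereby changing the other agents' reported demands that feed into \mmf{} at round $t$. The paper handles this via Lemma~\ref{lem:spsp}, whose key hypothesis is that the exploration-phase allocations are chosen independently of any agent's reports. For Algorithm~\ref{alg:glmfbmodel} this holds because \textsc{explore\_phase} simply allocates $\entitli$ to everyone; hence each $\udubjt$ is computed solely from $j$'s own stochastic feedback at fixed allocations, and is identical under $\pi$ and under truthfulness. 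You need to state and use this structural fact explicitly; without it the comparison between $\UpiiT$ and $\UiT$ is between two \mmf{} calls with \emph{different} vectors of other-agent demands, and one-shot strategy-proofness does not apply.

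\textbf{Efficiency bound needs the two-branch argument.} Bounding $\lott\le\lotur+\lotor$ directly is too loose when $\lotur>0$: in that case all agents receive $\demubit$ and $\lotur+\lotor=1-\sum_i\demtrueit$, which need not shrink. The paper (Lemma~\ref{lem:ubinstloss}) instead uses $\lott\le\lotudt=0$ on rounds with $\lotur>0$, and $\lott\le\lotort\le\sum_i(\demubit-\demtrueit)$ on rounds with $\lotur=0$. Your phrase ``the remaining resource slack is handled by inspecting \mmf's allocation structure'' may be gesturing at this, but the case split should be made explicit.
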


% \textbf{Without strategy-proofness constraint:}
\subsubsection{Learning with asymptotic Bayes-Nash incentive-compatibility}
\label{sec:glmfbwsp}

In Algorithm~\ref{alg:mmflearnnsp}, on round $t$ we set the recommendation $\udi$ in
line~\ref{lin:nspudirec} to be $\udubit$ from~\eqref{eqn:udglmconfint},
where  we set $D_t=\{1,\dots,t-1\}$.
To state our theorem, we first define the following constants.
\begin{align*}
\Cone = \frac{\thetamin}{2\sigmamax\volmax^2},
\hspace{0.4in}
\Ctwo = \frac{1}{\sqrt{\log(1+\Cone^2)}},
\hspace{0.4in}
\Cthree = \left(\frac{\sigmamax\,\volmax}{\sigmamin \,\volmin \min_i \entitli}\right)^2.
\numberthis
\label{eqn:glmnspdefns}
\end{align*}
We have the following theorem.

\insertprethmspacing
\begin{theorem}
\label{thm:glmnsp}
Assume that the rewards follow the parametric feedback model outlined in Section~\ref{sec:fbmodels}
and that it satisfies Assumption~\ref{asm:glm}.
Then Algorithm~\ref{alg:mmflearnsp}, when using the definitions
in Algorithm~\ref{alg:detfbmodel}, satisfies the following statements with 
 probability greater than $1-\delta$.
Under truthful reporting from all agents,
it is asymptotically efficient with,
\[
\LOTT \;\leq\, 1 + \Ctwo n \betaT T^{\nicefrac{1}{2}} \sqrt{\log(C_3T)} \;
\in\bigO\left(nT^{\nicefrac{1}{2}}\sqrt{\log(nT/\delta)\log(T)}\right),
\]
for all $T\geq 0$.
Moreover,  $\UeiiT \leq \UiT$ for all users $i$ and for all $T\geq 1$;
i.e. it is probably fair.
Finally, assume all agents except $i$ are truthful.
% Let $\UiT$ denote the sum of utilities when $i$ participates truthfully while
%  $\UpiiT$ denote the sum of utilities when she is following any other policy $\pi$.
% Let $\UiT, \UpiiT$ respectively denote the sum of utilities when $i$ participates truthfully and
% when she is following any other (non-truthful) policy $\pi$.
Then, for all $\pi$, and for all $T>0$,
\emph{
\[
\UpiiT - \UiT 
    \leq \frac{\Lipi \Ctwo}{\volmin} (n-1) \betaT\sqrtT{\log(\Cthree T)}
\in\bigO\left(nT^{\nicefrac{1}{2}}\sqrt{\log(nT/\delta)\log(T)}\right).
\]
}
That is, Algorithm~\ref{alg:mmflearnnsp} is probably asymptotically Bayes-Nash incentive-compatible.
\end{theorem}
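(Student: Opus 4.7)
The plan is to combine three ingredients: high-probability correctness of the upper confidence bounds $\udubit$, a potential-style bound on the sum of confidence-interval widths, and the fair/monotone structure of \mmf established in Theorem~\ref{thm:mmf}.

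\textbf{Step 1 (Good event).} I would first appeal to the confidence-interval construction in~\eqref{eqn:thetaitdefn}--\eqref{eqn:udglmconfint} (presumably the lemma \texttt{Lemma~confinterval} referenced in the sketch for Theorem~\ref{thm:glmsp}) together with a union bound over $i\in[n]$ and $t\geq 1$ using the $\pi^2/6$ weighting hidden in $\betat$. This gives an event $\Ecal$ with $\PP(\Ecal)\geq 1-\delta$ on which $\udlbit\leq \udtruei\leq \udubit$ for every $i,t$. All subsequent arguments condition on $\Ecal$.

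\textbf{Step 2 (Efficiency).} On $\Ecal$ the reported demand $\volit\udubit$ dominates the true demand $\demtrueit$, so \mmf (by monotonicity, which I would lift from the auxiliary properties proved in Appendix~\ref{sec:pfmmf}) either saturates agent $i$ with at least $\demtrueit$, or places her in the proportional phase with the same allocation she would have received with her true demand. Consequently $\lot(\demtruet,\alloct)$ is controlled by the sum of overestimation gaps $\sum_i \volit(\udubit-\udtruei)$. From~\eqref{eqn:udglmconfint} and Assumption~\ref{asm:glm} (which lower-bounds $\mu'$ by $\kappamudot$ and bounds $\thetatruei$ from below by $\thetamin$) the gap $\udubit-\udtruei$ is $\lesssim \betat/(\thetamin^2\Ait)$. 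Summing over $t$ and invoking the self-normalized ``elliptical potential'' bound $\sum_{t\leq T}\Ait^{-1} \lesssim \Ctwo\sqrt{T\log(\Cthree T)}$ (this is where the constants $\Cone,\Ctwo,\Cthree$ in~\eqref{eqn:glmnspdefns} come from; note $\Cthree$ is exactly the ratio controlling the largest-to-smallest-allocation ratio needed by the potential lemma) yields the $\bigO(n\betaT\sqrt{T\log T})$ bound on $\LOTT$ after picking up the extra ``round $1$'' constant.

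\textbf{Step 3 (Probable fairness).} This is the easiest part. On $\Ecal$, the reported demand of every truthful agent is at least her true demand. Applying the sharing-incentive property of \mmf from Theorem~\ref{thm:mmf} to the reported profile gives $\allocit \geq \min(\entitli,\volit\udubit)$. A two-case check (whether $\volit\udubit$ is above or below $\entitli$) together with the fact that $\utili$ is nondecreasing and capped at $\utili(\udtruei)$ yields $\utili(\allocit/\volit)\geq \utili(\entitli/\volit)$ pointwise, hence $\UiT\geq \UeiiT$.

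\textbf{Step 4 (BNIC, the hard part).} The cleanest route is to upper bound $\UpiiT - \UiT$ by the ``full-information regret'' $T\utili(\udtruei) - \UiT$, since any policy $\pi$ satisfies $\utili(\allocit^\pi/\volit)\leq \utili(\udtruei)$. The real work is to show $T\utili(\udtruei)-\UiT\leq \frac{\Lipi}{\volmin}\sum_t (\volit\udtruei - \allocit)^+$ (via Lipschitzness and the $\volit\geq \volmin$ assumption) and then bound the right-hand side. The obstacle is that agent $i$'s under-allocation can arise only in the proportional phase of \mmf, where the shortfall for $i$ is driven entirely by the over-reports of the \emph{other} (truthful) agents; formalising this requires a careful monotonicity argument about \mmf showing that $(\volit\udtruei-\allocit)^+ \leq \sum_{j\neq i}\volit(\udubjt - \udtruej)$. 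Once this is in place, the same potential-style summation as in Step 2 applied to the $n-1$ competitor indices (this is the source of the $n-1$ factor and why $\Ctwo,\Cthree$ reappear) gives the claimed $\bigO\bigl(\frac{\Lipi}{\volmin}(n-1)\betaT\sqrt{T\log(\Cthree T)}\bigr)$ bound, completing the asymptotic Bayes--Nash incentive compatibility claim.
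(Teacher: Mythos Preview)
Your Step~4 contains a genuine gap. The bound $\UpiiT \leq T\,\utili(\udtruei)$ is correct but far too loose: the slack $T\,\utili(\udtruei) - \UiT$ is $\Theta(T)$ whenever agent~$i$ is resource-constrained even under the \emph{true} demands. Concretely, take $n=2$, equal entitlements, $\volit\equiv 1$, and both agents' unit demands equal to $0.6$. Even with perfect estimates, \mmfs gives each agent $0.5$ on every round, so $\utili(\udtruei) - \utili(\allocit/\volit) = \utili(0.6)-\utili(0.5)>0$ for all~$t$; and your proposed inequality ``$(\volit\udtruei-\allocit)^+$ is bounded by the other agents' over-reports'' reads $0.1\leq 0$ in this example. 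The paper's route is to compare not to $\utili(\udtruei)$ but to $\utilstarit \defeq \utili(\alloctrueit/\volit)$, the utility $i$ would obtain if \mmfs were invoked with everyone's true demands. Two monotonicity facts about \mmfs then give $\utilpiit \leq \utilstarit$ for every policy~$\pi$: strategy-proofness says $i$ cannot beat reporting her true demand given the others' (over-)reports, and lowering all other reports from $\demubjt$ down to the true demands can only help~$i$. The resource identity $\sum_j \allocjt \geq \sum_j \alloctruejt$ yields $(\alloctrueit-\allocit)^+ \leq \sum_{j\neq i}(\allocjt-\alloctruejt)^+$, and each summand is bounded by a constant times $\betat\,\allocjt/\Ajt$, to which the potential argument applies.

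There is also a smaller gap in Step~2. The potential lemma controls $\sum_t \allocit/(\volit\sigmait\Ait)$, not $\sum_t 1/\Ait$: the $\allocit$ in the numerator is exactly what makes the telescoping $\Asqit \mapsto \Asqit\bigl(1+\allocit^2/(\volit^2\sigmait^2\Asqit)\bigr)$ produce $\log(\Cthree T)$ with the stated, $\udtruei$-free constant~$\Cthree$. To land there you need the sharper per-round bound $(\allocit-\demtrueit)^+ \leq C\,\betat\,\allocit/\Ait$ (obtained by noting that whenever over-allocation occurs one has $\allocit>\demlbit$); your cruder bound $\demubit-\demtrueit \leq C\,\betat/\Ait$ would leave a $1/\udtruei$ factor in the constant.
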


% Similar to our results in Section~\ref{sec:detfbmodel},
In the parametric model, Algorithm~\ref{alg:mmflearnnsp}
is asymptotically efficient with rate $\bigOtilde(T^{\nicefrac{1}{2}})$, which is better than the
$\bigOtilde(T^{\nicefrac{2}{3}})$ rate of Algorithm~\ref{alg:mmflearnsp}.
% However, while Algorithm~\ref{alg:mmflearnsp} is not strategy-proof in general,
Moreover, unlike in Section~\ref{sec:detfbmodel},
Algorithm~\ref{alg:mmflearnnsp} also satisfies an 
asymptotic Bayes-Nash incentive-compatibility condition on an agent's truthful behaviour.
While this is weaker than asymptotic strategy-proofness, it describes an approximate Nash
equilibrium: an agent does not stand to
gain much by deviating from truthfully reporting their threshold, loads, and feedback, if
all other agents are being truthful.

\subsection{Stochastic Feedback with Nonparametric Payoffs}
\label{sec:nonparfbmodel}

In the nonparametric setting, we will use a branch-and-bound method for estimating the unit
demands.
For this, we will define an infinite binary tree for each user, with each node in the tree
corresponding to a sub-interval of $[0, \udmax]$.
% As we collect data, we will incrementally expand the nodes of this tree,
% while assigning each point to different nodes.
As we collect data, we will expand the nodes in this tree,
and assign each point to multiple nodes of the expanded tree.
We will then use the data assigned to these nodes to construct upper and lower confidence intervals
for the values of the payoff in said interval.
% 
% This tree is used to construct upper and lower confidence bounds on payoffs, 
% and at any given round, the
% point to be queried is optimistically chosen to be one with a large upper confidence bound.
% In this work, we use a similar tree-based method to construct both upper and lower confidence bounds
% of the payoffs $\payoffi$ for each user.
% However, as we will explain later, there are some important differences that we need to account
% for in this setting.

We will index the nodes of our binary tree by a tuple of integers $(h, k)$ where $h\geq 0$ is the
height
of the node, and $k$, which lies between $1$ and $2^h$ denotes its position among all nodes at
height $h$.
The root node is $(0,1)$.
The left child of node $(h,k)$ is $(h+1, 2k-1)$ and its right child is $(h+1, 2k)$;
i.e. the children are at the next height and occupy the position in between the children of the
nodes immediately to the left and right of the parent.
It follows that the parent of a node $(h,k)$ is $(h-1, \lceil k/2\rceil)$.
Each node $(h,k)$ is associated with an interval $\Ihk\subset[0,\udmax]$,
defined as follows:
% where the nodes at a given
% height partition $[0,\udmax]$,
% i.e. for all $h$, $\bigcup_{k=1}^{2^h} \Ihk = [0,\udmax]$ and
% for all $h$ and $k_1<k_2$, $\Ihhkk{h}{k_1}\cup \Ihhkk{h}{k_2} = \emptyset$.
% The interval corresponding to node $(h,k)$ is given by,
\begin{align*}
\Ihk = \left[\frac{\udmax(k-1)}{2^h}, \;\frac{\udmax k}{2^h}\right)
\;\;\text{if $1\leq k<2^h$},
\hspace{0.4in}
\Ihk = \left[\frac{\udmax(2^h -1)}{2^h}, \,\udmax\right]
\;\;\text{if $k=2^h$}.
\numberthis\label{eqn:Ihk}
\end{align*}
For example, the interval corresponding to the root $(0,1)$ is $[0,\udmax]$;
its children are $(1,1)$ and $(1,2)$ with corresponding intervals $[0, 1/2)$ and $[1/2, 1]$
respectively.
We see that the nodes at a given height partition $[0,\udmax]$,
i.e. for all $h$, $\bigcup_{k=1}^{2^h} \Ihk = [0,\udmax]$ and
for all $h$ and $k_1<k_2$, $\Ihhkk{h}{k_1}\cup \Ihhkk{h}{k_2} = \emptyset$.

We will now describe the \userclass, outlined in
Algorithms~\ref{alg:nonparfbmodelpart1}-\ref{alg:nonparfbmodelpart4}.
Recall that we have dropped the user and time subscripts in the algorithm.
For what follows, we define the following quantities.
\begin{align*}
\ttilde = 2^{\lceil\logtwo(t)\rceil},
\hspace{0.4in}
\betat = \sqrt{(4+2\log(2))\log\left(\frac{n\pi^2 t^3}{6\delta}\right)},
\hspace{0.4in}
\tauht = \frac{\betat^2}{L^2}4^h.
\numberthis
\label{eqn:betataudefn}
\end{align*}

\insertAlgoNonparFbModelOne
\insertAlgoNonparFbModelTwo
\insertAlgoNonparFbModelThree
\insertAlgoNonparFbModelFour

\userclasss will maintain an infinite binary tree for each user as described above.
It will incrementally expand nodes in the tree as it collects data, with the expanded nodes
reflecting the data it has received.
Let $\datait\subset\{1,\dots,t-1\}$ denote a subset of the round indices in the first $t-1$ rounds;
we will use feedback from agent $i$ in rounds $\datait$ to learn her demand.
% $\datait$ will be $\{1,\dots,t-1\}$ when we use Algorithm~\ref{alg:mmflearnsp}.
Let $\Tcalit$ denote the sub-tree of expanded nodes for each user $i$ at the beginning
of round $t$.
Consider a round $t$, at the end of which user $i$ receives feedback; this could be all rounds
in Algorithm~\ref{alg:mmflearnnsp} or only during the exploration rounds in
Algorithm~\ref{alg:mmflearnsp}.
When the \userclasss receives a data point $(\normallocit, \Xit, \sigmait)$ in \ucrecordfbs
(line~\ref{lin:nonparucrecordfb}),
it assigns that data point to the nodes along a path $\Pit$,
where
\begin{align*}
\normallocit\in\Ihk, \text{ for all } (h,k)\in\Pit.
\numberthis \label{eqn:Pit}
\end{align*}
We will find it useful to view $\Pit$ as a set which contains nodes.
Accordingly, if user $i$ did not receive feedback during a round $t$, we let $\Pit = \emptyset$.
We now define 
$\VSithk$ to be the sum of squared inverse sub-Gaussian constants assigned to node $(h,k)$ and
$\fbarithk$ to be the sample mean of the data assigned to $(h,k)$ weighted by the
sub-Gaussian constants.
We have:
\begin{align*}
\VSithk = \sum_{s\in\datait} \frac{1}{\sigmais^2} \,
%         \indfone\cdot\left(\normallocit\in\Ihk\right),
        \indfone\left((h,k)\in\Pis\right),
\hspace{0.25in}
\fbarithk = \frac{1}{\VSithk}\sum_{s\in\datait}
        \frac{\Xit}{\sigmais^2} \, \indfone\left((h,k)\in\Pis\right)
\numberthis \label{eqn:fbar}
\end{align*}
In \algnonpar, $\VSithk,\fbarithk$ are updated in the \ucassigntonodes method in
line~\ref{lin:ucassigntonode}.
In \ucrecordfb, when we traverse along the path $\Pit$~\eqref{eqn:Pit} assigning the node to
each point in that path, we stop either when we reach a leaf node
in $\Tcalit$, or if there is insufficient data at the curent node (line~\ref{lin:stopPit}).
The latter criterion is determined by $\VSithk<\tauht$, where $\tauht$ is as defined
in~\eqref{eqn:betataudefn}.
% Finally, $\Tcaliitt{i,}{t+1}$ is set to be $\Tcalit$.
If the last node in $\Pit$ was a leaf node with sufficient data, i.e. if $\VSithk\geq \tauht$,
it expands that node and adds its children to the tree.

\userclasss maintains quantities $\Blbithk$ ($\Bubithk$) for each node in the
tree, which can be interpreted as a lower (upper) bound on
the infimum (supremum) of $\payoffi$ in the interval $\Ihk$.
To describe this bound, we first define upper and lower confidence bounds $\flbit,\fubit$ for each
node $(h,k)$ using only the data assigned to the node:
\begingroup
\allowdisplaybreaks
\begin{align*}
    \flbthk = \begin{cases}
\fbarithk - \betattilde\;\VSithk^{-\nicefrac{1}{2}} - L\cdot2^{-h} \quad &\text{if} \quad \VSithk>0, \\
-\infty \quad &\text{if} \quad \VSithk=0,
\end{cases}
\numberthis
\label{eqn:flbfub}
\\
    \fubthk = \begin{cases}
\fbarithk + \betattilde\;\VSithk^{-\nicefrac{1}{2}} + L\cdot2^{-h} \quad &\text{if}\quad \VSithk>0, \\
\infty \quad &\text{if} \quad \VSithk=0,
\end{cases}
\end{align*}
\endgroup
Here, $\betat, \ttilde$ are as defined in~\eqref{eqn:betataudefn}.
Above, %$\fbarithk$ is the sample mean of the observations assigned to node $(h,k)$,
$\betattilde\VSithk^{-\nicefrac{1}{2}}$ accounts for the stochasticity in the observed rewards,
while $L/2^h$ accounts for the variation in the function value in the interval $\Ihk$.
As we will show in our proofs, with high probability, $\payoffi(a) \in (\flbthk, \fubthk)$ for all
$a\in\Ihk$.

While $\flbthk,\fubthk$ provide us a preliminary confidence interval on the function values,
this can be refined by considering the bounds of its children and accounting for monotonicity of the
function.
The actual lower bounds $\Blbit$ for the function are computed as follows,
\begin{align*}
% \numberthis \label{eqn:unexpandedbounds}
\Blbithk &= \begin{cases}
0 \hspace{2.5in} \text{if } \text{$(h,k')\notin\Tcalit$ for all $k'\leq k$},
\\
\max\left(\flbithk, \Blbiitthhkk{i,}{t-1}{h}{k}, \Blbiitthhkk{i}{t}{h+1}{2k-1}\right),
 \quad\quad \text{if } \text{$(h,k)\in\Tcalit$}, \\
\Blbiitthhkk{i}{t}{h}{k'}, \hspace{0.58in}\; \text{otherwise.
            Here,  $k'=\max\{k''<k; (h,k'') \in\Tcalit \}$.
            }
\numberthis \label{eqn:Blb}
\end{cases}
\end{align*}
In the first case above, if a node has not been expanded, nor have any nodes to its left
at the same height, we set $\Blbithk = 0$.
In the second case, if a node has been expanded, we set $\Blbithk$ to the maximum of
its $\flbithk$ value, the bound $\Blbiitthhkk{i,}{t-1}{h}{k}$ from the previous round,
and the bound $\Blbiitthhkk{i}{t}{h+1}{2k-1}$ of its left child.
The third case applies to nodes which have not been expanded, but if some node at the same
height to its left has been expanded;
in such cases, we set it to $\Blbiitthhkk{i}{t}{h}{k'}$ where $(h,k')$ is the right-most expanded
node to the left of $(h,k)$.
Observe that~\eqref{eqn:Blb} defines $\Blbit$ values for all nodes $(h,k)$ in the
tree:
for all nodes where there is no other expanded node at its height, $\Blbithk=0$ by the first line;
for the deepest expanded nodes, $\Blbithk$ is given by the second line;
then, the $\Blbithk$ values for the unexpanded nodes at the same height can be computed using the
first or the third lines depending on whether any nodes to the left of the node have been expanded or
not;
we can then proceed to the previous height and compute $\Blbit$ in a similar fashion, starting
with those nodes that have been expanded.
% We can similarly compute the
% actual upper bounds $\Bubit$ as follows,

The upper bounds $\Bubit$ are computed in an analgous fashion:
\begin{align*}
% \numberthis \label{eqn:unexpandedbounds}
\Bubithk &= \begin{cases}
1 \hspace{2.31in} \text{if } \text{$(h,k')\notin\Tcalit$ for all $k'\geq k$},
\\
\min\left(\fubithk, \Bubiitthhkk{i,}{t-1}{h}{k}, \Bubiitthhkk{i}{t}{h+1}{2k}\right),
 \quad\quad\quad \text{if } \text{$(h,k)\in\Tcalit$}, \\
\Bubiitthhkk{i}{t}{h}{k'}, \hspace{0.51in} \text{otherwise.
%             Here,  $k'=\min\{k''>k; (h,k) \text{ has been expanded}\}$.
            Here,  $k'=\min\{k''>k; (h,k'') \in\Tcalit \}$.
            }
\numberthis \label{eqn:Bub}
\end{cases}
\end{align*}
% Similar to above, 
In the first case, if a node has not been expanded, nor have any nodes to
its right at the same height, we set $\Bubithk = 1$.
In the second case, if a node has been expanded, we set $\Bubithk$ to the
minimum of its $\fubithk$ value, the bound $\Bubiitthhkk{i,}{t-1}{h}{k}$ from the previous round,
and the bound $\Bubiitthhkk{i}{t}{h+1}{2k}$ of its right child.
The third case applies to nodes which have not been expanded, but if some node at the same
height to its right has been expanded,
in such cases, we set it to $\Bubiitthhkk{i}{t}{h}{k'}$ where $(h,k')$ is the left-most expanded
node to the right of $(h,k)$.
Similar to above,~\eqref{eqn:Bub} defines $\Bubit$ for all nodes in the tree.
This completes the description of the confidence intervals for the payoffs.
In our analysis, we show that if $\flbit,\fubit$ trap the function,
so do $\Blbit,\Bubit$. Precisely,
\begin{align*}
&\payoffi(a) \in (\flbithk, \fubithk)\;\forall\,a\in\Ihk, \text{ for all $t$ and $(h,k)$}
\\
&\quad\implies\quad
\payoffi(a) \in (\Blbithk, \Bubithk)\;\forall\,a\in\Ihk, \text{ for all $t$ and $(h,k)$}.
\end{align*}

\userclasss computes and updates the confidence intervals in two different places as shown above.
First, whenever a new data point is received, it is assigned to nodes along a chosen path $\Pit$
changing the $\fbarithk$ values in~\eqref{eqn:fbar} and~\eqref{eqn:flbfub}.
Therefore, the $\Blb,\Bub$ values need to be updated, not only for those nodes in $\Pit$,
 but possibly also for
neighbouring
nodes at the same height, due to the third case in~\eqref{eqn:Blb} and~\eqref{eqn:Bub}.
This is effected via the \ucupdateboundsonpathtoroots method
(line~\ref{lin:ucupdateboundsonpathtoroot}) invoked in line~\ref{lin:callupdateboundsonpath}.
Second, whenever $t=\ttilde$, the value of $\betattilde$ used in~\eqref{eqn:flbfub} changes,
requiring that we update the confidence intervals throughout the entire tree.
This is effected via the \ucrefreshboundsintrees method (line~\ref{lin:ucrefreshboundsintree}).
For Algorithm~\ref{alg:mmflearnnsp}, we have shown its invocation 
explicitly in line~\ref{lin:callrefreshinucgetudrec} in the \ucgetudrecs method which is called for
each user by Algorithm~\ref{alg:mmflearnnsp} in each round.
For Algorithm~\ref{alg:mmflearnsp}, for brevity, we have not made this invocation explicit.
An implementation of Algorithm~\ref{alg:mmflearnsp} could, say, call \ucrefreshboundsintrees 
at the beginning of each round by checking if $t=\ttilde$.
% This could be called, say, by 

We will now describe the mechanisms for learning in this model under asymptotic strategy-proofness
and without any strategy-proofness constraints.

\subsubsection{Learning with asymptotic strategy-proofness}
\label{sec:nonparfbsp}

We will use Algorithm~\ref{alg:mmflearnsp} in this setting along with the definitions in
\algnonpar.
We let $r'(q) = \lfloor 5nq^{\nicefrac{1}{2}} / 6 \rfloor$.
Each exploration phase consists of $n$ rounds, one per user.
In the round corresponding to user $i$, we call the $\ucgetudrecforub$ method
(line~\ref{lin:nonparucgetudrecforub}) of \userclass.
This method, invokes \ucubtraverse (line~\ref{lin:nonparucubtraverse}) which traverses a path of
nodes in the tree which either contain $\udtruei$ or is to the right of a node containing
$\udtruei$.
Concretely, starting from the root node it hops to the right child if the right child's lower
confidence bound $\Blbit$ is larger than $\threshi$ and to the left child otherwise;
intuitively, if it has enough evidence that $\udtruei$ is not in the right child, it chooses the
left child.
We proceed this way, and stop
either when we reach a leaf of $\Tcalit$ or if there are not sufficient
data points assigned to the node, quantified by the condition $\VSithk < \tauht$.
We update $(\hubit,\kubit)$ to be the value returned by \ucubtraverses and return
an arbitrary point in $\Ihk$.
% % % The updated value of $
% Once $\ucubtraverse$ returns a node $(h,k)$, it returns an arbitrary point in $\Ihk$.

The $(\hubit,\kubit)$ node, which is updated once every bracket during the exploration phase round
for user $i$, is used in calculating the value to be used as the reported demand in \mmf.
Precisely, when  Algorithm~\ref{alg:mmflearnsp} calls \ucgetudub, 
we return  $\udmax \kubit/2^{\hubit}$ which is the right-most point of the interval $\Ihkubit$.
Lemma~\ref{lem:nonparucgetudub} in Appendix~\ref{sec:proofs_nonpar}
shows that the point obtained in this manner
is an upper confidence bound on $\udtruei$.
It is worth observing that $\udmax k/2^h \notin \Ihk$, unless $k=2^h$
(see~\eqref{eqn:Ihk}).

The following theorem outlines the main properties of Algorithm~\ref{alg:mmflearnsp}
in the nonparametric setting.

\insertprethmspacing
\begin{theorem}
\label{thm:nonparsp}
Assume that the rewards follow the nonparametric feedback model outlined in
Section~\ref{sec:fbmodels} and that it satisfies Assumption~\ref{asm:ntg}.
Let $G\in(0,\ntg]$ be given and let $\epsG$ be as defined in Definition~\ref{def:ntg}.
Then Algorithm~\ref{alg:mmflearnsp}, when using the definitions
in Algorithms~~\ref{alg:nonparfbmodelpart1}--\ref{alg:nonparfbmodelpart4},
satisfies the following with probability greater than $1-\delta$.
Under truthful reporting from all agents, it is  asymptotically efficient with
\emph{\begin{align*}
\LOTT &\leq 3\nonth\Ttwth +
       C_1\frac{\Lf^{\nicefrac{1}{2}}\volmax\udmax\sigmamax}{G^{\nicefrac{3}{2}}}
        \betatwoT \nfoth\Ttwth +
\\
&\hspace{1in}
    C_2\nfith\Tonth\left( \frac{\Lf\sigmamax^2\udmax^3}{G^3\epsG^3}\betatwoT^2
+ \frac{\udmax^2\sigmamax^2}{G^2\epsG^2}\betatwoT^2
  \frac{\Lf^2\udmax}{G^2} + \frac{4\Lf\udmax}{G\epsG} + 1
    \right)
\\
&\in \bigO\left( 
       \frac{\log(nT/\delta)}{G^3\epsG^3}\nfith\Tonth
       + \frac{\sqrt{\log(nT/\delta)}}{G^{\nicefrac{3}{2}}} \nfoth T^{\nicefrac{2}{3}}
\right)
\end{align*}}
Here, $C_1,C_2$ are global constants.
Moreover, it is asymptotically fair with \emph{
\begin{align*}
\UeiiT - \UiT &\leq 3\Lipi\udmax\nonth\Ttwth
\;\in \bigO(\nonth\Ttwth).
\end{align*}
}
Finally, it is asymptotically strategy-proof with \emph{
% \UeiiT - \UiT &\leq \leq 3\ntwth\Ttwth
\begin{align*}
\UeiiT - \UiT &\leq 3\Lipi\udmax\nmtwth\Ttwth
\;\in \bigO(\nmtwth\Ttwth).
% \in \bigO\left(  \right).
\end{align*}
}
for all policies $\pi$ and all $T\geq 1$.
\end{theorem}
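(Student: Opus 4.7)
The plan is to proceed in three stages: establish the validity of the tree-based confidence intervals on a single high-probability event, derive fairness and strategy-proofness directly from the upper-confidence-bound property of \ucgetudub, and then bound the loss via a decomposition into exploration and exploitation costs, with the NTG assumption used to convert confidence intervals on $\payoffi$ into confidence intervals on $\udtruei$.

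First, I would define a ``good event'' $\mathcal{E}$ on which, for every user $i$, every round $t$, and every expanded node $(h,k) \in \Tcalit$, the local interval traps $\payoffi$ in the sense that $\payoffi(a) \in (\flbithk, \fubithk)$ for all $a \in \Ihk$. Starting from concentration of the weighted sample mean $\fbarithk$ of sub-Gaussian rewards around its mean and the Lipschitz bound~\eqref{eqn:nonparfbmodel}, the slack $\betattilde\VSithk^{-1/2} + L\cdot 2^{-h}$ in~\eqref{eqn:flbfub} is exactly the budget needed to absorb stochastic deviations and within-interval variation. Using the doubling trick at $t=\ttilde$ (and hence the refresh in \ucrefreshboundsintree) keeps $\betat$ monotonic in $t$ and enables a single union bound of size $\bigO(n t^3)$; the form of $\betat$ in~\eqref{eqn:betataudefn} is calibrated so that $\PP(\mathcal{E}) \geq 1-\delta$. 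A short induction on depth and position, mirroring the three cases in~\eqref{eqn:Blb}-\eqref{eqn:Bub} and using monotonicity of $\payoffi$, extends this to $\payoffi(a) \in (\Blbithk, \Bubithk)$ for all $a \in \Ihk$.

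Second, on $\mathcal{E}$, Lemma~\ref{lem:nonparucgetudub} gives $\udmax\kubit/2^{\hubit} \geq \udtruei$, so the reported demand $\volit\,\udubit$ dominates the true demand for every user, bracket, and exploitation round. Combined with the fairness of \mmf{} (Theorem~\ref{thm:mmf}), this ensures that during any exploitation round, user $i$ receives at least her proportional share $\entitli$ worth of resources in utility. The fairness gap $\UeiiT - \UiT$ is therefore attributable only to exploration rounds in which user $i$ receives zero. With $Q$ brackets of length $n + r'(q)$ and $r'(q) = \lfloor 5nq^{1/2}/6\rfloor$, a standard summation gives $T \asymp n Q^{3/2}$, hence the total exploration count is $nQ \asymp \nonth \Ttwth$, contributing $\Lipi\udmax$ per round to the fairness gap and giving the advertised $3\Lipi\udmax\nonth\Ttwth$ bound. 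Strategy-proofness follows from the same upper-bound property: since $\udubit$ depends only on data from the user's own exploration rounds, and since \mmf{} is strategy-proof once demands are fixed (Theorem~\ref{thm:mmf}), the maximum gain from a deviation is $\Lipi\udmax$ per exploration round of user $i$; there are only $Q \asymp (T/n)^{2/3}$ such rounds, giving $\nmtwth\Ttwth$.

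Third, the bulk of the technical work is the efficiency bound. I would decompose $\LOTT$ into (a) trivial losses of at most $1$ per exploration round, totalling $\bigO(\nonth\Ttwth)$, and (b) exploitation losses bounded by $\sum_{t}\sum_{i} \volit(\udubit - \udtruei)^+$. For (b), under $\mathcal{E}$ the node $(\hubit,\kubit)$ returned by \ucubtraverse{} satisfies $\Blbiitthhkk{i}{t}{h+1}{2k} < \threshi$ along its path, so its right boundary lies in a region where the lower confidence envelope is below $\threshi$. Applying Assumption~\ref{asm:ntg} via the NTG definition, once $|a - \udtruei|\leq \epsG$ we have $|\payoffi(a) - \threshi|\geq (G/\udmax)|a-\udtruei|$. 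A confidence interval of width $\bigO(\udmax\betat/(G\sqrt{V}))$ on $\udtruei$ follows whenever the relevant node has $\VSithk \gtrsim \betat^2 / (\text{width}\cdot G/\udmax)^2$. I would then combine (i) a pigeonhole bounding the total depth $h$ reached across all expanded nodes containing $\udtruei$ (controlled by the threshold rule $\VSithk\geq\tauht$), (ii) the linear growth of $\VSithk$ with the number of samples allocated to $(h,k)$, and (iii) the summation over $Q$ brackets. The two summands $\nfoth\Ttwth$ and $\nfith\Tonth$ in the theorem arise respectively from the stochastic width $\betat/\sqrt{V}$ and from the quantization/Lipschitz width $L/2^h$ at the deepest resolved node.

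The hardest step will be turning the tree-based confidence structure into a quantitative rate on $(\udubit - \udtruei)^+$. Because \ucubtraverse{} descends greedily based on the current $\Blb$ values and expansion is triggered only when $\VSithk \geq \tauht$, the argument must handle an adaptive sequence of nodes whose widths shrink geometrically in $h$ but whose data accumulation rates depend on past decisions and on whether refreshes at $t=\ttilde$ have already tightened sibling nodes via monotonicity. I expect a careful induction on depth, coupled to the NTG-based inequality at scale $\epsG$ versus finer scales, to be necessary; everything else reduces to summing geometric series and applying the union bound from stage one.
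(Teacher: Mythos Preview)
Your proposal is correct and follows essentially the same approach as the paper: a high-probability good event for the tree confidence intervals, fairness and strategy-proofness from the upper-bound property of \ucgetudub{} charged against exploration rounds (the paper packages these as Lemmas~\ref{lem:spfairness} and~\ref{lem:spsp}), and efficiency via the per-bracket sum $\sum_q(\udbrubiq-\udtruei)$ (Lemma~\ref{lem:udbrubiq}). The paper makes your ``induction on depth at scale $\epsG$ versus finer scales'' precise through a critical height $\hG$ with $2^{\hG}\asymp L\udmax/(G\epsG)$, node classes $\Icalh,\Jcalh$ of near-threshold cells, and a visit-count bound (Lemma~\ref{lem:Nitiqpo}) on how often \ucubtraverse{} can stop at any node with $\Deltaihk>L/2^h$---exactly the machinery you anticipate as the hardest step.
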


The $\nfoth\Ttwth$ rate in the dominant term for the loss is similar to the stochastic
parametric model.
However, there is also a fairly strong dependence on the near-threshold gradient:
in addition to the $G^{\nicefrac{-3}{2}}$ depenence on the leading term, there is also a
$G^{-3}\epsG^{-3}$ depenence on a lower order term.
The main reason for this strong dependence is that we need to translate confidence
intervals on the payoff
obtained via the rewards to a confidence interval on the unit demand.
As we outlined in Figure~\ref{fig:ntgillus}, this translation can be difficult if $\ntg$ is small.

The results for fairness and strategy-proofness are also weaker than the parametric model,
with the guarantees here holding only asymptotically.
An interesting observation here is that the asymptotic rate for strategy-proofness has an
$\nmtwth$ dependence on the number of agents.
This says that there is less opportunity for an agent to manipulate the outcomes when there are
many agents.
Next, we will look at learning in the nonparametric model without any strategy-proofness
constraints.

% \subsubsection{Learning with asymptotic Bayes-Nash incentive-compatibility}
\subsubsection{Learning without Strategy-proofness Constraints}
\label{sec:nonparfbwsp}

We will use Algorithm~\ref{alg:mmflearnnsp} along with the definitions in \algnonpar.
The \ucgetudrecs method is given in line~\ref{lin:nonparucgetudrec} of \algnonpar.
To describe this method,
first define,
\begin{align*}
% \Bithk = \min\left(\,\threshi-\Blbithk, \,\Bubithk - \threshi\right).
\Bithk = \min\left(\,\Bubithk - \threshi, \,\threshi-\Blbithk\right).
\numberthis\label{eqn:Bit}
\end{align*}
If $\Bithk$ is small, this is either because the upper bound $\Bubithk$ is close to
or smaller than the threshold $\threshi$, \emph{or} if the lower bound $\Blbithk$ is close to
or larger than the $\threshi$.
Intuitively, if $\Bithk$ is small we are more confident that $\udtruei\notin\Ihk$.
When we call the \ucgetudrecs method (line~\ref{lin:nonparucgetudrec}), it traverses a path along
this tree, where at each node, it chooses the child with the highest $\Bit$ value;
at each step, it refines its recommendation of $\udtruei$ in this manner.
It stops either when it has reached a leaf of $\Tcalit$ or if there is not sufficient
data assigned to the node, such that any finer estimate will not be meaningful.
It then returns an arbitrary point in the interval corresponding to the last node.

The following theorem outlines the main theoretical results for the nonparametric model when
using the above procedure along in Algorithm~\ref{alg:mmflearnnsp}.

\insertprethmspacing
\begin{theorem}
\label{thm:nonparnsp}
Assume that the rewards follow the nonparametric feedback model outlined in
Section~\ref{sec:fbmodels} and that it satisfies Assumption~\ref{asm:ntg}.
Let $G\in(0,\ntg]$ be given and let $\epsG$ be as defined in Definition~\ref{def:ntg}.
Then Algorithm~\ref{alg:mmflearnnsp}, when using the definitions
in Algorithms~~\ref{alg:nonparfbmodelpart1}--\ref{alg:nonparfbmodelpart4}
satisfies the following with probability greater than $1-\delta$.
Under truthful reporting from all agents, it is  asymptotically efficient with
\emph{\begin{align*}
\LOTT 
&\leq
C_1 n\left(
       \frac{\Lf^{\nicefrac{1}{2}}\volmax\udmax\sigmamax}{G^{\nicefrac{3}{2}}}
        \betatwoT T^{\nicefrac{1}{2}}
+
\frac{\Lf\volmax\udmax^3\sigmamax^2}{G^3\epsG^3}\betatwoT^2 
+
\frac{\volmax\udmax^2\sigmamax^2}{G^2\epsG^2}\betatwoT^2 
+ C_3
 \right)
\\
&\in \bigO\left( 
       \frac{n\log(nT/\delta)}{G^3\epsG^3}
        +
       \frac{\sqrt{\log(nT/\delta)}}{G^{\nicefrac{3}{2}}}n T^{\nicefrac{1}{2}}
    \right)
\end{align*}}
Moreover, it is asymptotically fair with \emph{
\begin{align*}
\UeiiT - \UiT &\leq
C_2\Lipi\,\left(
       \frac{\Lf^{\nicefrac{1}{2}}\udmax\sigmamax}{G^{\nicefrac{3}{2}}}
        \betatwoT T^{\nicefrac{1}{2}}
+
\frac{\Lf\volmax\udmax^3\sigmamax^2}{G^3\epsG^3}\betatwoT^2 
+
\frac{\volmax\udmax^2\sigmamax^2}{G^2\epsG^2}\betatwoT^2 
+  C_3
\right)
% \right)
\\
&\in \bigO\left( 
       \frac{\log(nT/\delta)}{G^3\epsG^3}
        +
       \frac{\sqrt{\log(nT/\delta)}}{G^{\nicefrac{3}{2}}}T^{\nicefrac{1}{2}}
\right)
\end{align*}}
Above, $C_1, C_2$ are global constants and
% \emph{$C= \frac{L^2}{G^2} + \frac{L\udmax}{G} + 1$}.
\emph{$C_3= L^2/G^2 + L\udmax/G + 1$}.
\end{theorem}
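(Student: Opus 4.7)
The plan is to mirror the structure of the proof of Theorem~\ref{thm:nonparsp} but exploit the fact that in Algorithm~\ref{alg:mmflearnnsp} every round contributes feedback (there is no separate exploration phase), so the per-round estimation error can be controlled directly rather than only after one round per bracket.

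First, I would define a ``clean event'' $\calE$ of probability at least $1-\delta$ on which, for every user $i$, every round $t$, and every node $(h,k)$ in the tree $\Tcalit$, the preliminary confidence intervals $(\flbithk,\fubithk)$ from~\eqref{eqn:flbfub} trap $\payoffi(a)$ for all $a\in\Ihk$. This is a standard sub-Gaussian concentration argument applied to each node's weighted sample mean $\fbarithk$, combined with the Lipschitz slack $L\cdot 2^{-h}$, and a union bound over users and dyadic epochs---hence the choice of $\betat$ in~\eqref{eqn:betataudefn} with $\ttilde=2^{\lceil\log_2 t\rceil}$, and the $n\pi^2 t^3/6\delta$ factor. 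I would then argue inductively in $t$ and in the node ordering that the refined bounds $\Blbithk,\Bubithk$ from~\eqref{eqn:Blb}--\eqref{eqn:Bub} also trap $\payoffi$ on $\Ihk$: this uses monotonicity (to propagate bounds from neighbours at the same height) and the parent/child refinement. I would state and prove this as a separate lemma analogous to the one used for Theorem~\ref{thm:nonparsp}.

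Next I would convert confidence on $\payoffi$ into confidence on $\udtruei$ via Assumption~\ref{asm:ntg}. The idea is: whenever a node $(h,k)$ satisfies $\udtruei\in\Ihk$ and has $\Bithk$ small (see~\eqref{eqn:Bit}), then because $|\payoffi(a)-\threshi|\geq (G/\udmax)|a-\udtruei|$ on a window of radius $\epsG$, we get $|\udit-\udtruei|\leq (\udmax/G)\Bithk$, provided the intrinsic width $L/2^{h}+\betattilde\VSithk^{-1/2}$ at depth $h$ is already smaller than $G\epsG/\udmax$. This monotone-child refinement plus the greedy descent in \ucgetudrec{} guarantees that the algorithm follows the path towards $\udtruei$, stopping only when $\VSithk<\tauht$. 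On this stopping node, rearranging $\VSithk\geq\tauht=(\betat/L)^2 4^h$ yields an error bound of the form $|\udit-\udtruei|\lesssim \betat\,\udmax/(G\,2^h)$ once the height is large enough; a short calculation shows the requisite height $h^\star$ scales as $\log(L/(G\epsG))$, contributing the lower-order $G^{-3}\epsG^{-3}\betatwoT^2$ term.

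From per-round error in $\udit$, I would translate to per-round loss and fairness gap via the Lipschitz stability of \mmf{} (established in Appendix~\ref{sec:pfmmf}): if all $\udit$ are close to $\udtruei$, then the \mmf{} allocation $\allocit$ is close to what it would be at the true demands, which is exactly efficient; and likewise $\utili(\allocit/\volit)$ is within $\Lipi|\udit-\udtruei|$ of the truthful-allocation utility and hence of $\utili(\entitli/\volit)$ via the fairness of \mmf{} itself (Theorem~\ref{thm:mmf}). Summing $\sum_t |\udit-\udtruei|$ on the clean event uses the standard ``elliptical-potential''-style bookkeeping: the total $\sum_t \VSithk^{-1/2}$ over any branch is $O(\sqrt{T})$ because each user contributes $\sigmait^{-2}\in[\sigmamax^{-2},\sigmamin^{-2}]$ per round, giving the $\betatwoT T^{1/2}$ dominant term with the $G^{-3/2}$ prefactor from the NTG conversion and a $\Lf^{1/2}$ prefactor from the number of heights actually visited. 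Multiplying by $n$ (for the loss, which aggregates over users) and by a single $\Lipi$ (for the per-user fairness gap) yields the stated bounds.

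The main obstacle will be in step three: unlike in the strategy-proof variant, here the allocation $\allocit$ fed into \ucrecordfb{} is not chosen to probe $\udtruei$ directly; it is the \mmf{} output, which---as noted after Theorem~\ref{thm:detnsp}---may systematically avoid the true demand for users whose demand is heavily contested. I would handle this exactly as in Theorem~\ref{thm:detnsp}: show that for any user whose allocation remains well below $\udtruei\volit$, their demand is so large that \mmf{} would have allocated the same in-proportion-to-entitlement share even under exact knowledge, so such users contribute zero to $\lot$ and satisfy fairness automatically; for the remaining ``small-demand'' users, the recommendation is actually attempted and $\Pit$ descends non-trivially, producing the information needed. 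Threading this case analysis through the tree-based estimator---so that the stopping criterion $\VSithk<\tauht$ aligns with the contention argument---is the delicate bookkeeping that drives the proof.
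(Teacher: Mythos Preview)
Your clean-event step and the NTG conversion are essentially what the paper does. But Steps 3--4 diverge from the paper in a way that leaves a real gap.

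First, there is no ``Lipschitz stability of \mmf'' in Appendix~\ref{sec:pfmmf}; the properties there are monotonicity statements, not perturbation bounds on the allocation vector. The paper instead uses Lemma~\ref{lem:nsploss}, which decomposes $\LOTT$ into an over-allocation sum $\sum_{i,t}(\allocit-\demtrueit)^+$ and an under-allocation sum that \emph{only counts rounds with $\allocit=\demit$}. This indicator is exactly what dissolves the contention problem: if $\allocit<\demit$ then $\loturt=0$ and that round is bounded via $\lotort$, never via $\lotudt$. So you never need to estimate $\udtruei$ for a heavily contested user---you only need to control over-allocation for them. Your plan to control $|\udit-\udtruei|$ uniformly and push it through a stability bound cannot work, because $|\udit-\udtruei|$ need not shrink at all for such users.

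Second, the paper does not use an elliptical-potential sum $\sum_t\VSithk^{-1/2}$. It bounds, for each node $(h,k)$ not containing $\udtruei$, the number of rounds it can lie on $\Pit$: at most $\sigmamax^2\max(\tauht,\uithk)+1$ (Lemmas~\ref{lem:Nbadgtr} and~\ref{lem:Nbadles}). The proof of Lemma~\ref{lem:Nbadgtr} is the crux you are missing: it shows by a three-case contradiction---depending on whether the recommendation node $(h,k')$, the visited node $(h,k)$, and the threshold node $(h,\kthrh)$ share ancestors---that once a far-from-threshold node has enough mass, the $\Bit$ ordering in \ucgetudrec{} forces the traversal elsewhere \emph{even when \mmf{} returns an allocation smaller than the recommendation}. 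This is where monotonicity of the $\Blbit,\Bubit$ bounds (Lemma~\ref{lem:BubBlbmonotonicity}) is essential, and it is the mechanism your proposal does not supply. The $\sqrt{T}$ rate then comes from partitioning nodes into a shallow set ($h<H$), whose visit counts sum to $O(2^H\beta^2)$, and a deep set, whose per-visit payoff gap is $O(L2^{-H})$; optimizing $H$ gives $\sqrt{T}$, not a potential argument.
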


As was the case for the previous models, we find that the rate for the efficiency and fairness
are better when the strategy-proofness constraints are relaxed, improving from
$\bigOtilde(\Ttwth)$ to $\bigOtilde(\sqrtT)$.
The dependence on the near threshold gradient in the dominant term is similar to
Theorem~\ref{thm:nonparsp}, although now the $G^{-3}\epsG^{-3}$ is coupled with a
$n\log(T)$ whereas in Theorem~\ref{thm:glmsp}, it was $\nfith\Tonth$.

\insertFigTreeIllus

This completes the description and results for the nonparametric model.
It is worth emphasising that while the procedure is seemingly long, it is computationally
very efficient in practice as most of the steps are relatively
simple. 
Moreover, the height of the tree does not grow too rapidly since we wait for $\VSithk$ to grow
larger than $\tauht$ which increases exponentially in $h$~\eqref{eqn:betataudefn}.
In fact,
the above method is being deployed in a real-time scheduling system, where
sub-second response times are necessary.
% and therefore, 
The method was designed with this important practical consideration of being able to determine the
allocations fast.
%  considerations in the design of this method.
In Section~\ref{sec:experiments}, we have presented some empirical results on run time.

In Figure~\ref{fig:treeillus}, we have illustrated the expansion of the tree and the construction
of the confidence intervals in Algorithm~\ref{alg:mmflearnnsp}.
We show two different users whose demands
are small and large relative to their entitlement.
For the first user, the tree is expanded
deep around $\udtruei$ enabling us to accurately estimate  this user's demand.
In the latter case, the demand is large; due to resource contention and fairness constraints,
we are not able to allocate many resources to this user and accurately estimate her demand.
We have also illustrated confidence intervals for the payoff, computed via the
\ucgetconfinterval{} method (line~\ref{lin:ucgetconfinterval}).
Observe that the confidence intervals are monotonic.
For instance, in the bottom figure, one would
expect the confidence intervals in the $(0.75, 1)$ interval to be large due to the lack of data.
However, we are able to use monotonicity to clip the lower confidence bound using data from
allocations in the range $(0, 0.75)$.
% Not only do the tighter confidence intervals perform well in practice,
Monotonicity of the confidence intervals is necessary for the correctness of our algorithms---see
Remark~\ref{rem:monotonicity}.
% It is worth pointing out that a naive approach to ensure monotonicity can be computationally
% expensive.
% 

% \textbf{A Note on the Proofs:}
\subsection{Proof Sketches \& Discussion}
\label{sec:pfsketch}

We conclude this section with a brief discussion on high-level proof techniques and relations to
previous work in the online learning and bandit literature.

To control the $T$-period loss, we bound the instantaneous loss $\lot(\demtrueit, \alloct)$
using a two-way argument.
First, if there are any unallocated resources $\lotur>0$, we bound $\lot$ by 
the unmet demand $\lotud$;
this is usually the easier case since it captures instances when there are more resources
available than the sum of demands.
For example, when we use an upper bound for
$\udtruei$ in Algorithm~\ref{alg:mmflearnsp}, %which translates to an upper bound on $\demtrueit$. 
we show that
when $\lotur>0$, we also have
 $\lotud=0$ as all agents have met their demands for that round.
Even in Algorithm~\ref{alg:mmflearnnsp} when we do not use an upper bound, this term can be shown
to be small.
However, when $\lotur(\demtrueit)=0$, this means that there is a scarcity of resources.
This is the harder case to analyse since
the mechanism can risk an inefficient allocation by over-allocating for some agents.
Generally speaking, this term will vanish if our estimate of the demands converge to the
true demands, for which we use the properties of our allocation scheme.
However, as noted before, %it requires a more careful analysis as
the demands of all agents
cannot be estimated accurately in Algorithm~\ref{alg:mmflearnnsp} due to resource contention and
fairness constraints.
For instance, in Figure~\ref{fig:treeillus}, we cannot allocate enough resources for the agent in
the bottom figure to accurately estimate her demand.
Therefore, this requires a
more careful analysis which argues that we only need to estimate the
demands of agents whose demands are small in order to achieve an efficient allocation.
This argument needs to be made carefully as we need to account for the changing demands at each
round.

Our proofs for fairness and strategy-proofness rely on several useful properties of \mmf, which we
prove in Appendix~\ref{sec:pfmmf}.
For example, in Algorithm~\ref{alg:mmflearnsp}, since we use upper bounds on the agents'
demands in line~\ref{lin:spallocub}, it guarantees that an agent cannot gain by inflating the
mechanism's estimate of her demand.
One interesting observation here is that the fairness guarantees for the parametric model
hold non-asymptotically---albeit probabilisitically---whereas the guarantees for
the deterministic model hold only asymptotically.
This is a consequence of the parametric assumption, as it allows us to estimate an agent's payoff by
simply estimating her parameter $\thetai$.

% Our use of the upper (confidence) bound in many of the algorithms above stem from the fairness
% and strategy-proofness requirements.
While our algorithms bear some superficial similarities to optimistic
bandit methods which use upper confidence bounds on the reward~\citep{auer03ucb},
there are no immediate connections as we do not maximise any function of the
rewards.
In fact, it can be argued that an optimistic strategy for resource allocation would allocate resources
assuming that satisfying each user's demand was as easy as possible given past data. Consequently,
it would allocate resources based on a lower (confidence) bound of the demand.
In contrast, our use of the upper bounds stems from the strategy-proofness
and fairness requirements.
That said, the construction of the confidence intervals for the parametric and nonparametric models
borrows ideas from prior work in the bandit literature using generalised linear
models~\citep{filippi2010parametric,rusmevichientong2010linearly,dani2008stochastic},
and nonparametric models~\citep{jones93direct,bubeck2010x,azar2014online,shang2018adaptive,%
sen2018multi,sen2019noisy,grill2015black}.
% The $\sqrtT$ rates with weak or no strategy-proofness constraints is consistent with

It is worth highlighting some of the differences in the nonparametric setting when compared to the
above nonparametric bandit optimisation work.
First, bandit analyses are concerned with minimising the cumulative regret which compares the
optimal payoff \emph{value} to the queried payoff \emph{value}.
% e.g. $\sum_{t=1}^T \left(\max_x f(x)
% - f(x_t)\right)$ where $f$ is the payoff, $x_t$ is the point queried at round $t$, and $T$ is the
% number of rounds.
However, here, our loss is given in terms of the allocations (see~\eqref{eqn:LOTTdefn}) and not the
payoff value achieved by the allocations.
This requires us to translate payoff values we have observed to an estimate on the demand,
which can be difficult\footnote{%
For example, in optimisation, finding the optimal value is typically is easier than finding the
optimal point.
}.
Second, in typical bandit settings, we may query the function at any point we wish.
However, in our setting, we may not be able to do so due to contention on limited resources.
For example, in our algorithm, we may not receive feedback for the chosen
recommendation; rather, the allocation is chosen by \mmfs based on the recommendation which may be
smaller than the recommendation itself.
In the design of the algorithm, we need to ensure
that recommendations are carefully chosen so that \mmfs does not repeatedly choose the same
allocation for the user; ensuring that the confidence intervals are monotonic is critical for doing
so.
Moreover, as we stated before,
the analysis needs to handle the fact that we may never be able to estimate a user's demand
due to this phenomenon.
% as an example, in Figure~\ref{fig:treeillus}, we may never estimate the demand of the user in the
% bottom figure.
Additionally, we also need to account for the discrepancy between the recommended point and the
evaluated point when recording feedback.
% This requires that we account for this discrepancy when recording feedback.
%
In addition to these two main differences, there are a number of other differences that arise due to
the differences in the problem set up.
Since our goal here is to estimate the demands, the criterion for traversing a tree to
select a recommendation (lines~\ref{lin:nonparucgetudrecforub},~\ref{lin:nonparucubtraverse}
and lines~\ref{lin:nonparucgetudrec},~\ref{lin:ucgetBval} for Algorithms~\ref{alg:mmflearnsp}
and~\ref{alg:mmflearnnsp} respectively)
is different from
prior tree-based bandit work~\citep{bubeck2010x,azar2014online}.
% Moreover, as we need to ensure monotonicity of the payoffs, the
Moreover, the
computation of the lower/upper confidence bounds in~\eqref{eqn:Blb},~\eqref{eqn:Bub} are
markedly different from the usual way they are computed in the optimisation literature.
Navigating these challenges requires new design and analysis techniques.

\section{Experiments}
\label{sec:experiments}

In this section, we present our experimental evaluation on synthetic experiments and
a prediction-serving task.
We compare the following classes of methods in this evaluation.
\begin{itemize}
\item Entitlement based allocation: on all rounds we allocate in proportion to entitlements.
\item The three methods in Section~\ref{sec:detfbmodel} using deterministic feedback, i.e.
Algorithms~\ref{alg:mmflearnsp} \&~\ref{alg:detfbspmodel},
Algorithms~\ref{alg:mmflearnsp} \&~\ref{alg:detfbmodel},
and~\ref{alg:mmflearnnsp} \&~\ref{alg:detfbmodel},
\item The two methods in Section~\ref{sec:glmfbmodel} using parametric feedback,
i.e. Algorithms~\ref{alg:mmflearnsp} \&~\ref{alg:glmfbmodel}
and Algorithms~\ref{alg:mmflearnnsp} \&~\ref{alg:glmfbmodel}, when using $\mu(x) = \tanh(x)$.
\item The same parametric models when using an algebraic function
$1 - (1+x)^{-1}$ for $\mu(x)$.
\item The two methods in Section~\ref{sec:nonparfbmodel} using nonparametric feedback,
i.e. Algorithms~\ref{alg:mmflearnsp} \&~\ref{alg:nonparfbmodelpart1}-\ref{alg:nonparfbmodelpart4}
and Algorithms~\ref{alg:mmflearnnsp} \&~\ref{alg:nonparfbmodelpart1}-\ref{alg:nonparfbmodelpart4}.
% \det\detsp, \detnsp: our algorithms with deterministic feedback (Section~\ref{sec:detfbmodel});
% \glmtanhsp, \glmtanhnsp: our algorithms with stochastic feedback (Section~\ref{sec:glmfbmodel})
% \glmalgsp, \glmalgnsp: our algorithms with stochastic feedback using an algebraic function
% $1 - (1+x)^{-1}$ for $\mu(x)$.
% Above, \algpartsps indicates that we use the strategy-proof version (Algorithm~\ref{alg:mmflearnsp})
% and, \algpartnsps indicates that we use the version without the strategy-proofness requirement
% (Algorithm~\ref{alg:mmflearnnsp}).
\end{itemize}
% Above, \algpartsp, \algpartnsps respectively indicate that we use the mechanism
% with (Algorithm~\ref{alg:mmflearnsp}) and without (Algorithm~\ref{alg:mmflearnnsp}) the
% strategy-proofness requirement.

\subsection*{Synthetic Experiments}

Our synthetic experimental set up simulates the web-serving scenario outlined in
Example~\ref{eg:webserving}.
% There are five agents for whom the load is chosen randomly in $[5000, 15000]$ on each round
%  and the unit load is between $10^{-4}$ and $10^{-5}$.
We have agents for whom the unit load is between $10^{-4}$ and $10^{-6}$ and on each
round, the load is chosen uniformly randomly in $[5000, 15000]$.
We perform three experiments, with 5, 10, and 15 agents respectively,
and where the rewards are drawn from payoff functions $\payoffi$ which have
the parametric form in Section~\ref{sec:glmfbmodel}.
For the first two synthetic experiments, we use
$\mu(x)=\tanh(x)$ and $\mu(x)=1-(1+x)^{-1}$ with 
the parameter $\thetatruei$ set based on the unit load for each agent's model.
In the third synthetic experiment, we use $\mu(x) = 1/(1+e^{-(x-b)})$ (logistic
function) where $b$ is chosen so that it is $0.6 \times$ the unit demand for each user;
observe that while the first two experiments conform to the parametric model, the third does not.
For \detsps and \detnsp, we directly use the payoff $\payoffi(a) = \mu(\thetatruei a)$
as the feedback,
whereas for the stochastic methods we use stochastic feedback.
% We provide additional experimental details in Appendix~\ref{app:experiments}.

\insertFigExpResults

The results are given in Figures~\ref{fig:tanh5}-\ref{fig:logistic15}.
As expected, assigning proportional to entitlements on each round performs poorly and has linear
loss.
As indicated in our analysis, Algorithm~\ref{alg:mmflearnnsp} does better than
Algorithm~\ref{alg:mmflearnsp} for the same feedback model.
While the parametric models outperform entitlement-based allocation in all experiments,
it can suffer when the model is misspecified.
In contrast, the nonparametric models perform well across all the experiments as it does not make
strong assumptions about the payoffs.
% 
% While using the correct form for $\mu$ helps in the stochastic model, it is still fairly
% robust to model mis-specification, with all stochastic models significantly  outperforming
% \propentitl.
% % significantly.
% % in both experiments.
Finally, the deterministic methods do better than the stochastic methods since they observe the
payoff without noise.

In Figure~\ref{fig:logistic15}, while some of the methods based on Algorithm~\ref{alg:mmflearnnsp}
perform worse than simply allocating in proportion to the entitlements, we see that the
loss grows sublinearly.
The large loss in the initial rounds is due to the large exploration phases when there are many
agents.
It is also worth observing that in Figure~\ref{fig:tanh5}, the parametric method using a
$\tanh$ function for $\mu$ performs worse than the nonparametric method, even though the true payoff
is a $\tanh$ function.
This could be due to the fact that the confidence intervals may be somewhat conservative (see
below).

We wish to highlight that the tree-based procedure for the nonparametric model is computationally
efficient in practice.
For example, in Algorithm~\ref{alg:mmflearnnsp}, in the second synthetic
experiment with 10 users,
the average time taken to obtain a recommendation for a user was
$\sim 0.0011$s after 100 rounds (i.e. 100 data points in the tree),
$\sim 0.0037$s after 1000 rounds,
and $\sim 0.0384s$ after 10000 rounds.
The procedure becomes more expensive in later rounds since the tree is expanded as we collect more
data.
However, since we expand a node only after the sum of inverse variances exceeds $\tauht$---which grows
at rate $4^h$ (see~\eqref{eqn:betataudefn})---it does not expand very fast.

\subsection*{A Prediction-serving task}

% \insertFigPredServ
We evaluate our approach on latency-sensitive prediction serving~\citep{crankshaw2018inferline},
which is used in a applications such as Amazon Alexa.
Here, each user deploys a queued serving system that takes query inputs and
returns prediction responses.
In this setting, although the application owner
knows how to quantify strict performance requirements from the application, the appropriate resource
allocation is far less clear a priori, due to the complexity of the system.

In our experiment we consider fives users sharing 100 virtual machines, with equal entitlement to
this resource. Each
user specifies a 100ms, 0.95 percentile latency SLO target for their application response time,
meaning that they allow at most 5\% of queries in incoming traffic to have response time greater than
100ms.
At the end of each round, users provide feedback on the fraction of queries that were completed on
time.
For the arrival traffic of three of the users we use the Waikato network
dataset~\citep{mcgregor2000nlanr}, with
different time-of-day and day-of-week regions for the different users.
For one user, we use data from the Twitter streaming API~\citep{twitter2020},
and for the one user, we use the Wikipedia traffic
data~\citep{wikipedia2020}.
We use a query-level execution simulator from~\citep{crankshaw2018inferline}, which uses
power-of-2-choices load balancing to mimic real deployment conditions.

The results are shown in Figure~\ref{fig:predserve}; we obtained ground truth by exhaustively
profiling the performance of each user for all values for the number of virtual machines, which enables
us to numerically compute the unit demands.
As expected, all methods outperform allocating resources in proportion to the entitlements, with
the nonparametric model performing the best.
The parametric models perform poorly in this experiment, which could be
attributed to a mismatch between the model and the problem.

\subsection*{Some Implementation Details}

For the parametric methods, we solved for $\thetait$ in~\eqref{eqn:thetaitdefn}, by computing
$\thetamlit$ as outlined in~\ref{sec:proofs_glm}, and then clipping it at $\thetamin$.
However, in practice we found $\thetamlit>\thetamin$ in almost all cases.
We solved for $\thetamlit$, using the Newton-Raphson method.

In all our experiments, we used $\delta=10^{-3}$.
We found the theoretical value for $\betat$~\eqref{eqn:betatglm} as described in
Section~\ref{sec:glmfbmodel} for the parametric model to be too
conservative; therefore, we divided it by $5$.
This value was tuned using a hold-out set of synthetic experiments that were not included
in Figure~\ref{fig:synthetic}.
In the bandit literature,
it is common to tune upper confidence bounds in a similar
manner~\citep{filippi2010parametric,kandasamy15addBO,srinivas10gpbandits}.

% \vspace{-0.05in}
\section{Conclusion}
\label{sec:conclusion}
% \vspace{-0.05in}

We studied mechanisms for a multi-round fair allocation problem when agents do not know their
resource requirements, but can provide feedback on an allocation assigned to them.
We proposed three feedback models for this problem, and described mechanisms for each model that
achieved varying degrees of strategy-proofness.
In all cases, we provided upper bounds on the asymptotic rate for efficiency, fairness, and
strategy-proofness, and observed that as we relaxed the strategy-proofness constraints, the rates
for efficiency and fairness improved.
These insights are backed up by empirical evaluations on range of synthetic and real
benchmarks.

One avenue for future work is to explore hardness results for this problem.
In particular, while the $\bigOtilde(\sqrtT)$ rates for the loss are not surprising for the
stochastic models when using Algorithm~\ref{alg:mmflearnnsp}, it is worth
exploring lower bounds for asymptotic fairness and strategy-proofness, and more interestingly
how fairness and strategy-proofness constraints affect the rates for the loss.

\vspace{0.1in}

\paragraph{Acknowledgements:}
We would like to thank Matthew Wright for providing feedback on an initial draft of this manuscript.

\vspace{0.5in}

\appendix

\textbf{\Large Appendix}

\vspace{0.2in}

This Appendix is organised as follows.
In Appendix~\ref{sec:pfmmf}, we prove Theorem~\ref{thm:mmf}.
We also establish some properties about \mmfs which will be useful in subsequent proofs.
In Appendix~\ref{sec:proofs_intermediate}, we will state and prove some intermediate results
that will be useful throughout our analyses of the learning problem in all three models.
Appendix~\ref{sec:proofs_det} analyses the deterministic feedback model,
Appendix~\ref{sec:proofs_glm} analyses the stochastic parametric model, and
Appendix~\ref{sec:proofs_nonpar} analyses the nonparametric model.
% We begin with some notation that will be used throughout our proofs.

% \vspace{0.1in}

\section{Properties of Max-min Fairness and Proof of Theorem~\ref{thm:mmf}}
\label{sec:pfmmf}

In this section we state and prove some properties about Max-min fairness (\mmf) that will be used
in our proofs.
Recall that \mmfs is outlined in Algorithm~\ref{alg:mmf}.
We will let $r$ and $e$ be the variables in Algorithm~\ref{alg:mmf}, which are initialised
in line~\ref{lin:reinit} and then updated in line~\ref{lin:reupdate}.

\insertprethmspacing
\begin{property}
\label{pro:lessthanentitl}
If agent $i$ reports $\demi < \entitli$, her allocation is $\demi$.
\end{property}
\begin{proof}
In Algorithm~\ref{alg:mmf}, $\demj<\entitlj$ for all agents $j$ before user $i$ in the sorted order;
therefore, $r/e$ is increasing until it reaches agent $i$.
Since $r/e=1$ at the beginning,
we have $r/e\geq 1$ whenever we reach the \textbf{if} condition
in line~\ref{lin:mmfifcondn} for all users up to $i$.
% This is because $\demj<\entitlj$
Therefore, $\demj/\entitlj < r/e$ for all agents $j$ until $i$.
Hence, the for loop does not break,
and $i$  is allocated $\demi$ in line~\ref{lin:allocdemand}.
\end{proof}

\insertprethmspacing
\begin{property}
\label{pro:morethanentitl}
If user $i$ reports $\demi \geq \entitli$, her allocation is at least $\entitli$.
\end{property}
\begin{proof}
We will first show, by way of induction, that in Algorithm~\ref{alg:mmf}, $r\geq e$ each time
we visit line~\ref{lin:mmfifcondn}.
For the first user, this is true since $r=e=1$.
Now assume that it is true when the \textbf{if} condition is satisfied for a user $j$. Then,
$\demj/\entitlj < r/e$. We therefore have,
\begin{align*}
\frac{r-\demj}{e-\entitlj} \;>\;
\frac{r-r\entitlj/e}{e-\entitlj} \;=\;
\frac{r}{e}\cdot\frac{1-\entitlj/e}{1-\entitlj/e} \;\geq\; 1.
\label{eqn:mmfifcondn}
\numberthis
\end{align*}
Therefore, the statement is true when we visit line~\ref{lin:mmfifcondn} the next time.

Now, in  Algorithm~\ref{alg:mmf}, if a user was allocated her resource in line~\ref{lin:allocdemand},
then $\alloci = \demi \geq \entitli$.
Therefore, say she was allocated in line~\ref{lin:allocsaturate}.
When the condition in line~\ref{lin:mmfifcondn} is violated for some user, $r/e\geq 1$
by the above argument in~\eqref{eqn:mmfifcondn}.
Therefore, for all users $j$  who are assigned in~\ref{lin:allocsaturate},
$\allocj = \entitlj \frac{r}{e} \geq \entitlj$.
\end{proof}

\insertprethmspacing
\begin{property}
\label{pro:allocnomorethandemand}
A user's allocation is never more than her reported demand.
\end{property}
\begin{proof}
Let user $j$'s demand be $\demj$ and consider any user $i$.
If she is allocated in line~\ref{lin:allocdemand}, then she is allocated $\demi$.
If she is allocated in line~\ref{lin:allocsaturate}, then she is allocated
$r\frac{\entitli}{\entitl}$ which is at most $\demi$ by the \textbf{if} condition in
line~\ref{lin:mmfifcondn} and the fact that agents are sorted in ascending order of $\demj/\entitlj$.
\end{proof}

\insertprethmspacing
\begin{property}
\label{pro:alloclessthandemand}
Suppose we have a demand vector $\dem$ and allocation vector $\alloc$ returned by \emph{\mmf}, where,
for user $i$, $\alloci < \demi$.
Keeping all other reported demands constant, user $i$'s allocation would have been the same 
for all of her reports $\demi'\geq\alloci$.
\end{property}
\begin{proof}
If $\alloci < \demi$, then she was allocated $\alloci = r\entitli/e$ at
line~\ref{lin:allocsaturate} in Algorithm~\ref{alg:mmf}.
Therefore, for any bid greater than or equal to $\alloci$, while her ranking may have changed, she
would still have been allocated the same amount in line~\ref{lin:allocsaturate}, as the
\textbf{if} condition in line~\ref{lin:mmfifcondn} is not satisfied.
% which does not depend on her reported demand.
\end{proof}

\insertprethmspacing
\begin{property}
\label{pro:allocreportlessthandemand}
Fix the reported demands of all agents except $i$.
Let the allocations of agent $i$ when reporting $\demi, \demi'$ be
$\alloci$ and  $\alloci'$ respectively.
If $\demi<\demi'$, then $\alloci\leq \alloci'$ with equality holding only when the agent is
allocated in line~\ref{lin:allocsaturate} when reporting both $\demi$ and $\demi'$.
\end{property}
\begin{proof}
First consider the case where
the agent was allocated in line~\ref{lin:allocsaturate} when reporting $\demi$.
Then, $\alloci\leq\demi$.
Using the same argument used in the proof of Property~\ref{pro:alloclessthandemand}, we have
that her allocation would have been the same for all demands larger than $\demi$, including,
in particular $\demi'$. Therefore, $\alloci=\alloci'$.
% This also proves the latter claim

Second, consider the case where
she was allocated in line~\ref{lin:allocdemand} when reporting $\demi$.
If
she would have been allocated in line~\ref{lin:allocdemand}, had she reported $\demi'$ instead of
$\demi$, then $\alloci=\demi<\demi'=\alloci'$.
Suppose
instead, that she would have been allocated in line~\ref{lin:allocsaturate} when reporting $\demi'$.
If $r', e'$ are the values of $r, e$  in Algorithm~\ref{alg:mmf} when
the \textbf{if} condition is first not satisfied,
 she would have received $r' \entitli/e'$ when reporting $\demi'$.
We will show, by way of contradiction, that $\demi<r' \entitli/e'$ which proves the
property.

To show the contradiction,
assume instead $\demi \geq r' \entitli/e'$.
% Then the ordering of the agents up to the point where the \textbf{if} condition is not satisfied
% does not change whether $i$
Then, when $i$ reports $\demi$, the \textbf{if} condition is violated when either user $i$ or an
agent before user $i$ in the ordering reaches the condition.
Then, she will have been allocated at line~\ref{lin:allocsaturate} which contradicts the premise
of the case, which is that $i$ was allocated in line~\ref{lin:allocdemand} when reporting $\demi$.
\end{proof}

\insertprethmspacing
\begin{property}
\label{pro:reportlessthandemand}
% Fix the demands of all agents except $i$.
% Let the utilities of agent $i$ when reporting $\demi, \demtruei$ with $\demi<\demtruei$ be
% $\tilde{u}_i$ and  $\tilde{u}^\star_i$ respectively.
% Then, $\tilde{u}_i\leq\tilde{u}^\star_i$.
Fix the reported demands of all agents except $i$.
Let the utilities of agent $i$ when reporting $\demi, \demtruei$ be
$\tilde{u}_i$ and  $\tilde{u}^\star_i$ respectively.
If $\demi<\demtruei$, then $\tilde{u}_i\leq\tilde{u}^\star_i$.
\end{property}
\begin{proof}
This follows from monotonicity of the utility and Property~\ref{pro:allocreportlessthandemand},
i.e. the fact that an agent's allocation cannot decrease when she increases
her bid, when all other bids are unchanged.
\end{proof}

\insertprethmspacing
\begin{property}
\label{pro:reportmorethandemand}
% Fix the demands of all agents except $i$.
% The utility of agent $i$ for reporting $\demi>\demtruei$, is the same as reporting $\demtruei$.
Fix the reported demands of all agents except $i$.
Let the utilities of agent $i$ when reporting $\demi, \demtruei$ be
$\tilde{u}_i$ and  $\tilde{u}^\star_i$ respectively.
If $\demi>\demtruei$, then $\tilde{u}_i=\tilde{u}^\star_i$.
\end{property}
\begin{proof}
Assume the allocations were $\alloctruei,\alloci$ when the user reports $\demtruei,\demi$
respectively.
By Property~\ref{pro:allocnomorethandemand}, $\alloctruei \leq \demtruei$.
First assume that the agent was allocated $\alloctruei < \demtruei$.
By Property~\ref{pro:alloclessthandemand}, we have
$\alloci=\alloctruei$ which implies $\tilde{u}_i=\tilde{u}^\star_i$.
Now say the agent was allocated $\alloctruei = \demtruei$, and it changes to $\alloci$ when
she switches to $\demi$.
By Property~\ref{pro:allocreportlessthandemand},
the allocation by \mmfs does not decrease when an agent increases her demand.
Therefore, $\alloci\geq\alloctruei$.
The claim follows from the fact that $\utili(a) = \utili(\alloctruei)$ for all $a\geq \alloctruei$.
\end{proof}

We can now use the above properties to prove Theorem~\ref{thm:mmf}.

\insertprethmspacing
% \begin{theorem}
% \label{thm:mmf}
% \emph{\mmf} (Algorithm~\ref{alg:mmf}) is efficient, fair, and strategy-proof.
% \begin{proof}{Proof of Theorem~\ref{thm:mmf}}
\textbf{\textit{Proof of Theorem~\ref{thm:mmf}.}}
\textbf{Efficiency:}
Assume all users report truthfully.
Recall that a mechanism is efficient if
$\lot(\demtrue, \alloc) = \min(\lotur(\alloc) + \lotor(\demtrue,\alloc), \lotud(\demtrue, \alloc)) =
0$ where $\lotur, \lotor, \lotud$ are as defined in~\eqref{eqn:lotdefn}.
Since \mmfs never allocates a user more than her reported demand
(Property~\ref{pro:allocnomorethandemand}), $\lotor(\demtrue,\alloc)=0$.
Therefore, if $\lotur(\alloc) = 0$, then $\lot(\demtrue, \alloc) = 0$.
If $\lotur(\alloc) > 0$, this means Algorithm~\ref{alg:mmf} never entered
line~\ref{lin:allocsaturate} and therefore $\alloci=\demtruei$ for all users $i$.
Hence, $\lot(\demtrue,\alloc) \leq \lotud(\demtrue,\alloc) = 0$.

\textbf{Fairness:}
% This follows as a consequence of Properties~\ref{pro:lessthanentitl}
% and~\ref{pro:morethanentitl}.
Assume user $i$ reports truthfully.
If $\demtruei < \entitli$, then by Property~\ref{pro:lessthanentitl},
$\utili(\entitli) = \utili(\demtruei) = \utili(\alloci)$.
If $\demtruei \geq \entitli$, then by Property~\ref{pro:morethanentitl} and the fact that
$\utili$ is non-decreasing, $\utili(\entitli) \leq \utili(\alloci)$.

\textbf{Strategy-proofness:}
This follows from Properties~\ref{pro:reportlessthandemand} and~\ref{pro:reportmorethandemand}.
\qedwhite
% \end{proof}
% \end{theorem}

% \section{Proofs}
% \label{sec:proofs}

\section{Some Intermediate Results}
% \section{Intermediate Results}
\label{sec:proofs_intermediate}

In this section, we will state some intermediate lemmas that will be useful in the
proofs of Theorems~\ref{thm:detsp}--\ref{thm:nonparnsp}.
We begin with some notation.

\textbf{Instantaneous allocations and losses:}
We will let $\normallocit=\allocit/\volit$ be the allocation per unit demand for agent $i$
at round $t$.
Let $\lott=\lot(\demtrueit, \alloct)$ be the loss at round $t$ and
$\loturt=\lotur(\alloct)$,
$\lotort=\lotor(\demtrueit, \alloct)$,
and
$\lotudt=\lotud(\demtrueit, \alloct)$ be the unallocated resources, over allocated resources, and
unmet demand respectively at round $t$~\eqref{eqn:lotdefn}.
Recall, $\lott = \min(\loturt+\lotort, \lotudt)$.

\textbf{Upper/lower (confidence) bounds on the demands:}
We will use $\udubit$ and $\udlbit$ to denote upper and lower (confidence) bounds on
agent $i$'s unit demands $\udtruei$ at round $t$,
i.e. they satisfy $\udlbit \leq \udtruei \leq \udubit$ (with high probability).
Moreover, unless otherwise specified,
 $\demubit=\volit\udubit$ and $\demlbit=\volit\udlbit$ will denote upper and
lower (confidence) bounds on
agent $i$'s demand $\demtrueit$ at round $t$.

\textbf{Exploration phase \& bracket indices in Algorithm~\ref{alg:mmflearnsp}:}
We will use $\Expl$ to denote the round indices belonging to
the exploration phase in Algorithm~\ref{alg:mmflearnsp}.
We will use $\qt$ to denote the bracket index round $t$ belongs to and
$T_q$  to denote the number of rounds completed by $q$ brackets.
Then,
\begin{align*}
\numberthis
\label{eqn:qtdefn}
T_{\qt-1} < t \leq T_{\qt}.
\end{align*}

Finally, it is worth recalling the definition of a sub-Gaussian distribution. 
A random variable $X$ is $\sigma$ sub-Gaussian if,
% Letting $\zjt = \Xit - \mu(\allocit\thetatruei/\volis)$, 
% we therefore have for all $i, t$,
\begin{align*}
\text{for all $\lambda>0$, } \hspace{0.2in}
\EE\left[ e^{\lambda (X-\EE X)} \right]
\leq \exp\left(\frac{\lambda^2\sigma^2}{2}\right).
\label{eqn:subgauss}
\numberthis
\end{align*}

% This section is organised as follows.
% In Section~\ref{sec:proofs_intermediate} we will prove some useful intermediate results.
% In Section~\ref{

% We will first state and prove some intermediate results that will be useful in throughout
% our analysis.

\subsection{Bounds on the Loss}
\label{app:allocintermediate}

The following lemma will be useful in bounding the loss in round $t$ for
Algorithm~\ref{alg:mmflearnsp}.

\insertprethmspacing
\begin{lemma}
\label{lem:ubinstloss}
Suppose on round $t$, a multi-round mechanism chose its allocations via \emph{\mmfs} by using an upper
bound $\demubit$ for $\demtrueit$ for all $i\in\{1,\dots,n\}$ as the reported demand,
i.e. $\demubit\geq\demtrueit$.
Moreover, let $\demlbit$ be a lower bound for
$\demtrueit$ for all $i$,
i.e. $\demlbit\leq\demtrueit$.
Then,
\[
\lott \leq \sum_{i=1}^n (\demubit - \demtrueit) \leq \sum_{i=1}^n (\demubit - \demlbit).
\]
% \[
% \lott \leq \sum_{i=1}^n (\udubit - \udlbi).
% \]
\end{lemma}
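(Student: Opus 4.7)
The plan is to split into two complementary cases based on whether any resources are left unallocated, bounding the two terms inside the $\min$ defining $\lott$ separately.

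\textbf{Case 1: $\loturt > 0$.} I would first observe that if any resources remain unallocated after \mmf{} finishes, then the algorithm never entered line~\ref{lin:allocsaturate}: indeed, inspecting Algorithm~\ref{alg:mmf}, once that line is reached the remaining resource $r$ is distributed proportionally among the agents in $S$ (whose entitlements sum to $e$), and the total allocation becomes $1$. So in this case every agent must have exited via line~\ref{lin:allocdemand}, meaning $\allocit = \demubit$ for every user $i$. Combined with the hypothesis $\demubit \geq \demtrueit$, this gives $\lotudt = \sum_i(\demtrueit - \demubit)^+ = 0$, hence $\lott = \min(\loturt + \lotort, \lotudt) = 0$, which satisfies the desired bound trivially.

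\textbf{Case 2: $\loturt = 0$.} Here I would simply use $\lott \leq \lotort$. By Property~\ref{pro:allocnomorethandemand} applied to the reported demands $\demubit$, we have $\allocit \leq \demubit$, so for each $i$,
\[
(\allocit - \demtrueit)^+ \;\leq\; (\demubit - \demtrueit)^+ \;=\; \demubit - \demtrueit,
\]
where the final equality uses $\demubit \geq \demtrueit$. Summing over $i$ yields $\lott \leq \lotort \leq \sum_i(\demubit - \demtrueit)$.

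The second inequality in the lemma statement is immediate from $\demtrueit \geq \demlbit$, which makes $\demubit - \demtrueit \leq \demubit - \demlbit$ termwise.

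I do not anticipate serious obstacles here. The only subtle step is the structural claim in Case 1 that $\loturt > 0$ forces $\allocit = \demubit$ for every $i$; this is a direct consequence of the conservation identity $\sum_{k\in S}\entitlk = e$ which is preserved by the updates in line~\ref{lin:reupdate}, so that whenever line~\ref{lin:allocsaturate} fires the leftover resource $r$ is fully consumed. Everything else is elementary bookkeeping with $(\cdot)^+$.
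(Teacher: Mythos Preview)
Your proposal is correct and matches the paper's proof essentially step for step: both split on whether $\loturt>0$, use the structural fact that unallocated resources imply every agent received $\demubit$ (hence $\lotudt=0$), and in the $\loturt=0$ case bound $\lott\leq\lotort$ via Property~\ref{pro:allocnomorethandemand}. Your write-up is slightly more explicit about why line~\ref{lin:allocsaturate} exhausts the resource, but the argument is the same.
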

\begin{proof}
The second inequality follows from the first since $\demtrueit\geq\demlbit$.
To prove the first inequality, first let $\loturt>0$.
This means, in Algorithm~\ref{alg:mmf}, all agents were allocated in line~\ref{lin:allocdemand}
and therefore $\allocit=\demubit>\demtrueit$ for all $i$.
Hence $\lott \leq \lotudt=0$.
The statement is true since $\demubit\geq\demtrueit$.

Now let $\loturt=0$.
We bound $\lott \leq \lotort=\sum_i (\allocit - \demtrueit)^+$.
Since the mechanism reports $\demubit$ as the demand to \mmf, and since \mmfs does not
allocate more than the reported demand (Property~\ref{pro:allocnomorethandemand}),
we have $\allocit\leq\demubit$.
% The claim follows since $\demtrueit\geq\demlbit$.
\end{proof}

The following lemma will be useful in bounding the loss in round $t$ for
Algorithm~\ref{alg:mmflearnnsp}.

\insertprethmspacing
\begin{lemma}
\label{lem:nsploss}
Suppose on round $t\geq 2$ of Algorithm~\ref{alg:mmflearnnsp}, we chose the allocations
via \emph{\mmfs}
by using a reported demand $\demit$ for all $i\in\{1,\dots,n\}$. %as the reported demand.
% Moreover, let $\demlbit$ be a lower bound for
% $\demtrueit$ for all $i$.
Then,
\[
\LOTT 
\leq 1 + \sum_{i=1}^n \sum_{t=2}^T (\allocit-\demtrueit)^+ \,+\,
    \sum_{i=1}^n \,\sum_{t= 2}^T \indfone(\allocit=\demit)(\demtrueit-\allocit)^+.
\]
\end{lemma}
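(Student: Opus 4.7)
The plan is to prove this by a round-by-round decomposition of $\LOTT = \sum_t \lott$, where for round $t=1$ we use the trivial bound $\lot_1 \le 1$ (since the total resource is $1$ and $\lot \le \lotur + \lotor \le 1 + 0 = 1$ is never exceeded), and for each $t \ge 2$ we prove the pointwise bound
\[
\lott \;\le\; \sum_{i=1}^n (\allocit - \demtrueit)^+ \;+\; \sum_{i=1}^n \indfone(\allocit = \demit)\,(\demtrueit - \allocit)^+.
\]
Summing over $t \ge 2$ and adding the round-$1$ contribution then yields the lemma.

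The key structural observation is that in Algorithm~\ref{alg:mmf}, $\loturt > 0$ can occur only when every agent was allocated at line~\ref{lin:allocdemand}; that is, the \textbf{if} condition in line~\ref{lin:mmfifcondn} succeeded for every agent and the algorithm never reached the saturation step in line~\ref{lin:allocsaturate}. Indeed, whenever any agent is allocated at line~\ref{lin:allocsaturate}, all remaining resources $r$ are distributed to the set $S$ in proportion to their entitlements, leaving nothing on the table. Thus
\[
\loturt > 0 \;\Longrightarrow\; \allocit = \demit \text{ for all } i.
\]
I will use this to split the pointwise argument into two cases for each $t \ge 2$.

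In Case A, when $\loturt > 0$, the above implication gives $\allocit = \demit$ for every $i$, so $\indfone(\allocit = \demit) = 1$ for all $i$. Using the trivial bound $\lott \le \lotudt$ from the definition of $\lot$ in~\eqref{eqn:lotdefn}, I get
\[
\lott \le \sum_{i=1}^n (\demtrueit - \allocit)^+ = \sum_{i=1}^n \indfone(\allocit = \demit)\,(\demtrueit - \allocit)^+,
\]
which is dominated by the RHS above (the first summand is nonnegative). In Case B, when $\loturt = 0$, I use instead $\lott \le \lotort + \loturt = \lotort = \sum_i (\allocit - \demtrueit)^+$, which matches the first summand of the RHS, and again the second summand is nonnegative. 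In both cases the pointwise inequality holds.

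The main obstacle — really the only non-routine point — is correctly identifying the ``iff'' structure of MMF just noted: that leftover resources force allocation-equals-reported-demand uniformly across agents. Everything else is a direct application of the definitions of $\lot, \lotur, \lotor, \lotud$ in~\eqref{eqn:lotdefn} together with the min in the definition of $\lott$, which lets us freely pick the more convenient of $\loturt + \lotort$ and $\lotudt$ in each case. Once the pointwise bound is established for each $t \ge 2$, summing and combining with $\lot_1 \le 1$ completes the proof.
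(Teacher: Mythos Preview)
Your proposal is correct and follows essentially the same approach as the paper: both split on whether $\loturt>0$ or $\loturt=0$, use the structural fact that $\loturt>0$ forces $\allocit=\demit$ for every $i$ (since \mmfs never reaches line~\ref{lin:allocsaturate}), bound $\lott$ by $\lotudt$ in the first case and by $\lotort$ in the second, and then sum. Your parenthetical justification for $\lot_1\le 1$ is a bit loose (in general $\lotor\neq 0$), but the bound itself is correct since $\loturt+\lotort = 1 - \sum_i \min(\allocit,\demtrueit) \le 1$.
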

\begin{proof}
First consider the loss at round $t$, where, recall
$\lott = \min(\loturt+ \lotort, \lotudt)$.
If $\loturt > 0$, we will bound $\lott\leq\lotudt$ and if $\loturt=0$, we will bound
$\lott\leq\lotort$.
Intuitively, if the resource is scarce we will show that we have not been inefficient by
over-allocating to users, and if there are excess resources, we will show that we have not
under-allocated to anyone.
This leads us to,
\begingroup
\allowdisplaybreaks
\begin{align*}
\LOTT &= \sum_{t=1}^T\lott \leq 1 + \sum_{t\geq 2, \loturt=0} \lotort +
                            \sum_{t\geq 2, \loturt>0} \lotudt
\\ 
&= 1 + \sum_{t=2, \loturt=0}^T \sum_{i=1}^n(\allocit-\demtrueit)^+ \,+\,
    \sum_{t\geq 2, \loturt>0} \sum_{i=1}^n(\demtrueit-\allocit)^+ \\
&\leq 1 + \sum_{i=1}^n \sum_{t=2}^T (\allocit-\demtrueit)^+ \,+\,
    \sum_{i=1}^n \,\sum_{t= 2, \,\allocit=\demit}^T \hspace{-0.1in}(\demtrueit-\allocit)^+
\end{align*}
\endgroup
Here, the second step uses the definitions for $\loturt,\lotudt$.
The last step uses two relaxations.
First, we remove the constraint $\loturt=0$ in the first summation.
In the second summation, we use the fact that
if there are unallocated resources, it can only be when \mmfs allocates to all users their requested
demand;
therefore,
$\loturt>0$ implies $\allocit=\demit$ for all $i$.
\end{proof}

\subsection{Bounds on Fairness}
\label{sec:intermediatefairness}

The following two lemmas will be useful in the proofs of our fairness results for
Algorithm~\ref{alg:mmflearnsp}.
Recall from Section~\ref{sec:onlinefairdivision},
user $i$'s utility $\utili$ is $\Lipi$-Lipschitz continuous.

\insertprethmspacing
\begin{lemma}
\label{lem:ubfairness}
Suppose on round $t$, the allocations $\alloct$ are set via \mmfemph, and that for agent $i$ we used
a reported demand $\demit=\udit\volit$.
Say she received an allocation $\allocit$.
Let $\udubit$ be such that $\udubit\geq \max(\udtruei,\udit)$.
% an upper bound on $\udtruei$ which also satisfies $\udubit\geq\udit$.
Then, regardless of the behaviour of the other agents, we have for agent $i$,
\begin{align*}
\utili(\entitli/\volit) - \utili(\allocit/\volit)
\leq \Lipi(\udubit - \udit).
\end{align*}
\end{lemma}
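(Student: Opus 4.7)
The plan is to split into two cases based on whether the reported demand $\demit = \udit\volit$ exceeds agent $i$'s entitlement $\entitli$, and in each case identify $\allocit$ using the elementary properties of \mmf{} established in Appendix~\ref{sec:pfmmf}.

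In the first case, where $\demit \geq \entitli$, I would invoke Property~\ref{pro:morethanentitl} to get $\allocit \geq \entitli$. Since $\utili$ is non-decreasing, this yields $\utili(\allocit/\volit) \geq \utili(\entitli/\volit)$, so the left-hand side is non-positive and the stated bound holds trivially because $\udubit \geq \udit$. In the second case, where $\demit < \entitli$, Property~\ref{pro:lessthanentitl} gives the exact allocation $\allocit = \demit = \udit\volit$, so $\allocit/\volit = \udit$. It remains to bound $\utili(\entitli/\volit) - \utili(\udit)$.

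For the second case, the key observation is that $\utili$ is flat beyond $\udtruei$, so
\[
\utili(\entitli/\volit) \;=\; \utili\bigl(\min(\entitli/\volit,\,\udtruei)\bigr).
\]
If $\udit \geq \udtruei$, then $\utili(\udit) = \utili(\udtruei) \geq \utili(\min(\entitli/\volit, \udtruei))$, so the left-hand side is again non-positive. Otherwise $\udit < \udtruei$, and by the $\Lipi$-Lipschitz continuity of $\utili$,
\[
\utili\bigl(\min(\entitli/\volit,\udtruei)\bigr) - \utili(\udit)
\;\leq\; \Lipi\bigl(\min(\entitli/\volit,\udtruei) - \udit\bigr)
\;\leq\; \Lipi(\udtruei - \udit)
\;\leq\; \Lipi(\udubit - \udit),
\]
where the final step uses the hypothesis $\udubit \geq \udtruei$.

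I do not expect any real difficulty here; the one subtle point is that invoking Lipschitz continuity directly on $\utili(\entitli/\volit) - \utili(\udit)$ would yield $\Lipi(\entitli/\volit - \udit)$, which cannot be controlled by $\Lipi(\udubit - \udit)$ in general (since $\entitli/\volit$ may be much larger than $\udubit$). The plateau of $\utili$ beyond $\udtruei$ is what salvages the argument, and the hypothesis $\udubit \geq \max(\udtruei, \udit)$ is exactly what is needed to then re-express the bound in terms of $\udubit - \udit$.
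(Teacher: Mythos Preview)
Your proof is correct and follows essentially the same approach as the paper: split on whether $\demit \gtrless \entitli$, invoke Properties~\ref{pro:morethanentitl} and~\ref{pro:lessthanentitl} respectively, and in the second case use that $\utili$ is maximised at $\udtruei$ together with Lipschitz continuity. The paper phrases the second case by adding and subtracting $\utili(\udtruei)$ and noting $\utili(\entitli/\volit)\leq\utili(\udtruei)$, which is exactly your plateau observation $\utili(\entitli/\volit)=\utili(\min(\entitli/\volit,\udtruei))$; your remark about why a direct Lipschitz bound on $\entitli/\volit-\udit$ fails is well taken.
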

\begin{proof}
Denote $\demubit=\udubit\volit$ and $\demtrueit=\udtruei\volit$.
First observe that if $\demit \geq \entitli$, by Property~\ref{pro:morethanentitl},
we have
$\utili(\entitli/\volit) - \utili(\allocit/\volit) \leq 0$ and the statement is true since
the RHS is positive.
Now say $\demit<\entitli$.
By Property~\ref{pro:lessthanentitl}, $\allocit=\demit$ and therefore,
\begin{align*}
\utili(\entitli/\volit) - \utili(\allocit/\volit) 
&\leq \utili(\entitli/\volit) - \utili(\demtrueit/\volit) + \utili(\demtrueit/\volit) -
    \utili(\demit/\volit) \\
&\leq \Lipi(\demtrueit/\volit - \demit/\volit)^+
 \leq \Lipi(\udubit - \udit).
\end{align*}
Above, we have used
$\utili(\entitli/\volit) - \utili(\demtrueit/\volit) \leq 0$ since $\udtruei$ maximises $\utili$ and
the fact that $\utili$ is $\Lipi$ Lipschitz and increasing.
The last step uses the fact that $\udubit\geq\udtruei$.
\end{proof}

\insertprethmspacing
\begin{lemma}
\label{lem:spfairness}
Suppose we use upper bounds $\{\udubit\}_{i=1}^n$ on the unit demand in line~\ref{lin:spallocub} of
Algorithm~\ref{alg:mmflearnsp} on round $t$ and chose the allocations via \mmfemph.
Assume agent $i$ was truthful.
Let $\qT$ be as defined in~\eqref{eqn:qtdefn} and let $r$ denote the number of rounds per
exploration phase.
Then, regardless of the behaviour of the other agents, we have for agent $i$,
\emph{
\[
\UeiiT - \UiT \leq r\Lipi\udmax\qT.
\]
}
\end{lemma}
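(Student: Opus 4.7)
The plan is to decompose the telescoping sum $\UeiiT - \UiT = \sum_{t=1}^T \bigl[\utili(\entitli/\volit) - \utili(\allocit/\volit)\bigr]$ and handle the two types of rounds in Algorithm~\ref{alg:mmflearnsp} separately: the exploration rounds (those $t\in\Expl$ belonging to some exploration phase) and the remaining rounds during which the mechanism invokes \mmf{} with demands $\demit = \udubit\,\volit$.

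For a non-exploration round $t\notin\Expl$, the allocations are computed by \mmf{} using $\udit = \udubit$ as the reported unit demand. Because $\udubit$ is assumed to be an upper bound on $\udtruei$, we have $\udubit \geq \max(\udtruei,\udit)$ trivially, so Lemma~\ref{lem:ubfairness} (applied with $\udit = \udubit$) yields $\utili(\entitli/\volit) - \utili(\allocit/\volit) \leq \Lipi(\udubit - \udit) = 0$. Thus, the non-exploration rounds contribute nothing to the sum, and the entire bound must come from the exploration rounds.

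For an exploration round $t\in\Expl$, I would control each term by the worst-case gap $\Lipi\udmax$. Since $\utili$ is non-decreasing and flattens at $\udtruei$, we have $\utili(\entitli/\volit) \leq \utili(\udtruei)$; together with the monotonicity bound $\utili(\allocit/\volit) \geq \utili(0)$ and Lipschitz continuity of $\utili$, this gives
\[
\utili(\entitli/\volit) - \utili(\allocit/\volit) \;\leq\; \utili(\udtruei) - \utili(0) \;\leq\; \Lipi\,\udtruei \;\leq\; \Lipi\udmax.
\]
It then remains to count: since each bracket contributes exactly $r$ exploration rounds, and round $T$ lies in bracket $\qT$, the total number of exploration rounds over $[1,T]$ is at most $r\,\qT$. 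Summing the per-round bound across these rounds yields $\UeiiT - \UiT \leq r\Lipi\udmax\qT$, as desired.

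The argument is essentially a clean splitting argument; the one subtlety to keep in mind is that exploration rounds do \emph{not} route through \mmf{} (an agent in an exploration round may receive an allocation unrelated to her entitlement, even zero), so Lemma~\ref{lem:ubfairness} cannot be invoked on those rounds and the coarse Lipschitz bound $\Lipi\udmax$ must be used instead. This is also where the $r\qT$ factor originates, and it is what prevents a tighter dependence on $T$ without a more refined analysis of the exploration schedule tied to the specific feedback model.
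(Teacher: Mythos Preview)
Your proof is correct and follows essentially the same approach as the paper: split the sum into exploration rounds $t\in\Expl$ and non-exploration rounds $t\notin\Expl$, apply Lemma~\ref{lem:ubfairness} with $\udit=\udubit$ to zero out the latter, and bound each exploration-round term by $\utili(\udtruei)-\utili(0)\leq \Lipi\udmax$ before counting the at most $r\qT$ such rounds. Your closing remark about why Lemma~\ref{lem:ubfairness} does not apply to exploration rounds matches the paper's reasoning as well.
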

\begin{proof}
We decompose the sum of utilities into $t\in\Expl$ and $t\notin\Expl$ and apply
Lemma~\ref{lem:ubfairness} to obtain,
\begin{align*}
\UeiiT - \UiT &= \sum_{t\in\Expl} (\utili(\entitli/\volit) - \utili(\allocit/\volit))
     + \sum_{t\notin\Expl} \left(\utili(\entitli/\volit) - \utili(\allocit/\volit)\right) \\
    &\leq \sum_{t\in\Expl} (\utili(\entitli/\volit) - \utili(0))
    \leq \sum_{t\in\Expl} (\utili(\udtruei) - \utili(0))
    \leq r\Lipi\udmax\qT.
\end{align*}
Above, $\utili(\entitli/\volit) - \utili(\allocit/\volit) \leq 0$ for $t\notin\Expl$
by applying Lemma~\ref{lem:ubfairness} with $\udit$ and $\udubit$ both set to the value $\udubi$ in
line~\ref{lin:spallocub} of Algorithm~\ref{alg:mmflearnsp} (which is an upper bound on $\udtruei$).
% ; since the allocations for \mmfs are set using an upper bound
% for the agent's demands
% The second step uses Property~\ref{lem:reportmorethandemand} and the fact that $\udubi$
% is an upper bound
The third step uses that $\udtruei$ maximises $\utili$.
The last step uses Lipschitzness of agent $i$'s utility, the fact that $\udtruei\leq\udmax$,
 and that in $T$ rounds there will
have been $r\qT$ exploration rounds.
% The claim follows by the fact due to lipschitzness of $\utili$, the maximum utility of an agent is
% $\Lipi\udmax$.
\end{proof}

The following lemma will be useful in the proofs of our fairness results for
Algorithm~\ref{alg:mmflearnnsp}.

\insertprethmspacing
\begin{lemma}
\label{lem:nspfairness}
Suppose on round $t$ of Algorithm~\ref{alg:mmflearnnsp}, the allocations $\alloct$ are set via
\mmfemph, and that for agent $i$ we used
a reported demand $\demit=\udit\volit$.
Then, regardless of the behaviour of the other agents, we have for agent $i$,
\begin{align*}
\UeiiT - \UiT
\leq \sum_{t=1}^T 
    \indfone(\normallocit=\udit\,\wedge\,\normallocit<\udtruei) \cdot \left(\utili(\udtruei) - \utili(\udit)\right)
% \utili(\entitli/\volit) - \utili(\allocit/\volit) \leq
% \Lipi(\udubit - \udit).
\end{align*}
\end{lemma}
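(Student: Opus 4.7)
The plan is to proceed by a per-round case analysis and show that for each $t$ the summand $\utili(\entitli/\volit) - \utili(\normallocit)$ is either nonpositive or bounded by $\utili(\udtruei) - \utili(\udit)$ precisely on the event $\{\normallocit=\udit\} \cap \{\normallocit<\udtruei\}$. I would begin by recalling $\normallocit = \allocit/\volit$ and splitting on whether $\allocit = \demit$ (so that $\normallocit = \udit$) or $\allocit < \demit$ (so that $\normallocit < \udit$); by Property~\ref{pro:allocnomorethandemand} these are the only two cases since \mmf{} never allocates more than the reported demand.

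In the case $\allocit < \demit$, the key point is that the reported demand must have satisfied $\demit \geq \entitli$. Indeed, if $\demit < \entitli$, then Property~\ref{pro:lessthanentitl} would force $\allocit = \demit$, contradicting $\allocit < \demit$. Hence $\demit \geq \entitli$, and Property~\ref{pro:morethanentitl} then gives $\allocit \geq \entitli$, i.e.\ $\normallocit \geq \entitli/\volit$. Monotonicity of $\utili$ immediately yields $\utili(\entitli/\volit) - \utili(\normallocit) \leq 0$, so this case contributes nothing.

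In the case $\allocit = \demit$, i.e.\ $\normallocit = \udit$, I would further split on $\udit \geq \udtruei$ versus $\udit < \udtruei$. In the first sub-case, $\utili$ is flat beyond $\udtruei$, so $\utili(\normallocit) = \utili(\udtruei) \geq \utili(\entitli/\volit)$ (since $\udtruei$ is a global maximizer of $\utili$ on $[0,\udmax]$), again giving a nonpositive contribution. In the second sub-case, we have both $\normallocit = \udit$ and $\normallocit < \udtruei$ (the indicator condition), and we can bound
\[
\utili(\entitli/\volit) - \utili(\normallocit) \;\leq\; \utili(\udtruei) - \utili(\udit),
\]
using $\utili(\entitli/\volit) \leq \utili(\udtruei)$ once more.

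Summing these per-round bounds over $t$ produces exactly the stated inequality. There is no real obstacle here beyond bookkeeping: the argument is essentially a careful invocation of the \mmf{} properties established in Appendix~\ref{sec:pfmmf} together with the fact that $\udtruei$ maximizes $\utili$. The only subtlety worth flagging is the implication ``$\allocit < \demit \Rightarrow \demit \geq \entitli$,'' which hinges on Property~\ref{pro:lessthanentitl} and is what rules out any contribution from the under-allocated case. Note that the argument does not require any assumption on the behaviour of the other agents, since the \mmf{} properties used hold for arbitrary reports from them.
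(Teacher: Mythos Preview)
Your proposal is correct and follows essentially the same approach as the paper: both arguments use Properties~\ref{pro:lessthanentitl}--\ref{pro:allocnomorethandemand} together with the fact that $\udtruei$ maximises $\utili$ to reduce the only nontrivial contribution to the event $\{\normallocit=\udit,\ \normallocit<\udtruei\}$. The paper organises the case split around the condition $\allocit\leq\min(\entitli,\demtrueit)$ and then deduces $\normallocit=\udit$, whereas you split first on $\allocit=\demit$ versus $\allocit<\demit$; these are contrapositive versions of the same implication and the underlying logic is identical.
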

\begin{proof}
Observe that $\utili(\entitli/\volit) < \utili(\allocit/\volit)$ only when
$\allocit < \min(\entitli, \demtrueit)$;
if $\allocit\geq \entitli$, then the agent's utlity will be larger than when just using her
entitlement  since $\utili$ is non-decreasing.
Further, $\utili(\udtruei) = \utili(\eta)$ for all $\eta \geq \udtruei$.
% Let $\udeffit = \allocit/\volit$.
% Letting $\sidxtt{1},\sidxtt{2},\dots,$ denote the round indices where
% $\allocit < \min(\entitli, \demtrueit)$,
Accounting for this, we can bound $\UeiiT -\UiT$ as shown below.
\begingroup
\allowdisplaybreaks
\begin{align*}
\UeiiT - \UiT &=
\sum_{t=1}^T \left(\utili\bigg(\frac{\entitli}{\volit}\bigg) -
    \utili\bigg(\frac{\allocit}{\volit}\bigg)\right) \\
&\leq \sum_{t:\allocit\leq\min(\entitli,\demtrueit)} 
    \left(\utili\bigg(\frac{\entitli}{\volit}\bigg) -
          \utili\bigg(\frac{\demtrueit}{\volit}\bigg) +
          \utili\bigg(\frac{\demtrueit}{\volit}\bigg) -
          \utili\bigg(\frac{\udit\volit}{\volit}\bigg)\right)
\\
&\leq \sum_{t:\allocit\leq\min(\entitli,\demtrueit)} 
    \left(\utili(\udtruei) - \utili(\udit)\right)
% \leq \sum_{t:\allocit\leq\demtrueit} 
%     \indfone(\udit=\normallocit) \cdot \left(\utili(\udtruei) - \utili(\udit)\right)
\\
&\leq \sum_{t=1}^T
    \indfone(\normallocit=\udit\,\wedge\,\normallocit<\udtruei) \cdot \left(\utili(\udtruei) - \utili(\udit)\right)
\end{align*}
\endgroup
Above, the second step adds and subtracts $\utili(\udtruei)=\utili(\demtrueit/\volit)$
while also observing $\allocit=\demit=\udit\volit$ by Property~\ref{pro:lessthanentitl}.
The third step uses the fact that
$\utili(\demtrueit/\volit) = \utili(\udtruei) \geq \utili(a)$ for all $a$.
In the fourth step, we have used Properties~\ref{pro:lessthanentitl},~\ref{pro:morethanentitl},
and~\ref{pro:allocnomorethandemand} to conclude that
$\udit=\normallocit$ when $\normallocit\leq\entitli$.
% and the fact that $\utili$ is Lipschitz and increasing.
% Additionally, by Property~\ref{pro:lessthanentitl}, 
% The fifth
% step simply observes $\udtruei\leq\udubit$ and $\udit=\frac{1}{2}(\udubit + \udlbit)$ for all $t$.
% The last step uses a similar argument to~\eqref{eqn:widthhalves}.
\end{proof}

\subsection{Strategy-proofness}

% Our next two lemmas will be used in proving strategy-proofness for
Our next lemma will be useful in establishing strategy-proofness for
Algorithm~\ref{alg:mmflearnsp}.
For this, consider any agent $i$ and fix the behaviour of all other agents.
Let $\UiT, \UpiiT$ respectively denote the sum of utilities when $i$ participates truthfully and
when she is following any other (non-truthful) policy $\pi$.
Let $\{\allocit\}_{t}, \{\allocpiit\}_{t}$ respectively denote the sequence of allocations
for agent $i$ when she is adopting these strategies.
Let $\{\alloctdit\}_t$ denote the allocations when agent $i$
follows $\pi$ from rounds $1$ to $t-1$, but her allocation for round $t$ is based on her
true unit demand;
in Algorithm~\ref{alg:mmflearnsp}, this means we use $\udubit = \udtruei$ in
line~\ref{lin:spallocub}.
% and
% in Algorithm~\ref{alg:mmflearnnsp}, this means we use $\udi = \udtruei$ in
% line~\ref{lin:nspalloc}.
Note that in general, $\alloctdit$ depends on previous allocations when following $\pi$,
since it will have also affected the allocations and consequently the estimates of the unit demands
for the other agents, which in turn will affect the allocation $\alloctdit$ chosen by \mmfs at
round $t$ for agent $i$.
% Finally, let $\{\alloctrueit\}_t$ denote the sequence of allocations for $i$
% when we use the true demands for \emph{all} agents.

Our next Lemma is useful for establishing strategy-proofness results
for  Algorithm~\ref{alg:mmflearnsp}.
Recall that in the exploration phase, the allocations for one agent are not affected by
reports from other agents.

\insertprethmspacing
\begin{lemma}
\label{lem:spsp}
Consider Algorithm~\ref{alg:mmflearnsp} and assume that allocations for any agent in the
exploration phase are chosen independent of the reports by other agents.
Suppose $\udubi$ in line~\ref{lin:spallocub} is an upper bound on $\udtruei$.
Then, for all policies $\pi$ and all $T\geq 1$,
\[
\UpiiT - \UiT
\leq \sum_{t=1}^T \indfone(t \in \Expl)
        \left(\utili(\allocpiit/\volit) - \utili(\allocit/\volit) \right).
\]
\end{lemma}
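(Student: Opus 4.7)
The plan is to split the sum according to whether $t$ lies in an exploration round or not,
\[
\UpiiT - \UiT \;=\; \sum_{t \in \Expl}\bigl(\utili(\allocpiit/\volit) - \utili(\allocit/\volit)\bigr) \;+\; \sum_{t \notin \Expl}\bigl(\utili(\allocpiit/\volit) - \utili(\allocit/\volit)\bigr),
\]
and to show that the non-exploration sum on the right is non-positive; this immediately yields the bound claimed in the lemma, since the remaining exploration sum is exactly the right-hand side displayed in the lemma statement.

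To dispatch the non-exploration term, I will argue round-by-round that in any round $t \notin \Expl$, agent $i$'s round-$t$ utility when she is truthful is at least her round-$t$ utility when she follows $\pi$. At any such round the allocations are produced by \mmfs applied to reported demands of the form (load)$\times$($\udub$ value). Because the lemma assumes that the exploration phase allocates to one agent at a time, every other agent $j \neq i$ has a $\udub$ value derived purely from her own exploration-phase rewards; this value, and, invoking the fixed-sequence modelling convention for loads from Section~\ref{sec:onlinefairdivision}, her round-$t$ load report as well, are therefore identical in the truthful and deviating universes. Consequently, at round $t$ the only thing that differs across the two universes is agent $i$'s own reported demand entering \mmf.

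Now I invoke the strategy-proofness properties of \mmfs established in Appendix~\ref{sec:pfmmf}. Under the truthful policy, agent $i$'s reported demand is $\volit\,\udubit \geq \volit\,\udtruei = \demtrueit$ by the lemma's upper-bound assumption on $\udubi$. By Property~\ref{pro:reportmorethandemand}, this yields exactly the same round-$t$ utility as would reporting $\demtrueit$. Under $\pi$, agent $i$'s reported demand is some $\tilde d$; Properties~\ref{pro:reportlessthandemand} and~\ref{pro:reportmorethandemand} together imply that her round-$t$ utility is at most the utility she would obtain by reporting $\demtrueit$. Chaining these observations gives $\utili(\allocit/\volit) \geq \utili(\allocpiit/\volit)$ for every $t \notin \Expl$, which closes the argument.

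The main obstacle is the coupling step, i.e.\ justifying that the round-$t$ state faced by \mmfs really is the same in both universes. The per-agent exploration assumption handles the $\udub$ values of the other agents directly, and the fixed-sequence convention handles their loads; the remaining subtle point is that $i$'s non-exploration-round deviations must not propagate into other agents' subsequent reports, which I would handle by fixing the other agents' strategies to be independent of $i$'s actions before comparing the two universes. Once this coupling is in place, everything else is a clean application of the \mmfs properties proved in Appendix~\ref{sec:pfmmf}.
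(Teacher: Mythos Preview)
Your proposal is correct and follows essentially the same approach as the paper: split into exploration and non-exploration rounds, use the coupling argument that $\udub_j$ for $j\neq i$ is unchanged across universes (since exploration-phase allocations for $j$ are independent of $i$'s reports), and then invoke Properties~\ref{pro:reportlessthandemand} and~\ref{pro:reportmorethandemand} of \mmfs to compare both universes against the benchmark of reporting $\demtrueit$. The only stylistic difference is that the paper explicitly introduces an intermediate allocation $\alloctdit$ (agent $i$ follows $\pi$ through round $t-1$, then switches to her true demand at round $t$) so as to separate the \mmfs strategy-proofness step from the coupling step, whereas you collapse these into a single comparison; both routes are valid.
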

\begin{proof}
% Let $\{\allocpiit\}_{t}$
We first decompose $\UpiiT - \UiT$ as follows,
\begin{align*}
\UpiiT - \UiT
% &= \sum_{t=1}^T \left(\utili(\allocpiit/\volit) - \utili(\allocpiit/\volit) \right)
% \\
&= \sum_{t\in \Expl} \left(\utili(\allocpiit/\volit) - \utili(\allocit/\volit) \right)
 + \sum_{t\notin \Expl} \left(\utili(\allocpiit/\volit) - \utili(\allocit/\volit) \right)
\\
&\leq \sum_{t\in \Expl} \left(\utili(\allocpiit/\volit) - \utili(\allocit/\volit) \right)
 + \sum_{t\notin \Expl} \left(\utili(\alloctdit/\volit) - \utili(\allocit/\volit) \right)
\end{align*}
% Here, in the second step we have used the fact that when $t\in\Expl$, the allocations are chosen
% independent of the agent's past reports.
Here, the last step uses Properties~\ref{pro:reportlessthandemand}
and~\ref{pro:reportmorethandemand} to conclude that an agent's utility is
maximised when reporting her true demand.
Agent $i$'s allocation when using her true demand is $\alloctdit$ and therefore
$\utili(\alloctdit/\volit) \geq \utili(\allocpiit/\volit)$.

We now argue that each term in the second sum is $0$ which will establish the claim.
Recall that Algorithm~\ref{alg:mmflearnsp}
only uses values in the exploration phase to determine allocations,
and moreover the allocations in the exploration phase for agent $j\neq i$
are chosen independent of the reports by the agent $i$.
Therefore, the value used for $\udubj$ in line~\eqref{lin:spallocub} is the same for all agents
$j\neq i$ when computing $\alloctdit$ and $\allocit$.
By Property~\ref{pro:reportmorethandemand},
we have $\utili(\alloctdit/\volit) - \utili(\allocit/\volit)=0$ since $\udubi$ is an upper
bound on $\udtruei$.
% Property~\ref{pro:reportmorethandemand}.
% Note that in general, $\utilit$ is increasing
\end{proof}

% Our next Lemma will be used in the strategy-proofness claim of Theorem~\ref{thm:glmnsp}.

% \insertprethmspacing
% \begin{lemma}
% \label{lem:spsp}
% Assume all agents $j\neq i$ are reporting truthfully and that
% $\alloci
% Algorithm~\ref{alg:mmflearnsp} is strategy-proof if $\udubi$ in line\ref{lin:spallocub} is an upper
% bound on $\udtruei$.
% \begin{proof}
% \end{proof}
% \end{lemma}

% \insertprethmspacing
% \begin{lemma}
% \label{lem:sploss}
% Consider Algorithm~\ref{alg:mmflearnsp}.
% Let $\udubit, \udlbit$ be such that $\udlbit \leq \udtruei \leq \udubit$ for all $t\notin\Expl$.
% Moreover,  suppose we use $\udubi=\udubit$ in line~\ref{lin:spallocub} on round $t\notin\Expl$.
% Then,
% \[
% \LOTT \,\leq\, \qT + \sum_{t\notin\Expl} \sum_{i=1}^n (\demubit - \demlbit).
% \]
% \begin{proof}
% This follows by decomposing the total loss
% $\LOTT = \sum_{t\in\Expl} \lott + \sum_{t\notin\Expl} \lott$
% \end{proof}
% \end{lemma}

% 
% \begin{lemma}
% \label{lem:ubinstsp}
% Suppose on round $t$, a multi-round mechanism chose its allocations via \emph{\mmfs} by using an upper
% bound $\udubjt$ for $\udj$ for all agents $j$.
% Assume that all agents except $i$ are being truthful, therefore $\udubjt > \udj$ for all $j\neq i$.
% Assume that
% Let $\pi$ be any other policy for agent $i$
% $
% \lott \leq \sum_{i=1}^n (\demubit - \demlbit)
% $.
% % \[
% % \lott \leq \sum_{i=1}^n (\udubit - \udlbi).
% % \]
% \begin{proof}
% \end{proof}
% \end{lemma}

\subsection{Bounding the number of Exploration Phases in Algorithm~\ref{alg:mmflearnsp}}
\label{app:explphases}
% \label{app:techlemmas}

Lemmas~\ref{lem:qTdetsp},~\ref{lem:qTdet}, and~\ref{lem:qTglm}
will help us lower and upper bound the number of brackets $\qT$~\eqref{eqn:qtdefn}
after $T$ rounds in Algorithm~\ref{alg:mmflearnsp}.

\insertprethmspacing
\begin{lemma}
\label{lem:qTdetsp}
Suppose we execute Algorithm~\ref{alg:mmflearnsp} with $n$ exploration phase rounds
in each bracket followed by $r'(q) = nq$ rounds for the latter phase.
Then, %for $T>2$,
$\frac{1}{\sqrt{3}}\nmontw\sqrtT\leq \qT \leq 2\sqrt{2}\nmontw\sqrtT$.
\end{lemma}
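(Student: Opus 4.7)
The plan is to obtain a closed-form expression for $T_Q$, the total number of rounds completed after $Q$ full brackets, and then invert the inequality $T_{\qT-1} < T \leq T_{\qT}$ from \eqref{eqn:qtdefn} to get matching bounds on $\qT$.

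Bracket $q$ consists of an exploration phase of $n$ rounds followed by $r'(q) = nq$ rounds of the latter phase, so the number of rounds in bracket $q$ is $n(q+1)$. Summing a geometric-arithmetic sum gives
\begin{align*}
T_Q \;=\; \sum_{q=1}^{Q} n(q+1) \;=\; nQ + \frac{nQ(Q+1)}{2} \;=\; \frac{nQ(Q+3)}{2}.
\end{align*}
This is the key identity that drives both bounds.

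For the lower bound, I would use the right half of the defining inequality $T \leq T_{\qT} = n\qT(\qT+3)/2$. For $\qT \geq 2$ we have $\qT + 3 \leq 3\qT$, hence $T \leq \tfrac{3n}{2}\qT^2$, which rearranges to $\qT \geq \sqrt{2T/(3n)} \geq n^{-1/2}\sqrt{T}/\sqrt{3}$. The edge case $\qT = 1$ needs a direct check: when $\qT = 1$ we must have $T \leq T_1 = 2n$, so $n^{-1/2}\sqrt{T}/\sqrt{3} \leq \sqrt{2/3} < 1 = \qT$, so the bound still holds.

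For the upper bound, I would use the left half $T > T_{\qT - 1} = n(\qT - 1)(\qT + 2)/2$. The main calculation is to show $(\qT-1)(\qT+2) \geq \qT^2/4$ for $\qT \geq 2$: expanding gives $\qT^2 + \qT - 2$, and the inequality $\qT^2 + \qT - 2 \geq \qT^2/4$ is equivalent to $3\qT^2/4 + \qT - 2 \geq 0$, which holds for all $\qT \geq 2$ since the left side is at least $3 + 2 - 2 = 3$. Thus $T > n\qT^2/8$, yielding $\qT < \sqrt{8T/n} = 2\sqrt{2}\,n^{-1/2}\sqrt{T}$. I expect the only mildly annoying step is the edge-case bookkeeping (small $\qT$, which is handled by direct substitution), while the substantive content is just the closed-form for $T_Q$ and the two elementary polynomial comparisons; no real obstacle.
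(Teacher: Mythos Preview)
Your proposal is correct and follows essentially the same approach as the paper: both compute $T_Q$ (the paper via integral bounds on $\sum_{t=1}^m t$, you via the exact closed form $T_Q = nQ(Q+3)/2$), sandwich $T$ between $T_{\qT-1}$ and $T_{\qT}$, and invert two elementary quadratic inequalities, with the edge case $\qT=1$ handled separately. Your use of the exact formula is marginally cleaner than the paper's integral approximation, but the structure and the resulting constants are the same.
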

\begin{proof}
The claim can be easily verified for $T\leq 2n$ so that $\qT = 1$.
Therefore, let $T\geq 2n+1$ and 
for brevity, write $q=\qT$. %, $c = 5/6$, $d=1/2$.
% Using the notation in~\eqref{eqn:qtdefn}, we have
We have,
% \begin{align*}
$
T_{q-1} < T \leq \Tq, %This gives us the following bounds,
$
% \hspace{0.2in}
% \text{where,}
% \hspace{0.1in}
where,
$
T_{m} = mn + \sum_{t=1}^m t.
$
% \end{align*}
Letting $S_m = \sum_{t=1}^m t^d$ and bounding the sum of an increasing function
by an integral we have,
\begin{align*}
\int_0^m t^d\ud t < S_m < \int_1^{m+1} t^d\ud t
\hspace{0.2in}
\implies
\hspace{0.2in}
% \frac{m^{d+1}}{d+1} < S_m < \frac{1}{d+1}\big( (m+1)^d - 1\big).
\frac{m^{d+1}}{d+1} < S_m < \frac{ (m+1)^{d+1} - 1}{d+1}.
\label{eqn:Smbound}
\numberthis
\end{align*}
(We will use~\eqref{eqn:Smbound} again below in Lemma~\ref{lem:qTglm} with different $d$ values.)
Setting $d=1$ leads to the following bounds on $T$,
\begin{align*}
\numberthis\label{eqn:qTlbdetssp}
T &\leq T_q \leq qn + \sum_{s=1}^q ns \leq qn + \frac{n}{2}\big((q+1)^{2} -1\big)
%  \leq q^{\nicefrac{3}{2}}K + \frac{cK}{d+1}(2q)^{\nicefrac{3}{2}}
 \leq nq^{2} + \frac{n}{2}(2q)^{2}
 \leq 3n q^{2}, \\
\numberthis\label{eqn:qTubdetssp}
T &\geq T_{q-1} \geq n(q-1) + \sum_{t=1}^{q-1} nt \geq
     \frac{n}{2}(q-1)^{2}
    \geq  \frac{n}{2}\left(\frac{q}{2}\right)^{2}
    = \frac{n}{8}q^{2}.
\end{align*}
% Here $c_1 = n+2$ and $c_2 = c/(3\sqrt{2})$.
In~\eqref{eqn:qTlbdetssp}, we have used the upper bound in~\eqref{eqn:Smbound} with $m=q$, and the
facts $q\leq q^{2}$, $q+1 \leq 2q$.
In~\eqref{eqn:qTubdetssp}, we have used the lower bound in~\eqref{eqn:Smbound} with $m=q-1$, and
that $q-1\geq q/2$ when $q\geq2$ which is true
when $T\geq 2n+1$.
We therefore have,
$q \leq 2\sqrt{2}\nmontw T$ and
$q \geq \frac{1}{\sqrt{3}}\nmontw \sqrtT$.
% Substituting $c=7/8$ yields the desired result.
% \[
% q \leq \left(\frac{T}{c_2K}\right)^{\nicefrac{2}{3}}
% \leq 3\frac{T^{\nicefrac{2}{3}}}{K^{\nicefrac{2}{3}}},
% \hspace{0.4in}
% q \geq \left(\frac{T}{c_1K}\right)^{\nicefrac{2}{3}}
% \geq \frac{1}{2}\frac{T^{\nicefrac{2}{3}}}{K^{\nicefrac{2}{3}}}.
% \]
\end{proof}

\insertprethmspacing
\begin{lemma}
\label{lem:qTdet}
Suppose we execute Algorithm~\ref{alg:mmflearnsp} with $2n$ exploration phase rounds
in each bracket followed by $r'(q) = \lfloor e^q\rfloor$ rounds for the latter phase.
Then,
% for $T> 2n$, we have
\[
% $
\log\left(\frac{T}{2n+e}\right)
\leq \qT \leq
1+\log\left( T\right).
% $.
\]
\end{lemma}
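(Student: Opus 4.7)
The plan is to bound $T$ from below and from above in terms of $\qT$ using the defining relation $T_{\qT-1} < T \leq T_{\qT}$, where
\[
T_q \;=\; \sum_{s=1}^{q}\bigl(2n + \lfloor e^s\rfloor\bigr)
\]
is the total number of rounds completed after $q$ brackets. This is analogous to the calculation in Lemma~\ref{lem:qTdetsp}, except that the per-bracket length now grows geometrically rather than linearly, so the inversions will be logarithmic rather than square-root.

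For the upper bound on $\qT$, write $q = \qT$. Since $T$ and $T_{q-1}$ are integers with $T_{q-1} < T$, we have $T \geq T_{q-1} + 1$. Dropping all terms except the largest summand,
\[
T \;\geq\; T_{q-1} + 1 \;\geq\; \lfloor e^{q-1}\rfloor + 1 \;\geq\; e^{q-1},
\]
where the last step uses $\lfloor x\rfloor \geq x - 1$. Taking logarithms gives $q \leq 1 + \log T$. This handles the $q=1$ case as well, provided $T \geq 1$.

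For the lower bound on $\qT$, we start from $T \leq T_q$, use $\lfloor e^s\rfloor \leq e^s$, and compute the geometric sum:
\[
T \;\leq\; T_q \;\leq\; 2nq + \sum_{s=1}^{q} e^s \;=\; 2nq + \frac{e^{q+1}-e}{e-1} \;\leq\; 2nq + \frac{e}{e-1}\,e^q.
\]
The only nuisance is the $2nq$ term; I would handle it by the elementary bound $q \leq e^q$ (valid for all $q \geq 0$), which turns $2nq \leq 2n\,e^q$. Combining,
\[
T \;\leq\; \Bigl(2n + \tfrac{e}{e-1}\Bigr)e^q \;\leq\; (2n+e)\,e^q,
\]
since $e/(e-1) < e$. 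Taking logarithms yields $q \geq \log\!\bigl(T/(2n+e)\bigr)$, as claimed.

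No step is genuinely difficult; the main thing to watch is getting the strict-vs-weak inequality right when inverting $T_{q-1} < T \leq T_q$ (where using integrality to upgrade $T > T_{q-1}$ to $T \geq T_{q-1}+1$ is what absorbs the floor) and making sure the $2nq$ contribution in the lower-bound direction is cleanly folded into the $(2n+e)e^q$ form rather than being left as a separate additive $\log$ term.
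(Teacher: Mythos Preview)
Your proof is correct and follows essentially the same approach as the paper: both argue via $T_{\qT-1} < T \leq T_{\qT}$, bound the geometric sum $\sum_{s\leq q} e^s$, and absorb the $2nq$ term using $q \leq e^q$ to reach $T \leq (2n+e)e^q$. Your treatment of the other direction is slightly slicker---you extract the single summand $\lfloor e^{q-1}\rfloor$ and use integrality to write $T \geq T_{q-1}+1 \geq \lfloor e^{q-1}\rfloor + 1 \geq e^{q-1}$, whereas the paper bounds the full sum via an integral and separately verifies the small-$T$ case---but this is a cosmetic difference, not a different route.
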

\begin{proof}
% For brevity, write $q=\qT$, $c = 5/6$, $d=1/2$.
The statement can be verified easily for $T\leq2n+2$  since $\qT=1$.
% and $T=2n+2$ where $\qT=1$.
Therefore let $T\geq 2n+3$.
% For $T\leq 2n+2$, the statement can be verified easily.
For brevity, write $q=\qT$.
Using the notation in~\eqref{eqn:qtdefn}, we first observe
% \begin{align*}
$
T_{q-1} < T \leq \Tq, %This gives us the following bounds,
$
% \hspace{0.2in}
% \text{where,}
% \hspace{0.1in}
where
$
T_{m} = 2nq + \sum_{t=1}^m \lfloor e^t \rfloor.
$
% \end{align*}
Letting $S_m = \sum_{t=1}^m e^t$ and bounding the sum of an increasing function
by an integral we have,
\begin{align*}
\int_0^m e^t\ud t < S_m < \int_1^{m+1} e^t\ud t
\hspace{0.2in}
\implies
\hspace{0.2in}
% \frac{m^{d+1}}{d+1} < S_m < \frac{1}{d+1}\big( (m+1)^d - 1\big).
 (e^m - 1) < S_m < e(e^m - 1).
% \label{eqn:Smbound}
% \numberthis
\end{align*}
This leads to the following bounds on $T$,
\begingroup
\allowdisplaybreaks
\begin{align*}
% \numberthis\label{eqn:qTlb}
T &\leq T_q \leq 2nq + \sum_{t=1}^q e^t \leq 2nq + e(e^q - 1) \leq (2n+e) e^q,
\\
% \numberthis\label{eqn:qTub}
T &\geq T_{q-1} \geq 2n(q-1) + \sum_{t=1}^{q-1} (e^t - 1)
    \geq (2n-1)(q-1) + e^{q-1} -1 \geq e^{q-1}.
%      \frac{c}{d+1}(q-1)^{d+1}
%     \geq  \frac{c}{d+1}\left(\frac{q}{2}\right)^{d+1}
%     = c_2q^{\nicefrac{3}{2}}.
\end{align*}
\endgroup
In the last step, we have used the fact that $(q-1)(2n-1)-1\geq 0$ when $q\geq2$ which is true when
$T\geq 2n+3$.
Inverting the above inequalities yields the claim.
\end{proof}

\insertprethmspacing
\begin{lemma}
\label{lem:qTglm}
Suppose we execute Algorithm~\ref{alg:mmflearnsp} with $a$ exploration phase rounds
in each bracket followed by $r'(q) = \lfloor 5aq/6\rfloor$ rounds for the latter phase.
Then, %for $T>2$,
$\frac{1}{2}a^{\nicefrac{-2}{3}}\Ttwth\leq \qT \leq 3a^{\nicefrac{-2}{3}}\Ttwth$.
\end{lemma}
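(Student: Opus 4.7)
The plan is to follow the template used in Lemmas~\ref{lem:qTdetsp} and~\ref{lem:qTdet}. Note first that for the stated rate $\qT \in \Theta(a^{-2/3}T^{2/3})$ to be correct, the function should be read as $r'(q) = \lfloor 5aq^{\nicefrac{1}{2}}/6 \rfloor$ (this matches the definitions in Algorithm~\ref{alg:glmfbmodel} with $a=1$ and Algorithm~\ref{alg:nonparfbmodelpart1} with $a=n$). With that reading, each bracket $q$ consists of $a$ exploration rounds followed by $r'(q)$ ordinary rounds, so the bracket boundaries are
\[
T_m \;=\; am \;+\; \sum_{t=1}^{m} \left\lfloor \tfrac{5a}{6} t^{\nicefrac{1}{2}} \right\rfloor,
\qquad T_{\qT-1} < T \leq T_{\qT}.
\]

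First I would dispatch the small-$T$ regime (where $\qT = 1$) directly, since both inequalities hold trivially in that case, and henceforth take $T$ large enough that $\qT \geq 2$. The key analytic step is to control $S_m := \sum_{t=1}^m t^{\nicefrac{1}{2}}$ by integrals, exactly as in~\eqref{eqn:Smbound} but with $d=\nicefrac{1}{2}$:
\[
\tfrac{2}{3}m^{\nicefrac{3}{2}} \;<\; S_m \;<\; \tfrac{2}{3}\bigl((m+1)^{\nicefrac{3}{2}} - 1\bigr).
\]
The floor in $r'(t)$ loses at most $1$ per bracket, so $\sum_{t=1}^m r'(t) = \tfrac{5a}{6}S_m + O(m)$, which is a lower-order correction next to the $m^{\nicefrac{3}{2}}$ leading term.

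To get the lower bound $\qT \geq \tfrac{1}{2}a^{\nicefrac{-2}{3}}T^{\nicefrac{2}{3}}$, I would use $T \leq T_{\qT}$ together with the upper integral bound on $S_q$, plus the crude bounds $q \leq q^{\nicefrac{3}{2}}$ and $q+1 \leq 2q$ for $q \geq 2$ (as in~\eqref{eqn:qTlbdetssp}), obtaining $T \leq c \, a\, \qT^{\nicefrac{3}{2}}$ and inverting. For the upper bound $\qT \leq 3a^{\nicefrac{-2}{3}}T^{\nicefrac{2}{3}}$, I would use $T \geq T_{\qT-1}$ with the lower integral bound on $S_{q-1}$, and the bound $q-1 \geq q/2$ for $q \geq 2$ (as in~\eqref{eqn:qTubdetssp}), to get $T \geq c'\, a \, \qT^{\nicefrac{3}{2}}$ and invert.

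The main technical obstacle is tracking the constants carefully so that they come out to exactly $\tfrac{1}{2}$ and $3$ after accounting for (i) the constant factor $5/6$ multiplying $aq^{\nicefrac{1}{2}}$, (ii) the losses from the floor function, and (iii) the additional linear $am$ term from the exploration rounds. All of these are lower-order corrections to the dominant $aq^{\nicefrac{3}{2}}$ term, so the argument is structurally a straightforward analog of Lemma~\ref{lem:qTdetsp}, and no new ideas beyond the integral comparison are required.
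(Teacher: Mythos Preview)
Your proposal is correct and follows essentially the same approach as the paper: the paper also reads the statement with $r'(q) = \lfloor 5a q^{\nicefrac{1}{2}}/6 \rfloor$ (setting $d=\nicefrac{1}{2}$, $c=5/6$), dispatches $T\leq 2a$, uses the same integral bound~\eqref{eqn:Smbound} on $S_m=\sum t^{\nicefrac{1}{2}}$, and applies the same simplifications $q\leq q^{\nicefrac{3}{2}}$, $q+1\leq 2q$, $q-1\geq q/2$ to obtain $c_1 a q^{\nicefrac{3}{2}} \geq T \geq c_2 a q^{\nicefrac{3}{2}}$ before inverting. Your identification of the typo in the lemma statement is also consistent with how the paper's own proof proceeds.
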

\begin{proof}
The proof will follow along similar lines to the proof of Lemma~\ref{lem:qTdetsp}.
The claim can be easily verified for $T\leq 2a$ so that $\qT=1$.
Therefore, let $T\geq 2a+1$.
For brevity, write $q=\qT$, $c = 5/6$, $d=1/2$.
% Using the notation in~\eqref{eqn:qtdefn}, we have
We have,
% \begin{align*}
$
T_{q-1} < T \leq \Tq, %This gives us the following bounds,
$
% \hspace{0.2in}
% \text{where,}
% \hspace{0.1in}
where,
$
T_{m} = am + \sum_{t=1}^m \lfloor ct^d \rfloor.
$
Using~\eqref{eqn:Smbound}, we have the following bounds on $T$,
\begingroup
\allowdisplaybreaks
\begin{align*}
\numberthis\label{eqn:qTlb}
T &\leq T_q \leq aq + ac\sum_{t=1}^q t^d \leq aq + \frac{ac}{d+1}\big((q+1)^{d+1} -1\big)
%  \leq q^{\nicefrac{3}{2}}K + \frac{cK}{d+1}(2q)^{\nicefrac{3}{2}}
\\
 &\leq aq^{d+1} + \frac{ac}{d+1}(2q)^{d+1}
 \leq c_1 a q^{\nicefrac{3}{2}}, \\
\numberthis\label{eqn:qTub}
T &\geq T_{q-1} \geq a(q-1) + \sum_{t=1}^{q-1} (act^d - 1) \geq
    (a-1)(q-1) + \frac{ac}{d+1}(q-1)^{d+1}
\\
    &\geq \frac{ac}{d+1}\left(\frac{q}{2}\right)^{d+1}
    = c_2aq^{\nicefrac{3}{2}}.
\end{align*}
\endgroup
Here $c_1 = 1 + 2^{\nicefrac{5}{2}}c/3$ and $c_2 = c/(3\sqrt{2})$.
In~\eqref{eqn:qTlb}, we have used the upper bound in~\eqref{eqn:Smbound} with $m=q$, and the
facts $q\leq q^{\nicefrac{3}{2}}$, $q+1 \leq 2q$.
In~\eqref{eqn:qTub}, we have used the lower bound in~\eqref{eqn:Smbound} with $m=q-1$, and
that $q-1\geq q/2$ when $q\geq2$ which is true
when $T\geq 2a+1$.
We therefore have,
$q \leq c_2^{-\nicefrac{2}{3}}a^{\nicefrac{-2}{3}}\Ttwth$ and
$q \geq c_1^{-\nicefrac{2}{3}}a^{\nicefrac{-2}{3}}\Ttwth$.
Substituting $c=5/6$ yields the desired result.
% \[
% q \leq \left(\frac{T}{c_2K}\right)^{\nicefrac{2}{3}}
% \leq 3\frac{T^{\nicefrac{2}{3}}}{K^{\nicefrac{2}{3}}},
% \hspace{0.4in}
% q \geq \left(\frac{T}{c_1K}\right)^{\nicefrac{2}{3}}
% \geq \frac{1}{2}\frac{T^{\nicefrac{2}{3}}}{K^{\nicefrac{2}{3}}}.
% \]
\end{proof}

\subsection{Technical Lemmas}
\label{app:techlemmas}

% We will use the following result for establishing confidence intervals
% for $\udtruei$.
We conclude this Section with some technical results.
The first, given in Lemma~\ref{lem:mds}, is a concentration result that will help us establish
confidence intervals in the second and third models.
It is a corollary of the following result from~\citet{de2004self}.

\insertprethmspacing
\begin{lemma}[\citet{de2004self}, Corollary 2.2]
\label{lem:dela}
Let $A, B$ be random variables such that $A\geq 0$ a.s. and
$\EE\left[e^{\nu B - \frac{\nu^2 A^2}{2}}\right]$ $\leq 1$ for all $\nu\in\RR$.
Then, $\forall\,c\geq 2$, $\PP\left(|B|>cA\sqrt{2+\log(2)}\right) \leq e^{-\nicefrac{c^2}{2}}$.
\end{lemma}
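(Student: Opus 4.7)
The plan is to combine the Cram\'er--Chernoff method with a self-normalization device (stratification in $A$, or the method of mixtures) to convert the moment generating function hypothesis into a tail bound of the stated form.

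\textbf{Starting point.} The hypothesis $\EE[e^{\nu B - \nu^2 A^2/2}] \leq 1$ together with Markov's inequality immediately gives, for every fixed $\nu \in \RR$ and every $x > 0$,
\[
\PP\!\left(\nu B - \tfrac{\nu^2 A^2}{2} > x\right) \leq e^{-x}.
\]
If $A$ were a deterministic constant $a$, one could take $\nu = c/a$ with $x = c^2/2$ to conclude $\PP(B > ca) \leq e^{-c^2/2}$, already matching the target rate but with no $\sqrt{2+\log 2}$ overhead. The extra factor in the lemma reflects the cost of not being able to tune $\nu$ to the random $A$.

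\textbf{Stratification route.} I would partition the positive real line into geometric bins $I_k = [2^k, 2^{k+1})$ and, on the event $\{A \in I_k\}$, apply the Chernoff bound above with a deterministic $\nu_k$ of order $c/2^k$ so that $\nu_k A$ stays in a bounded range. This yields a bound $\PP(B > c'A,\, A \in I_k) \leq e^{-c^2/2}$ on each bin, with the slack $c'/c$ determined by how coarsely the bins are chosen. A union bound across the (effectively $O(c)$ many) bins that carry non-trivial mass then inflates the tail by a factor that a sharper choice of $c$ absorbs into the constant; the assumption $c \geq 2$ is used here to ensure $c^2/2$ dominates this correction.

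\textbf{Mixture route (what \citet{de2004self} does).} Alternatively, integrate $e^{\nu B - \nu^2 A^2/2}$ against a Gaussian density $\phi_s$ on $\nu$. By Fubini,
\[
\EE\!\left[\frac{1}{\sqrt{1+s^2 A^2}}\exp\!\left(\frac{s^2 B^2}{2(1+s^2 A^2)}\right)\right] \leq 1,
\]
and Markov then gives, with probability at least $1-e^{-c^2/2}$,
\[
s^2 B^2 \leq (1+s^2 A^2)\bigl(c^2 + \log(1+s^2 A^2)\bigr).
\]
Choosing $s$ so that the log term is controlled (this is where $\log 2$ creeps in) and invoking $c \geq 2$ to dominate the residual constants reduces this to $|B| \leq cA\sqrt{2+\log 2}$.

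\textbf{Main obstacle.} The delicate point is the self-normalization: since $A$ is random we cannot pick the ``oracle'' $\nu = cB/A^2$, and any legitimate substitute (a mixing distribution or a stratifying grid) leaves a residual logarithmic slack. The real work in the proof is tracking this slack precisely enough to yield the clean constant $\sqrt{2+\log 2}$ rather than an unspecified absolute constant, and this is what forces the side condition $c \geq 2$ so that $c^2/2$ absorbs all lower-order corrections.
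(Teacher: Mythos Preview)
The paper does not prove this lemma at all: it is quoted verbatim as Corollary~2.2 of \citet{de2004self} and used as a black box to derive Lemma~\ref{lem:mds}. So there is no ``paper's own proof'' to compare against.

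That said, your sketch is a fair outline of how the cited result is actually obtained. The mixture route you describe---integrating the exponential supermartingale against a Gaussian prior on $\nu$ and then applying Markov---is precisely the method of \citet{de2004self}; the stratification route is a coarser alternative that would give the same $e^{-c^2/2}$ rate with a worse constant. Your identification of the main obstacle (inability to plug in the oracle $\nu$ because $A$ is random, and the resulting logarithmic slack that must be absorbed via $c\geq 2$) is accurate. If you were asked to supply a full proof rather than cite the literature, the mixture route would be the one to flesh out; the stratification argument as written would need more care to get the specific constant $\sqrt{2+\log 2}$ rather than a generic one.
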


\insertprethmspacing
\begin{lemma}
\label{lem:mds}
Let $\{\filtrs\}_{s\geq 0}$ be a filtration, and $\{\gamma_s\}_{s\geq 1}$,
$\{\sigma_s\}_{s\geq 1}$ be predictable processes such that $\gamma_s\in\RR$ and $\sigma_s>0$.
Let $\{z_s\}_{s\geq 1}$ be a real-valued martingale difference sequence adapted to
$\{\filtrs\}_{s\geq 0}$.
Assume that $z_s$ is conditionally $\sigma_s$ sub-Gaussian, i.e.
% \[
$
\forall\,\lambda\geq 0, \quad
\EE\left[e^{\lambda z_s}\Big|\filtrsmo \right] \leq
% e^{\frac{1}{2}\lambda\sigma_s^2}.
% \exp\left(\frac{1}{2}\lambda\sigma_s^2\right).
\exp\left(\frac{\lambda^2\sigma_s^2}{2}\right).
$
% \]
Then, for $t\geq 1$ and $\delta\leq 1/e$, the following holds
with probability greater than $1-\delta$.
\[
\left|\sum_{s=1}^{t} \frac{\gamma_s z_s}{\sigma_s^2} \right|
\leq
\sqrt{(4+2\log(2))\log\left(\frac{1}{\delta}\right)
\sum_{s=1}^{t} \frac{\gamma_s^2}{\sigma_s^2}.
}
\]
\end{lemma}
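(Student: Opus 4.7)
\textbf{Proof plan for Lemma~\ref{lem:mds}.}
The plan is to apply Lemma~\ref{lem:dela} with
\[
B \;=\; \sum_{s=1}^{t} \frac{\gamma_s z_s}{\sigma_s^2}, \qquad A^2 \;=\; \sum_{s=1}^{t} \frac{\gamma_s^2}{\sigma_s^2}.
\]
Non-negativity of $A$ is automatic, so the only non-trivial task is to verify the exponential hypothesis $\EE[\exp(\nu B - \nu^2 A^2/2)] \leq 1$ for every $\nu \in \RR$.  Once this is in hand, the conclusion follows by choosing $c$ appropriately in Lemma~\ref{lem:dela}.

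For the hypothesis I would use the standard exponential supermartingale construction.  Fix $\nu\in\RR$ and, for $r = 0,1,\dots,t$, define
\[
M_r(\nu) \;=\; \exp\left(\nu \sum_{s=1}^r \frac{\gamma_s z_s}{\sigma_s^2} \;-\; \frac{\nu^2}{2} \sum_{s=1}^r \frac{\gamma_s^2}{\sigma_s^2}\right),
\]
with $M_0(\nu) = 1$.  The aim is to show that $\{M_r(\nu)\}_{r\geq 0}$ is a non-negative $\{\filtrs\}$-supermartingale, which will yield $\EE[M_t(\nu)] \leq 1$ by induction.  The one content of the argument is that, since $\gamma_s$ and $\sigma_s$ are predictable, $\lambda_s := \nu\gamma_s/\sigma_s^2$ is $\filtrsmo$-measurable; applying the conditional sub-Gaussian bound for $z_s$ with this $\lambda_s$ therefore gives
\[
\EE\left[\exp\left(\frac{\nu \gamma_s z_s}{\sigma_s^2}\right) \,\Big|\, \filtrsmo\right] \;\leq\; \exp\left(\frac{\nu^2 \gamma_s^2}{2\sigma_s^2}\right),
\]
and factoring out the $\filtrsmo$-measurable quantity $M_{s-1}(\nu)$ delivers $\EE[M_s(\nu)\mid\filtrsmo] \leq M_{s-1}(\nu)$.

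To finish, I would invoke Lemma~\ref{lem:dela} with $c = \max(2,\,\sqrt{2\log(1/\delta)})$.  Under the assumption $\delta\leq 1/e$ we have $\log(1/\delta)\geq 1$; in the main regime $\log(1/\delta)\geq 2$, the natural choice $c = \sqrt{2\log(1/\delta)}$ satisfies $c\geq 2$, and substituting it into the corollary produces $cA\sqrt{2+\log 2} = A\sqrt{(4+2\log 2)\log(1/\delta)}$ with failure probability $e^{-c^2/2} = \delta$, matching the statement exactly.  The transitional regime $\log(1/\delta)\in[1,2)$ is handled by taking $c = 2$, where the resulting constant is absorbed by the $(4+2\log 2)$ factor in the target bound.

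I do not expect a real obstacle.  The main subtlety to get right is that the sub-Gaussian bound in~\eqref{eqn:subgauss} must be used for both signs of its argument in the supermartingale calculation (needed for a two-sided tail on $B$), which is implicit in the standard symmetric definition of sub-Gaussianity even though the displayed inequality in the paper is written only for $\lambda > 0$.  Everything else is routine.
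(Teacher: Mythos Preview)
Your approach is essentially identical to the paper's: same choice of $A$ and $B$, same exponential supermartingale verification via predictability of $\gamma_s,\sigma_s$ and conditional sub-Gaussianity of $z_s$, and same appeal to Lemma~\ref{lem:dela}. The paper simply takes $c=\sqrt{2\log(1/\delta)}$ without your case split, remarking that the condition $\delta\leq 1/e$ is what makes $c\geq 2$; your observation that this actually requires $\delta\leq e^{-2}$ is sharper than the paper, though your proposed fix for the range $\log(1/\delta)\in[1,2)$ via $c=2$ does not quite work (the threshold $2A\sqrt{2+\log 2}$ exceeds the target $A\sqrt{(4+2\log 2)\log(1/\delta)}$ there), so this edge case remains a minor loose end in both arguments.
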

\begin{proof}
We will apply Lemma~\ref{lem:dela} with
% \[
$
A^2\leftarrow\sum_{s=1}^{t} \frac{\gamma_s^2}{\sigma_s^2},
$
% \hspace{0.2in}
$
B\leftarrow\sum_{s=1}^{t} \frac{\gamma_s z_s}{\sigma_s^2}
$.
% \]
We first need to verify that the condition for the Lemma holds.
Let $\nu\in\RR$ be given.
Writing,
$Q_s =  \frac{\nu\gamma_s z_s}{\sigmasqis}
- \frac{\nu^2}{2}\frac{\gamma_s^2}{\sigma_s^2}$,
we have  $\nu B - \frac{\nu^2 A^2}{2} = \sum_{s=1}^{t}Q_s$.
% Let $\filtrs=\sigma\left(\{\Ais, \Xis\}_{s=1}^{t-1}\right)$ denote the $\sigma$-field
% generated by observations reported by all users.
% Then $\{\Fcal_t\}_{t\geq 0}$ is a filtration and moreover $\allocit$ is $\Fcal_t$ measurable.
Then, by the sub-Gaussian property,
% \begin{align*}
% \end{align*}
\begin{align*}
\EE\left[e^{Q_s}\big|\Fcal_{s-1}\right] =
\exp\left(-\frac{\nu^2}{2}\frac{\gamma_s^2}{\sigma_s^2} \right)
\EE\left[
\exp\left(\frac{\nu\gamma_s}{\sigma_s^2} z_s \right)\bigg|\,\Fcal_{s-1}\right]
\leq 1.
\end{align*}
Here, we have used the fact that $\gamma_s$ and $\sigma_s$ are $\Fcal_{s-1}$ measurable.
This leads us to,
\begin{align*}
\EE\left[e^{\sum_{s=1}^{t}Q_s}\right] =
\EE\left[e^{\sum_{s=1}^{t-1}Q_s}\,\EE\left[e^{Q_{t}}\big|\Fcal_{t-1}\right]\right]
\leq
\EE\left[e^{\sum_{s=1}^{t-1}Q_s}\right]
\leq
\,\dots\,
\leq  1.
\end{align*}
The claim follows by applying Lemma~\ref{lem:dela} with $c=\sqrt{2\log(1/\delta)}$.
The $\delta<1/e$ condition arises from the $c\geq 2$ condition in Lemma~\ref{lem:dela}.
\end{proof}

Finally, we will also need the following inequality.

\insertprethmspacing
\begin{lemma}
\label{lem:logbound}
Let $c>0$. Then, for all $x\in[0, c]$, $x \leq \frac{c}{\log(1+c)}\log(1+x)$.
\end{lemma}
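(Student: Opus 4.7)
The plan is to reduce the inequality to a one-variable convexity/concavity argument. Define
\[
g(x) \;=\; \frac{c}{\log(1+c)}\,\log(1+x) - x,
\]
so that the claim is equivalent to $g(x) \geq 0$ for all $x \in [0,c]$.

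First I would check the endpoints directly: $g(0) = 0$ trivially, and $g(c) = \frac{c}{\log(1+c)}\log(1+c) - c = 0$. Next I would differentiate twice to obtain
\[
g'(x) \;=\; \frac{c}{\log(1+c)\,(1+x)} - 1, \qquad g''(x) \;=\; -\frac{c}{\log(1+c)\,(1+x)^{2}},
\]
and observe that since $c > 0$ gives $\log(1+c) > 0$, we have $g''(x) < 0$ on $[0,c]$. Thus $g$ is strictly concave on the interval.

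The conclusion then follows from a one-line concavity argument: a concave function on an interval that vanishes at both endpoints is nonnegative throughout. Concretely, for any $x \in [0,c]$ write $x = (1-\lambda)\cdot 0 + \lambda \cdot c$ with $\lambda = x/c \in [0,1]$, and apply concavity to get
\[
g(x) \;\geq\; (1-\lambda)\,g(0) + \lambda\,g(c) \;=\; 0.
\]
There is no real obstacle here; the only mild subtlety is confirming $\log(1+c) > 0$ (which holds for all $c > 0$) so that the denominator in the definition of $g$ is well-defined and positive, which is already part of the hypothesis.
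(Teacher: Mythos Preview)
Your proof is correct and follows essentially the same approach as the paper: define $f(x)=\frac{c}{\log(1+c)}\log(1+x)-x$, note $f(0)=f(c)=0$, and use concavity to conclude $f\geq 0$ on $[0,c]$. The only difference is that you spell out the second-derivative computation and the convex-combination step, which the paper leaves implicit.
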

\begin{proof}
Write $f(x) = \frac{c}{\log(1+c)}\log(1+x) - x$.
We need to show $f(x)\geq 0$ for all $x\in[0,c]$.
This follows by observing that $f$ is concave and that $f(0) = f(c) = 0$.
\end{proof}

\section{Proofs of Results in Section~\ref{sec:detfbmodel}}
\label{sec:proofs_det}

In this section, we present our proofs for the deterministic feedback model.
% We will make the dependence on the round index explicit via the subscript.

\subsection{Proof of Theorem~\ref{thm:detfbsp}}
\label{sec:detsp}

\textbf{Efficiency:}
We will first decompose the $T$-period loss as follows,
\begin{align*}
\LOTT
\,= \sum_{t\in\Expl} \lott + \sum_{t\notin\Expl} \lott
\,\leq n\qT + \sum_{t\notin\Expl} \lott
\leq 2\sqrt{2}\sqrtn\sqrtT + \sum_{t\notin\Expl} \sum_{i=1}^n (\demubit - \demlbit).
% \,\leq 2n(1+\log(T)) + \sum_{t\notin\Expl} \lott.
\numberthis
\label{eqn:LTsspdet}
% \numberthis
\end{align*}
Above, we have used the fact that in $T$ rounds there will have been
$n\qT$ exploration rounds and then used Lemma~\ref{lem:qTdetsp} to bound $\qT$.
To bound the second sum, we have used Lemma~\ref{lem:ubinstloss}.
Here,  $\demubit=\volit\udubit$ is an upper bound on the unit demand, computed
at the end of the exploration phase in the \ucrecordfb{} method in line~\ref{lin:ucrecordfbssp}.
Similarly, we will define $\demlbit=\volit\udlbit$ to be a lower bound on the unit demand
which is computed as follows.
It is initialised to $\udlbiitt{i}{0}=0$ at the beginning.
At the end of an exploration phase round $t$ for user $i$ it is updated to 
$\udlbiitt{i}{t} = \min(\udmax k/2^h, \udlbiitt{i,}{t-1})$ if $X_i$ was less than 
or equal to $\threshi$.
Here, $h, k$ are as defined in \explphase{} of Algorithm~\ref{alg:detfbspmodel}.
% we set $\udlbiitt{i}{t} = \min(\udmax k/2^h, \udlbiitt{i}{t-1})$ if $X_i$ was less than 
% or equal to $\threshi$.

Consider any $t\notin \Expl$ and let $\qt$ be as defined in~\eqref{eqn:qtdefn}.
% Let $h = \lceil \log_2(\qt+1) \rceil$ (similar to the definition in
% Algorithm~\ref{alg:detfbspmodel})
% Let $h' = \lfloor \log_2(\qt+1) \rfloor$.
In the exporation phases up to the $t$\ssth round, we will have evaluated unit demands
for all agents at least at values $\{\udmax k/2^h; h = 1,\dots, \lfloor \log_2(\qt+1) \rfloor, 
\; k=1,\dots,2^h\}$.
Therefore, assuming all agents were truthful, we will have constrained $\udtruei$ to within the
interval $[\udlbit, \udubit)$ of width at most
$\udmax/2^{\lfloor \log_2(\qt+1) \rfloor}$. We have:
\[
\udubit - \udlbit \leq \frac{\udmax}{2^{\lfloor \log_2(\qt+1) \rfloor}}
\leq \frac{\udmax}{2^{\log_2(\qt+1) - 1}}
\leq \frac{2\udmax}{\qt + 1}
\leq \frac{2\udmax}{\qt}
\leq \frac{2\udmax\sqrt{3n}}{t^{\nicefrac{1}{2}}}.
\]
Here, the last step uses Lemma~\ref{lem:qTdetsp}.
We can now use~\eqref{eqn:LTsspdet} to bound the loss as follows,
\begin{align*}
\LOTT 
&\leq 2\sqrt{2}\sqrtn\sqrtT + \sum_{t\notin\Expl} \sum_{i=1}^n \volit(\udubit - \udlbit)
\leq 2\sqrt{2}\sqrtn\sqrtT + 2 n \udmax\volmax\sqrt{3n} \,\sum_{t=1}^T \frac{1}{t^{\nicefrac{1}{2}}}
\\
&\leq 2\sqrt{2}\sqrtn\sqrtT + 2\sqrt{3} \udmax\volmax n^{\nicefrac{3}{2}} \left(2 \sqrt{T}\right)
\leq 10 n^{\nicefrac{3}{2}} \sqrt{T}.
\end{align*}
Here, the second step uses the conclusion from the previous display.
The third step bounds $\sum_{t=1}^T (1/\sqrt{t}) \leq 2\sqrtT$.
The last step uses $\volmax\udmax\leq 1$ by our
assumptions, $\sqrtn \leq n^{\nicefrac{3}{2}}$, and that $2\sqrt{2} + 4\sqrt{3} \leq
10$.

\textbf{Fairness:}
This follows by applying Lemma~\ref{lem:spfairness} with $r=n$ and the upper bound for
$\qT$ in Lemma~\ref{lem:qTdetsp}.

\textbf{Strategy-proofness:}
This follows from Lemma~\ref{lem:spsp} and noting that when $t\in\Expl$, the allocations do not
depend on the strategy adopted by any of the users. Therefore, $\allocpiit=\allocit$.

\subsection{Proof of Theorem~\ref{thm:detsp}}
\label{sec:detsp}

% \begin{proof}[\textbf{Proof of Theorem~\ref{thm:detsp}}]
% \begin{proof}

\textbf{Efficiency:}
We will first decompose the $T$-period loss as follows similar to~\eqref{eqn:LTsspdet},
\begin{align*}
\LOTT
\,= \sum_{t\in\Expl} \lott + \sum_{t\notin\Expl} \lott
\,\leq 2n\qT + \sum_{t\notin\Expl} \lott
\leq 2n\log(eT) + \sum_{t\notin\Expl} \sum_{i=1}^n (\demubit - \demlbit).
% \,\leq 2n(1+\log(T)) + \sum_{t\notin\Expl} \lott.
\label{eqn:LTspdet}
\numberthis
\end{align*}
Above, we have used the fact that in $T$ rounds there will have been
$2n\qT$ exploration rounds and then used Lemma~\ref{lem:qTdet} to bound $\qT$.
To bound the second sum, we have used Lemma~\ref{lem:ubinstloss}.
% Recall that in Algorithm~\ref{alg:mmflearnsp}, we use an upper bound $\udubi$ in
% line~\ref{lin:spallocub}, where $\udubi$ is computed at the end of the exploration phase
% of that bracket.
Here,  $\demubit=\volit\udubit$ and $\demlbit=\volit\udlbit$ are upper and lower bounds on the
demands, where $\udubit, \udlbit$ are the upper and lower bounds on the unit demand, computed
at the end of the exploration phase in the \ucrecordfb{} method in line~\ref{lin:ucrecordfbsp}.
% as described in Section~\ref{sec:detfbsp}.
Since $\volmax\udmax\leq 1$, the binary search procedure always 
halves the width of the current upper and lower bounds.
We therefore have, for $t\notin\Expl$,
\[
% \sum_{t\notin\Expl}
(\demubit - \demlbit) = \volit(\udubit-\udlbit)
\leq  \frac{\volmax\udmax}{2^{2\qt-1}} 
\leq  \frac{2\volmax\udmax}{4^{\log(\frac{t}{2n+e})}} 
\leq  \frac{2(2n+e)^{\log(4)}\volmax\udmax}{t^{\log(4)}}.
\]
The second step observes that in $t$ rounds, there will have been $\qt$ brackets and
therefore $2\qt$ exploration rounds for agent $i$.
The width $\udubit - \udlbit$ is $\udmax$ at round $t=1$ and is halved after each iteration of
binary search.
We then use the lower bound in Lemma~\ref{lem:qTdet} which is positive when $T\geq 2n+3$,
which is the case when $t\notin \Expl$.
The last inequality uses the fact that $a^{\log b} = b^{\log a}$ for $a,b>0$.
By summing over all agents and time steps, we have
\[
\sum_t\sum_{i=1}^n (\demubit - \demlbit) 
\,\leq  2\cdot 5^{\log(4)} \volmax\udmax \frac{\log(4)}{\log(4)-1} n^{1+\log(4)}
\,\leq 67 \volmax\udmax n^{2.39}.
\]
The claim on asymptotic efficiency follows by combining this with~\eqref{eqn:LTspdet}.

\textbf{Fairness:}
This follows by applying Lemma~\ref{lem:spfairness} with $r=2n$ and the upper bound for
$\qT$ in Lemma~\ref{lem:qTdet}.

\textbf{Strategy-proofness:}
Denote $\normallocit = \allocit/\volit$ and for a policy $\pi$, $\normallocpiit =
\allocpiit/\volit$.
Let $\udubit, \udlbit$ be the upper and lower bounds on the unit demand, as computed
at the end of the exploration phase in the \ucrecordfb{} method in line~\ref{lin:ucrecordfbsp}.
Then, using Lemma~\ref{lem:spsp} we obtain,
\begingroup
\allowdisplaybreaks
\begin{align*}
\UpiiT - \UiT &\leq \sum_{t\in \Expl} (\utili(\normallocpiit) - \utili(\normallocit))
% \\
= \sum_{t\in \Expl, \allocit>0} (\utili(\normallocpiit) - \utili(\udtruei) + \utili(\udtruei) - \utili(\normallocit))
\\
&= \sum_{t\in \Expl, \allocit>0} (\utili(\udtruei) - \utili(\normallocit))
\leq \Lipi \sum_{t\in \Expl, \allocit>0} (\udtruei - \normallocit)^+
\\
&= \frac{\Lipi}{2} \sum_{t\in \Expl, \allocit>0} (\udubit - \udlbit)
% \\
\leq \frac{\Lipi\udmax}{2} \sum_{\ell=1}^\infty \frac{1}{2^{\ell-1}}
=\Lipi\udmax.
\end{align*}
\endgroup
Here, in the second step we have added and subtracted $\utili(\udtruei)$ and moreover
restricted the summation to rounds where user $i$ receives a non-zero allocation; recall that
in \explphases of Algorithm~\ref{alg:detfbmodel}, each user receives two rounds of allocations.
In the third step we have used the fact that $\udtruei$ maximises $\utili$.
In the fourth step we have used the Lipschitz property of $\utili$.
The fifth step uses the fact that at a round $t$ in the exploration phase for
agent $i$, $\normallocit = (\udubit+\udlbit)/2$ and that $\udtruei \leq \udubit$.
The proof is completed by observing that the exploration phase performs binary search for
$\udtruei$.
Therefore, $\udubit - \udlbit$ is initially equal to $\udmax$ and then halved at each round.
\qedwhite

\subsection{Proof of Theorem~\ref{thm:detnsp}}
\label{sec:detnsp}

\textbf{Efficiency:}
We will use Lemma~\ref{lem:nsploss} to bound the $T$-period loss.
We will bound the first sum in the RHS of Lemma~\ref{lem:nsploss}
by bounding $\sum_{t=2}^T  (\allocit - \demtrueit)^+$ for user $i$.
Recall that we denote $\normallocit = \allocit/\volit$.
Let $\sidxtt{1},\sidxtt{2},\dots,$ denote the round indices after the second round
 in which, $\allocit > \demtrueit$.
% Observe that the following two statements are true.
% Note that $\normallocisone - \udtruei \leq \frac{1}{2}\udmax$.
We have:
\begin{align*}
\normallocisrpo - \udtruei \leq
    \frac{1}{2}\left(\udubiitt{i}{\sidxrpo} + \udlbiitt{i}{\sidxrpo}\right) - \udtruei 
    =
    \frac{1}{2}\left(\normallocisr - \udtruei\right)
    \leq \;\dots\; \leq \frac{1}{2^{r}} (\normallocisone - \udtruei)
    \leq \frac{\udmax}{2^{r+1}}.
\end{align*}
Above,
the first step uses the fact that the allocations are chosen by \mmf, by setting the demand of
user $i$ to be $\udit\volit =
% \frac{1}{2}(\udubiitt{i}{\sidxrpo} + \udlbiitt{i}{\sidxrpo})\volit$ and the fact
\frac{1}{2}(\udubit + \udlbit)\volit$ and the fact
that \mmfs does not allocate more than the reported demand
(Property~\ref{pro:allocnomorethandemand});
therefore, $\normallocisrpo \leq \udiitt{i}{\sidxrpo}  
=(\udubiitt{i}{\sidxrpo} + \udlbiitt{i}{\sidxrpo}) /2$.
The second step uses that since we did not over-allocate between rounds
$\sidxr$ and $\sidxrpo$, the upper bound $\udubiitt{i}{\sidxrpo}$ at round $\sidxrpo$ is the
unit allocation $\normallocisr$ at round $\sidxr$;
additionally, $\udlbit \leq \udtruei$ for all $t$.
By repeatedly applying this argument, we arrive at
$(\normallocisone - \udtruei)/2^r$.
The last step follows from the observation that when we first over-allocate,
we have $\normallocisone - \udtruei \leq \frac{1}{2}\udmax$.
Therefore,
\begin{align*}
\sum_{t=2}^T (\allocit -\demtrueit)
\leq \sum_{r=1}^\infty \volisr(\normallocisr -\udtruei)
\leq \volmax \sum_{r=1}^\infty (\normallocisr -\udtruei)
\leq \volmax\udmax.
\numberthis
\label{eqn:LTspdetfirstsum}
\end{align*}
By summing over all agents we can bound the first sum by $n\volmax\udmax$.

Now let us turn to the second summation in Lemma~\ref{lem:nsploss}.
We will consider a user $i$ and bound $\sum_{t=2, \allocit=\demit}^T (\demtrueit-\allocit)^+$.
% Since, $\loturt>0$, we know that at round $t$, all agents were allocated their resource in
% line~\ref{lin:allocdemand} in \mmf, and hence $\allocit = \demit = \udit \volit$, where, recall
% $\udit = \frac{1}{2}(\udubit + \udlbit)$.
% Therefore,
% \[
% \lotudt = \sum_{i=1}^n (\demtrueit-\allocit)^+
% % = \sum_{i=1}^n (\demtrueit-\demit)^+
% = \sum_{i=1}^n \volit(\udtruei-\udit)^+
% \leq \volmax \frac{1}{2}\sum_{i=1}^n (\udubit-\udlbit).
% \]
Letting $\sidxtt{1},\sidxtt{2},\dots,$ denote the round indices where $\allocit=\demit$, we have
\begin{align*}
\udubiitt{i}{\sidxrpo} -\udlbiitt{i}{\sidxrpo}
\leq
\udubiitt{i,}{\sidxr+1} -\udlbiitt{i,}{\sidxr+1}
\leq
\frac{1}{2}\left(\udubiitt{i}{\sidxr} -\udlbiitt{i}{\sidxr}\right)
\leq \dots \leq
\frac{\udmax}{2^{r+1}}.
\label{eqn:widthhalves}
\numberthis
\end{align*}
The first step above uses that both $\udubit, \udlbit$ are respectively non-increasing and
non-decreasing with $t$ and that $\sidxrpo \geq \sidxr + 1$.
The second step uses the fact that since
$\allociitt{i}{\sidxr}=\demiitt{i}{\sidxr}=\udiitt{i}{\sidxr}\voliitt{i}{\sidxr}$ in round
$\sidxr$, the
gap between the upper and lower bounds would have halved at the next round.
Via a similar argument to~\eqref{eqn:LTspdetfirstsum} and summing over all agents,
we can bound the second sum in Lemma~\ref{lem:nsploss} by $n\volmax\udmax$.
Therefore, $\LOTT \leq 1+2n\volmax\udmax$.

\textbf{Fairness:}
We will use Lemma~\ref{lem:nspfairness} to bound $\UeiiT - \UiT$.
Letting $\sidxtt{1},\sidxtt{2},\dots,$ denote the round indices where
% $\allocit < \min(\entitli, \demtrueit)$, we have,
$\normallocit=\udit$ and $\normallocit<\udtruei$, we have,
% simplify $\UeiiT -\UiT$ as follows.
\begin{align*}
\UeiiT - \UiT
&\leq \sum_{t=1}^T 
    \indfone(\normallocit=\udit\,\wedge\,\normallocit<\udtruei) \cdot \left(\utili(\udtruei) -
\utili(\udiitt{i}{t})\right)
=\sum_{r=1}^{\infty} \left(\utili(\udtruei) - \utili(\udiitt{i}{\sidxr})\right)
\\
&\leq \Lipi \sum_{r=1}^{\infty} (\udtruei - \udiitt{i}{\sidxr})^+
\leq \frac{\Lipi}{2} \sum_{r=1}^{\infty} (\udubiitt{i}{\sidxr} - \udlbiitt{i}{\sidxr})
\leq \Lipi\udmax.
\end{align*}
% Above, the second step adds and subtracts $\utili(\udtruei)=\utili(\demtrueit/\volit)$
% while also observing $\allocit=\demit=\udit\volit$ by Property~\ref{pro:lessthanentitl}.
% The third step uses Theorem~\ref{thm:mmf}.
Here, the second we have used that $\utili$ is Lipschitz and non-decreasing.
The third
step simply observes $\udtruei\leq\udubit$ and $\udit=\frac{1}{2}(\udubit + \udlbit)$ for all $t$.
The last step uses a similar argument to~\eqref{eqn:widthhalves}.
% \toworkon{complete}
\qedwhite

\insertprethmspacing
\begin{remark}\emph{%
Our proofs used the fact that $\volit>0$ when arguing that $\udubit - \udlbit$ is halved at the
end of each exploration round in Appendix~\ref{sec:detsp}.
Had an agent experienced no load during one of her
exploration rounds, this would not have been true.
This assumption can be avoided by more careful book-keeping where the mechanism checks if
an agent has non-zero traffic and if not, delaying that exploration round to a future round.
}
\label{rem:detvolitgz}
\end{remark}

\section{Proofs of Results in Section~\ref{sec:glmfbmodel}}
\label{sec:proofs_glm}

This section presents our analysis for the stochastic parametric model.
In Appendix~\ref{app:glmcconfinterval}, we will first show that the confidence intervals
in~\eqref{eqn:udglmconfint} traps the true demands with probability larger than $1-\delta$ in
all rounds.
In Appendices~\ref{app:glmsp} and~\ref{app:glmnsp}, we will establish our results for
Algorithms~\ref{alg:mmflearnsp} and~\ref{alg:mmflearnnsp} respectively assuming that
this is true.
% Then, we will establish our results on efficiency, fairness, and strategy-proofness assuming
% that this are true.
% While the proofs use the same high level intuitions as the previous section, the details

\subsection{Confidence Intervals for $\udtruei$}
\label{app:glmcconfinterval}

The following lemma shows that the probability that the confidence intervals
in~\eqref{eqn:udglmconfint}
do not capture the true parameter and demands is bounded by $\delta$.
The lemma applies to both Algorithms~\ref{alg:mmflearnsp} and~\ref{alg:mmflearnnsp}.

\insertprethmspacing
\begin{lemma}
\label{lem:confinterval}
Assume the feedback model described in Section~\ref{sec:glmfbmodel} and let
$(\thetalbit, \thetaubit)$ and $(\udlbit, \udubit)$ be as defined in~\eqref{eqn:udglmconfint}.
Then, with probability greater than $1-\delta$,
for all $i\in\{1,\dots, n\}$ and all $t\geq 0$,
\[
\thetatruei \in (\thetalbit, \thetaubit),
\hspace{0.3in}
\udtruei \in (\udlbit, \udubit).
\]
\end{lemma}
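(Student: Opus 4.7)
\textbf{Proof plan for Lemma~\ref{lem:confinterval}.}
The plan is to set up a standard $M$--estimator argument: first bound the estimating-equation sum evaluated at the true parameter via a martingale concentration inequality, then use the optimality of $\thetait$ together with a lower bound on $\mu'$ to translate that into a bound on $|\thetait-\thetatruei|$, and finally push the bound through to $\udtruei$ using monotonicity.

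First, fix $i$ and $t$ and define
\[
G_{it}(\theta) \;=\; \sum_{s\in D_t}\frac{\allocis}{\volis\sigmasqis}\bigl(\mu(\allocis\theta/\volis)-\Xis\bigr),
\qquad S_{it}\;=\;G_{it}(\thetatruei).
\]
Because $\EE[\Xis\mid\Fcal_{s-1}]=\mu(\allocis\thetatruei/\volis)$ and $\Xis$ is $\sigmais$--sub-Gaussian, the summands of $-S_{it}$ form a martingale difference sequence with conditional sub-Gaussian parameters $\sigmais$. Applying Lemma~\ref{lem:mds} with $\gamma_s=\allocis/\volis$ and $z_s=\mu(\allocis\thetatruei/\volis)-\Xis$ at confidence level $6\delta/(n\pi^{2}|D_t|^{2})$ yields, with that failure probability,
\[
|S_{it}|\;\leq\;\sqrt{(4+2\log 2)\log\!\bigl(n\pi^{2}|D_t|^{2}/(6\delta)\bigr)}\;\Ait
\;\leq\;\kappamudot\,\betat\,\Ait ,
\]
using the definition~\eqref{eqn:betatglm} and the crude numerical fact $2\sqrt{4+2\log 2}\le 5$. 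A union bound over $i\in[n]$ and $t\ge 1$, handled by the $\sum_t 1/t^{2}=\pi^{2}/6$ trick (and $|D_t|\le t$), gives total failure probability at most $\delta$.

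Second, on the good event just established I would use optimality of $\thetait$: since $\thetatruei\ge\thetamin$ is feasible in~\eqref{eqn:thetaitdefn}, we have $|G_{it}(\thetait)|\le|G_{it}(\thetatruei)|=|S_{it}|$, hence
\[
\bigl|G_{it}(\thetait)-G_{it}(\thetatruei)\bigr|\;\le\;2|S_{it}|\;\le\;2\kappamudot\betat\Ait .
\]
Now, because $\mu$ is differentiable with $\mu'\ge\kappamudot$ (Assumption~\ref{asm:glm}), the mean value theorem applied termwise gives, for some $\xi_s$ between $\allocis\thetait/\volis$ and $\allocis\thetatruei/\volis$,
\[
G_{it}(\thetait)-G_{it}(\thetatruei)\;=\;(\thetait-\thetatruei)\sum_{s\in D_t}\frac{\allocis^{2}}{\volis^{2}\sigmasqis}\mu'(\xi_s),
\]
where every factor in the sum is positive, so all terms share the sign of $\thetait-\thetatruei$. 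Taking absolute values and using $\mu'(\xi_s)\ge\kappamudot$ gives $|G_{it}(\thetait)-G_{it}(\thetatruei)|\ge\kappamudot\Asqit|\thetait-\thetatruei|$. Combining with the previous display yields
\[
|\thetait-\thetatruei|\;\le\;\frac{2\betat}{\Ait}.
\]
(Only the factor $\betat/\Ait$ is needed because both $\thetait$ and $\thetatruei$ lie on the same side of the minimizer of $|G_{it}|$; a tighter one-sided argument, using that $G_{it}$ is monotone increasing and that $G_{it}(\thetait)$ and $G_{it}(\thetatruei)$ must straddle zero (up to the slack $|S_{it}|$), tightens the constant from $2$ to $1$ as used in~\eqref{eqn:udglmconfint}.) Hence $\thetatruei\in(\thetait-\betat/\Ait,\thetait+\betat/\Ait)$, and combined with $\thetatruei\ge\thetamin$ this gives $\thetatruei\in(\thetalbit,\thetaubit)$.

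Third, the demand bound is a one-line monotonicity argument. Since $\mu$ is strictly increasing with $\mu(\udtruei\thetatruei)=\threshi$ (by the definition $\udtruei=\payoffinvi(\threshi)=\muinv(\threshi)/\thetatruei$), the map $\theta\mapsto\muinv(\threshi)/\theta$ is decreasing, so
\[
\udlbit=\frac{\muinv(\threshi)}{\thetaubit}\;\le\;\frac{\muinv(\threshi)}{\thetatruei}=\udtruei\;\le\;\frac{\muinv(\threshi)}{\thetalbit}=\udubit,
\]
completing the proof.

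The main obstacle is the second step: converting the concentration bound on $S_{it}$ into a bound on $|\thetait-\thetatruei|$. The key technical inputs are (i) the $M$--estimator optimality, which requires that $\thetatruei$ be feasible (guaranteed by $\thetatruei\ge\thetamin$), and (ii) the uniform lower bound $\kappamudot$ on $\mu'$, which is what makes the estimating equation strongly monotone in $\theta$ and ensures invertibility in a quantitative sense. Everything else (the martingale concentration and the monotone transformation $1/\theta$) is routine.
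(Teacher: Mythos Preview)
Your approach is essentially the same as the paper's: apply Lemma~\ref{lem:mds} to the martingale $\sum_s \frac{\allocis}{\volis\sigmasqis}(X_{is}-\mu(\allocis\thetatruei/\volis))$, use optimality of $\thetait$ together with the mean-value theorem and the lower bound $\kappamudot$ on $\mu'$ to convert this into a bound on $|\thetait-\thetatruei|$, and then push through to $\udtruei$ by monotonicity of $\theta\mapsto\muinv(\threshi)/\theta$. The paper makes the second step explicit by introducing the quasi-MLE $\thetamlit$ (the root of your $G_{it}$) and writing $\thetait=\argmin_{\theta\ge\thetamin}|g_{it}(\theta)-g_{it}(\thetamlit)|$, but this is the same argument as your ``$|G_{it}(\thetait)|\le|G_{it}(\thetatruei)|$ since $\thetatruei$ is feasible''.

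The only wrinkle is your constant bookkeeping. You correctly cite $2\sqrt{4+2\log 2}\le 5$, but you apply it one line too early: you use it to bound $|S_{it}|\le\kappamudot\betat\Ait$ (which only needs $\sqrt{4+2\log 2}\le 5$), then pick up the factor~$2$ from the triangle inequality and end up with $2\betat/\Ait$. The paper (and the right bookkeeping) keeps $|S_{it}|\le\sqrt{4+2\log 2}\sqrt{\log(\cdot)}\,\Ait$, picks up the factor~$2$, and only then uses $2\sqrt{4+2\log 2}\le 5$ to land on $\betat/\Ait$. Your parenthetical ``one-sided argument'' is not needed to close this gap---though for what it's worth, it is actually valid: since $G_{it}$ is strictly increasing, either $\thetait=\thetamlit$ and $G_{it}(\thetait)=0$, or $\thetait=\thetamin\le\thetatruei$ and $0<G_{it}(\thetait)\le G_{it}(\thetatruei)$; in both cases $|G_{it}(\thetait)-G_{it}(\thetatruei)|\le|S_{it}|$ without the factor of~$2$.
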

\begin{proof}
Consider any user $i$.
First,
we will show that the sum of deviations of $\Xit$ from the true payoff scale with $\Ait$.
We will use Lemma~\ref{lem:mds} with
$z_s\leftarrow \Xis - \payoffi(\normallocis)$,
$\gamma_s\leftarrow\allocis/\volis$,
$\sigma_s\leftarrow\sigmais$, and
let $\filtrs$ be the sigma-field generated by the data from all users up to round $s$.
Accordingly, $\{\gamma_s\}$ is predictable since the recommendations
$\{\udubiitt{j}{s}\times\voliitt{j}{s}\}_{j=1}^n$
for all users are chosen based on their past data and then the allocations are chosen based on these
recommendations via \mmf; i.e. $\gamma_s$ is $\filtrsmo$-- measurable.
$\sigma_s$ is predictable by our assumptions.
Finally, $z_s$ is $\filtrs$-measurable and $\EE[z_s|\filtrtt{s-1}] = 0$.
Now define $\Bit$ as shown below and recall the definition of $\Ait$ from~\eqref{eqn:thetaitdefn}.
\[
\Bit = \sum_{s=1}^{t-1}\frac{\allocis}{\volit\sigmasqis} \left(\Xis - \payoffi(\normallocis)\right),
\hspace{0.8in}
\Asqit := \sum_{s\in D_t} \frac{\allocis^2}{\volis^2\sigmasqis}.
\]
By applying Lemma~\ref{lem:dela}
for a given $\delta'\in(0, 1/e)$, the following holds with probability at least
$1-\delta'$:
\begin{align*}
 |\Bit| \leq \Ait\sqrt{(4+2\log(2))\log(1/\delta')}.
\label{eqn:deltapintermediate}
\numberthis
\end{align*}

% \textbf{Step 2:}
Next, we will bound the estimation error $|\thetait - \thetatruei|$
(see~\eqref{eqn:thetaitdefn}) for the parameter $\thetatruei$ in terms of $\Bit$.
% This part of the analysis is based on prior work on generalised linear
% models~\citep{filippi2010parametric,chen1999strong}.
This part of the analysis is based on prior work on generalised linear
models~\citep{filippi2010parametric}.
We first define,
\[
\git(\theta) = \sum_{s=1}^{t-1} \frac{\allocis}{\volis\sigmasqis}
    \mu\left( \frac{\allocis}{\volis}\,\theta\right).
\]
By an application of the mean value theorem we have,
\[
|\thetait - \thetatruei|
\leq \frac{1}{\gdotit(\thetait')} |\git(\thetait) - \git(\thetatruei)|
\leq \frac{1}{\kappamudot\Asqit} |\git(\thetait) - \git(\thetatruei)|
\]
Here $\thetait'$ is between $\thetait$ and $\thetatruei$.
We have upper bounded its integral using Assumption~\ref{asm:glm} to
obtain $\gdotit(\thetait') = 
\sum_{s=1}^{t-1} \frac{\allocis^2}{\volis^2\sigmasqis}
    \dot{\mu}( \frac{\allocis}{\volis}\,\thetait')
\geq \kappamudot\Asqit$.

Next, let $\thetamlit$ be the unique $\theta\in\RR$ satisfying
$\git(\theta) = \sum_{s=1}^{t-1} \frac{\allocis}{\volis\sigmasqis} \Xis$.
Readers familiar with the literature on generalised linear models will recognise $\thetamlit$ as
the maximum quasi-likelihood estimator~\citep{chen1999strong}.
We can therefore rewrite $\thetait$ in~\eqref{eqn:thetaitdefn}
as $\thetait = \argmin_{\theta>\thetamin} | \git(\theta) - \git(\thetamlit)|$.
We therefore have,
\begin{align*}
|\thetait - \thetatruei|
&\leq \frac{1}{\kappamudot\Asqit} |\git(\thetait) - \git(\thetatruei)|
\leq \frac{1}{\kappamudot\Asqit}\left( |\git(\thetait) - \git(\thetamlit)| +
                                |\git(\thetamlit) - \git(\thetatruei)| \right)
\\
&\leq \frac{2}{\kappamudot\Asqit} |\git(\thetamlit) - \git(\thetatruei)|
= \frac{2}{\kappamudot\Asqit} |\Bit|.
\label{eqn:thetadiffbound}
\numberthis
\end{align*}
The first step uses the conclusion from the previous display.
The third step uses the fact that $\thetait$ minimises $| \git(\theta) - \git(\thetamlit)|$;
therefore, $| \git(\thetait) - \git(\thetamlit)| \leq | \git(\thetatruei) - \git(\thetamlit)|$.
The last step uses the definition of $\Bit$.

% \textbf{Step 3:}
Finally,
we will apply~\eqref{eqn:deltapintermediate} with $\delta'\leftarrow 6\delta/(\pi^2n|D_t|^2)$ which is
less than $1/e$ if $n\geq 2$.
By~\eqref{eqn:thetadiffbound}, a union bound, and observing $\sum t^{-2} = \pi^2/6$, we have,
with probability greater than $1-\delta$,
\[
\text{for all $i, t$,} 
\quad
|\thetait - \thetatruei|
\leq \frac{2}{\kappamudot\Asqit} \sqrt{4+2\log(2)\log(1/\delta')} \Ait \leq \frac{\betat}{\Ait}.
\]
This establishes the result for the confidence intervals for $\thetatruei$.
The result for the confidence intervals for $\udtruei$ follows from the fact that 
 $\udtruei=\mu^{-1}(\alpha)/\thetatruei$ is a decreasing function of
$\thetatruei$.
\end{proof}

\subsection{Proof of Theorem~\ref{thm:glmsp}}
\label{app:glmsp}

Throughout this proof we will assume that for all
$i,n$, $\demtrueit\in(\demlbit,
\demubit)$. By Lemma~\ref{lem:confinterval}, this is true with probability at least $1-\delta$.
% For brevity, define $\gammat$ as shown below.
% Recall the definition of $\betat$ in~\eqref{eqn:betatglm} for the parametric model.
% \begin{align*}
% \gammat = \betat\kappamudot = 5\sqrt{\log(n\pi^2|D_t|^2/6\delta)}.
% \label{eqn:gammatdefn}
% \numberthis
% \end{align*}

\textbf{Efficiency:}
% This proof will follow a similar template to the proof for efficiency for the deterministic model.
We will first decompose the loss as follows,
\begin{align*}
\LOTT
\,\leq \sum_{t\in\Expl} \lott + \sum_{t\notin\Expl} \lott
\,\leq \qT + \sum_{t\notin\Expl} \lott
\leq 3\Ttwth + \sum_{t\notin\Expl} \sum_{i=1}^n (\demubit - \demlbit).
% \,\leq 2n(1+\log(T)) + \sum_{t\notin\Expl} \lott.
% \label{eqn:LTspglm}
% \numberthis
\end{align*}
We have used the fact that in $T$ rounds there will have been
$\qT$ exploration rounds and then used Lemma~\ref{lem:qTglm} with $a=1$ to upper bound $\qT$.
To bound the second sum, we have used Lemma~\ref{lem:ubinstloss}.
Here,  $\demubit=\volit\udubit$ and $\demlbit=\volit\udlbit$ are upper and lower confidence
bounds on the
instantaneous demands, where $\udubit, \udlbit$ are the upper and lower confidence bounds on the
unit demand as given in~\eqref{eqn:udglmconfint}.
We can bound this further via,
\begin{align*}
% \sum_{t\notin\Expl}
(\demubit - \demlbit) &=
\volit(\udubit - \udlbit) = \volit \muinv(\threshi)\left(\frac{1}{\thetalbit} -
\frac{1}{\thetaubit}\right)
= \frac{\volit\muinv(\threshi)}{\thetalbit\thetaubit} \left(\thetaubit - \thetalbit \right)
\\
&\leq \frac{\muinv(\threshi)\volmax}{\thetamin\thetatruei} \frac{2\betat}{\Ait} 
\leq
\frac{2\sqrt{2}\udtruei\volmax^2\sigmamax}{\thetamin \entitli} \betat t^{-\nicefrac{1}{3}}
\leq
2 C\betat t^{-\nicefrac{1}{3}}
\end{align*}
Here, the first three steps substitutes the expressions for $\thetalbit,\thetaubit,\udlbit,\udubit$
from~\eqref{eqn:thetaitdefn}.
The fourth step uses $\thetaubit \geq \thetatruei$, $\thetalbit\geq \thetamin$, and
that $\thetaubit-\thetalbit = 2\betat\Ait$.
The fifht step uses $\udtruei = \muinv(\threshi)/\thetatruei$ and
an upper bound for $\Ait$ that we will show below.
The last step simply uses the definition of $C$ given in the theorem.
To obtain the upper bound on $\Ait$, we 
we use Lemma~\ref{lem:qTglm} and the fact that at all exploration rounds,
agent $i$ receives an allocation $\entitli$.
We have:
\[
\Asqit = \sum_{s\in D_t} \frac{\allocis^2}{\volis^2\sigmasqis}
\geq \frac{\entitli^2}{\volmax^2\sigmamax^2} \qt
\geq \frac{\entitli^2}{2\volmax^2\sigmamax^2} t^{\nicefrac{2}{3}}.
\]
% By summing 
This leads us the following bound on $\LOTT$,
\[
\LOTT
\leq 3\Ttwth + 2C\betaT\sum_{i=1}^n \sum_{t\notin\Expl}  t^{-\nicefrac{1}{3}}
\leq 3\Ttwth + 3Cn\betaT\Ttwth.
\]

\textbf{Fairness:}
This follows as a consequence of Lemma~\ref{lem:ubfairness}.
Precisely,
\begin{align*}
\UeiiT - \UiT &= \sum_{t\notin \Expl} \left(\utili(\entitli/\volit) - \utili(\allocit/\volit)\right)
\leq 0.
\end{align*}
In the first step, we use the fact that 
during the exploration phase rounds $t\in\Expl$, each agent gets her entitlement $\allocit=\entitli$.
During the other rounds, we invoke \mmfs using an upper bound on agent $i$'s demand.
In Lemma~\ref{lem:ubfairness} with set both $\udubit$ and $\udit$ to this upper bound, and
consequently, the RHS is $0$.

\textbf{Strategy-proofness:}
This follows from Lemma~\ref{lem:spsp} and the following two observations:
first, under the event specified in Lemma~\ref{lem:confinterval}, $\udubi$ in
line~\eqref{lin:spallocub} of Algorithm~\ref{alg:mmflearnsp}
is an upper bound on $\udtruei$;
second, the allocations in the exploration phase
do not depend on reports of any agents from previous rounds, therefore $\allocpiit =\allocit$.
\qedwhite

\subsection{Proof of Theorem~\ref{thm:glmnsp}}
\label{app:glmnsp}

We will begin with three intermediate lemmas.
The first will help us establish the result for asymptotic efficiency, the second will help us  with
 asymptotic BNIC,
and the third is a technical lemma that will help us in controlling both loss terms.
In all three lemmas, we will let $\Ecal$ denote the event   that for all
$i,n$ $\demtrueit\in(\demlbit,
\demubit)$. By Lemma~\ref{lem:confinterval}, $\PP(\Ecal)\geq 1-\delta$.
% The first bounds $\LOTT$ in terms of the allocations.
% The second is a technical lemma.

\insertprethmspacing
\begin{lemma}
Let $\Cone$ be as defined in~\eqref{eqn:glmnspdefns}.
The following bound holds on $\LOTT$ under event $\Ecal$.
\label{lem:LOTTbound}
\[
\LOTT \leq 1 + \frac{\betaT}{\Cone}\sum_{i=1}^n\sum_{t=2}^T \min\left( \Cone,
\frac{\allocit}{\volit\sigmait\Ait}\right).
\]
\end{lemma}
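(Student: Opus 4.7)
The plan is to invoke Lemma~\ref{lem:nsploss} and then reduce the per-round bound on $(\allocit-\demtrueit)^+$ to the ``elliptic-potential'' form $\allocit/(\volit\sigmait\Ait)$ needed for the downstream analysis in Theorem~\ref{thm:glmnsp}.

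First I would apply Lemma~\ref{lem:nsploss} to get
\[
\LOTT \;\leq\; 1 + \sum_{i=1}^n\sum_{t=2}^T (\allocit-\demtrueit)^+ \,+\, \sum_{i=1}^n\sum_{t=2}^T \indfone(\allocit=\demit)(\demtrueit-\allocit)^+,
\]
and observe that the second sum vanishes on $\Ecal$: Algorithm~\ref{alg:mmflearnnsp} sets the reported demand to $\demit=\volit\udubit$, and Lemma~\ref{lem:confinterval} gives $\udubit\geq\udtruei$, so $\demit\geq\demtrueit$, which makes every summand in the second sum zero.

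Next I would bound each surviving term $(\allocit-\demtrueit)^+$ using the confidence interval. When this quantity is positive, Property~\ref{pro:allocnomorethandemand} gives $\allocit\leq\demit=\demubit$, and using~\eqref{eqn:udglmconfint}, the identity $\muinv(\threshi)=\udtruei\thetatruei$, the bound $\thetalbit\geq\thetamin$, and $\thetaubit-\thetalbit\leq 2\betat/\Ait$,
\[
\demubit-\demtrueit \;=\; \volit\,\muinv(\threshi)\,\frac{\thetatruei-\thetalbit}{\thetalbit\,\thetatruei}
\;=\; \demtrueit\,\frac{\thetatruei-\thetalbit}{\thetalbit}
\;\leq\; \frac{2\betat\,\demtrueit}{\thetamin\,\Ait}.
\]
Since a positive $(\allocit-\demtrueit)^+$ forces $\demtrueit\leq\allocit$, this yields the pointwise bound $(\allocit-\demtrueit)^+\leq 2\betat\,\allocit/(\thetamin\Ait)$.

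Finally, combining with the trivial estimate $(\allocit-\demtrueit)^+\leq\allocit\leq\volmax\udmax\leq 1\leq\betaT$ and rearranging using $\Cone=\thetamin/(2\sigmamax\volmax^2)$ together with $\betat\leq\betaT$, $\volit\leq\volmax$, and $\sigmait\leq\sigmamax$, I would obtain
\[
(\allocit-\demtrueit)^+ \;\leq\; \min\!\left(\betaT,\;\frac{2\betat\,\allocit}{\thetamin\Ait}\right)
\;\leq\; \frac{\betaT}{\Cone}\min\!\left(\Cone,\;\frac{\allocit}{\volit\sigmait\Ait}\right),
\]
and summing over $i,t$ gives the claim. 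The key step is the middle paragraph: writing $\demubit-\demtrueit$ as $\demtrueit$ times a ``theta-width'' (rather than bounding $\demubit-\demlbit$ crudely) is precisely what lets one invoke $\demtrueit\leq\allocit$ to pull $\allocit$ into the numerator, producing the allocation-dependent form required by the elliptic-potential argument in the proof of Theorem~\ref{thm:glmnsp}; the closing rearrangement into the $\min(\Cone,\cdot)$ form is then routine bookkeeping of the defined constants.
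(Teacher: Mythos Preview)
Your approach mirrors the paper's: it argues $\lott\leq\min(1,\lotort)$ directly in its Step~1 (your route through Lemma~\ref{lem:nsploss} plus the observation that the under-allocation sum vanishes is the same thing), then in Step~2 bounds each $(\allocit-\demtrueit)^+$ by the confidence width and pulls $\allocit$ into the numerator via $\demlbit\leq\allocit$, where you use the equivalent $\demtrueit\leq\allocit$.

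There is one bookkeeping slip in your final rearrangement. Your pointwise bound $\frac{2\betat\allocit}{\thetamin\Ait}$ is one factor of $\volmax$ short of what the stated constant $\Cone=\thetamin/(2\sigmamax\volmax^2)$ requires: your last inequality reduces to $\betat\,\volit\,\sigmait\leq\betaT\,\sigmamax\,\volmax^2$, which from $\volit\leq\volmax$, $\sigmait\leq\sigmamax$ only yields $\betaT\sigmamax\volmax$ on the right, so you implicitly need $\volmax\geq 1$---not assumed. The paper's Step~2 inserts an extra (loose) factor of $\volmax$ in the pointwise bound precisely so that two factors of $\volmax$ appear and the stated $\Cone$ works out. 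Separately, the claim $\allocit\leq\volmax\udmax$ is not justified since $\udubit$ is not clipped at $\udmax$ in Algorithm~\ref{alg:glmfbmodel}; simply use $\allocit\leq 1$.
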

\begin{proof}
We will divide this proof into three steps. In the first two steps, we will consider a single round
$t\geq 2$.
Recall the notation from the beginning of Appendix~\ref{sec:proofs_intermediate}.
% Next, by Lemma~\ref{lem:confinterval}, we have

\textbf{Step 1:}
First, we will argue that $\lott \leq \min(1, \lotort)$.
Observe that if $\loturt=0$, then $\alloct = \demubit \geq \demtrueit$ and therefore
$\lott \leq \lotudt = 0$ and the claim is true.
When $\loturt = 0$, then we can bound $\lott \leq \loturt+ \lotort=\lotort$.
The above claim follows by observing that $\lott\leq 1$ trivially since the total amount of the
resource is $1$.

\textbf{Step 2:}
Next, consider any user $i$. We will argue that $\max(\allocit-\demtrueit, 0) \leq
\frac{2\betat\volmax}{\thetamin}\frac{\allocit}{\Ait}$.
Let $\{\alloctruejt\}_{j=1}^n$ denote the allocations returned by \mmfs if we
were to invoke with the true demands $\{\demtruejt\}_{j=1}^n$ for round $t$.
Recall that $\alloctrueit\leq \demtrueit$.

First assume $\alloctrueit<\demtrueit$.
Then, by Property~\ref{pro:alloclessthandemand}, increasing $i$'s reported demand to
$\demubit (>\demtrueit)$ while keeping the demands of all other agents at $\{\demtruejt\}_{j\neq i}$
does not increase $i$'s allocation.
Moreover, increasing the demands of all other agents cannot increase $i$'s allocation and
hence $\allocit \leq \alloctrueit < \demtrueit$; therefore, the statement is true since the RHS is
positive.

Now assume $\alloctrueit=\demtrueit$.
If $\allocit\leq\alloctrueit=\demtrueit$, the statement is trivially true.
Therefore, let $\allocit > \alloctrueit=\demtrueit>\demlbit$.
As \mmfs does not allocate more than the reported demand, we also have $\allocit<\demubit$.
This results in the following bound.
\[
\max(\allocit-\demtrueit, 0)
\leq \demubit - \demlbit
\leq \frac{\muinv(\alpha)\volit}{\thetalbit\thetaubit}(\thetaubit-\thetalbit)
\leq \frac{\volmax\demlbit}{\thetalbit}\frac{2\betat}{\Ait}
\leq \frac{2\volmax\betat}{\thetamin}\frac{\allocit}{\Ait}.
\]

\textbf{Step 3:}
We now combine the results of the previous two steps to bound $\LOTT$ as follows.
\begin{align*}
\LOTT &= \sum_{t=1}^T \lott
    \leq 1 + \sum_{t=2}^T \lott
    \leq 1 + \sum_{t=2}^T \min(1, \lotort)
    = 1 + \sum_{t=2}^T \min\left(1, \sum_{i=1}^n \max(\allocit-\demtrueit, 0) \right)
\\
    &\leq 1 + \sum_{t=2}^T \sum_{i=1}^n\min\left(1, \max(\allocit-\demtrueit, 0) \right)
    \leq 1 + \sum_{t=2}^T \sum_{i=1}^n\min\left(1, \frac{2\volmax\betat}{\thetamin}
                                            \frac{\allocit}{\Ait} \right)
\\
    &\leq 1 + \sum_{i=1}^n\sum_{t=2}^T
    \min\left(1, \frac{2\volmax\betat\sigmait\volit}{\thetamin}
        \frac{\allocit}{\sigmait\volit\Ait} \right)
    \leq 1 + \betaT\sum_{i=1}^n\sum_{t=2}^T
    \min\left(1, \frac{1}{\Cone} \frac{\allocit}{\sigmait\volit\Ait} \right)
\\
   &\leq 1 + \frac{\betaT}{\Cone}\sum_{i=1}^n\sum_{t=2}^T
    \min\left(\Cone, \frac{\allocit}{\sigmait\volit\Ait} \right)
\label{eqn:LOTTboundstepthree}
\numberthis
\end{align*}
Here, the third step applies the results from step 1.
The fifth step simply uses $\min(a, \sum_i b_i) \leq \sum_i\min(a, b_i)$ when $b_i\geq 0$ for all $i$.
The sixth step applies the result from step 2.
The eighth step uses the fact that $\betat\geq 1$ for all possible choices of
$n,\delta, |D_t|$.
\end{proof}

\insertprethmspacing
\begin{lemma}
\label{lem:UpiiTbound}
Consider any agent $i\in\{1,\dots,n\}$ and assume all other agents are submitting truthfully.
Let $\UpiiT, \UiT$ be as defined in Theorem~\ref{thm:glmnsp}.
The following bound holds on $\UpiiT - \UiT$ under event $\Ecal$.
\emph{
\[
\UpiiT - \UiT \leq \frac{\Lipi\betaT}{\Cone\volmin} \sum_{j\neq i}\sum_{t=2}^{T}
         \min\left( \Cone, \frac{\allocjt}{\voljt\sigmajt\Ajt}\right).
\]
}
\end{lemma}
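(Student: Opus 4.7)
The plan is to adapt the per-round loss analysis of Lemma~\ref{lem:LOTTbound} into a per-round utility-gap analysis. First, I would reduce $\UpiiT - \UiT$ to a sum of deficits. Since $\utili(\cdot) \leq \utili(\udtruei)$ and $\utili$ is $\Lipi$-Lipschitz,
\[
\UpiiT - \UiT \;\leq\; \Lipi \sum_{t=1}^T \big(\udtruei - \allocit/\volit\big)^+,
\]
where $\allocit$ denotes agent $i$'s truthful allocation. Under event $\Ecal$, the truthful report satisfies $\demubit \geq \demtrueit$, so whenever this per-round deficit is strictly positive, \mmf{} cannot have left any residual (otherwise every agent, including $i$, would receive her reported demand), forcing $\sum_j \allocjt = 1$. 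Resource conservation then yields $(\demtrueit - \allocit)^+ \leq \sum_{j\neq i}(\allocjt - \demtruejt)^+ + (\sum_k \demtruekt - 1)^+$.

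Next I would absorb each $(\allocjt - \demtruejt)^+$ into a confidence-interval width, exactly mirroring Step~2 of the proof of Lemma~\ref{lem:LOTTbound}. On summands where $\allocjt > \demtruejt$, event $\Ecal$ gives $\demlbjt \leq \demtruejt < \allocjt$, hence
\[
(\allocjt - \demtruejt)^+ \;\leq\; \demubjt - \demlbjt \;\leq\; \frac{2\volmax\,\betat\,\allocjt}{\thetamin\,\Ajt}.
\]
Truncating against the trivial bound of $1$, dividing by $\volit \geq \volmin$ from the previous display, using $\betat \leq \betaT$ monotonically, and rewriting the constants via $\Cone = \thetamin/(2\sigmamax\volmax^2)$ converts the right-hand side into the claimed form $\frac{\Lipi\betaT}{\Cone\volmin}\sum_{j \neq i}\sum_t \min(\Cone,\,\allocjt/(\voljt\sigmajt\Ajt))$.

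The main obstacle is the leftover structural-scarcity term $(\sum_k \demtruekt - 1)^+$, which does not appear on the right-hand side of the lemma. The plan to eliminate it is to avoid the naive identity $\UpiiT - \UiT \leq \Lipi \sum_t (\udtruei - \allocit/\volit)^+$ and instead couple the two histories through an intermediate \emph{hypothetical-truthful} round-$t$ allocation: agent $i$ reports her true demand while the other agents keep whatever upper-confidence-bound reports arose in their respective history (truthful or $\pi$). By Properties~\ref{pro:reportlessthandemand}--\ref{pro:reportmorethandemand}, \mmf{}'s single-round strategy-proofness caps her per-round $\pi$-utility by the hypothetical, and under event $\Ecal$ the truthful utility equals the hypothetical as well, since $\demubit^T \geq \demtrueit$. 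The remaining gap between the two hypotheticals comes purely from differences in other agents' reported demands across the two histories, and a short \mmf{}-stability argument bounds it by $\sum_{j \neq i}(\demubjt^T - \demubjt^\pi)^+ \leq \sum_{j \neq i}(\demubjt^T - \demtruejt)$, where the last inequality uses $\demubjt^\pi \geq \demtruejt$ (event $\Ecal$ applied to the $\pi$-history). This excises the structural-scarcity term and leaves the analysis to the confidence-width bounds above.
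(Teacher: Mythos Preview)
Your coupling idea via a hypothetical-truthful allocation is on the right track and in fact coincides with the paper's intermediate (agent $i$ reports her true demand, the others report their $\pi$-history upper bounds). The gap is in what you do next. Your fix lands at $\sum_{j\neq i}(\demubjt - \demtruejt)$ (with $\demubjt$ taken from the truthful history) and then appeals to ``the confidence-width bounds above''. But that chain, $\demubjt - \demlbjt \leq \tfrac{2\volmax\betat}{\thetamin}\,\allocjt/\Ajt$, produces the crucial factor $\allocjt$ only via $\demlbjt \leq \allocjt$, and that inequality was available in your second paragraph precisely because you were conditioning on $\allocjt > \demtruejt \geq \demlbjt$. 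The demand-difference quantity your fix delivers is unconditional, so the same algebra yields at best $\demlbjt/\Ajt$ or $\demtruejt/\Ajt$. That neither matches the lemma's right-hand side nor telescopes against $\Ajt^2$ (which is built from agent $j$'s past \emph{allocations}) in Lemma~\ref{lem:minbound}, so the downstream $\sqrt{T}$ rate would be lost.

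The paper sidesteps this by anchoring at a different intermediate: the utility $\utilstarit$ that agent $i$ would obtain if \mmf{} were run with \emph{everyone's} true demand. Strategy-proofness (Properties~\ref{pro:reportlessthandemand}--\ref{pro:reportmorethandemand}) gives $\utilpiit \leq \utildagit$, where $\utildagit$ is your hyp-$\pi$ utility; lowering the others' demands from their $\pi$-history upper bounds down to $\demtruejt$ (valid under $\Ecal$) can only help $i$, so $\utildagit \leq \utilstarit$. What remains is $\utilstarit - \utilit$, handled by resource conservation on \emph{allocations}: $(\alloctrueit - \allocit)^+ \leq \sum_{j\neq i}(\allocjt - \alloctruejt)^+$, where $\alloctruejt$ is $j$'s allocation under all-true demands. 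The key observation is that $\allocjt > \alloctruejt$ forces $\alloctruejt = \demtruejt$ (otherwise Property~\ref{pro:alloclessthandemand} would pin $\allocjt = \alloctruejt$), and this supplies exactly the condition $\allocjt > \demtruejt \geq \demlbjt$ needed to extract the $\allocjt$ factor. Your route through demand differences bypasses this mechanism and therefore cannot recover the stated bound.
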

\begin{proof} We will divide this proof into three steps.

\textbf{Step 1:}
We will decompose $\UpiiT - \UiT$ as follows.
Let $\utilpiit$ denote the utility at time $t$ when agent $i$ is following $\pi$.
Let $\utildagit$ denote the utility at time $t$ when agent $i$ is follows $\pi$ until round $t-1$ and
then on round $t$ we invoke \mmfs (line~\ref{lin:nspalloc}, Algorithm~\ref{alg:mmflearnnsp}) with
the true demand $\demtrueit=\udtruei\volit$ for agent $i$, and the upper bounds
$\demubjt = \udubjt\voljt$ for all other agents $j$.
Let $\utilstarit$ denote the utility of agent $i$ when we invoke \mmfs with the true demands
for all agents.
We then have,
\begin{align*}
\UpiiT - \UiT = \sum_{t=1}^n \left( \utilpiit - \utilit \right)
    = \sum_{t=1}^n \left( \utilpiit -\utildagit + \utildagit - \utilstarit + \utilstarit - \utilit \right)
    \leq \sum_{t=1}^n \left(\utilstarit - \utilit \right).
\end{align*}
The last step is obtained via two observations.
First, $\utilpiit - \utildagit \leq 0$ since, by
Properties~\ref{pro:reportlessthandemand} and~\ref{pro:reportmorethandemand},
reporting true demands is a weakly dominant strategy for agent $i$.
Second, $\utildagit-\utilstarit\leq 0$ since the reported demands used for all other agents $j\neq
i$ when invoking \mmf,
is an upper bound on their true demands under $\Ecal$ as they are being truthful.
When comparing the reported demands in $\utildagit$ and $\utilstarit$, we see that
agent $i$'s reported demand stays
the same but the  reported demands of other agents decrease, which cannot
decrease the allocation and consequently the utility of agent $i$.

Next, using the Lipschtiz properties of $\utili$, we can write
$\utilstarit - \utilit = \utili(\alloctruei/\volit) - \utili(\allocit/\volit)
\leq \Lipi\max(\alloctruei/\volit - \allocit/\volit, 0)
\leq \frac{\Lipi}{\volmin}\max(\alloctruei - \allocit, 0)$.
Moreover, since increasing all user's demands can only increase the sum of allocations, we have
$\sum_{j=1}^n\alloctruejt \leq \sum_{j=1}^n\allocjt$.
Therefore, $\alloctrueit - \allocit \leq \sum_{j\neq i}(\allocjt - \alloctruejt)$
and hence $\max(\alloctrueit - \allocit, 0) \leq 
\sum_{j\neq i}\max(\allocjt - \alloctruejt, 0)$.
This leads us to the following bound.
\begin{align*}
\UpiiT - \UiT 
    \leq \frac{\Lipi}{\volmin}\sum_{t=1}^T \max\left(\alloctrueit - \allocit , 0\right)
    \leq \frac{\Lipi}{\volmin}\sum_{t=1}^T \sum_{j\neq i}
        \max\left(\allocjt - \alloctruejt, 0 \right).
\end{align*}

\textbf{Step 2:}
Consider any user $j\neq i$. Here, we will argue that $\max(\allocjt-\alloctruejt, 0) \leq
\frac{2\betat}{\thetamin}\frac{\allocjt}{\Ajt}$,
where, recall $\{\alloctrueiitt{k}{t}\}_{k=1}^n$
is the allocation returned by \mmfs if we use the true demands
$\{\demtrueiitt{k}{t}\}_{k=1}^n$ for all agents $k$.
First observe that if $\allocjt\leq \alloctruejt$, the statement is trivially true as the RHS is
positive.

Next, if $\allocjt>\alloctruejt$, we argue that $\alloctruejt=\demtruejt$; we will prove this via
its contrapositive.
% Observe that $\alloctrueiitt{k}{t}\leq \demtrueiitt{k}{t}$ for all agents $k$
% by properties of \mmf.
Observe that $\alloctrueiitt{j}{t}\leq \demtrueiitt{j}{t}$ by properties of \mmf.
Let us assume that $\alloctruejt<\demtruejt$.
Then, by Property~\ref{pro:alloclessthandemand}, increasing $j$'s reported demand to
$\demubjt (>\demtruejt)$ while keeping the reported demands of all other agents at
$\{\demtrueiitt{k}{t}\}_{k\neq j}$ does not increase her allocation.
Next, increasing the demands of all other agents cannot increase $j$'s allocation and
hence $\allocjt \leq \alloctruejt$. This proves the above statement.

By substituting $\demtruejt$ for $\alloctruejt$, we obtain the following bound.
\[
\max(\allocjt-\alloctruejt, 0)
% = \max(\allocjt-\demtruejt, 0)
\leq \demubjt - \demlbjt
\leq \frac{\volit\muinv(\alpha)}{\thetalbjt\thetaubjt}(\thetaubjt-\thetalbjt)
\leq \frac{\volmax\demlbjt}{\thetalbjt}\frac{2\betat}{\Ajt}
\leq \frac{2\volmax\betat}{\thetamin}\frac{\allocjt}{\Ajt}.
\]
Above, we have used the facts $\allocjt\leq \demubjt$, $\demtruejt\geq\demlbjt$,
and the expressions for
$\demubjt,\demlbjt,\thetaubit,\thetalbjt$.

\textbf{Step 3:}
Combining the results of the two previous steps, we obtain the following bound,
\begin{align*}
\UpiiT - \UiT
    &\leq \frac{\Lipi}{\volmin}\sum_{t=1}^n \sum_{j\neq i}
         \min\left( 1, \frac{2\volmax\betat}{\thetamin}\frac{\allocjt}{\Ajt} \right)
    \leq  \frac{\Lipi\betaT}{\Cone\volmin} \sum_{j\neq i}\sum_{t=2}^{T}
         \min\left( \Cone, \frac{\allocjt}{\voljt\sigmajt\Ajt}\right).
\end{align*}
Here, the first step observes that $\max(\allocjt-\alloctruejt, 0) \leq 1$
and the last step is obtained by repeating the calculations in~\eqref{eqn:LOTTboundstepthree}.
\end{proof}

\insertprethmspacing
\begin{lemma}
\label{lem:minbound}
Let $c>0$.
Then, $\sum_{t=1}^T \min\left(c, \frac{\allocit^2}{\sigmait^2\volit^2\Asqit}\right) \leq
       \frac{c}{\log(1+c)}\log(\Cthree T)$,
where $\Cthree$ is as given in~\eqref{eqn:glmnspdefns}.
\end{lemma}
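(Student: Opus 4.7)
The strategy is the standard self-normalised potential argument that appears throughout the linear/GLM bandit literature, adapted to our notation. For a fixed user $i$, abbreviate $a_t = \allocit^2/(\volit^2\sigmait^2)$, so that by the definition in~\eqref{eqn:thetaitdefn} we have $\Asqit = \sum_{s < t} a_s$ and the telescoping identity
\[
1 + \frac{a_t}{\Asqit} \;=\; \frac{\Asqiitt{i}{t+1}}{\Asqit}
\]
holds whenever $\Asqit > 0$. Since round~$1$ of Algorithm~\ref{alg:mmflearnnsp} allocates $\alloc_{i,1} = \entitli$ deterministically, $A_{i,2}^2 = \entitli^2/(\voliitt{i}{1}^2 \sigmaiitt{i}{1}^{\,2}) > 0$, so the telescoping identity is valid for every $t \geq 2$ and the $t=1$ term can be absorbed into the constant (or handled using the convention that $\min(c,\cdot) = c$ when the denominator vanishes, which is dominated by $\log(\Cthree T)$ provided $\Cthree T \geq 1+c$).

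\textbf{Key steps.} First, I apply Lemma~\ref{lem:logbound} with $x \leftarrow \min(c, a_t/\Asqit) \in [0,c]$, yielding
\[
\min\!\left(c,\,\frac{a_t}{\Asqit}\right) \;\leq\; \frac{c}{\log(1+c)}\,\log\!\left(1 + \min\!\left(c,\,\frac{a_t}{\Asqit}\right)\right) \;\leq\; \frac{c}{\log(1+c)}\,\log\!\left(1 + \frac{a_t}{\Asqit}\right).
\]
Summing the right-hand side from $t=2$ to $T$ and invoking the telescoping identity collapses the sum into $\frac{c}{\log(1+c)} \log\!\bigl(\Asqiitt{i}{T+1}/\Asqiitt{i}{2}\bigr)$. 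Second, I control the ratio inside the logarithm using the uniform bounds on $\volit$ and $\sigmait$ (Section~\ref{sec:onlinefairdivision}), together with $\allocit \leq 1$:
\[
\Asqiitt{i}{T+1} \;=\; \sum_{s=1}^{T} \frac{\allocis^2}{\volis^2\sigmasqis} \;\leq\; \frac{T}{\volmin^2\sigmamin^2}, \qquad \Asqiitt{i}{2} \;=\; \frac{\entitli^2}{\voliitt{i}{1}^{\,2}\sigmaiitt{i}{1}^{\,2}} \;\geq\; \frac{(\min_i \entitli)^2}{\volmax^2\sigmamax^2}.
\]
Taking the ratio produces exactly $\Cthree T$ as defined in~\eqref{eqn:glmnspdefns}.

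\textbf{Main obstacle.} The only real subtlety is the boundary term at $t=1$, where $\Asqiitt{i}{1} = 0$ makes the per-round quantity formally $\min(c,\infty) = c$; this must be absorbed into the $\log(\Cthree T)$ bound. This is harmless because the round-$1$ deterministic allocation of $\entitli$ provides the positive lower bound on $\Asqiitt{i}{2}$ that is needed in the denominator, and $\Cthree$ is exactly calibrated so that $\log(\Cthree T) \geq \log(1+c)$ under the regime of interest. Beyond this, the argument is purely algebraic: the key analytic tool is Lemma~\ref{lem:logbound}, and everything else follows from telescoping and the boundedness assumptions on the loads and sub-Gaussian constants.
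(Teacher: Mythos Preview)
Your proposal is correct and follows essentially the same route as the paper: telescope $\Asqiitt{i}{t+1}/\Asqiitt{i}{t}=1+a_t/\Asqit$, apply Lemma~\ref{lem:logbound}, and bound the resulting ratio $\Asqiitt{i}{T+1}/\Asqiitt{i}{2}$ by $\Cthree T$ using the assumptions on $\volit,\sigmait,\entitli$. If anything, you are more scrupulous than the paper about the $t=1$ boundary term (the paper's own proof silently sums from $t=1$ without addressing $\Asqiitt{i}{1}=0$, though in its application the sum actually starts at $t=2$), and your two-step use of Lemma~\ref{lem:logbound}---first with $x=\min(c,a_t/\Asqit)\in[0,c]$, then monotonicity of $\log(1+\cdot)$---is a cleaner justification than the paper's direct application, which implicitly relies on the (true but unstated) fact that $\min(c,x)\leq \tfrac{c}{\log(1+c)}\log(1+x)$ holds for all $x\geq 0$.
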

\begin{proof}
We first simplify $\Asqiitt{i}{T}$ as follows.
\begin{align*}
\Asqiitt{i}{T} = \Asqiitt{i,}{T-1} + \frac{\allocit^2}{\volit^2\sigmasqit}
        = \Asqiitt{i,}{T-1}\left( 1 + \frac{\allocit^2}{\Asqiitt{i,}{T-1}\volit^2\sigmasqit}\right)
        = \dots
        = \Asqiitt{i}{2} \prod_{s=1}^{T-1} \left(1 + \frac{\allocit^2}{\Asqiitt{i}{s}\volit^2\sigmasqit}\right)
\end{align*}
Now, observing that $\Asqiitt{i}{T} = \sum_{t=1}^{T-1} \frac{\allocit^2}{\sigmasqit\volit^2} \leq
\frac{T-1}{\sigmamin^2\volmin^2}$ and that
$\Asqiitt{i}{2} = \frac{\entitli^2}{\sigmasqiitt{i}{1}\voliitt{i}{1}^2} \geq
\frac{\entitli^2}{\sigmamax^2\volmax^2}$,
we obtain,
\begin{align*}
 \sum_{t=1}^T\log\left( 1 + \frac{\allocit^2}{\sigmait^2\volit^2\Asqit} \right)
= \log\left(\frac{\Asqiitt{i}{T}}{\Asqiitt{i}{2}}\right)
% \leq \log\left(\frac{\Asqiitt{i}{T}}{\Asqiitt{i}{2}}\right)
\leq \log(\Cthree T).
\end{align*}
By applying Lemma~\ref{lem:logbound}, we obtain
\begin{align*}
\sum_{t=1}^T \min\left(c, \frac{\allocit^2}{\sigmait^2\volit^2\Asqit}\right) \leq
    \frac{c}{\log(1+c)} \sum_{t=1}^T\log\left( 1 + \frac{\allocit^2}{\sigmait^2\volit^2\Asqit} \right)
   \leq \frac{c}{\log(1+c)} \log\left(\Cthree T\right).
\end{align*}
\end{proof}

We are now ready to prove the theorem.

% \begin{proof}[Proof of Theorem~\ref{thm:glmnsp}]
\textbf{Proof of Theorem~\ref{thm:glmnsp}.}
In this proof,
denote $Q_i = \sum_{t=2}^T \min\left( \Cone, \frac{\allocit}{\volit\sigmait\Ait}\right)$,
and recall the definitions of $\Cone,\Ctwo,\Cthree$ from~\eqref{eqn:glmnspdefns}.
By an application of the Cauchy-Schwarz inequality and Lemma~\ref{lem:minbound}
we obtain,
\[
Q^2_i \leq (T-1) \sum_{t=2}^T \min\left( \Cone^2, \frac{\allocit^2}{\volit^2\sigmasqit\Asqit}\right)
\leq (T-1) \Cone^2\Ctwo^2\log(\Cthree T).
\]

\textbf{Efficiency:} 
The claim for asymptotic efficiency follows by applying Lemma~\ref{lem:LOTTbound}.
\begin{align*}
\LOTT &\leq 1 + \frac{\betaT}{\Cone}\sum_{i=1}^n Q_i 
    \leq 1 + \Ctwo n \betaT\sqrt{(T-1) \log(\Cthree T)}.
\end{align*}

\textbf{Fairness:}
Similar to the argument in Appendix~\ref{app:glmsp},
this follows as a direct consequence of Lemma~\ref{lem:ubfairness}.
On round $1$, each agent gets her entitlement.
On each subsequent round, we invoke \mmfs using an upper bound on the agent's demand.

\textbf{Strategy-proofness:}
The claim for asymptotic Bayes-Nash incentive compatibility follows by applying Lemma~\ref{lem:UpiiTbound}.
\begin{align*}
\UpiiT - \UiT &\leq \frac{\Lipi\betaT}{\Cone\volmin}\sum_{i=1}^n Q_i 
    \leq \frac{\Lipi \Ctwo}{\volmin} (n-1) \betaT\sqrt{(T-1) \log(\Cthree T)}.
\end{align*}
\qedwhite

\insertprethmspacing
\begin{remark}\emph{%
In our proofs, if $\sigmait=0$, many of the above quantities become undefined, resulting in a
degeneracy.
However, if
at any instant $\sigmait=0$, it means we will have observed $\payoffthetatruei$ at
$\allocit/\volit$ exactly, and will know $\thetatruei$.
Since, $\payoffthetatruei$ is completely determined by $\thetatruei$, we can compute the agent's
true unit demand and use it from thereon.
We also require $\volit>0$ for the strategy-proof case for a similar reason as in
Remark~\ref{rem:detvolitgz}. This can be avoided by more careful book-keeping which postpones an
agent's exploration round if  $\volit>0$.
}
\end{remark}

\section{Proofs of Results in Section~\ref{sec:nonparfbmodel}}
\label{sec:proofs_nonpar}

In this section, we analyse the stochastic model with nonparametric payoffs.
We will first define some quantities and notation that will be used in our proofs.

Let $\lhk = (k-1)2^{-h}\udmax$ and $\rhk = k2^{-h}\udmax$
denote the left and right and points of the interval
$\Ihk$~\eqref{eqn:Ihk}. %i.e. $\lhk = (k-1)2^{-h}$, and $\rhk = k2^{-h}$.
Then, for a user $i$, let $\Deltaihk$ be,
\begin{align*}
\Deltaihk = \begin{cases}
\payoffi(\lhk) - \threshi, \quad\quad&\text{if } \lhk>\threshi, \\
\threshi - \payoffi(\rhk), \quad\quad&\text{if } \rhk<\threshi, \\
0, \quad\quad&\text{otherwise.}
\end{cases}
\numberthis \label{eqn:Deltaihk}
\end{align*}
For an interval $(h,k)$ with $\Deltaihk>0$, we define
\begin{align*}
\uithk = \frac{4\betat^2}{\left(\Deltaihk-L2^{-h}\right)^2}.
\numberthis \label{eqn:uithk}
\end{align*}
In our analysis, we will consider a desirable event $\Ecal$.
We will first show that $\PP(\Ecal)>1-\delta$ and then show that our bounds will hold when
$\Ecal$ is true, thus proving our theorems.
In order to define $\Ecal$, we first define $\Ecalit(h,k)$ below.
Recall that $\flbit,\fubit$ are defined in~\eqref{eqn:flbfub}.
\begin{align*}
\Ecalit(h,k) = \begin{cases}
\fubithk > \threshi\, \wedge \, \flbithk < \threshi, \quad\quad&\text{if }\udtruei\in\Ihk,\\
\flbithk > \threshi, \quad&\text{if } \!\begin{aligned}[t]
                                        &\Ihk\subset(\udtruei,\udmax],\;
                                        \payoffi(\lhk)-\threshi>\frac{L}{2^h}, \\
                                        &\VSithk>\uithk,
                                       \end{aligned} \\
\fubithk < \threshi, \quad&\text{if } \!\begin{aligned}[t]
                                        &\Ihk\subset[0,\udtruei), \;\;
                                        \threshi - \payoffi(\rhk)>\frac{L}{2^h}, \\
                                        &\VSithk>\uithk,
                                       \end{aligned} \\
{\rm True} \quad&\text{otherwise.}
% \fubithk > \threshi\; \cap \; \flbithk < \threshi,
\end{cases}
\numberthis
\label{eqn:Ecalit}
\end{align*}
We then define $\Ecalit, \Ecali, \Ecal$ as shown below.
\begin{align*}
\Ecalit = \bigcap_{h=0}^{\infty}\, \bigcap_{k=1}^{2^h}\, \Ecalit(h,k),
\hspace{0.4in}
\Ecali = \bigcap_{t=1}^{\infty}\,  \Ecalit,
\hspace{0.4in}
\Ecal = \bigcap_{i=1}^n\, \Ecali.
\numberthis \label{eqn:Ecal}
\end{align*}

Recall that $\Pit$ is the path  chosen by \ucrecordfbs for user $i$ in round $t$.
We will let $(\Hit, \Kit)$ denote the last node in $\Pit$.
It is worth observing that $\Pit$ is also the path chosen by \ucubtraverses when we are
in the exploration phase in Algorithm~\ref{alg:mmflearnsp}, or 
by \ucgetudrecs when $\udit = \normallocit$ in Algorithm~\ref{alg:mmflearnnsp}.
Next, we define $\Nithk,\Npithk$ as follows.
\begin{align*}
\Nithk = \sum_{s=1}^{t-1} \indfone((h,k)\in\Pis),
\hspace{0.4in}
\Npithk = \sum_{s=1}^{t-1} \indfone((H_{it}, K_{it}) = (h,k)).
\numberthis \label{eqn:Nithk}
\end{align*}
Here, $\Nithk$ is the number of points assigned to $(h,k)$ while $\Npithk$ only counts the
point if it was the last node in the path chosen by \ucrecordfb.
The following relations should be straightforward to verify.
\begin{align*}
&\Npithk \leq \Nithk,
\hspace{0.7in}
\sum_{(h,k)\in\Tcalit} \hspace{-0.0in} \Npithk = t-1,
\\
% &\hspace{0.7in}
&\sigmamin^2\VSithk \leq \Nithk \leq \sigmamax^2\VSithk.
\end{align*}
We will find it useful to define $\githk$ as shown below, which is similar to $\Bit$, but is defined
using the $\flbit,\fubit$ quantities.
\begin{align*}
\githk = \min\left(\fubithk-\threshi, \threshi-\flbithk\right)
\numberthis \label{eqn:githk}
\end{align*}

Recall that the intervals $\{\Ihk\}_{k=1}^{2^h}$ at each height partitions $[0, \udmax]$.
Hence, at each height $h$, there is a unique interval $\kthrh$ which contains the unique demand:
\begin{align*}
\text{for all $h$, }\quad \udtruei\in\Ihhkk{h}{\kthrh}.
\numberthis
\label{eqn:thresholdnodes}
\end{align*}
We will refer to this sequence $(0,1), (1,\kthrhh{1}), (2, \kthrhh{3}), \dots$ as the
\emph{threshold nodes}.

Observe that we use $\betattilde$ in the expressions for the confidence
intervals~\eqref{eqn:flbfub}, where $\ttilde$ is as given in~\eqref{eqn:betataudefn}.
The following statements are straightforward to verify.
\begin{align*}
t \leq \ttilde \leq 2t,
\hspace{1in}
\betattilde \leq \betatt{2t} \in \bigO\left(\sqrt{\log(nt/\delta)}\right).
\numberthis
\label{eqn:trelationttilde}
\end{align*}

Finally, in this proof, when we say that a node $(h,k)$ is an ancestor of $(h',k')$,
we mean that $(h,k)$ could be $(h', k')$, or its parent, or its parent's parent, etc.
We say that $(h',k')$ is a descendant of $(h,k)$ if $(h,k)$ is an ancestor of $(h',k')$.

We can now proceed to our analysis, which will be organised as follows.
% We will divide our proof into three parts.
In Appendix~\ref{sec:boundingPEc}, we will bound $\PP(\Ecal^c)$ for both
Algorithms~\ref{alg:mmflearnsp} and~\ref{alg:mmflearnnsp}.
In Appendices~\ref{sec:nonparintermediateresults} and~\ref{sec:nonparintermediatedefns},
we will establish some intermediate lemmas and definitions that will
be used in both algorithms.
Then, in Appendices~\ref{sec:nonparspproofs} and~\ref{sec:nonparnspproofs}, we will prove
our main results for 
Algorithms~\ref{alg:mmflearnsp} and~\ref{alg:mmflearnnsp} respectively.
% In Appendix
% Second, in Appendix~\ref{sec:boundingNithk}, we will bound the number of allocations away from
% $\udtruei$ for each user.
% Finally, in Appendix~\ref{sec:nonparmainproofs}, we will put it altogether to prove our main
% theorems.

\subsection{Bounding $\PP(\Ecal^c)$}
\label{sec:boundingPEc}

In this section, we will prove the following result which applies to both,
Algorithms~\ref{alg:mmflearnsp} and~\ref{alg:mmflearnnsp}.

\insertprethmspacing
\begin{lemma}
\label{lem:PEc}
Let $\Ecal$ be as defined in~\eqref{eqn:Ecal}.
Then $\PP(\Ecal^c)\leq \delta$.
\end{lemma}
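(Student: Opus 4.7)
The plan is to introduce an intermediate ``concentration'' event
\begin{align*}
\Ecal_0 \;=\; \bigcap_{i,\,t,\,(h,k)\,:\,\VSithk > 0}
  \Big\{\; |\fbarihk - \tilde{f}_i(h,k,t)| \leq \betat/\sqrt{\VSithk} \;\Big\},
\end{align*}
where $\tilde{f}_i(h,k,t) = \VSithk^{-1}\sum_{s<t\,:\,(h,k)\in\Pis} \payoffi(\normallocis)/\sigmais^2$ is the variance-weighted mean of the true payoff at the points assigned to $(h,k)$ before round $t$. I would then show (i) $\PP(\Ecal_0^c)\leq\delta$ and (ii) $\Ecal_0\subseteq\Ecal$, which together give the lemma.

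For (i) I would invoke Lemma~\ref{lem:mds} separately for each triple $(i,t,(h,k))$, with martingale differences $z_s = X_{is}-\payoffi(\normallocis)$, predictable multipliers $\gamma_s=\indfone((h,k)\in\Pis)\in\{0,1\}$, and scales $\sigma_s=\sigmais$. Because $\gamma_s^2=\gamma_s$, the quadratic variation equals $\VSithk$ exactly, and with target per-triple failure probability $\delta'=6\delta/(n\pi^2 t^3)$ the lemma's constant collapses to precisely $\betat$ from~\eqref{eqn:betataudefn}. A union bound then costs a factor $n$ for users, $\sum_t t^{-2}=\pi^2/6$ for rounds, and an extra factor of $t$ for the number of active nodes per round (each round assigns data along one path, so at most $t$ distinct nodes have $\VSithk>0$ at round $t$); these three factors are exactly absorbed by the $n\pi^2 t^3/6$ inside $\delta'$, leaving total failure mass $\leq\delta$.

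For (ii), work on $\Ecal_0$ and fix $i,t,(h,k)$. Because every $\normallocis$ assigned to $(h,k)$ lies in $\Ihk$, monotonicity of $\payoffi$ gives $\payoffi(\lhk)\leq\tilde{f}_i(h,k,t)\leq\payoffi(\rhk)$, while the $(L/\udmax)$--Lipschitz property yields $|\tilde{f}_i(h,k,t)-\payoffi(a)|\leq L/2^h$ uniformly in $a\in\Ihk$. Combining with the concentration bound and the fact that $\betattilde\geq\betat$ (from $t\leq\ttilde\leq 2t$), one obtains $\flbithk\leq\payoffi(a)\leq\fubithk$ for every $a\in\Ihk$; the case $\VSithk=0$ is trivial since $\flbithk=-\infty$ and $\fubithk=+\infty$. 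Each of the four cases of $\Ecalit(h,k)$ is then immediate: for $\udtruei\in\Ihk$ plug $a=\udtruei$ and use $\payoffi(\udtruei)=\threshi$; for $\Ihk\subset(\udtruei,\udmax]$ (resp.\ $\Ihk\subset[0,\udtruei)$) with $\Deltaihk-L/2^h>0$, use monotonicity at $\lhk$ (resp.\ $\rhk$) together with $\VSithk>\uithk=4\betat^2/(\Deltaihk-L/2^h)^2$ to force the stochastic slack $2\betattilde/\sqrt{\VSithk}$ strictly below $\Deltaihk-L/2^h$, yielding strict separation from $\threshi$; in the ``otherwise'' case there is nothing to prove.

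The main obstacle is obtaining the union bound tightly against what is in principle an infinite tree. The scaling $\betat^2=\Theta(\log(nt^3/\delta))$ is engineered precisely so that one factor of $t$ in the log absorbs the active-node count per round, $t^2$ absorbs the union over rounds via $\sum t^{-2}$, and the factor $n$ inside the log absorbs the union over users. A secondary subtlety is that the algorithm's confidence radius uses $\betattilde$ rather than the $\betat$ appearing in the concentration inequality; since $t\leq\ttilde\leq 2t$, $\betattilde^2$ exceeds $\betat^2$ by only an additive constant, which can be absorbed by a minor inflation of the constant in $\uithk$ without changing any asymptotic rate.
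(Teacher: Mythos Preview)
Your plan is essentially the paper's proof, lightly refactored. The paper proves three case-specific lemmas (Lemmas~\ref{lem:Pgoodsets}--\ref{lem:Pbadles}), each bounding $\PP(\Ecalit^c(h,k))$ by $6\delta/(n\pi^2 t^3)$ via exactly the application of Lemma~\ref{lem:mds} you describe (with $\gamma_s=\indfone((h,k)\in\Pis)$ and $z_s=\Xis-\payoffi(\normallocis)$), and then union-bounds over users, rounds, and active nodes just as you do. Your single intermediate event $\Ecal_0$ simply packages these three applications into one concentration statement and pushes the case analysis into a deterministic step~(ii); this is a cosmetic reorganisation, not a different route.

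One step in your union bound is misargued. The reason ``a factor $t$ suffices for the nodes'' is not that ``each round assigns data along one path'': a single path $\Pis$ touches one node at \emph{every} height along it, so the total number of nodes hit by $t$ paths is not bounded by $t$ from that reasoning alone. The paper's argument is instead that $\VSithk>0$ forces $(h,k)\in\Tcalit$, and $|\Tcalit|$ grows by $O(1)$ per round because each call to \ucrecordfb{} expands at most one leaf; hence there are $O(t)$ candidate nodes at round $t$, and for any node outside $\Tcalit$ one has $\VSithk=0$ and $\PP(\Ecalit^c(h,k))=0$ trivially. Separately, your observation that the $\betattilde$-versus-$\betat$ mismatch in cases~2 and~3 does not quite close with the constant $4$ in $\uithk$ is well taken; the paper's own derivation of Lemma~\ref{lem:Pbadgtr} is equally loose on precisely this point, so you are not missing an idea here.
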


The first step of the proof of this lemma considers nodes $(h,k)$ which satisfy the first case
in~\eqref{eqn:Ecalit}. We will first show the following result.

\insertprethmspacing
\begin{lemma}
\label{lem:Pgoodsets}
Consider user $i$ and let $(h,k)$ be such that $\udtruei\in\Ihk$.
Then,
\[
\forall\;t\geq 2, \quad
\PP\left(\fubithk<\threshi \vee \flbithk>\threshi\right) \leq \frac{6\delta}{n\pi^2t^3}.
\]
\end{lemma}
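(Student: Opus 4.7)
The plan is to show that the two sources of error in the confidence bounds—Lipschitz variation of $\payoffi$ over $\Ihk$, and stochastic deviation of $\fbarithk$ from its conditional mean—are each controlled by one of the two terms ($L \cdot 2^{-h}$ and $\betattilde \VSithk^{-1/2}$) added to $\fbarithk$ in the definition of $\flbithk, \fubithk$. If $\VSithk = 0$ then $\flbithk = -\infty$ and $\fubithk = +\infty$ by~\eqref{eqn:flbfub} and there is nothing to prove, so I will assume $\VSithk > 0$ throughout. Define the weighted conditional mean
\begin{align*}
\mu_{it}(h,k) \,=\, \frac{1}{\VSithk}\sum_{s=1}^{t-1} \frac{\payoffi(\normallocis)}{\sigmasqis}\,\indfone\!\left((h,k)\in\Pis\right),
\end{align*}
so that $\EE[\fbarithk \mid \text{allocations, past data}] = \mu_{it}(h,k)$.

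First I would bound $|\mu_{it}(h,k) - \threshi|$ using the Lipschitz property. Every round $s$ that contributes to this average satisfies $(h,k)\in\Pis$, which by construction (see~\eqref{eqn:Pit}) implies $\normallocis \in \Ihk$. Since $\payoffi(\udtruei) = \threshi$ and $\udtruei \in \Ihk$ by hypothesis, both $\normallocis$ and $\udtruei$ lie in $\Ihk$, an interval of width $\udmax/2^h$. Lipschitzness~\eqref{eqn:nonparfbmodel} of $\payoffi$ yields $|\payoffi(\normallocis) - \threshi| \leq L/2^h$ for each contributing $s$, and taking the weighted convex combination gives $|\mu_{it}(h,k) - \threshi| \leq L\cdot 2^{-h}$.

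The next step is the martingale concentration for $|\fbarithk - \mu_{it}(h,k)|$ via Lemma~\ref{lem:mds}. Let $\filtrs$ be the $\sigma$-field generated by all observations through round $s$, take $z_s = X_{is} - \payoffi(\normallocis)$, $\gamma_s = \indfone((h,k)\in\Pis)$, and $\sigma_s = \sigmais$. The key verification here is predictability: $\gamma_s$ is $\filtrsmo$-measurable because $\Pis$ is computed in \ucrecordfbs via a deterministic traversal whose conditions (membership in $\Tcal$, the test $\VSithk \geq \tauht$, and the comparison $\normallocis \lessgtr \tfrac{1}{2}(\lhk+\rhk)$) all depend only on the tree state at the beginning of round $s$ together with $\normallocis$, the latter being chosen from data available before $X_{is}$ is observed. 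With this in hand $\sum_s \gamma_s^2/\sigma_s^2 = \VSithk$ and $\sum_s \gamma_s z_s/\sigma_s^2 = \VSithk(\fbarithk - \mu_{it}(h,k))$, so applying Lemma~\ref{lem:mds} with $\delta' = 6\delta/(n\pi^2 t^3)$ (which satisfies $\delta' \leq 1/e$ for $n\geq 2, t\geq 2$) yields, with probability at least $1 - 6\delta/(n\pi^2 t^3)$,
\begin{align*}
|\fbarithk - \mu_{it}(h,k)| \,\leq\, \sqrt{\tfrac{(4+2\log 2)\log(n\pi^2 t^3/(6\delta))}{\VSithk}} \,=\, \frac{\betat}{\sqrt{\VSithk}} \,\leq\, \frac{\betattilde}{\sqrt{\VSithk}},
\end{align*}
where the last inequality uses~\eqref{eqn:trelationttilde} together with monotonicity of $\betat$ in $t$. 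Combining with the Lipschitz bound by the triangle inequality gives $|\fbarithk - \threshi| \leq \betattilde\VSithk^{-1/2} + L\cdot 2^{-h}$, i.e.\ $\flbithk \leq \threshi \leq \fubithk$, which is the complement of the event in the lemma's statement. The main technical subtlety is the predictability argument above, since the set of nodes along $\Pis$ depends on the evolving tree and on the threshold $\tauht$—but as noted, all of this information is frozen before $X_{is}$ is revealed, so Lemma~\ref{lem:mds} applies cleanly.
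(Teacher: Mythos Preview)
Your proposal is correct and follows essentially the same approach as the paper: both arguments apply Lemma~\ref{lem:mds} with $\gamma_s = \indfone((h,k)\in\Pis)$, $z_s = X_{is}-\payoffi(\normallocis)$, and $\sigma_s = \sigmais$, use the Lipschitz bound $|\payoffi(\normallocis)-\threshi|\leq L/2^h$ for $\normallocis\in\Ihk$, and invoke $\ttilde\geq t$ to pass between $\betat$ and $\betattilde$. Your presentation via the intermediate quantity $\mu_{it}(h,k)$ and the triangle inequality is a clean repackaging of the paper's implication-chain argument, but not a different route.
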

\begin{proof}
First observe that if $\VSithk=0$, the statement is true by the definition of $\flbithk,\fubithk$ as
the event inside $\PP()$ holds with probability $0$.
Therefore, assume $\VSithk>0$ going forward.
We first observe:
\begingroup
\allowdisplaybreaks
\begin{align*}
\fubithk < \threshi
&\iff \fbarithk + \frac{\betattilde}{\sqrt{\VSithk}} + \frac{\Lf}{2^h} < \threshi,
\\
&\iff \sum_{s=1}^{t-1}\frac{\indfone((h,k)\in\Pis)}{\sigmais^2} \Xis
    + \betattilde\sqrt{\VSithk} < \left(\threshi - \frac{\Lf}{2^h}\right) \VSithk,
\\
&\iff
\sum_{s=1}^{t-1}\frac{\indfone((h,k)\in\Pis)}{\sigmais^2}\left(\Xis-\payoffi(\normallocis)\right)
    + \betattilde\sqrt{\VSithk} \,<\,
    \\
    &\hspace{2in} \sum_{s=1}^{t-1}\frac{\indfone((h,k)\in\Pis)}{\sigmais^2}
        \left(\threshi - \frac{L}{2^h}-\payoffi(\normallocis)\right),
\\
&\implies
\sum_{s=1}^{t-1}\frac{\indfone((h,k)\in\Pit)}{\sigmais^2}\left(\Xis-\payoffi(\normallocis)\right)
    < - \betattilde\sqrt{\sum_{s=1}^{t-1}\frac{\indfone((h,k)\in\Pis)}{\sigmais^2}}.
\end{align*}
\endgroup
Here, the last step simply uses condition~\eqref{eqn:nonparfbmodel} to conclude,
% \[
$
\normallocis\in\Ihk \implies |\udtruei-\normallocis| \leq \frac{\udmax}{2^h}
\implies
\threshi-\payoffi(\normallocis) < \frac{\Lf}{2^h}.
$
% \]
By a similar argument, we can show,
\begin{align*}
\flbithk > \threshi
\implies
\sum_{s=1}^{t-1}\frac{\indfone((h,k)\in\Pit)}{\sigmais^2}\left(\Xis-\payoffi(\normallocis)\right)
    > \betattilde\sqrt{\sum_{s=1}^{t-1}\frac{\indfone((h,k)\in\Pis)}{\sigmais^2}}.
\end{align*}
We will now apply Lemma~\ref{lem:mds} with
$\gamma_s \leftarrow \indfone((h,k)\in\Pis)$,
$\sigma_s\leftarrow\sigmais$,
$z_s\leftarrow(\Xis-\payoffi(\normallocis))$.
We will let $\filtrs$ be the sigma-field generated by the data from all users up to round $s$.
Accordingly, $\{\gamma_s\}$ is predictable since the recommendations
$\{\udiitt{j}{s}\times\voliitt{j}{s}\}_{j=1}^n$
for all users are chosen based on their past data, the allocations are chosen based on these
recommendations via \mmf, and finally $\Pis$ depends on this allocation and $i$'s past data in the
tree; i.e. $\gamma_s$ is $\filtrsmo$--measurable.
Moreover, $\sigma_s$ is predictable by our assumptions.
Finally, $\EE[z_s|\filtrtt{s-1}] = 0$.
By combining the two previous displays, we have,
\begin{align*}
&\PP(\fubithk < \threshi \vee \flbithk > \threshi)
\\
&\hspace{0.2in}\leq
\PP\left(
\left|
\sum_{s=1}^{t-1}\frac{\indfone((h,k)\in\Pit)}{\sigmais^2}\left(\Xis-\payoffi(\normallocis)\right)
\right|
    > \betattilde\sqrt{\sum_{s=1}^{t-1}\frac{\indfone((h,k)\in\Pis)}{\sigmais^2}}.
\right)
\\
&\hspace{0.2in}
\leq\; \frac{6\delta}{n\pi^2\ttilde^3}
\leq\; \frac{6\delta}{n\pi^2 t^3}.
\end{align*}
The first step uses Lemma~\ref{lem:mds} and
the last step follows from the observation $\ttilde\geq t$.
\end{proof}

Next, we consider nodes $(h,k)$ which satisfy the second
case in~\eqref{eqn:Ecalit}, for which we have the following result.

\insertprethmspacing
\begin{lemma}
\label{lem:Pbadgtr}
Consider user $i$ and let $(h,k)$ be such that \emph{$\Ihk\subset(\udtruei, \udmax]$} and
$\Deltaihk=\payoffi(\lhk)-\threshi> L/2^h$.
Let $\uithk$ be as defined in~\eqref{eqn:uithk}.
Then,
\[
\forall\;t\geq 2, \quad
\PP\left(\flbithk>\threshi \,\wedge\, \VSithk\geq\uithk\right) \leq \frac{6\delta}{n\pi^2t^3}.
\]
\end{lemma}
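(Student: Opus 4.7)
Echoing the structure of Lemma~\ref{lem:Pgoodsets}, the strategy is to turn the claimed tail event into a martingale-deviation event on $\sum_s \gamma_s z_s/\sigmais^2$ with $\gamma_s = \indfone((h,k)\in\Pis)$ and $z_s = \Xis - \payoffi(\normallocis)$, then invoke Lemma~\ref{lem:mds}. The new ingredient compared to Lemma~\ref{lem:Pgoodsets} is that we no longer place $\udtruei$ inside $\Ihk$, so the Lipschitz window $L/2^h$ is not the relevant scale; instead the payoff gap $\Deltaihk = \payoffi(\lhk) - \threshi$ provides the separation, and the sample-size threshold $\uithk$ is precisely calibrated to convert that separation into an inevitable confidence-bound crossing.

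Concretely, I would work with the failure event $\{\flbithk \leq \threshi\} \cap \{\VSithk \geq \uithk\}$ -- the natural bad event for the second case of $\Ecalit(h,k)$ in~\eqref{eqn:Ecalit}, which is the relevant complement in the union bound for Lemma~\ref{lem:PEc} (I read the inequality direction in the statement this way, since $\{\flbithk > \threshi\}$ is the target \emph{good} conclusion of the confidence procedure on this case, not the failure mode). On $\{\VSithk > 0\}$, the definition of $\flbithk$ in~\eqref{eqn:flbfub} makes this event equivalent to
\[
\fbarithk \;\leq\; \threshi + \betattilde/\sqrt{\VSithk} + L/2^h,
\]
and on $\{\VSithk = 0\}$ it is vacuous by convention. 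Introduce weights $w_s = \indfone((h,k)\in\Pis)/\sigmais^2$ so that $\fbarithk = \bigl(\sum_s w_s \Xis\bigr)/\VSithk$. Whenever $w_s \neq 0$, $\normallocis \in \Ihk$, and since $\payoffi$ is non-decreasing with $\lhk \geq \udtruei$, we have $\payoffi(\normallocis) \geq \payoffi(\lhk) = \threshi + \Deltaihk$. Multiplying by $\VSithk$ and rearranging, the event implies
\[
\sum_{s=1}^{t-1} w_s\bigl(\payoffi(\normallocis) - \Xis\bigr) \;\geq\; (\Deltaihk - L/2^h)\,\VSithk - \betattilde\sqrt{\VSithk}.
\]

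The decisive algebraic step is to plug in $\VSithk \geq \uithk = 4\betattilde^2/(\Deltaihk - L/2^h)^2$, which yields $\betattilde \leq (\Deltaihk - L/2^h)\sqrt{\VSithk}/2$ and hence $(\Deltaihk - L/2^h)\VSithk - \betattilde\sqrt{\VSithk} \geq \betattilde\sqrt{\VSithk}$. The failure event therefore forces
\[
\left|\sum_{s=1}^{t-1} w_s \bigl(\Xis - \payoffi(\normallocis)\bigr)\right| \;\geq\; \betattilde\sqrt{\VSithk}.
\]
The proof concludes by verbatim reuse of the sub-Gaussian martingale tail in Lemma~\ref{lem:mds} as applied in Lemma~\ref{lem:Pgoodsets} -- $\gamma_s$ is $\filtrsmo$-measurable because $\Pis$ is built from the allocations and tree state at the start of round $s$, while $z_s$ is conditionally $\sigmais$-sub-Gaussian with mean zero -- choosing $\delta' \leftarrow 6\delta/(n\pi^2\ttilde^3)$ and using $\ttilde \geq t$ from~\eqref{eqn:trelationttilde} to obtain the bound $6\delta/(n\pi^2 t^3)$.

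The main obstacle is purely the arithmetic in the unpacking step: we need $\Deltaihk - L/2^h$ (hence the hypothesis $\Deltaihk > L/2^h$) to exceed the Lipschitz bias, and $\uithk$ must be exactly small enough to leave $\betattilde\sqrt{\VSithk}$ of martingale deviation room once the gap is halved. Both of these requirements are baked into~\eqref{eqn:uithk}, so beyond careful bookkeeping, the remaining work -- verifying predictability and applying Lemma~\ref{lem:mds} with the union-bound-friendly choice of $\delta'$ -- is a one-line repetition of the argument in Lemma~\ref{lem:Pgoodsets}.
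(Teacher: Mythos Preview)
Your proposal is correct and follows essentially the same approach as the paper: unwind $\flbithk \leq \threshi$, use monotonicity of $\payoffi$ on $\Ihk$ to replace the empirical mean by $\threshi + \Deltaihk$, invoke the threshold $\VSithk \geq \uithk$ to leave a martingale margin, and finish with Lemma~\ref{lem:mds} exactly as in Lemma~\ref{lem:Pgoodsets}. You also correctly diagnose the inequality direction in the statement as a typo --- the paper's own proof works with $\flbithk < \threshi$, and the usage in the proof of Lemma~\ref{lem:PEc} confirms that the failure event $\Ecalit^c(h,k)$ is what is being bounded.

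One small bookkeeping remark: you write $\uithk = 4\betattilde^2/(\Deltaihk - L2^{-h})^2$, whereas~\eqref{eqn:uithk} defines it with $\betat$. With $\betattilde$ your ``decisive algebraic step'' goes through cleanly; with the paper's $\betat$ one only gets $(2\betat - \betattilde)\sqrt{\VSithk}$ on the right, and since $\betattilde \geq \betat$ this does not immediately collapse to $\betat\sqrt{\VSithk}$. The paper's own derivation has the same slippage between $\betat$ and $\betattilde$ here (and the two are used interchangeably in the statement of Lemma~\ref{lem:Nbadgtr}), so this is a shared imprecision rather than a defect of your argument; either convention works for the downstream union bound in Lemma~\ref{lem:PEc} up to constants.
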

\begin{proof}
Consider any $t$.
First, the condition on $\VSithk$ implies the following,
\begingroup
\allowdisplaybreaks
\begin{align*}
\VSithk>\frac{4\betat^2}{\left(\Deltaihk-\Lf 2^{-h}\right)^2}
&\;\implies\;
\payoffi(\lhk) - \threshi - \frac{\Lf}{2^h} >  \frac{2\betat}{\sqrt{\VSithk}}
\\
&\;\implies\;
\forall a\in\Ihk, \;\;
\threshi - \frac{\Lf}{2^h}- \payoffi(a) <  -\frac{2\betat}{\sqrt{\VSithk}}.
\numberthis \label{eqn:Withkcondnconcln}
\end{align*}
\endgroup
We now observe:
\begingroup
\allowdisplaybreaks
\begin{align*}
&\VSithk\geq\uithk \,\wedge\,
\flbithk < \threshi
\\
&\hspace{0.2in}
\iff
    \VSithk\geq\uithk
\quad \wedge \quad 
\fbarithk - \frac{\betattilde}{\sqrt{\VSithk}} < \frac{\Lf}{2^h} + \threshi 
\\
% &\iff \sum_{s=1}^{t-1}\frac{\indfone((h,k)\in\Pit)}{\sigmais^2} \Xis
%     + \betattilde\sqrt{\VSithk} < \left(\threshi - \frac{\Lf}{2^h}\right) \VSithk,
% \\
&\hspace{0.2in}\iff
    \VSithk\geq\uithk \quad \wedge \quad 
\sum_{s=1}^{t-1}\frac{\indfone((h,k)\in\Pit)}{\sigmais^2}\left(\Xis-\payoffi(\normallocis)\right)
    - \betattilde\sqrt{\VSithk}
\\
&\hspace{3in}
    \,<\,
    \sum_{s=1}^{t-1}\frac{\indfone((h,k)\in\Pit)}{\sigmais^2}
        \left(\threshi + \frac{L}{2^h}-\payoffi(\normallocis)\right),
    \\&\hspace{0.1in}
\\
&\hspace{0.21in}\implies
\sum_{s=1}^{t-1}\frac{\indfone((h,k)\in\Pit)}{\sigmais^2}\left(\Xis-\payoffi(\normallocis)\right)
    < - \betattilde\sqrt{\sum_{s=1}^{t-1}\frac{\indfone((h,k)\in\Pis)}{\sigmais^2}}.
\\
&\hspace{0.21in}\implies
\left|\sum_{s=1}^{t-1}\frac{\indfone((h,k)\in\Pit)}{\sigmais^2}\left(\Xis-\payoffi(\normallocis)\right)\right|
    > \betat\sqrt{\sum_{s=1}^{t-1}\frac{\indfone((h,k)\in\Pis)}{\sigmais^2}}.
\end{align*}
\endgroup
In the third step, we have used~\eqref{eqn:Withkcondnconcln} along with fact that when
$(h,k)\in\Pit$, then $\allocit\in\Ihk$ as in each step in the while loop of \ucrecordfb, we
choose the child which contains $\normallocit$.
In the last step we have considered the absolute value of the LHS and used the fact that
 $\ttilde\geq t$.
The claim follows by applying Lemma~\ref{lem:mds} with the same
$\gamma_s$, $\sigma_s$ , $z_s$, and $\filtrs$
as we did in the proof of Lemma~\ref{lem:Pgoodsets}.
\end{proof}

Next, we consider nodes $(h,k)$ which satisfy the third
case in~\eqref{eqn:Ecalit}.
The proof of the following lemma follows along similar lines to that of Lemma~\ref{lem:Pbadgtr}.

\insertprethmspacing
\begin{lemma}
\label{lem:Pbadles}
Consider user $i$ and let $(h,k)$ be such that $\Ihk\subset[0, \udtruei)$ and
$\Deltaihk=\threshi - \payoffi(\rhk)> L/2^h$.
Let $\uithk$ be as defined in~\eqref{eqn:uithk}.
Then,
\[
\forall\;t\geq 2, \quad
\PP\left(\fubithk<\threshi \,\wedge\, \VSithk\geq\uithk\right) \leq \frac{6\delta}{n\pi^2t^3}.
\]
\end{lemma}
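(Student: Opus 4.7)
The plan is to mirror the proof of Lemma~\ref{lem:Pbadgtr} exactly, swapping the roles of upper and lower confidence bounds, which is natural because the node $(h,k)$ now lies strictly to the left of $\udtruei$ and the payoff values in $\Ihk$ are all uniformly \emph{below} $\threshi$ by a margin controlled by $\Deltaihk$.

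First I would translate the variance condition $\VSithk \geq \uithk$ into a statement about the payoff on $\Ihk$. By the same algebra as in~\eqref{eqn:Withkcondnconcln}, the condition $\VSithk \geq 4\betat^2/(\Deltaihk - L\,2^{-h})^2$ rearranges to
\[
\threshi - \payoffi(\rhk) - \frac{L}{2^h} \;>\; \frac{2\betat}{\sqrt{\VSithk}},
\]
and since $\payoffi$ is non-decreasing and $\Ihk \subset [0,\udtruei)$, every $a \in \Ihk$ satisfies $\payoffi(a) \leq \payoffi(\rhk)$, so
\[
\forall\, a \in \Ihk, \quad
\payoffi(a) - \threshi + \frac{L}{2^h} \;<\; -\frac{2\betat}{\sqrt{\VSithk}}.
\]

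Next I would unpack the event $\fubithk < \threshi$ using the definition in~\eqref{eqn:flbfub}. Writing out $\fbarithk + \betattilde/\sqrt{\VSithk} + L/2^h < \threshi$ and multiplying through by $\VSithk$, I would add and subtract $\sum_{s} \sigmais^{-2}\indfone((h,k)\in\Pis)\payoffi(\normallocis)$ to obtain
\[
\sum_{s=1}^{t-1}\frac{\indfone((h,k)\in\Pis)}{\sigmais^2}\!\left(\Xis - \payoffi(\normallocis)\right)
+ \betattilde\sqrt{\VSithk}
\;<\;
\sum_{s=1}^{t-1}\frac{\indfone((h,k)\in\Pis)}{\sigmais^2}\!\left(\threshi - \tfrac{L}{2^h} - \payoffi(\normallocis)\right).
\]
Here, whenever $(h,k) \in \Pis$ we have $\normallocis \in \Ihk$ (the while-loop in \ucrecordfbs descends to a child containing $\normallocis$), so the bracketed term on the right is strictly greater than $2\betat/\sqrt{\VSithk}$ by the previous display. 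Combining everything and using $\betattilde \leq \betatt{2t}$ together with $\ttilde \geq t$, the conjunction $\fubithk < \threshi \wedge \VSithk \geq \uithk$ forces
\[
\left|\sum_{s=1}^{t-1}\frac{\indfone((h,k)\in\Pis)}{\sigmais^2}\!\left(\Xis - \payoffi(\normallocis)\right)\right|
\;>\; \betat\sqrt{\sum_{s=1}^{t-1}\frac{\indfone((h,k)\in\Pis)}{\sigmais^2}}.
\]

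Finally I would invoke Lemma~\ref{lem:mds} with the same choices $\gamma_s \leftarrow \indfone((h,k)\in\Pis)$, $\sigma_s \leftarrow \sigmais$, $z_s \leftarrow \Xis - \payoffi(\normallocis)$, and filtration $\filtrs$ generated by all users' data through round $s$, verifying predictability exactly as in the proof of Lemma~\ref{lem:Pbadgtr} (the assignment $\Pis$ is determined by past data plus the \mmf-allocation). With the tail parameter chosen so that the right-hand tail probability is $6\delta/(n\pi^2 t^3)$, the bound follows. The main obstacle is purely bookkeeping: ensuring that the sign flips compared to Lemma~\ref{lem:Pbadgtr} (payoff below $\threshi$ rather than above, evaluated at $\rhk$ rather than $\lhk$) propagate consistently through the definition of $\fub$, so that the inequality direction at the end matches what Lemma~\ref{lem:mds} controls via the absolute value.
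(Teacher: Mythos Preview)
Your overall plan---mirror Lemma~\ref{lem:Pbadgtr}---is exactly what the paper does, but the inequality direction you unpack is the wrong one, and this breaks the key step. Since $\Ihk\subset[0,\udtruei)$, the payoff on $\Ihk$ lies below $\threshi$, so $\fubithk<\threshi$ is the \emph{expected} outcome once $\VSithk$ is large, not a rare event. Concretely, your display gives
\[
\sum_{s=1}^{t-1}\frac{\indfone((h,k)\in\Pis)}{\sigmais^2}\bigl(\Xis-\payoffi(\normallocis)\bigr)+\betattilde\sqrt{\VSithk}\;<\;\text{(something exceeding }2\betat\sqrt{\VSithk}\text{)},
\]
which is perfectly consistent with the martingale sum on the left being zero; it does not force its absolute value to exceed $\betat\sqrt{\VSithk}$. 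Your closing sentence correctly identifies sign propagation as the main risk, and this is exactly where it went astray.

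The direction in the lemma statement is a typo shared with the statement of Lemma~\ref{lem:Pbadgtr}: note that the paper's proof of that lemma actually manipulates the conjunction $\VSithk\geq\uithk\wedge\flbithk<\threshi$, the complement of what its statement writes. What Lemma~\ref{lem:PEc} actually needs here is a bound on the complement of the third case in~\eqref{eqn:Ecalit}, namely on $\PP(\fubithk\geq\threshi\,\wedge\,\VSithk\geq\uithk)$. If you unpack $\fubithk>\threshi$ instead, your inequality reverses to the martingale sum plus $\betattilde\sqrt{\VSithk}$ being \emph{greater} than the same right-hand side, which now forces the sum to be large and positive, and Lemma~\ref{lem:mds} applies just as you describe.
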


We are now ready to prove Lemma~\ref{lem:PEc}.

\textbf{\textit{Proof of Lemma~\ref{lem:PEc}.}}
Recall the definitions in~\eqref{eqn:Ecalit} and~\eqref{eqn:Ecal}.
By the union bound, we first write,
\[
\PP(\Ecali^c) \leq \sum_{t=1}^\infty \sum_{(h,k)} \PP(\Ecalit^c(h,k)).
\]
We first note that in $t$ rounds, at most $t$ nodes will have been expanded.
For any node that has not been expanded, $\PP(\Ecalit^c(h,k)) = 0$;
this is because, for the first case in~\eqref{eqn:Ecalit}, we have
$\fubithk=\infty$ and $\flbithk=-\infty$ by definition~\eqref{eqn:flbfub};
moreover, $\VSithk=0$ for unexpanded nodes, and therefore the second and third cases do not occur.
Therefore, there are at most $t$ non-zero terms in the inner summation above.

Next, note that any node $(h,k)$ for which $\PP(\Ecalit^c(h,k))$ is non-zero, satisfies
$\PP(\Ecalit^c(h,k))\leq 6\delta/(n\pi^2t^3)$
by Lemmas~\ref{lem:Pgoodsets},~\ref{lem:Pbadgtr}, and~\ref{lem:Pbadles}.
Therefore,
\[
\PP(\Ecali^c)
\leq \sum_{t=1}^\infty \sum_{(h,k)\in\Tcalit} \frac{6\delta}{n\pi^2t^3}
\leq \sum_{t=1}^\infty \frac{6\delta}{n\pi^2t^2}
\leq \frac{\delta}{n}.
\]
The last step above uses the identity $\sum t^{-2} = \pi^2/6$.
The claim follows by a final application of the union bound over the
$n$ users $\PP(\Ecal^c) \leq \sum_{i=1}^n\PP(\Ecali^c) \leq \delta$.
\qedwhite

\subsection{Some Intermediate Results}
\label{sec:nonparintermediateresults}

In this section, we will prove some technical lemmas that will be used in the proofs
of both Theorems~\ref{thm:nonparsp} and~\ref{thm:nonparnsp}.
The first shows that both $\Blbithk, \Bubithk$ are non-decreasing with $k$ for a given $h$.
% which will be used in a number of different places in our proof.

\insertprethmspacing
\begin{lemma}
\label{lem:BubBlbmonotonicity}
Let $h\geq 0$ and $k_1,k_1\in\NN$ such that $0\leq k_1 < k_2 \leq 2^h$.
Then, the following hold
\[
\Blbithkone \leq \Blbithktwo,
\hspace{0.3in}
\Bubithkone \leq \Bubithktwo.
\]
\end{lemma}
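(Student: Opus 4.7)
The plan is to prove the two monotonicity statements by induction on the round index $t$, leveraging the fact that the algorithm explicitly maintains these invariants through the subroutines \ucupdateboundsfornodesatsamedepth{} (line~\ref{lin:ucupdateboundsfornodesatsamedepth}) and \ucrefreshboundsintree{} (line~\ref{lin:ucrefreshboundsintree}). First I would reduce the claim to the case where both $(h,k_1)$ and $(h,k_2)$ are expanded: for any unexpanded node $(h,k')$, equation~\eqref{eqn:Blb} sets $\Blbiitthhkk{i}{t}{h}{k'}$ either to $0$ or to $\Blbiitthhkk{i}{t}{h}{k''}$ for the largest expanded $k''<k'$, and~\eqref{eqn:Bub} handles $\Bub$ symmetrically. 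A short case analysis over whether each of $k_1,k_2$ is expanded shows that, given monotonicity among expanded nodes at height $h$, this inheritance rule extends monotonicity to all node pairs. The base case is trivial, since at initialization only the root $(0,1)$ is expanded and the claim is vacuous at every height.

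For the inductive step, there are three places where $\Blb$ or $\Bub$ are modified during round $t$: $(a)$ \ucupdateboundsonpathtoroot{} invoked at the end of \ucrecordfb{} (line~\ref{lin:callupdateboundsonpath}); $(b)$ \ucexpandnode{} when a new leaf has accumulated enough data; and $(c)$ \ucrefreshboundsintree{} whenever $t=\ttilde$. I would dispatch $(c)$ first: the routine recomputes bounds height by height from the deepest expanded level upward, explicitly enforcing monotonicity via the running variables $\bcheckmax$ (scanning $k$ increasingly, taking max) and $\bhatmin$ (scanning $k$ decreasingly, taking min), so the invariant holds by construction after its call. For $(b)$, \ucexpandnode{} inserts both children of $(h,k)$ with the same $(\ell,u)$ returned by \ucgetboundsforunexpandednode{}; a short recursion on the definition of this helper (line~\ref{lin:ucgetboundsforunexpandednode}) shows $\ell$ equals a $\Blb$ value at the nearest expanded neighbor to the left of $(h+1,2k-1)$ (or $0$ if none) and $u$ equals a $\Bub$ value at the nearest expanded neighbor to the right (or $1$ if none). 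By the inductive hypothesis these already respect monotonicity, so inserting two copies at adjacent positions at height $h+1$ preserves it.

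Case $(a)$ is the crux. The routine walks from the touched leaf up to the root; at each height it sets $\Blbithk \leftarrow \max(\flbithk,\, \Blbiitthhkk{i,}{t-1}{h}{k},\, \Blbiitthhkk{i}{t}{h+1}{2k-1})$ and $\Bubithk$ analogously, then invokes \ucupdateboundsfornodesatsamedepth{}, which sweeps rightward from $k+1$ overwriting $\Blbiitthhkk{i}{t}{h}{k'} \leftarrow \Blbithk$ whenever the stored value is strictly smaller, and breaking at the first index $k^\star$ where it is not. After this sweep I would establish monotonicity at height $h$ by splitting: among $k<k'<k^\star$ all values equal the constant $\Blbithk$; at $k'\geq k^\star$ the inductive hypothesis applied to the pre-sweep state (none of these values were touched) yields monotonicity and each dominates $\Blbithk$ by the break condition; for $k'<k$ the untouched value $\Blbiitthhkk{i,}{t-1}{h}{k'}$ already satisfied $\Blbiitthhkk{i,}{t-1}{h}{k'} \leq \Blbiitthhkk{i,}{t-1}{h}{k} \leq \Blbithk$ by the inductive hypothesis and the fact that $\Blbithk$ only increases. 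The $\Bub$ side and the leftward sweep are symmetric.

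The main obstacle will be the bookkeeping between consecutive heights in case $(a)$: when the walk moves from height $h+1$ to height $h$, the parent's update consumes $\Blbiitthhkk{i}{t}{h+1}{2k-1}$, the left-child value just refreshed by the height-$(h+1)$ sweep. I would need to argue that the collection $\{\Blbiitthhkk{i}{t}{h+1}{2k'-1}\}_{k'}$ indexed over sibling parents at height $h$ is itself monotone in $k'$; this follows from the just-established height-$(h+1)$ invariant restricted to odd-indexed columns, and feeds cleanly into the subsequent height-$h$ sweep, enabling the split-argument above to extend monotonicity at the parents' level as well.
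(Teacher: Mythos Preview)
Your proposal is correct and follows essentially the same approach as the paper: an induction on the round index that dispatches the two update routines (\ucrefreshboundsintree{} by direct inspection of the $\bcheckmax$/$\bhatmin$ sweep, and \ucupdateboundsonpathtoroot{} by observing that the single updated value at each height can only increase and the subsequent \ucupdateboundsfornodesatsamedepth{} sweep propagates it rightward). The paper's own proof is considerably terser and only treats these two cases at a single height; your additional handling of unexpanded nodes via~\eqref{eqn:Blb}--\eqref{eqn:Bub}, of \ucexpandnode{}, and of the inter-height bookkeeping are reasonable elaborations the paper leaves implicit, but they do not change the core argument.
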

\begin{proof}
We will prove the first result. The proof of the second result follows analogously.
Recall that we update the bounds at two different places in
Algorithms~\ref{alg:nonparfbmodelpart1}--\ref{alg:nonparfbmodelpart4}.
First, the \ucrefreshboundsintrees method (line~\ref{lin:ucrefreshboundsintree})
recomputes the lower confidence bounds $\flbit(h,k)$ and $\Blbit(h,k)$  for all expanded
nodes $(h,k)\in\Tcalit$.
This is invoked by \ucgetudrecs (line~\ref{lin:nonparucgetudrec}) when $t=\ttilde$, i.e.
at the beginning of rounds $1, 2, 4, 8,$ etc.
Second, the \ucupdateboundsonpathtoroots (line~\ref{lin:ucupdateboundsonpathtoroot})
method, which is invoked when we add a new data
point in \ucrecordfbs (line~\ref{lin:nonparucrecordfb}), recomputes $\flbit$ for the nodes the data
point was assigned to, and updates $\Blbit$ for nodes whose values may have been affected.
We will show that in the first case, the refresh operation ensures that $\Blbit$ is
non-decreasing,
and in the second case, the updates preserve monotonicity.

First, consider the \ucrefreshboundsintrees method.
The $\bcheckmax$ variable keeps track of the maximum $\Blbithk$ value as we update
the expanded nodes $(h,k)$ in increasing order of $k$ at height $h$.
When we reach a node $(h,k')$, we ensure
$\Blbithkp\geq\bcheckmax$ which ensures monotonicity.

Second, consider the \ucupdateboundsonpathtoroots method.
Assume that node $k$ at height $h$ is updated at round $t$, and that $\Blbiitt{i,}{t-1}$ is
monotonic at all heights $h$ at round $t-1$.
Observe that the update ensures that $\Blbithk \geq \Blbiitthhkk{i,}{t-1}{h}{k}$ and that
the values $\Blbiitthhkk{i,}{k-1}{h}{k'}$ for $k'<k$ do not change from round $t-1$ to round $t$.
Therefore, since monotonicity is
preserved at round $t-1$, we have $\Blbithkp \leq \Blbithk$ for all $k' \leq k$.
Moreover, the \ucupdateboundsfornodesatsamedepths method updates
$\Blbithkp\leftarrow\max(\Blbithkp, \Blbithk)$ for $k'>k$.
Both these updates ensure monotonicity at the end of round $t$.
\end{proof}

Our second technical result in this section expresses $\Bithk$ as a function
of $\githk$ and the $\Bit$ values of its children.

\insertprethmspacing
\begin{lemma}
\label{lem:fgB}
Let $\git$ be as defined in~\eqref{eqn:githk} and $\Bit$ be as defined in~\eqref{eqn:Bit}.
Then, for $(h,k)\in\Tcalit$,
\[
\Bithk = \min\left(\,\githk, \;\Bitmohk,\;\max\left(\Bithpotkmo, \Bithpotk\right)\, \right).
\]
\end{lemma}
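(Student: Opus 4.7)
My plan is to unfold $\Bithk$ using its definition~\eqref{eqn:Bit} together with the update rules~\eqref{eqn:Blb} and~\eqref{eqn:Bub} for expanded nodes, and then regroup the resulting six terms into the three quantities on the right-hand side. The main ingredient for the regrouping is the sibling monotonicity $\Bubithpotkmo \leq \Bubithpotk$ and $\Blbithpotkmo \leq \Blbithpotk$ established in Lemma~\ref{lem:BubBlbmonotonicity}.

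\textbf{Step 1: Unfolding.} Since $(h,k) \in \Tcalit$, the second case of both~\eqref{eqn:Blb} and~\eqref{eqn:Bub} applies, giving
\begin{align*}
\Bubithk &= \min\bigl(\fubithk,\; \Bubiitthhkk{i,}{t-1}{h}{k},\; \Bubithpotk\bigr), \\
\Blbithk &= \max\bigl(\flbithk,\; \Blbiitthhkk{i,}{t-1}{h}{k},\; \Blbithpotkmo\bigr).
\end{align*}
Substituting into $\Bithk = \min\bigl(\Bubithk - \threshi,\; \threshi - \Blbithk\bigr)$ and distributing the outer min across the three terms on each side, I can collect the pieces involving $\flb/\fub$ into $\githk$~\eqref{eqn:githk} and the pieces involving the previous round into $\Bitmohk$~\eqref{eqn:Bit}, yielding
\[
\Bithk \;=\; \min\bigl(\githk,\; \Bitmohk,\; \Bubithpotk - \threshi,\; \threshi - \Blbithpotkmo\bigr).
\]

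\textbf{Step 2: Identify the children's contribution.} It remains to establish the identity
\[
\min\bigl(\Bubithpotk - \threshi,\; \threshi - \Blbithpotkmo\bigr) \;=\; \max\bigl(\Bithpotkmo,\; \Bithpotk\bigr).
\]
For the ``$\geq$'' direction, Lemma~\ref{lem:BubBlbmonotonicity} gives $\Bubithpotkmo \leq \Bubithpotk$ and $\Blbithpotkmo \leq \Blbithpotk$, so each of $\Bithpotkmo, \Bithpotk$ is bounded above by $\min\bigl(\Bubithpotk - \threshi,\; \threshi - \Blbithpotkmo\bigr)$, from which the inequality follows.

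\textbf{The main obstacle.} The reverse inequality is the crux of the proof. I would proceed by case analysis on which of $\Bubithpotk - \threshi$ and $\threshi - \Blbithpotkmo$ attains the minimum, and show in each case that the corresponding child's $\Bit$ value matches it: when $\Bubithpotk - \threshi$ is the minimum, one argues $\Bithpotk$ attains this value; when $\threshi - \Blbithpotkmo$ is the minimum, one argues $\Bithpotkmo$ does. The key leverage for closing these cases comes from the monotonicity among siblings maintained by \ucupdateboundsfornodesatsamedepths{} (line~\ref{lin:ucupdateboundsfornodesatsamedepth}), which, together with the parent-side propagation in \ucupdateboundsonpathtoroots{}, is what makes the two halves of the expression line up exactly. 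Carefully threading these algorithmic invariants through the case analysis so as to obtain equalities, rather than merely inequalities, is the main technical challenge in the proof.
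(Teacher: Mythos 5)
Your Step 1 is exactly the paper's opening move: apply the second cases of \eqref{eqn:Blb} and \eqref{eqn:Bub}, flatten the nested minima, and collect terms into $\githk$ and $\Bitmohk$, leaving
\[
\Bithk \;=\; \min\bigl(\githk,\; \Bitmohk,\; \Bubithpotk-\threshi,\; \threshi-\Blbithpotkmo\bigr),
\]
and the ``$\geq$'' half of your Step 2 is the same use of Lemma~\ref{lem:BubBlbmonotonicity} that the paper makes. The problem is that the reverse inequality --- which you correctly identify as the crux --- is never actually proved: you announce a case analysis and appeal to ``algorithmic invariants,'' but the cases do not close with the tools you name. For instance, if $\Bubithpotk-\threshi$ attains the minimum, concluding that $\Bithpotk$ equals it requires $\threshi-\Blbithpotk \geq \Bubithpotk-\threshi$, i.e.\ a relation between the right child's \emph{own} upper and lower bounds, and nothing in the sibling monotonicity of Lemma~\ref{lem:BubBlbmonotonicity} (or in the same-depth/path-to-root update routines, which only enforce that monotonicity and shrinkage over rounds) gives you this; the analogous issue arises in the other case. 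So as written the proposal has a genuine gap at the only nontrivial step of the lemma.

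The paper closes this step by a purely algebraic manoeuvre that avoids any case analysis on the algorithm's history: using Lemma~\ref{lem:BubBlbmonotonicity} it rewrites $\Bubithpotk = \max(\Bubithpotkmo,\Bubithpotk)$ and $\Blbithpotkmo = \min(\Blbithpotkmo,\Blbithpotk)$, so the two cross terms become $\max(\Bubithpotkmo,\Bubithpotk)-\threshi$ and $\threshi-\min(\Blbithpotkmo,\Blbithpotk)$, and then it invokes the interchange identity $\min(\max(a,b),\max(c,d))=\max(\min(a,c),\min(b,d))$ with $a=\Bubithpotkmo-\threshi$, $b=\Bubithpotk-\threshi$, $c=\threshi-\Blbithpotkmo$, $d=\threshi-\Blbithpotk$, which regroups the four quantities into $\max(\Bithpotkmo,\Bithpotk)$ by the definition \eqref{eqn:Bit}. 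That interchange, applied after the monotone rewriting, is the single ingredient your sketch is missing: the ``$\geq$'' direction of it is the easy half you already have, and the entire content of the ``$\leq$'' direction sits in that exchange rather than in the update rules you planned to thread through your cases, so any completed write-up must either invoke it or verify it directly in this paired configuration.
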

\begin{proof}
First, recall the expressions for $\Blbit,\Bubit$ in~\eqref{eqn:Blb},~\eqref{eqn:Bub}, when
$(h,k)\in\Tcalit$:
\begin{align*}
\Blbithk &= \max(\flbithk, \Blbitmohk, \Blbiitthhkk{i}{t}{h+1}{2k-1}),
% \hspace{0.15in}
\\
\Bubithk &=
\min(\fubithk, \Bubitmohk, \Bubiitthhkk{i}{t}{h+1}{2k}).
\numberthis
\label{eqn:BlbBubexpanded}
\end{align*}
We now expand $\Bithk$ as follows.
\begingroup
\allowdisplaybreaks
\begin{align*}
\Bithk &= \min\left(\,\Bubithk - \threshi, \,\threshi-\Blbithk\right)
\\
&= \min\big(\min(\fubithk, \Bubitmohk, \Bubithpotk)-\threshi, \\
&\hspace{0.60in} \threshi-\max(\flbithk, \Blbitmohk, \Blbithpotkmo)\big)
\\
&=\min\Big(\fubithk-\threshi, \threshi-\flbithk, \Bubitmohk-\threshi, \threshi-\Blbitmohk, \\
&\hspace{0.6in} \Bubithpotk-\threshi,  \threshi-\Blbithpotkmo\Big)
\\
&=\min\Big(\githk,\, \Bitmohk,\,  \max(\Bubithpotkmo, \Bubithpotk)-\threshi,  \\
  &\hspace{2.03in} \threshi-\min(\Blbithpotkmo,\Blbithpotk)\Big)
\\
&=\min\Big(\githk, \Bitmohk, \\
&\hspace{0.6in}
\max\Big(\min\big(\Bubithpotkmo-\threshi, \threshi-\Blbithpotkmo\big) \\
  &\hspace{1.02in} \min\big(\Bubithpotk-\threshi, \threshi-\Blbithpotk\big)\Big)\Big)
\\
&=\min\big(\githk, \Bitmohk, \max(\Bithpotkmo, \Bithpotk) \big).
\end{align*}
\endgroup
In the first step, we have substituted the expressions from the previous display.
The second step follows from the fact
$\min(\min(a,b,c), \min(d, e,f)) = \min(a,b,c,d,e,f)$.
In the third step, we have first used the definitions for $\githk$ and $\Bitmohk$.
% $\min(\fubithk-\threshi, \threshi-\flbithk)=\githk$.
Moreover,
we have used monotonicity of $\Blbit,\Bubit$ (Lemma~\ref{lem:BubBlbmonotonicity})
to write $\Blbithpotkmo=\min(\Blbithpotkmo,\Blbithpotk)$ and
$\Bubithpotk=\max(\Bubithpotkmo,\Bubithpotk)$.
The fourth step uses  $\min(\max(a,b),\max(c,d))=\max(\min(a,c),\min(b,d))$.
The last step uses the definition of $\Bit$.
\end{proof}

Our next result shows that for all threshold nodes~\eqref{eqn:thresholdnodes},
under $\Ecal$, the lower and upper bounds $\Blbit,\Bubit$ trap the threshold value $\threshi$.

\insertprethmspacing
\begin{lemma}
\label{lem:Bitthreshnodes}
Consider any user $i$ and let $h\geq 0$. Let $\kthrh$ be as defined above.
Under $\Ecal$, for all $t\geq1$ and $h\geq0$,
we have 
\[
\Blbithkthr<\threshi,
\hspace{0.2in}
\Bubithkthr>\threshi,
\hspace{0.2in}
\Bithkthr>0,
\]
\end{lemma}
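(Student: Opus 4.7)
The third inequality $\Bithkthr > 0$ is immediate from the first two via the definition of $\Bit$ in~\eqref{eqn:Bit}, so the plan is to focus on $\Blbithkthr < \threshi$ and $\Bubithkthr > \threshi$; the two are symmetric, so I only describe the argument for the lower bound. By Lemma~\ref{lem:BubBlbmonotonicity} it is equivalent, and more convenient for the induction, to prove the strengthened statement that $\Blbithk < \threshi$ for every $(h,k)$ with $k\leq \kthrh$. The argument is an outer induction on $t$, an inner downward induction on $h$ (from the maximum expanded height in $\Tcalit$ down to $0$), and an innermost induction on $k$. The base case $t = 1$ is immediate: after \ucinit only the root and its two children lie in $\Tcaliitt{i}{1}$ with $\Blb = 0$, and every threshold node at $h \geq 2$ falls under the first case of~\eqref{eqn:Blb} with value $0 < \threshi$.

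For the inductive step at $(h,k)$ with $k \leq \kthrh$, the first case of~\eqref{eqn:Blb} gives $\Blbithk = 0 < \threshi$ at once, and the third case reduces to a strictly smaller $k'' < k$ at the same $(t,h)$, handled by the innermost induction. The main case is the second: $(h,k) \in \Tcalit$ and $\Blbithk = \max\bigl(\flbithk,\, \Blbiitthhkk{i,}{t-1}{h}{k},\, \Blbiitthhkk{i}{t}{h+1}{2k-1}\bigr)$. Here the previous-round term is $< \threshi$ by the outer induction on $t$, and the left-child index satisfies $2k-1 \leq 2\kthrh - 1 \leq \kthrhh{h+1}$, so its bound follows from the inner induction on $h$ (combined with Lemma~\ref{lem:BubBlbmonotonicity} when $2k-1 < \kthrhh{h+1}$). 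For the $\flbithk$ term, the first case of~\eqref{eqn:Ecalit} gives $\flbithk < \threshi$ when $k = \kthrh$, and the third case of~\eqref{eqn:Ecalit} gives $\flbithk \leq \fubithk < \threshi$ when $k < \kthrh$ with $\VSithk \geq \uithk$ and $\threshi - \payoffi(\rhk) > L\cdot 2^{-h}$.

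The main obstacle is the residual regime $k < \kthrh$, $(h,k) \in \Tcalit$, in which either $\VSithk < \uithk$ or the gap condition fails and $\Ecalit(h,k)$ is vacuous. There I bound $\flbithk$ directly: because $\Ihk \subset [0,\udtruei)$ and $\payoffi$ is non-decreasing, each $\normallocis$ contributing to $\fbarithk$ satisfies $\payoffi(\normallocis) \leq \payoffi(\rhk) \leq \threshi$; combined with the concentration bound $|\fbarithk - \EE[\fbarithk]| \leq \betattilde\,\VSithk^{-1/2}$ implicit in the construction of $\Ecal$ via Lemma~\ref{lem:mds} and the Lipschitz property~\eqref{eqn:nonparfbmodel}, the definition of $\flbithk$ in~\eqref{eqn:flbfub} yields $\flbithk \leq \payoffi(\rhk) - L\cdot 2^{-h} < \threshi$; when $\VSithk$ is small the $-\betattilde\,\VSithk^{-1/2}$ term alone makes $\flbithk$ strongly negative. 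The symmetric argument using the second case of~\eqref{eqn:Ecalit} together with monotonicity to the right of $\kthrh$ establishes $\Bubithkthr > \threshi$, completing the proof.
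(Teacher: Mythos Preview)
Your overall architecture---outer induction on $t$, inner downward induction on $h$---matches the paper's, and the third inequality does follow from the first two as you say. The gap is in your handling of the ``residual regime.'' You appeal to a concentration bound $|\fbarithk - \EE[\fbarithk]| \leq \betattilde\,\VSithk^{-1/2}$ that you describe as ``implicit in the construction of $\Ecal$,'' but no such bound is part of $\Ecal$: the event defined in~\eqref{eqn:Ecalit}--\eqref{eqn:Ecal} says nothing about $\flbithk$ for nodes $(h,k)$ with $k<\kthrh$ when the gap or data conditions fail (case~4 is vacuous). The lemma is stated \emph{under $\Ecal$}, so you cannot invoke an additional high-probability event without changing the hypothesis. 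As written, your inductive step at an expanded non-threshold node $(h,k)$ with $k<\kthrh$ therefore does not close.

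The fix is simpler than the detour you take, and is what the paper does: you never need to control $\flbithk$ at non-threshold nodes. Work only along the threshold path $\{(h,\kthrh)\}_h$. At height $h$, the recursion~\eqref{eqn:Blb} for $(h,\kthrh)$ involves the left child $\Blbiitthhkk{i}{t}{h+1}{2\kthrh-1}$, and by Lemma~\ref{lem:BubBlbmonotonicity} this is at most $\Blbiitthhkk{i}{t}{h+1}{\kthrhh{h+1}}$, which is covered by the inner induction on $h$. The only $\flb$ term you ever face is $\flbit(h,\kthrh)$, where the first case of~\eqref{eqn:Ecalit} applies directly. Your ``strengthened statement'' for all $k\leq\kthrh$ then follows \emph{a posteriori} from monotonicity, rather than having to be carried through the induction. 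The innermost induction on $k$ and the entire residual-regime analysis are unnecessary.
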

\begin{proof}
We will first prove that $\Blbithkthr<\threshi$.
% % 
% 
% First observe that the statement is true if $(h,k)$ has not been expanded.
% Therefore, consider $(h,k)\in\Tcalit$.
Define,
\[
\Blbpithk = \max\left(\,\flbithk, \min\left(\Blbithpotkmo, \Blbithpotk\right)\, \right).
\]
Let $t\geq 1$ be given.
We will show, via induction, that $\Blbpithkthr <\threshi$ for all $h\geq 0$.
Let $(h_t, \kthrhh{h_t})$ be the deepest expanded threshold node at round $t$.
As the base case, we have that $\Blbpit(h,\kthrh)<\threshi$ for all $h>h_t$ since
$\Blbpit(\ell,m) = 0<\threshi$ for any unexpanded node $(\ell, m)$.
Now, assume that $\Blbpithpokthr<\threshi$ for some $h$.
We therefore have,
\begin{align}
\Blbpithkthr \geq \max\left(\,\flbit(h,\kthrh), \Blbpithpokthr\, \right) <\threshi.
\label{eqn:Blbpithpokthrlessthanthreshi}
\end{align}
Here, the first step simply uses the definition for $\Blbpithk$ from the previous display,
observing that $(h+1,\kthrhh{h+1})$ is either $(h+1,2\kthrh-1)$ or  $(h+1,2\kthrh)$.
In the second step, we have used $\flbit(h,\kthrh)<\threshi$ under $\Ecal$~\eqref{eqn:Ecal},
and that $\Blbpithpokthr<\threshi$ by the inductive assumption.

Now, observe that,
\begin{align*}
\Blbithkthr &= \max\left( \flbit(h,\kthrh),
    \Blbiitthhkk{i,}{t-1}{h}{\kthrh}, \Blbiitthhkk{i,}{t}{h+1}{2\kthrh-1} \right)
\\
 &= \max\left( \flbit(h,\kthrh),
       \Blbiitthhkk{i,}{t-1}{h}{\kthrh}, \min(\Blbiitthhkk{i,}{t}{h+1}{2\kthrh-1},
                                             \Blbiitthhkk{i,}{t}{h+1}{2\kthrh}) \right)
\\
 &= \max\left( \Blbpithkthr, \Blbiitthhkk{i,}{t-1}{h}{\kthrh}\right)
\end{align*}
Here, the first step is simply the definition for $\Blbit$~\eqref{eqn:Blb},
and the second step uses monotonicity of
$\Blbiitthhkk{i}{t}{h}{\cdot}$ (Lemma~\ref{lem:BubBlbmonotonicity}).
We can now prove the claim via induction over the rounds $t$.
As the base case, $\Blbiitthhkk{i}{1}{h}{k} = 0 <\threshi$ for all
nodes $(h,k)$ at round $t=1$; therefore, it is also true for all $(h,\kthrh)$.
% We therefore have,
% $\Blbithkthr=\min(\Blbpithkthr, \Blbiitthhkk{i,}{t-1}{h}{\kthrh})<\threshi$,
% by~\eqref{eqn:Blbpithpokthrlessthanthreshi} and the inductive hypothesis.
Now, assume $\Blbiitthhkk{i,}{t-1}{h}{\kthrh}<\threshi$ as the inductive hypothesis.
We then have,
by~\eqref{eqn:Blbpithpokthrlessthanthreshi},
$\Blbithkthr=\max(\Blbpithkthr, \Blbiitthhkk{i,}{t-1}{h}{\kthrh})<\threshi$.

The proof of the second result follows along similar lines and the third result follows from the
first two, as $\Bithk=\min(\Bubithk-\threshi, \threshi-\Blbithk) > 0$.
\end{proof}
% First note that the statement is true if $(h,k)$ has not been expanded.
% Therefore, consider $(h,k)\in\Tcalit$.
% First, define,
% \[
% \Bpithk = \min\left(\,\githk, \max\left(\Bithpotkmo, \Bithpotk\right)\, \right).
% \]
% Let $t\geq 1$ be given.
% We will show, via induction, that $\Bpithkthr > 0$ for all $h\geq 0$.
% Let $(h_t, \kthrhh{h_t})$ be the deepest expanded threshold node at round $t$.
% As the base case, we have that $\Bpit(h,\kthrh)>0$ for all $h>h_t$ since
% $\Bpit(\ell,m) = \infty$ for any unexpanded node $(\ell, m)$.
% Now, assume that $\Bpithpokthr>0$ for some $h$.
% We therefore have,
% \[
% \Bpithkthr \geq \min\left(\,\git(h,\kthrh), \Bpithpokthr\, \right) \geq 0.
% \]
% Here, the first step simply uses the definition for $\Bpithk$ from the previous display,
% observing that $(h+1,\kthrhh{h+1})$ is either $(h+1,2\kthrh-1)$ or  $(h+1,2\kthrh)$.
% In the second step, we have used $\git(h,\kthrh)=\min(\flbit(h,\kthrh)-\threshi,
% \threshi-\fubit(h,\kthrh))>0$ under $\Ecal$~\eqref{eqn:Ecal},
% and that $\Bpithpokthr>0$ by the inductive assumption.
% 
% We can now prove the claim via induction over the rounds $t$.
% As the base case, $\Biitthhkk{i}{1}{h}{k} =\min(1-\threshi, \threshi) > 0$ for all
% nodes $(h,k)$ at round $t=1$; therefore, it is also true for all $(h,\kthrh)$.
% Now, assuming $\Biitthhkk{i,}{t-1}{h}{\kthrh}>0$, we have,
% by Lemma~\ref{lem:fgB}, $\Bithkthr=\min(\Bpithkthr, \Biitthhkk{i,}{t-1}{h}{\kthrh})>0$.

\subsection{Some Intermediate Definitions}
\label{sec:nonparintermediatedefns}

In this section, we will define a few constructions that we will use in both proofs.

\textbf{Definitions $\hG, \kidown,\kiup, \lidown,\ridown, \liup, \riup$:}
Let $\ntg $ be as given in Assumption~\ref{asm:ntg}.
Let $G\in(0,\ntg]$ be given, and $\epsG$ be as defined in Definition~\ref{def:ntg}.
We define $\hG = \min\{h; \udmax 2^{-\hG} \leq G\epsG/(4L)\}$.
It is straightforward to verify,
\begin{align*}
\frac{4\Lf \udmax}{G\epsG} \leq 2^{\hG} < \frac{8\Lf \udmax}{G\epsG}.
\numberthis
\label{eqn:hG}
\end{align*}
Next, for user $i$, we will consider two nodes $(\hG,\kidown)$, $(\hG, \kiup)$ at height $\hG$ of
the tree such that the following hold. Let $\IhGkidown=[\lidown,\ridown)$, $\IhGkiup=[\liup,\riup)$
be the corresponding intervals.
We have:
\begin{align*}
\lidown \leq \udtruei-\epsG < \ridown < \udtruei-\epsG/2,
\hspace{0.3in}
\udtruei+\epsG/2 < \liup < \udtruei+\epsG \leq \ridown.
\numberthis
\label{eqn:kiupdown}
\end{align*}
We can find such $\kidown,\kiup$ by our definition of $\hG$~\eqref{eqn:hG}.

\textbf{Definitions $\Icalh,\Jcalh$:}
Let $\Icalh$ be the nodes $(h,k)$ at height $h$ which satisfy the following conditions:
\[
\Deltaihk \leq 2\frac{\Lf}{2^h},
\hspace{0.6in}
\exists\,a\in\Ihk,\quad a>\udtruei,
\hspace{0.6in}
\forall\,a\in\Ihk,\quad a<\liup.
\]
% We will let $\Jcal$ be the nodes
We will let $\Jcalh$ be the nodes $(h,k)$ at height $h$ such that,
$(h,k)\notin\Icalh$, $\Ihk\cap(\udtruei, \liup) \neq
\emptyset$, and whose parent is in $\Icalhmo$.
Next, we will bound the sizes of $\Icalh$ and $\Jcalh$.
For any $a\in\Ihk$ where $(h,k))\in\Icalh$,
\[
a - \udtruei \leq (a - \lhk) + (\lhk - \udtruei) \leq
\frac{\udmax}{2^h} + \frac{\udmax}{G}(\payoffi(\lhk) - \threshi)
\leq \frac{\udmax}{2^h} + \frac{2L\udmax}{G2^h}
\defeq {\rm width}_h.
\]
Here, we have used the NTG condition and that the maximum width of any $\Ihk$ is
$\udmax 2^{-h}$.
The size of $\Icalh$ is bounded by the number of intervals of size $\udmax2^{-h}$ in an
interval of size ${\rm width}_h$ and the leftmost interval which contains $\udtruei$.
Using the fact that $\Lf\geq G$ yields the following bound in~\eqref{eqn:IcalhJcalh} for $|\Icalh|$.
Moreover, since the parent of $\Jcalh$ is in $\Icalhmo$, we can also bound $\Jcalh$.
We have:
\begin{align*}
|\Icalh| \leq 1 + \frac{{\rm width}_h}{\udmax 2^{-h}}
\leq 2 + \frac{2L}{G} \leq \frac{4L}{G},
\hspace{0.9in}
|\Jcalh| \leq 2|\Icalhmo| \leq \frac{8L}{G}.
\numberthis \label{eqn:IcalhJcalh}
\end{align*}
% Next, we wil bound $\payoffi(a)-\threshi$ when $a$ belongs to nodes in $\Icalh$ and $\Jcalh$.
Next, for any $a\in\Ihk$, where $(h,k)\in\Icalh$, we can bound $\payoffi(a)-\threshi$,
\begin{align*}
\numberthis \label{eqn:Icalhpayoffbound}
\payoffi(a)-\threshi = \payoffi(a) - \payoffi(\lhk) + \payoffi(\lhk) - \threshi
\leq 2 L 2^{-h} + \Deltaihk
\leq 4 L 2^{-h}.
\end{align*}
Here, we have used~\eqref{eqn:nonparfbmodel} to conclude
$|\payoffi(a) - \payoffi(\lhk)| \leq 2L2^{-h}$.
Similarly, since the parents of nodes in $\Jcalh$ are in $\Icalhmo$, we have
for all $a\in\Ihk$, where $(h,k)\in\Jcalh$,
$\payoffi(a)-\threshi \leq 8 L 2^{-h}$.

\textbf{Definitions $\Icalph,\Jcalph$:}
Similar to above, we
let $\Icalph$ be the nodes $(h,k)$ at height $h$ which satisfy the following three conditions,
\[
\Deltaihk \leq 2\frac{\Lf}{2^h},
\hspace{0.4in}
\exists\,a\in\Ihk\quad a<\udtruei,
\hspace{0.4in}
\forall\,a\in\Ihk\quad a>\ridown.
\]
% We will let $\Jcalp$ be the nodes
We will let $\Jcalph$ be the nodes $(h,k)$ at height $h$ such that
$(h,k)\notin\Icalh$, $\Ihk\cap(\ridown, \udtruei) \neq
\emptyset$, and whose parent is in $\Icalphmo$.
By following the same argument to~\eqref{eqn:IcalhJcalh}, we can show
% Next, we will bound the sizes of $\Icalph$ and $\Jcalph$.
$|\Icalph|  \leq \frac{4L}{G}$,
and
$|\Jcalph| \leq 2|\Icalphmo| \leq \frac{8L}{G}$.
Moreover, by following a similar argument to~\eqref{eqn:Icalhpayoffbound}, we can show
that for any $a\in\Ihk$, where $(h,k)\in\Icalph$,
$\threshi-\payoffi(a) \leq 4 L 2^{-h}$.
Similarly, since the parents of nodes in $\Jcalph$ are in $\Icalphmo$, we have
for all $a\in\Ihk$, where $(h,k)\in\Jcalph$,
$\threshi-\payoffi \leq 8 L 2^{-h}$.

\subsection{Proof of Theorem~\ref{thm:nonparsp}}
\label{sec:nonparspproofs}

In this section, we will prove Theorem~\ref{thm:nonparsp}.
Recall that in Algorithm~\ref{alg:mmflearnsp},
we collect feedback only during the exploration phases,
and moreover, that each user receives just one non-zero allocation during each exploration phase.
We then use the value returned by \ucgetudubs
(line~\ref{lin:nonparucgetudub}, \algnonpar)
as the reported demand for the latter phase in each bracket.
In the remainder of this section, we will denote this value in the $q$\ssth
bracket for user $i$ by $\udbrubiq$.
Additionally, we will let
$\tiq$ denote the round index in the exploration phase of the
$q$\ssth bracket in which user $i$ received a non-zero allocation.

Our first result shows that $\udbrubiq$  is an upper bound on
$\udtruei$ under $\Ecal$.

\insertprethmspacing
\begin{lemma}
\label{lem:nonparucgetudub}
Consider any user $i$ and bracket $q > 0$.
Let $\udbrubiq$ denote the point returned by \ucgetudub.
Under $\Ecal$, we have $\udbrubiq\geq\udtruei$.
\end{lemma}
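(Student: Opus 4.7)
\textbf{Proof plan for Lemma~\ref{lem:nonparucgetudub}.}
The plan is to track the traversal performed by \ucubtraverse and show, by induction along the path, that the right endpoint $\rhk=\udmax k/2^h$ of the current node $(h,k)$ is always at least $\udtruei$. Since \ucgetudub returns exactly the right endpoint $\udmax\kub/2^{\hub}$ of $\Ihkub$, this immediately yields the claim. The base case is immediate: the traversal begins at $(0,1)$, whose right endpoint is $\udmax\geq\udtruei$ by our standing assumption on $\udtruei$.

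For the inductive step, assume $\rhk\geq\udtruei$ at the current node $(h,k)$. There are two cases, depending on which branch of the \textbf{if} in line~\ref{lin:nonparucubtraverseifcondn} is taken (where I read the second branch as $(h+1,2k)$, the right child, rather than the apparent typographical repetition). If the traversal moves to the right child $(h+1,2k)$, then the right endpoint is unchanged, $\rhpotk=\rhk\geq\udtruei$, so the invariant is trivially preserved. The nontrivial case is the left-child move, which is taken precisely when $\Blbithpotk\geq\threshi$.

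For the left-child case, I would combine the two properties of $\Blb$ already proved in Appendix~\ref{sec:nonparintermediateresults}. By Lemma~\ref{lem:Bitthreshnodes}, at the threshold node at height $h+1$ we have $\Blbit(h+1,\kthrhpo)<\threshi$ under $\Ecal$. By the monotonicity of $\Blbit$ in $k$ established in Lemma~\ref{lem:BubBlbmonotonicity}, the hypothesis $\Blbithpotk\geq\threshi$ therefore forces $2k>\kthrhpo$, hence $2k-1\geq\kthrhpo$. Moreover, the equality $2k=\kthrhpo$ is excluded because it would give $\Blbit(h+1,2k)=\Blbit(h+1,\kthrhpo)<\threshi$, contradicting our hypothesis; so in fact $2k-1\geq\kthrhpo$ strictly in the relevant sense, and the left child $(h+1,2k-1)$ lies at or strictly to the right of the unique interval $\Ihpokthr$ containing $\udtruei$. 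Consequently the right endpoint of $\Ihpotkmo$ is $\geq\rhpokthr>\udtruei$, and the invariant propagates.

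Termination of the \textbf{while} loop (either at a leaf of $\Tcalit$ or when $\VShk<\tauht$) ensures that the returned pair $(\hub,\kub)$ satisfies the invariant, so $\udbrubiq=\udmax\kub/2^{\hub}=\rhub\geq\udtruei$. The only real step is the left-child case, and the obstacle there is isolated inside Lemma~\ref{lem:Bitthreshnodes}; given that lemma and the monotonicity result, the present proof is a short inductive argument with no further computation required.
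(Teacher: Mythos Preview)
Your proof is correct and takes a somewhat different organizational route than the paper. The paper argues by cases: if the returned node $(h,k)$ is a threshold node, the claim is immediate; otherwise it identifies the \emph{last} threshold node $(\ell,\kthrl)$ along the path and shows, via a short contradiction using only Lemma~\ref{lem:Bitthreshnodes}, that at that node the traversal must have taken the right child while the threshold node at height $\ell+1$ was the left child, so all subsequent nodes lie entirely to the right of $\udtruei$. Your approach instead maintains the invariant $\rhk\geq\udtruei$ step by step, and in the left-child case combines Lemma~\ref{lem:Bitthreshnodes} with the monotonicity of $\Blbit$ (Lemma~\ref{lem:BubBlbmonotonicity}) to force $2k-1\geq\kthrhpo$. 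Both arguments hinge on Lemma~\ref{lem:Bitthreshnodes}; your version is a touch more streamlined because the induction avoids the ``last threshold node'' bookkeeping and the ensuing contradiction, at the cost of invoking one additional lemma (monotonicity) that the paper does not need for this particular result. One minor point: in the left-child case you write $\rhpokthr>\udtruei$, but strictly speaking only $\geq$ is guaranteed (equality can occur when $\udtruei=\udmax$ and $\kthrhpo=2^{h+1}$); since only $\geq$ is needed, this does not affect correctness.
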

\begin{proof}
Recall that \ucgetudubs invokes \ucubtraverses to obtain a node $(h,k)$, and then returns
$\udmax k/2^h$.
If $(h,k)$ is a threshold node, i.e. $\udtruei\in \Ihk$,
then the statement is trivially true as $\udmax k/2^h$ is the right-most
point of $\Ihk$.
(It is worth observing that $\udmax k/2^h \notin \Ihk$, unless $k=2^h$,
see~\eqref{eqn:Ihk}.)

If $\udtruei\notin\Ihk$, we will show that at some node threshold node $(\ell, \kthrl)$,
\ucubtraverses chose the right child $(\ell+1, 2\kthrl)$ instead of the left child
$(\ell+1, 2\kthrl-1)$, and moreover that the left child was the threshold node at height $\ell+1$,
i.e. $(\ell+1,\kthrlpo) = (\ell+1, 2\kthrl-1)$.
Therefore, \ucgetudubs returns a point to the right of $\udtruei$,
hence proving the lemma.

To show the above claim, observe that $(0,1)$ is a threshold node and $(h,k)$ is not.
We will let $(\ell, \kthrl)$ be the last threshold node in the path chosen by \ucubtraverse.
Next, assume, by way of contradiction, that the right child was the threshold node.
Under $\Ecal$, by Lemma~\ref{lem:Bitthreshnodes}, we have
$\Blbiitthhkk{i,}{\tiq}{\ell+1}{2\kthrl} <
\threshi$, which means that in line~\ref{lin:nonparucubtraverseifcondn} we will have chosen
the right node.
% (here $\tiq$ denotes the round index in the exploration phase of the
% $q$\ssth bracket in which user $i$ received a non-zero allocation).
This is a contradiction since $(\ell,\kthrl)$ was the last threshold node in the path.
Therefore, the left child was the threshold node.
Finally, since $(\ell,\kthrl)$ was the last threshold node, it means that
in line~\ref{lin:nonparucubtraverseifcondn}, we chose the right child.
% Now, as $(\ell,\kthrl)$ is the last threshold node, it means
% in line~\ref{lin:nonparucubtraverseifcondn}, we chose the right child.
\end{proof}

The next step in proving Theorem~\ref{thm:nonparsp} is to prove the following lemma.
% \begin{align*}
% \LiQ = \sum_{q=1}^
% \end{align*}
% 
\insertprethmspacing
\begin{lemma}
\label{lem:udbrubiq}
Consider any user $i$ and let $Q>0$.
Assume $\payoffi$ satisfies Assumption~\ref{asm:ntg}.
Let $G\in(0,\ntg]$ and let $\epsG$ be as given in Definition~\ref{def:ntg}.
% Let $\LiQ$ be as defined in~\eqref{eqn:LiQ} and let $\tiqtilde$ be ...
Let $\tiq$ denote the round index during which user $i$ receives a non-zero allocation in the
exploration phase.
Then, under $\Ecal$,
\emph{
\begin{align*}
\LiQ \,&\defeq\, \sum_{q=1}^Q \left(\udbrubiq - \udtruei \right)
\\
&\leq
C'\frac{\Lf^{\nicefrac{1}{2}}\sigmamax\udmax}{G^{\nicefrac{3}{2}}}  \betatt{2\tiQ} \sqrtQ
+ 586\frac{\Lf\sigmamax^2\udmax^3}{G^3\epsG^3}\betasqtt{2\tiQ}
+ \frac{64\udmax^2\sigmamax^2}{G^2\epsG^2}\betasqtt{2\tiQ}
+ \frac{160\Lf^2\udmax}{G^2} + \frac{16\Lf\udmax}{G\epsG} + 1.
\end{align*}
}
Here, $C'$ is a global constant.
\end{lemma}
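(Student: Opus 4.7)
The plan is as follows. Under $\Ecal$, Lemma~\ref{lem:nonparucgetudub} gives $\udbrubiq \geq \udtruei$ in every bracket, so every summand of $\LiQ$ is non-negative. First I would trace the path followed by \ucubtraverse{} in bracket $q$ and identify its terminal node $(h_q, k_q)$: by the argument in the proof of Lemma~\ref{lem:nonparucgetudub}, this path consists of a prefix of threshold nodes followed by a single right turn at some level $\ell_q$ into the non-threshold sibling of $(\ell_q+1, \kthrhh{\ell_q+1})$, after which the traversal remains strictly to the right of $\udtruei$. Consequently $\udbrubiq - \udtruei$ is controlled by a constant times $\udmax/2^{h_q}$, and the problem reduces to lower-bounding the resolution depth $h_q$ bracket by bracket.

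Next I would use Assumption~\ref{asm:ntg} and the classification in Section~\ref{sec:nonparintermediatedefns} to split the nodes on the right of $\udtruei$ into \emph{resolvable} nodes (those with $\Deltaihk > L/2^h$, for which NTG supplies enough separation from $\threshi$) and \emph{boundary} nodes in $\Icalh \cup \Jcalh$ (of which~\eqref{eqn:IcalhJcalh} supplies at most $4L/G$ per level, up to $\hG$). For a resolvable $(h,k)$, the third clause of $\Ecalit(h,k)$ in~\eqref{eqn:Ecalit} guarantees that once $\VSithk \geq \uithk$ from~\eqref{eqn:uithk} we have $\flbithk > \threshi$; then the parent check in line~\ref{lin:nonparucubtraverseifcondn} routes the traversal leftward in every subsequent bracket, so $(h,k)$ is permanently bypassed. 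Using $\Nithk \leq \sigmamax^2 \VSithk$ and the NTG lower bound $\Deltaihk \gtrsim G\cdot\mathrm{dist}(\Ihk,\udtruei)$, this caps the number of brackets whose path $\Pis$ can still include $(h,k)$ in terms of $\betattilde^2$, $\sigmamax^2$, $G$, $\udmax$, and the $L/2^h$ correction in the $(\Deltaihk - L/2^h)^{-2}$ denominator.

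A potential-style summation across brackets then yields that the deepest fully-resolved height $h_q^\star$ after $q$ brackets is large enough that $\udbrubiq - \udtruei$ is at most of order $\Lf^{1/2}\sigmamax\udmax\betattilde/(G^{3/2}\sqrt{q})$; the extra $\Lf^{1/2}/G^{1/2}$ over the naive $\betattilde/(G\sqrt{q})$ rate comes from needing $\Deltaihk$ to dominate $L/2^h$ by a constant fraction at the resolution scale, which costs one factor in the height. Summing via $\sum_{q=1}^Q q^{-1/2} \leq 2\sqrtQ$, together with the monotone domination $\betattilde \leq \betatt{2\tiQ}$ from~\eqref{eqn:trelationttilde}, produces the leading $\Lf^{1/2}\sigmamax\udmax G^{-3/2}\betatt{2\tiQ}\sqrtQ$ term. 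The boundary contribution is then handled separately by charging to each node in $\Icalh \cup \Jcalh$ at most $\uithk/\sigmamin^2$ brackets and summing over heights $h \leq \hG$, where $2^{\hG} \leq 8\Lf\udmax/(G\epsG)$ by~\eqref{eqn:hG}; this produces the $\Lf\sigmamax^2\udmax^3 G^{-3}\epsG^{-3}\betasqtt{2\tiQ}$ and $\udmax^2\sigmamax^2 G^{-2}\epsG^{-2}\betasqtt{2\tiQ}$ lower-order terms, and the additive constants $\Lf^2\udmax/G^2$, $\Lf\udmax/(G\epsG)$, and $1$ absorb the threshold-prefix and discrete-rounding contributions.

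The main obstacle will be the careful bookkeeping in the potential argument: a single bracket contributes one data point simultaneously to every ancestor along $\Pit$ (per~\eqref{eqn:Nithk}), so the same bracket advances resolution at many heights at once, and one must avoid double-counting when charging brackets to specific levels. A secondary subtlety is that monotonicity of $\Blbit$ in $k$ (Lemma~\ref{lem:BubBlbmonotonicity}) is essential here, because \ucupdateboundsfornodesatsamedepth{} propagates a tightened lower bound from one resolved node to its right neighbours at the same height, which in turn influences the $(h+1, 2k) \geq \threshi$ test driving \ucubtraverse{} over neighbouring subtrees; without this propagation the charging scheme would fail to close.
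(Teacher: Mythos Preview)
Your proposal contains a genuine gap in the very first reduction. You claim that $\udbrubiq-\udtruei$ is controlled by a constant times $\udmax/2^{h_q}$, where $(h_q,k_q)$ is the terminal node of \ucubtraverse. This is false. Once the traversal leaves the threshold path at level $\ell_q$ and enters the right sibling $(\ell_q+1,\kthrhh{\ell_q+1}+1)$, it continues to descend inside that sibling's subtree, and the terminal depth $h_q$ can be arbitrarily larger than $\ell_q$; yet $\udbrubiq$ is only guaranteed to lie in $I_{(\ell_q+1,\kthrhh{\ell_q+1}+1)}$, so the correct bound is $\udbrubiq-\udtruei\leq \udmax/2^{\ell_q}$, not $\udmax/2^{h_q}$. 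Concretely, if $\udtruei$ is near $0$ and the first right turn is at level $0$, the traversal can then drift rightward to a deep leaf with $\udbrubiq$ close to $\udmax$, giving $\udbrubiq-\udtruei\approx\udmax$ regardless of $h_q$.

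Even after correcting $h_q$ to $\ell_q$, the per-bracket rate you assert, $\udbrubiq-\udtruei\lesssim \beta/(G^{3/2}\sqrt{q})$, does not follow from a potential argument of the kind you sketch. The obstacle is that the right sibling of a threshold node can itself be a boundary node with $\Deltaihk\leq L/2^h$ (this happens whenever $\udtruei$ sits near a dyadic endpoint, and the condition $\Deltaihk>L/2^h$ would require $G>\Lf$, which is impossible). For such a sibling, the second clause of $\Ecalit(h,k)$ never fires, so $\Blb$ need not cross $\threshi$ and the traversal can turn right at the same shallow level $\ell_q$ for an unbounded number of brackets. Thus $\ell_q$ does not increase at a $\log q$ rate, and summing $q^{-1/2}$ is not justified.

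The paper does not attempt any per-bracket depth bound. It decomposes $\LiQ$ directly according to where $\udbrubiq$ lands and whether $\Hitiq<\hG$: the terms $\LioneQ,\LitwoQ,\LithreeQ$ count (via Lemma~\ref{lem:Nitiqpo}) the total brackets whose terminal node is shallow or far from $\udtruei$, producing all the lower-order terms; the term $\LifourQ$ handles the near-threshold brackets by converting to $\payoffi(\udbrubiq)-\threshi$ via NTG, then splitting over the $\Icalh/\Jcalh$ node families at heights $\hG\leq h\leq H$ and balancing the $4L\,2^{-H}Q$ residual against the $\sigma^2\beta^2 2^H/G$ visit cost by choosing $2^H\asymp\sqrt{GLQ}/(\sigma\beta)$. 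The $\sqrt{Q}$ rate arises from this height optimisation, not from a per-bracket convergence rate.
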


Since we collect feedback for each user only once per bracket,
 $\udbrubiq$ is an upper confidence bound constructed using $q$ observations.
% Lemma~\ref{lem:udbrubiq} bounds the sum of deviations of this upper bound over $Q$ rounds.
Therefore,
the LHS of Lemma~\ref{lem:udbrubiq} can be interpreted as the loss term for the following 
online learning task that occurs over $Q$ rounds: on each round, a learner may evaluate any point
on the interval $[0,\udmax]$; at the end of each round, the learner needs to
output an upper confidence bound for $\udtruei$; her loss at round $q$ is the difference between this
upper bound and the true unit demand $\udtruei$.

With the above interpretation,
we wil find it convenient to express some of the quantities we have seen before
differently.
First consider, $\Nitiqpohk$ which is the number of times $(h,k)$ was in the path chosen by
\ucrecordfbs in the first $\tiq$ rounds~\eqref{eqn:Nithk}.
By observing that the user will have received allocations in the rounds
$\{\tis\}_{s=1}^q$ in the first $q$ brackets, we can write
\[
\Nitiqpohk =\, \sum_{s=1}^{\tiq} \indfone\left((h,k)\in\Pis\right)
=\, \sum_{s=1}^{q} \indfone\left((h,k)\in\Pitis\right).
\]
Similarly, we have
\begingroup
\allowdisplaybreaks
\begin{align*}
\Npitiqpohk =\, \sum_{s=1}^{q} \indfone\hspace{-0.02in}\left((\Hitis,\Kitis)=(h,k)\right),
\hspace{0.2in}
\VSitiqpohk =\, \sum_{s=1}^{q}
\frac{1}{\sigma^2_{i,\tis}}\indfone\left((h,k)\in\Pitis\right).
\end{align*}
\endgroup
Additionally, since there is no resource contention when a user is allocated during the exploration
phase, we may obtain feedback for any allocation we wish.
Therefore, the path $\Pitiq$ chosen by \ucrecordfbs will be the same as the path chosen by
\ucubtraverse.
The following lemma bounds $\Nitiqpohk$ for nodes $(h,k)$ that do not contain $\threshi$.

\begin{lemma}
\label{lem:Nitiqpo}
Consider user $i$ and let $(h,k)$ be such that \emph{$\Ihk\subset(\udtruei, \udmax]$} and
$\Deltaihk=\payoffi(\lhk)-\threshi> L/2^h$.
Under $\Ecal$, for all $t\geq 1$,
% Let $\uithk$ be as defined in~\eqref{eqn:uithk}.
% Then,
\[
\Nitiqpohk \,\leq\,
\sigmamax^2 \max\left( \tauhtiq, \uitiqhk \right) + 1
=
\sigmamax^2 \max\left( \frac{\betatiq^2}{\Lf^2}4^h,
        \frac{4\betatiq^2}{\left(\Deltaihk-L2^{-h}\right)^2} \right) + 1.
\]
\end{lemma}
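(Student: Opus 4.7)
The plan is to relate $\Nitiqpohk$ to $\VSiitthhkk{i,}{\tiq+1}{h}{k}$ and then cap the latter under $\Ecal$. Since $\VSithk$ grows by $\sigma_{i\tis}^{-2}\geq\sigmamax^{-2}$ each time $(h,k)\in\Pitis$, we immediately have $\Nitiqpohk \leq \sigmamax^2\,\VSiitthhkk{i,}{\tiq+1}{h}{k}$, so the task reduces to bounding $\VSiitthhkk{i,}{\tiq+1}{h}{k}$.

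The key step is to show that every visit $(h,k)\in\Pitis$ forces $\Blbiitthhkk{i,}{\tis}{h}{k} < \threshi$ (with the $\Blb$ value evaluated at the start of round $\tis$). In exploration-phase rounds for Algorithm~\ref{alg:mmflearnsp}, the path $\Pitis$ coincides with the path traced by \ucubtraverse, since the allocation is chosen inside $\Ihkub$. Because $\Ihk\subset(\udtruei,\udmax]$ while the root's interval contains $\udtruei$, the root-to-$(h,k)$ path must take at least one right turn; let $(h_0,k_0)$ denote the deepest right-child ancestor of $(h,k)$ (equal to $(h,k)$ itself if $(h,k)$ is a right child), so the descent from $(h_0,k_0)$ to $(h,k)$ uses only left moves. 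The branching rule of \ucubtraverse at the parent of $(h_0,k_0)$ took the right branch, forcing $\Blbiitthhkk{i,}{\tis}{h_0}{k_0}<\threshi$. All intermediate nodes on the left-descending segment lie in $\Tcal$, so the recursion~\eqref{eqn:Blb} applies at each step to give $\Blb_{\text{parent}}\geq\Blb_{\text{left child}}$; chaining yields $\Blbiitthhkk{i,}{\tis}{h}{k}\leq\Blbiitthhkk{i,}{\tis}{h_0}{k_0}<\threshi$.

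Next, I would invoke $\Ecal$. Because $\Ihk\subset(\udtruei,\udmax]$ and $\Deltaihk > L/2^h$, the second clause of $\Ecalit(h,k)$ in~\eqref{eqn:Ecalit} applies, so on $\Ecal$ the condition $\VSiitthhkk{i,}{\tis}{h}{k} > \uithk$ (evaluated at $t=\tis$) would force $\flbiitthhkk{i,}{\tis}{h}{k} > \threshi$ and hence $\Blbiitthhkk{i,}{\tis}{h}{k}\geq\flbiitthhkk{i,}{\tis}{h}{k} > \threshi$, a contradiction. Therefore at every visit $\VSiitthhkk{i,}{\tis}{h}{k} \leq \uitiqhk$, using monotonicity of $\betat$ in $t$ to replace $\tis$ by $\tiq$. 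Letting $\tis^\star$ denote the last visit, and noting $\VSithk$ does not change afterwards, the single final increment $\sigma_{i\tis^\star}^{-2}\leq\sigmamin^{-2}$ gives $\VSiitthhkk{i,}{\tiq+1}{h}{k} \leq \uitiqhk + \sigmamin^{-2}$; combined with the first display this yields the lemma, with the $\max(\tauhtiq,\uitiqhk)$ form trivially loosening $\uitiqhk$ and the constant $\sigmamax^2/\sigmamin^2$ absorbed into the additive $+1$.

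The hard part is the monotonicity chain $\Blbiitthhkk{i,}{\tis}{h}{k} \leq \Blbiitthhkk{i,}{\tis}{h_0}{k_0}$: one must invoke the correct case of~\eqref{eqn:Blb} at each expanded node along the left-descending segment, and keep track of the timing of updates carefully, since the $\Blb$ and $\flb$ values used at round $\tis$ reflect refreshes through round $\tis - 1$ and $\flbithk$ is defined using $\beta_{\tilde{t}}$ rather than $\betat$. A minor sanity check is that $(h_0,k_0)$ always exists under our hypotheses, since an all-left root-to-$(h,k)$ path would make $\Ihk=[0,\udmax/2^h]$, which cannot lie in $(\udtruei,\udmax]$ when $\udtruei>0$.
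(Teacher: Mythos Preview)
Your argument is essentially correct and takes a cleaner route than the paper's. The paper first decomposes $\Nitiqpohk$ according to whether $\Nitishk\le\sigmamax^2\tauhtiq$, then for the surviving terms locates the last threshold node $(\ell,\kthrl)$ on $\Pitis$ and argues a contradiction at its right child $(\ell+1,\kthrlpo+1)$ by invoking $\Ecal$ there. You instead work directly at $(h,k)$: locating the deepest right-child ancestor $(h_0,k_0)$ and chaining the recursion~\eqref{eqn:Blb} along the all-left segment yields $\Blbiitthhkk{i,}{\tis}{h}{k}<\threshi$ at every visit, so the second clause of~\eqref{eqn:Ecalit} applied at $(h,k)$ itself---where the lemma's hypotheses already supply both the interval condition and $\Deltaihk>L2^{-h}$---forces $\VSiitthhkk{i,}{\tis}{h}{k}\le\uitishk$. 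This avoids having to verify the preconditions of $\Ecal$ at any node other than $(h,k)$, and incidentally gives the sharper bound $\sigmamax^2\uitiqhk+1$ without the loosening to $\max(\tau,u)$; the $\tau$-term in the paper's statement arises solely from its preliminary decomposition, which your approach does not need.

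There is, however, a slip in your last step: the constant $\sigmamax^2/\sigmamin^2$ cannot be ``absorbed into the additive $+1$'', since that ratio may be arbitrarily large. The fix is to bound $N$ rather than $\VS$ at the last visit. At the final visit time $\tis^\star$, \emph{before} assignment, your argument already gives $\VSiitthhkk{i,}{\tis^\star}{h}{k}\le\uitiqhk$; hence the count of prior visits satisfies $N_{i,\tis^\star}(h,k)\le\sigmamax^2\,\VSiitthhkk{i,}{\tis^\star}{h}{k}\le\sigmamax^2\uitiqhk$, and the single remaining visit contributes exactly $+1$ to $N$, giving $\Nitiqpohk\le\sigmamax^2\uitiqhk+1$ as required.
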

\begin{proof}
The statement is clearly true for unexpanded nodes, so we will show this for
$(h,k)\in\Tcalitiq$.
First, we will decompose $\Nitiqpohk$
as follows,
\begin{align*}
\Nitiqpohk
&= \sum_{s=1}^q\indfone\left((h,k)\in\Pitis \wedge \Nitishk\leq \sigmamax^2\tauhtiq\right)
\\
 &\hspace{0.5in}
    +  \sum_{s=1}^q\indfone\left((h,k)\in\Pitis \wedge \Nitishk> \sigmamax^2\tauhtiq \right)
\\
&\leq \lceil\sigmamax^2\tauhtiq\rceil 
    +  \sum_{s=\lceil\sigmamax^2\tauhtiq\rceil}^q\indfone\left((h,k)\in\Pitis \wedge \Nitishk>
                \sigmamax^2\tauhtiq \right)
\numberthis\label{eqn:NitiqpohkDecomp}
\end{align*}
In the second step, we have bound the first summation by observing that $\Nitishk$  values are
constant from bracket $s$ to $s+1$ unless $(h,k)\in\Pitis$ in which case they increase by one.
Therefore, at most $\lceil \sigmamax^2\tauhtiq \rceil$ such terms can be non-zero.
For the second sum, we have used the fact that $\Nitishk$ cannot be larger than
$\sigmamax^2\tauhtis$
in the first $\lfloor \sigmamax^2\tauhtis \rfloor$ rounds.

Now, consider bracket $s$. We will show, by way of contradiction, that
the following cannot hold simultaneously.
\[
\Nitishk > \sigmamax^2\max(\uitishk, \tauhtis),
\hspace{0.5in}
(h,k)\in\Pitis.
\]
Recall the definition of the threshold nodes $\{(h', \kthrhh{h'})\}_{h'\geq 0}$ from the
beginning of Appendix~\ref{sec:proofs_nonpar}.
Observe that $(0,1), (h,k)\in\Pitiq$, but $(0,1)$ is a threshold node, while $(h,k)$ is not.
Let $(\ell,\kthrl)$ be the last threshold node in $\Pitiq$ with children
$(\ell+1, \kthrlpo)$  and $(\ell+1, \kthrlpo+1)$;
here, the left child is the threshold node for the same reason outlined in the proof of
Lemma~\ref{lem:nonparucgetudub}, while the right child is an ancestor of $(h,k)$.
Since $\Nitishk >\sigmamax^2\tauhtis$, we have $\VSitishk>\tauhtis$, and therefore,
$\VSiitthhkk{i}{\tis}{h''}{k''} > \tau_{h''\tis}$ for all of $(h,k)$'s ancestors $(h'', k'')$.
This observation, along with the fact that $(h,k)$ is not a threshold node implies that
\ucubtraverses will have reached node $(\ell, \kthrl)$ and then proceeded one more step to choose
the right child $(\ell+1,\kthrlpo+1)$.
Therefore,
$\threshi \geq \Blbiitthhkk{i}{\tis}{\ell+1}{\kthrlpo + 1}$,
since the
\textbf{if} condition (line~\ref{lin:nonparucubtraverseifcondn}, \algnonpar)
in \ucubtraverses chose the right node.
However, $\Nitishk >\sigmamax^2\uitishk$ and the fact that $(h,k)$ is a descendant of
$(\ell+1,\kthrlpo+1)$ implies that
$\VSiitthhkk{i}{\tis}{\ell+1}{\kthrlpo+1}>\VSiitthhkk{i}{\tis}{h}{k}\geq \tauhtis>
\taulpotis$, and therefore, by the definition of the event $\Ecal$~\eqref{eqn:Ecal},
we have $\Blbiitthhkk{i}{\tis}{\ell+1}{\kthrlpo + 1} > \threshi$.
This is a contradiction.

To complete the proof, we will relax~\eqref{eqn:NitiqpohkDecomp} further to otain,
\begingroup
\allowdisplaybreaks
\begin{align*}
&\Nitiqpohk
% \\
% &\hspace{0.1in}
\leq \lceil\sigmamax^2\tauht\rceil 
% \hspace{0.3in}
    + \hspace{-0.05in}\sum_{s=\lceil\sigmamax^2\tauhtiq\rceil}^t\hspace{-0.1in}\indfone\left((h,k)\in\Pitis
            \,\wedge \Nitishk> \sigmamax^2\max(\tauhtiq, \uitishk)
        \right)
\\
&\hspace{0.4in}
    + \sum_{s=\lceil\sigmamax^2\tauhtiq\rceil}^t\indfone\left((h,k)\in\Pitis
            \,\wedge \Nitishk> \sigmamax^2\tauhtiq
            \,\wedge \Nitishk\leq \sigmamax^2\uitishk
        \right)
\\
&\hspace{0.1in}
\leq \, \lceil\sigmamax^2\tauht\rceil +
\\
&\hspace{0.4in}
    \sum_{s=\lceil\sigmamax^2\tauhtiq\rceil}^t\indfone\left((h,k)\in\Pitis
            \,\wedge \Nitishk> \sigmamax^2\tauhtiq
            \,\wedge \Nitishk\leq \sigmamax^2\uitishk
        \right)
\end{align*}
\endgroup
Here, the first sum in the first step vanishes by the above contradiction.
To bound the remaining sum,
observe that if $\uitiqhk<\tauhtiq$, each term in the sum is $0$ and we have
$\Nitiqpohk \leq \lceil\sigmamax^2\tauht\rceil$.
If $\uitiqhk>\tauhtiq$, we can use that the sum starts at $\lceil\sigmamax^2\tauhtiq\rceil$
and a similar reasoning as we did in~\eqref{eqn:NitiqpohkDecomp}, to show that there are
at most $\lceil\sigmamax^2\uitiqhk\rceil - \lceil\sigmamax^2\tauhtiq\rceil$ non-zero terms
in this summation.
Therefore,
$\Nitiqpohk \leq \lceil\sigmamax^2\uitiqhk\rceil$.
\end{proof}

We can now bound $\LiQ$.

\textbf{Proof of Lemma~\ref{lem:udbrubiq}:}
Assume that $\Ecal$ holds, so that all claims hold with probability at least
$1-\delta$.
Recall the definitions of the quantities $\hG, \kiup, \liup, \riup$
from Appendix~\ref{sec:nonparintermediatedefns}.
% Lemma~\ref{lem:nonparucgetudub} tells us that $\udbrubiq\geq \udtruei$.
We can bound $\LiQ$ as follows,
\begingroup
\allowdisplaybreaks
\begin{align*}
&\LiQ = \sum_{q=1}^Q \left(\udbrubiq - \udtruei \right)
    \leq \udmax\left(\LioneQ + \LitwoQ + \LithreeQ + \frac{1}{G}\LifourQ \right)
\hspace{0.4in}\text{where},
\label{eqn:LiQdecomp} \numberthis
\\
&\LioneQ = \sum_{q=1}^Q \indfone(\Hitiq < \hG),
\hspace{0.8in}
\LitwoQ = \sum_{q=1}^Q \indfone(\Hitiq \geq \hG \,\wedge\, \udbrubiq \in [\riup, \udmax]),
\\
&\LithreeQ = \sum_{q=1}^Q \indfone(\Hitiq \geq \hG \,\wedge\, \udbrubiq \in [\liup, \riup)),
\\
&\LifourQ = \sum_{q=1}^Q \indfone(\Hitiq \geq \hG \,\wedge\, \udbrubiq \in (\udtruei, \liup])\cdot
        (\payoffi(\udbrubiq) - \threshi),
\end{align*}
\endgroup
Here, $\LioneQ$ bounds the number of rounds in which $\Hitiq<\hG$; therefore,
when bounding the remaining terms we can
focus on the rounds where $\Hitiq\geq \hG$.
$\LitwoQ$ considers evaluations greather than or equal to $\liup$ and
$\LithreeQ$ considers evaluations in $[\riup, \liup)$, i.e. rounds $q$
where $(\hG, \kiup)\in\Pitiq$.
In both cases, we simply bound $(\udbrubiq-\udtruei) \leq \udmax$.
Finally, $\LifourQ$ accounts for the rounds when $\udbrubiq \in (\udtruei, \liup]$.
Here, we have used the fact that when $a \in (\udtruei, \liup)
\subset (\udtruei-\epsG, \udtruei+\epsG)$,
by Assumption~\ref{asm:ntg},
we have $|a - \udtruei| \leq (\udmax/G)|\payoffi(a) - \threshi|$.
We will now bound each of the above terms individually.
For brevity, we will denote $\betattildeiQ=\beta$, where, recall $\betat$ is as defined
in~\eqref{eqn:betataudefn}, and $\betattildeiQ$ is the value used in the exploration phase
round for user $i$ in bracket $Q$.

\Litboundheader{Bounding $\LioneQ$:}
We will bound $\LioneQ$ by summing up the $\NpitiQpohk$ values for all nodes up to height $\hG-1$.
When assigning points to nodes in \ucrecordfb, recall that we always proceed to the child node
of $(h,k)$ 
if $\VSithk>\tauht$, in which case it is not counted in $\Npithk$.
Therefore, $\NpitiQpohk\leq 1 + \sigmamax^2\tauhtiQ$.
This leads us to the following bound,
\begin{align*}
\LioneQ
&= \sum_{h=0}^{\hG-1} \sum_{k=1}^{2^h} \NpitiQpohk
\leq \sum_{h=0}^{\hG-1} 2^h \left( 1+\sigmamax^2\frac{\beta^2}{L^2} 4^h \right)
\leq 2^{\hG} + \frac{\sigmamax^2\beta^2}{L^2}\sum_{h=0}^{\hG-1} 8^h
\\
&\leq 2^{\hG} + \frac{\sigmamax^2\beta^2}{7\Lf^2} 8^{\hG}
\leq \frac{8\Lf\udmax}{G\epsG} + \frac{512}{7}\frac{L\sigmamax^2\udmax^3\beta^2}{G^3\epsG^3}.
\numberthis\label{eqn:LioneQbound}
\end{align*}
Here, the fourth step uses the fact that $\sum_{h=0}^m 8^h = (8^{m+1}-1)/7$, and the last step
uses~\eqref{eqn:hG}.

\Litboundheader{Bounding $\LitwoQ$:}
First observe that we can write
$\LitwoQ= \sum_{k=1}^{2^\hG}\indfone(k>\kiup) \NitiQpohGk$.
This follows from  our definition of $\hG, \kiup, \kidown$,
and two observations.
First, we can write $\udbrubiq=\udmax k/2^h$ for some $(h,k)$, in which case, the allocation
returned by $\ucgetudrecforub$ in the $q$\ssth round will have been in $\Ihk$.
Second, $\Nithk$ counts all evaluations at node $h$ and its children.
Moreover, we can use the NTG condition and the definition of $\hG$  to conclude
\begin{align*}
\forall\,  k>\kiup, \quad
\Deltaiihhkk{i}{\hG}{k} &= \payoffi(\lhk) - \threshi
= \payoffi(\lhk) - \payoffi(\udtruei+\epsG) + \payoffi(\udtruei+\epsG) - \payoffi(\udtruei)
\\
&\geq \payoffi(\udtruei+\epsG) - \payoffi(\udtruei)
\geq \frac{G\epsG}{\udmax},
\end{align*}
We will now apply Lemma~\ref{lem:udbrubiq}.
Using the above conclusion and the fact that
$\frac{G\epsG}{8\udmax} < \frac{L}{2^\hG} < \frac{G\epsG}{4\udmax}$ from~\eqref{eqn:hG},
we have,
% 
% Applying these conclusions to Lemmas~\ref{lem:Nbadgtr} and~\ref{lem:Nbadles}, we have
% for all $k<\kidown$ or $k>\kiup$,
\begin{align*}
\NitiQpohGk &\leq
\sigmamax^2 \max\left( \frac{\beta^2}{L^2}4^{\hG},
        \frac{4\beta^2}{\left(\Deltaihk-L2^{-h}\right)^2} \right) + 1
\\
&\leq
\sigmamax^2 \beta^2 \max\left( \frac{64\udmax^2}{G^2\epsG^2}, \frac{64\udmax^2}{9G^2\epsG^2}
        \right) + 1
\leq
\frac{64\udmax^2\sigmamax^2\beta^2}{G^2\epsG^2} + 1.
\end{align*}
Finally, by applying~\eqref{eqn:hG} once again, we have,
\begin{align*}
\LitwoQ &= \sum_{k=1}^{2^\hG}\indfone(k>\kiup) \NitiQpohGk
\leq 2^\hG  \left(\frac{64\udmax^2\sigmamax^2\beta^2}{G^2\epsG^2} + 1 \right)
\\
&\leq \frac{512\Lf\udmax^3\sigmamax^2}{G^3\epsG^3}\beta^2 + \frac{8\Lf\udmax}{G\epsG}.
\numberthis\label{eqn:LitwoQbound}
\end{align*}

\Litboundheader{Bounding $\LithreeQ$:}
Observe that we can write
$\LithreeQ= \Niitthhkk{i}{\tiQ+1}{\hG}{\kiup}$.
By the NTG condition and the definition of $\hG$~\eqref{eqn:hG} we have,
\begin{align*}
\Deltaiihhkk{i}{\hG}{\kiup} = \payoffi(\liup) - \threshi \geq \frac{G}{\udmax}(\liup - \udtruei) >
\frac{G\epsG}{2\udmax}.
\end{align*}
Additionally, by~\eqref{eqn:hG}, we have
$\frac{L}{2^\hG} < \frac{G\epsG}{4\udmax}$.
Applying Lemma~\ref{lem:udbrubiq} once again, we get,
\begin{align*}
\LithreeQ= \Niitthhkk{i,}{\tiQ+1}{\hG}{\kidown}
\leq \frac{64\udmax^2\sigmamax^2}{G^2\epsG^2}\beta^2 + 1.
\numberthis\label{eqn:LithreeQbound}
\end{align*}

\Litboundheader{Bounding $\LifourQ$:}
Recall the definitions of $\Icalh, \Jcalh$ from Appendix~\ref{sec:nonparintermediatedefns}.
Let $H\geq\hG$ be a positive integer whose value will be determined shortly.
We will define three subsets of nodes $\Ncalone, \Ncaltwo,\Ncalthree$ in our infinite tree.
Recall, from the beginning of Appendix~\ref{sec:proofs_nonpar}, that a descendant of a node
$(h,k)$ could be $(h,k)$ or its children, its children's children, etc.
Let $\Ncalone$ denote the descendants of $\IcalH$,
% let $\Ncaltwo = \bigcup_{h=\hG}^{H-1} \bigcup_{(h,k)\in\Icalh} \Ihk$;
let $\Ncaltwo = \bigcup_{h=\hG}^{H-1} \Icalh$,
and let $\Ncalthree$ denote the descendants of
$\bigcup_{h=\hG}^{H} \Jcalh$.
We can now see that $\LifourQ$ can be bound as follows.
\begin{align*}
\LifourQ \leq \Lcalone + \Lcaltwo + \Lcalthree,
\hspace{0.3in}
\Lcal_i =  \sum_{q=1}^Q \indfone((\Hitiq,\Kitiq)\in\Ncal_i
        \,\wedge\,\Hitiq\geq\hG)
    \cdot(\payoffi(\udbrubiq)-\threshi).
\end{align*}
Recall from~\eqref{eqn:Nithk}, that $(\Hitiq,\Kitiq)$ are the last nodes in the path $\Pitiq$ chosen
by \ucrecordfbs in round $\tiq$ of bracket $q$,
and moreover, in that bracket $q$, we used $\udbrubiq$ as the upper bound in the latter phase of
Algorithm~\ref{alg:mmflearnsp}.
Therefore, when 
$\udbrubiq\in(\udtruei, \liup]$ and $\Hitiq\geq \hG$,
we also have
$(\Hitiq,\Kitiq) \in \Ncalone\cup\Ncaltwo\cup\Ncalthree$.
We will now bound $\Lcalone, \Lcaltwo$, and $\Lcalthree$.

By~\eqref{eqn:Icalhpayoffbound}, we have $\payoffi(a)-\threshi<4L2^{-H}$
when $a\in \Ihk$, for any $(h,k)\in\Ncalone$.
This leads to the following straightforward bound for $\Lcalone$,
\begin{align*}
\numberthis\label{eqn:LcaloneboundQ}
\Lcalone =  \sum_{s=1}^Q \indfone((\Hitiq,\Kitiq)\in\Ncalone)
    \cdot(\payoffi(\udbrubiq)-\threshi)
\leq \frac{4L}{2^{H}} T.
\end{align*}
Next, we bound $\Lcaltwo$ as shown below.
\begin{align*}
\Lcaltwo
&= \sum_{h=\hG}^{H-1}\sum_{(h,k)\in\Icalh} \frac{4\Lf}{2^h}\NpitiQpohk
\leq  \sum_{h=\hG}^{H-1} \frac{4\Lf}{2^h} \frac{4\Lf}{G}
\left(1 + \frac{ 4^h\sigmamax^2 \beta^2}{\Lf^2}\right)
\\
&\leq \frac{16\Lf^2}{G} \sum_{h=\hG}^{H-1} 2^{-h}
+  \frac{16\sigmamax^2 \beta^2}{G} \sum_{h=\hG}^{H-1} 2^{h}
\leq \frac{32\Lf^2}{G} 
+  \frac{16\sigmamax^2 \beta^2}{G} 2^H.
\numberthis\label{eqn:LcaltwoboundQ}
\end{align*}
In the first step we have used $\Npitiqpohk$ (instead of $\Nitiqpohk$),
since $\Lcaltwo$ only counts allocations where $(\Hit,\Kit)$ were in
$\bigcup_{h=\hG}^{H-1} \Icalh$.
We have also used~\eqref{eqn:Icalhpayoffbound} to bound $\payoffi(\udbrubiq) - \threshi$.
In the second step, by the same reasoning used in the bound for $\LioneQ$, we have
$\NpitiQpohk\leq 1 + \sigmamax^2\tauhtiQ$ for any node $(h,k)$;
moreover, we have used~\eqref{eqn:IcalhJcalh} to bound the number of nodes in $\Icalh$.
The remaining steps are obtained by algebraic manipulations.

Finally, we bound $\Lcalthree$ as follows.
\begin{align*}
\Lcalthree
&= \sum_{h=\hG}^{H}\sum_{(h,k)\in\Jcalh} \frac{8\Lf}{2^{h}}\NitiQpohk
\leq  \sum_{h=\hG}^{H} \frac{8\Lf}{2^h} \frac{8\Lf}{G}
\left(1 + \frac{ 4\beta^2\sigmamax^2}{\Lf^2}4^h\right)
\\
&\leq \frac{64\Lf^2}{G} \sum_{h=\hG}^{H} 2^{-h}
+  \frac{256\sigmamax^2 \beta^2}{G} \sum_{h=\hG}^{H} 2^{h}
\leq \frac{128\Lf^2}{G} 
+  \frac{512\sigmamax^2 \beta^2}{G} 2^H.
\numberthis\label{eqn:LcalthreeboundQ}
\end{align*}
Above, in the first step we have used $\NitiQpohk$ (instead of $\NpitiQpohk$),
since $\Lcalthree$ counts allocations where $(\Hit,\Kit)$ belonged to the descendants of
$\bigcup_{h=\hG}^{H} \Jcalh$;
additionally, we have used~\eqref{eqn:Icalhpayoffbound} and the fact that
parents of nodes in $\Jcalh$ are in $\Icalhmo$ to bound $\payoffi(\allocit) - \threshi$.
In the second step, first we have used~\eqref{eqn:IcalhJcalh} to bound the number of nodes in
$\Jcalh$;
to bound the number of evaluations in each such node, we have applied Lemma~\ref{lem:udbrubiq}
along with the fact that $\Deltaihk>2L2^{-h}$ for nodes in $\Jcalh$ by their definition;
therefore,
\[
\NitiQpohk \leq 1 + \sigmamax^2 \max\left(\frac{\beta^2}{L^2}4^h,
\frac{4\beta^2}{\left(\Deltaihk-L2^{-h}\right)^2} \right)
\leq 1 + \frac{4\sigmamax^2 \beta^2}{\Lf^2} 4^h.
\]
The remaining steps in~\eqref{eqn:LcalthreeboundQ} are obtained via algebraic manipulations.
Combining~\eqref{eqn:LcaloneboundQ},~\eqref{eqn:LcaltwoboundQ}, and~\eqref{eqn:LcalthreeboundQ}
results in the following bound for $\LifourQ$.
\begin{align*}
\LifourQ \leq \frac{4L}{2^{H}} Q
+  \frac{528\sigmamax^2 \beta^2}{G} 2^H
+ \frac{160\Lf^2}{G}
\leq C'\frac{\Lf^{\nicefrac{1}{2}}\sigmamax \beta T^{\nicefrac{1}{2}}}{G^{\nicefrac{1}{2}}}
+ \frac{160\Lf^2}{G}
\numberthis\label{eqn:LifourQbound}
\end{align*}
Here $C'$ is a global constant.
The last step is obtained by choosing $H$ such that $2^H \asymp \frac{\sqrt{GLQ}}{\sigmamax\beta}$.

The lemma now follows by combining~\eqref{eqn:LiQdecomp},%
~\eqref{eqn:LioneQbound},%
~\eqref{eqn:LitwoQbound},%
~\eqref{eqn:LithreeQbound},%
and
~\eqref{eqn:LifourQbound}.
Moreover, from~\eqref{eqn:trelationttilde} we have
$\beta=\betattildeiQ\leq \betatt{2\tiQ}$.
\qedwhite

We are now ready to prove Theorem~\ref{thm:nonparsp}.

\textbf{\textit{Proof of Theorem~\ref{thm:nonparsp}.}}
Assume that $\Ecal$ holds, so that all claims hold with probability at least
$1-\delta$.

\textbf{Efficiency:}
We will decompose and bound the loss as shown below:
\begingroup
\allowdisplaybreaks
\begin{align*}
\LOTT
\,&= \sum_{t\in\Expl} \lott + \sum_{t\notin\Expl} \lott
\,\leq n\qT + \sum_{t\notin\Expl} \lott
\leq n\qT + \sum_{t\notin\Expl} \sum_{i=1}^n (\demubit - \demtrueit)
% \leq 3\nonth\Ttwth + \sum_{t\notin\Expl} \sum_{i=1}^n (\demubit - \demtrueit).
% \,\leq 2n(1+\log(T)) + \sum_{t\notin\Expl} \lott.
\numberthis
\label{eqn:LTspnonpar}
\\
&\leq n\qT + \volmax\sum_{i=1}^n \sum_{t\notin\Expl} (\udubit - \udtruei)
\;\leq n\qT + \volmax\sum_{i=1}^n \sum_{q=1}^{\qT} r'(q)(\udbrubiq - \udtruei)
\\
&\leq n\qT + \volmax r'(\qT)\sum_{i=1}^n \sum_{q=1}^{\qT}(\udbrubiq - \udtruei)
\leq n\qT + \frac{5\volmax}{6}n\qT^{\nicefrac{1}{2}} \sum_{i=1}^n \LiqT 
\\
&\leq
n\qT + \frac{5\volmax}{6}n^2
\bigg(C'\frac{\Lf^{\nicefrac{1}{2}}\sigmamax\udmax}{G^{\nicefrac{3}{2}}}  \betatwoT \qT
+ 586\frac{\Lf\sigmamax^2\udmax^3}{G^3\epsG^3}\betatwoT^2 \sqrtqT
+ \frac{64\udmax^2\sigmamax^2}{G^2\epsG^2}\betatwoT^2 \sqrtqT
\\
&\hspace{1.5in}
+  \sqrtqT\left(\frac{160\Lf^2\udmax}{G^2} + \frac{4\Lf\udmax}{G\epsG} + 1\right)
\bigg)
\end{align*}
\endgroup
Here, in the third step we have used 
Lemma~\ref{lem:ubinstloss} to bound the instantaneous losses in the second sum.
In the fourth step, we have observed
$\demubit-\demtrueit = \volit(\udubit - \udtruei) \leq \volmax(\udubit - \udtruei)$.
For the fifth step,  we have upper expressed the second sum in terms of $\udbrubiq$, by summing
over the brackets---note that there may be more terms in the summation as we consider all rounds
in the {$\qT$}\ssth bracket.
In the sixth step, we have observed that $r'$ is an increasing function,
and in the seventh step, we have used
that $r'(q)=\lfloor 5nq/6\rfloor$ (line~\ref{lin:nonparrpq}, \algnonpar),
and observed that the term inside the summation is $\LiQ$ from Lemma~\ref{lem:udbrubiq}.
The last step applies the bound in Lemma~\ref{lem:udbrubiq} while also observing that
$\tiqT \leq T$ and that $\betat$  is increasing in $t$.
The bound on the loss follows by an application of Lemma~\ref{lem:qTglm} with $a=n$ to
upper bound $\qT$.

\textbf{Fairness:}
This follows by applying Lemma~\ref{lem:spfairness} with $r=n$ and the upper bound for
$\qT$ in Lemma~\ref{lem:qTglm}.

\textbf{Strategy-proofness:}
For a policy $\pi$, denote $\normallocpiit = \allocpiit/\volit$.
Using Lemma~\ref{lem:spsp} we obtain,
\begingroup
\allowdisplaybreaks
\begin{align*}
\UpiiT - \UiT &\leq \sum_{t\in \Expl} (\utili(\normallocpiit) - \utili(\normallocit))
 \leq \Lipi \hspace{-0.10in}\sum_{t\in \Expl, \udit>0} (\normallocpiit - \normallocit)^+
\leq \Lipi\udmax\qT \leq 3 \Lipi \udmax \nmtwth \Ttwth.
\end{align*}
\endgroup
Here, we have observed that we allocate to each user only once in each exploration phase.
The last step uses Lemma~\ref{lem:qTglm}.
\qedwhite

\subsection{Proof of Theorem~\ref{thm:nonparnsp}}
\label{sec:nonparnspproofs}

In this section, we will prove Theorem~\ref{thm:nonparnsp}.
We will begin with two lemmas to bound the number of allocations away from the demand
for each user $i$ assuming that $\Ecal$ holds~\eqref{eqn:Ecal}.
The first of these, bounds the number of allocations larger than the demand.

\insertprethmspacing
\begin{lemma}
\label{lem:Nbadgtr}
Consider user $i$ and let $(h,k)$ be such that \emph{$\Ihk\subset(\udtruei, \udmax]$} and
$\Deltaihk=\payoffi(\lhk)-\threshi> L/2^h$.
Under $\Ecal$, for all $t\geq 1$,
% Let $\uithk$ be as defined in~\eqref{eqn:uithk}.
% Then,
\[
\Nitpohk \,\leq\,
\sigmamax^2 \max\left( \tauht, \uithk \right) + 1
=
\sigmamax^2 \max\left( \frac{\betattilde^2}{L^2}4^h,
        \frac{4\betattilde^2}{\left(\Deltaihk-L2^{-h}\right)^2} \right) + 1.
\]
\end{lemma}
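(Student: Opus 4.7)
The statement is structurally identical to Lemma~\ref{lem:Nitiqpo}, the analogous bound proved for Algorithm~\ref{alg:mmflearnsp}, with $\tiq$ replaced by the round index $t$. My plan is to mirror that proof while adapting it to the non-strategy-proof Algorithm~\ref{alg:mmflearnnsp}. The principal structural change is that here the path $\Pit$ used by \ucrecordfb to assign data to nodes tracks the realized \mmf{} allocation $\allocit/\volit$, rather than the \ucubtraverse path that produced the recommendation, as in the strategy-proof case. Since \mmf{} may award less than the reported demand, these two paths can differ, so the heart of the adaptation is to connect the \ucrecordfb path back to the \ucgetudrec traversal via properties of \mmf.

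As in Lemma~\ref{lem:Nitiqpo}, the bound is trivial for unexpanded nodes, so I restrict to $(h,k)\in\Tcalit$ and split $\Nitpohk$ according to whether $\VSithk$ (taken at round $s\le t$) is below or above $\sigmamax^2\tauht$. The low-$\VS$ piece is at most $\lceil\sigmamax^2\tauht\rceil$ by the identical counting argument: $\VSithk$ is non-decreasing in $s$ and grows by at least $\sigmamax^{-2}$ whenever $(h,k)\in\Pis$. The core work is to show that under $\Ecal$ the pair $(h,k)\in\Pis$ and $\VSithk>\max(\tauht,\uithk)$ cannot hold simultaneously at any round $s$; once that is established, the residual sum contributes at most $\lceil\sigmamax^2\uithk\rceil-\lceil\sigmamax^2\tauht\rceil$ by the same stopping-time bookkeeping as in Lemma~\ref{lem:Nitiqpo}, yielding the advertised bound.

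For the contradiction, I would argue as follows. Since $(h,k)\in\Pis$ and $\Ihk\subset(\udtruei,\udmax]$, the allocation satisfies $\allocis/\volis\in\Ihk$ and therefore $\allocis/\volis>\udtruei$; Property~\ref{pro:allocnomorethandemand} for \mmf{} then gives $\udis\geq\allocis/\volis>\udtruei$, so the \ucgetudrec traversal that produced $\udis$ must descend into the right subtree of some threshold node. Let $(\ell,\kthrl)$ be the deepest threshold node on that traversal. Every ancestor of $(h,k)$ lies on $\Pis$ and therefore inherits $\VSithk$ at round $s$ no smaller than $\tauht$; this exceeds $\tau_{\ell+1,s}$, so the $\VS$-based stopping rule of \ucgetudrec cannot fire above depth $\ell+1$, and $(h,k)\in\Tcalit$ precludes $(\ell,\kthrl)$ being a leaf. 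The traversal therefore moves from $(\ell,\kthrl)$ to the non-threshold right child $(\ell+1,\kthrlpo+1)$. By the selection rule of \ucgetudrec, the $B$-value at $(\ell+1,\kthrlpo+1)$ strictly exceeds that at $(\ell+1,\kthrlpo)$, and Lemma~\ref{lem:Bitthreshnodes} makes the latter positive. On the other hand $\VSithk>\uithk$ at round $s$ together with the second case of $\Ecal$~\eqref{eqn:Ecalit} applied at $(h,k)$ yields $\Blbithk>\threshi$ at round $s$; propagating this upward using~\eqref{eqn:Blb} together with the depth-wise monotonicity of $\Blbithk$ in $k$ (Lemma~\ref{lem:BubBlbmonotonicity}) gives the analogous inequality at the ancestor $(\ell+1,\kthrlpo+1)$, forcing its $B$-value to be negative---the desired contradiction.

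The main obstacle I expect is precisely this upward propagation. The formula~\eqref{eqn:Blb} transmits $\Blbithk$ directly only from a left child to its parent, whereas $(h,k)$ may lie in a right-child branch above $(\ell+1,\kthrlpo+1)$, so no chain of direct applications of~\eqref{eqn:Blb} gives the ancestor inequality. My plan is to iterate: at each depth the sideways monotonicity of $\Blbithk$ in $k$ pushes the region where $\Blbithk>\threshi$ rightward, and~\eqref{eqn:Blb} then lifts the leftmost node of that region to its parent as a left child, with a careful bookkeeping argument showing that the resulting leftmost index at depth $\ell+1$ is no larger than $\kthrlpo+1$. A tempting alternative---invoking $\Ecal$ directly at the ancestor---would require $\VSithk>\uithk$ at $(\ell+1,\kthrlpo+1)$, which need not follow from $\VSithk>\uithk$ at $(h,k)$ because $\uithk$ depends on $\Deltaihk$ and $L/2^h$ in ways that change nonmonotonically with depth; propagating $\Blbithk$ avoids this altogether.
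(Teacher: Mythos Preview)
Your overall plan---the decomposition into $\lceil\sigmamax^2\tauht\rceil$ plus a residual, and the contradiction target $(h,k)\in\Pis\,\wedge\,\Nishk>\sigmamax^2\max(\tauht,\uishk)$---matches the paper exactly. The setup of $(\ell,\kthrl)$ as the last threshold node on the \ucgetudrec{} traversal is also fine; because $\normallocis\in(\udtruei,\udis]\subset I_{\ell,\kthrl}$, this node is automatically an ancestor of $(h,k)$, so your $\VS$-argument that the traversal reaches and passes $(\ell,\kthrl)$ is valid.

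The gap is precisely the one you flagged as the ``main obstacle,'' and your proposed fix does not close it. Your iterative sideways-then-lift propagation of $\Blb>\threshi$ yields, at height $\ell+1$, the inequality only for indices $m\ge k_{h-\ell-1}$ where $k_0=k$ and $k_{j+1}=\lceil(k_j+1)/2\rceil$; a short calculation gives $k_j=\lceil(k-1)/2^j\rceil+1$. This says $\Blbis(\ell+1,m)>\threshi$ exactly when the \emph{leftmost} depth-$h$ descendant of $(\ell+1,m)$ has index $\ge k$. If $(h,k)$ happens to be a descendant of the threshold node $(\ell+1,\kthrlpo)$ this suffices, because then every depth-$h$ descendant of $(\ell+1,\kthrlpo+1)$ lies strictly to the right of $(h,k)$. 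But $(h,k)$ can equally well be a descendant of $(\ell+1,\kthrlpo+1)$ that is \emph{not} leftmost (since only $\normallocis\le\udis$ is guaranteed, and both may lie in $I_{\ell+1,\kthrlpo+1}$); then $k_{h-\ell-1}>\kthrlpo+1$ and your propagation says nothing about $(\ell+1,\kthrlpo+1)$. So the bookkeeping you anticipate cannot be made to work in general, and calling $(\ell+1,\kthrlpo+1)$ ``the ancestor'' of $(h,k)$ is unjustified.

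The paper handles this by working with the $B$-values rather than $\Blb$, and by bringing in the recommendation node $(h,k')\ni\udis$ explicitly. From $\Blbishk>\threshi$ it deduces via monotonicity that $\Bis(h,k'')<0$ for all $k''\ge k$; Lemma~\ref{lem:fgB} then propagates $B<0$ up from any \emph{complete} subtree. A three-way case split on where the threshold path, the $(h,k)$-path, and the $(h,k')$-path separate supplies such a subtree in each case: either all depth-$h$ descendants of $(\ell+1,\kthrlpo+1)$ lie to the right of $(h,k)$, or the chosen \ucgetudrec{} path itself descends from $(\ell,\kthrl)$ to a node whose right child has this property, and the non-decrease of $B$ along the chosen path (again Lemma~\ref{lem:fgB}) carries the positivity of $\Bis(\ell,\kthrl)$ down to that node to produce the contradiction. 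Once you introduce $(h,k')$ and switch from $\Blb$ to $B$, your argument goes through.
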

\begin{proof}
The statement is clearly true for unexpanded nodes, so let we will show this for
$(h,k)\in\Tcalit$.
% at the beginning of round $t$.
% First assume $\VSithk \leq \tauht = \betat^24^h/\Lf^2$.
% We have,
% $\Nitpohk \leq 1 + \Nithk \leq 1 + \sigmamax^2\betat^24^h/\Lf^2$ and the claim is true.
% In the remainder of the proof, we will show that
% $\Nitpohk \leq 1+\sigmamax^2 \uithk=1+\sigmamax^2\betat/(\Deltaihk-L2^{-h})^2$ when
% $\VSithk > \tauht$.
First, we will decompose $\Nitpohk=\sum_{s=1}^{t} \indfone\left((h,k)\in\Pis\right)$ as follows,
\begin{align*}
\Nitpohk 
&= \sum_{s=1}^t\indfone\left((h,k)\in\Pis \wedge \Nishk\leq \sigmamax^2\tauht\right)
\\
 &\hspace{0.5in}
    +  \sum_{s=1}^t\indfone\left((h,k)\in\Pis \wedge \Nishk> \sigmamax^2\tauht \right)
\\
&\leq \lceil\sigmamax^2\tauht\rceil 
    +  \sum_{s=\lceil\sigmamax^2\tauht\rceil}^t\indfone\left((h,k)\in\Pis \wedge \Nishk>
                \sigmamax^2\tauht \right)
\numberthis\label{eqn:NitpohkDecomp}
\end{align*}
In the second step, we have bound the first summation by observing that $\Nishk$  values are
constant from round $s$ to $s+1$ unless $(h,k)\in\Pis$ in which case they increase by one.
Therefore, at most $\lceil \sigmamax^2\tauht \rceil$ such terms can be non-zero.
For the second sum, we have used the fact that $\Nishk$ cannot be larger than $\sigmamax^2\tauht$
in the first $\lfloor \sigmamax^2\tauht \rfloor$ rounds.
To bound the second summation, we will consider round $s$ and show, by way of contradiction, that
the following cannot hold simultaneously,
\begin{align*}
\Nishk > \uishk\sigmamax^2,
\hspace{0.3in}
\Nishk>\sigmamax^2\tauht,
\hspace{0.3in}
\udis\geq\lhk.
\numberthis
\label{eqn:Nbadgtrcontradiction}
\end{align*}
% both $\Nishk > \uishk\sigmamax^2$ and
% $\udis\geq\lhk$ cannot hold simultaneously.
% both $\VSithk> \uithk=4\betat/(\Deltaihk-L2^{-h})^2$ and $\udit\geq\lhk$ cannot hold simultaneously.
Recall that $\udis$ is the recommended unit demand at round $s$.
First observe that under $\Ecal$,
\begin{align*}
&\Nishk>\uishk\sigmamax^2
\implies \VSishk>\uishk, 
\implies \Blbishk \geq \flbishk\geq \threshi, \\
&\hspace{0.2in}
\implies \Blbis(h,k') \geq \threshi, \quad\text{for all $k'\geq k$}, \\
&\hspace{0.2in}
\implies \Bis(h,k') =\min(\Bubis(h,k')-\threshi, \threshi-\Blbis(h,k')) < 0,
    \quad\text{for all $k'\geq k$}, 
\numberthis
\label{eqn:Bithkpbound}
\end{align*}
The second step uses the conditions for $\Ecal$~\eqref{eqn:Ecalit} and the definition
for $\Blbithk$ for expanded nodes~\eqref{eqn:BlbBubexpanded}.
The third step uses Lemma~\ref{lem:BubBlbmonotonicity},
and the last step simply uses the definition of $\Bit$~\eqref{eqn:Bit}.
The conclusion in~\eqref{eqn:Bithkpbound} says that all of the nodes to the right of $(h,k)$ at
height $h$
will have negative $\Bishk$ value if $\Nishk>\uishk$.

Note that if user $i$ received an allocation $\allocis=\normallocis\volis$ at round $s$, it could be
because the reported demand was
$\demis=\udis\volis=\normallocis\volis$, or because it was
$\demis=\udis\volis>\normallocis\volis$, but
received less due to resource contention.
Let $(h,k')$ be the node at height $h$ that contained the recommendations from
\ucgetudrecs (line~\ref{lin:nonparucgetudrec}, \algnonpar) which was used as the reported demand
for \mmfs (line~\ref{lin:nspudirec}, Algorithm~\ref{alg:mmflearnnsp}), i.e. $\udis\in\Ihhkk{h}{k'}$.
Let $(\ell, \kthrl)$ be the common ancestor of $(h, \kthrh)$, $(h, k)$, and $(h, k')$
whose children are $(\ell+1, \kthrlpo)$, $(\ell+1,\kthrlpo+1)$;
here, $(\ell+1, \kthrlpo)$ is the left child of $(\ell, \kthrl)$ and an ancestor of $(h,\kthrh)$,
while $(\ell+1,\kthrlpo+1)$
is the right child; this is the case since $\Ihk\subset(\udtruei, \udmax]$.
Since $\kthrh<k\leq k'$, there are three possible cases here, all of which lead to a contradiction
of the statemen in~\eqref{eqn:Nbadgtrcontradiction}.
\begin{enumerate}
\item \emph{$(\ell+1, \kthrlpo)$ is a common ancestor of $(h, \kthrh)$ and $(h, k)$, while
      $(\ell+1, \kthrlpo+1)$ is an ancestor of $(h, k')$}:
        By~\eqref{eqn:Bithkpbound}, all of $(\ell+1, \kthrlpo+1)$'s descendants at height
        $h$ will have negative $\Bis$ value.
        By Lemma~\ref{lem:fgB}, all of their parents at height $h-1$ will
        also have negative $\Bis$ value. Continuing this argument, we have that
        $\Bis(h+1, \kthrlpo+1)< 0$.
Since $\VSishk>\tauht$, we also have $\VSiitthhkk{i}{s}{h''}{k''}>\tauht$ for all of
$(h,k)$'s ancestors $(h'',k'')$, including in particular, $(\ell,\kthrlpo)$ and its ancestors.
Moreover, since $\udis$ was the chosen recommendation, the while loop in \ucgetudrecs
will have reached $(\ell,\kthrlpo)$ and have proceeded one more step to select
$(\ell+1,\kthrlpo+1)$. Since we always choose the child with the higher $\Bis$ value at each stage,
$\Bis(\ell+1, \kthrlpo) \leq \Bis(\ell+1, \kthrlpo+1) < 0$.
However, by Lemma~\ref{lem:Bitthreshnodes}, $\Bis(\ell+1,\kthrlpo)>0$, which is a contradiction.

\item $(\ell+1, \kthrlpo+1)$ is an ancestor of $(h, k)$ with
        $k=k'$:
    Since $\VSishk>\tauht$, we have that the while loop in \ucgetudrecs chose each node on the path
    from $(0,1)$ to $(h,k)$. Since the $\Bis$ value never decreases along a chosen path,
\[
\Bis(\ell+1, \kthrlpo) \leq \Bis(\ell+1, \kthrlpo+1) \leq \dots \leq \Bis(h, k) < 0.
\]
Once again, the contradiction follows from the fact that $\Bis(\ell+1,\kthrlpo)>0$ by
Lemma~\ref{lem:Bitthreshnodes}.

\item $(\ell+1, \kthrlpo+1)$ is a common ancestor ancestor of $(h, k)$ and $(h, k')$ with
        $k<k'$:
Let $(p, q)$ be the common ancestor of $(h,k)$ and $(h,k')$, with the left child $(p+1,2q-1)$
leading to $(h,k)$ and the right child $(p+1,2q)$ leading to $(h,k')$.
Since $\VSishk>\tauht$ it means that \ucgetudrecs chose each node on the path from $(0,1)$ to
$(p,q)$ and proceeded to choose $(p+1,2q)$.
Since, by Lemma~\ref{lem:Bitthreshnodes}, $\Bis(\ell,\kthrl)>0$ and since the $\Bis$ value never
decreases along a chosen path, $\Bis(p+1,2q)>0$.
However, by~\eqref{eqn:Bithkpbound} and a reasoning similar to the first case above,
$\Bis(p+1,2q)<0$, which is a contradiction.
\end{enumerate}

% In the next paragraph,
% we will show that both $(h,k)$ and $(h,k')$ have a common ancestor $(l,m)$ such that
% $\Biitthhkk{i}{s}{l}{m}>0$,
% $\Biitthhkk{i}{s}{l+1}{2m-1}>0$,
% $\Biitthhkk{i}{s}{l+1}{2m}<0$,
% and $(l+1,2m)$ is either $(h,k')$ or an ancestor of $(h,k')$.
% Since $\VSishk>\tauht$, we also have $\VSiitthhkk{i}{s}{h''}{k''}>\tauht$ for all of
% $(h,k)$'s ancestors $(h'',k'')$, including in particular, $(l,m)$ and its ancestors.
% Moreover, since $\udis$ was the chosen recommendation, the while loop in \ucgetudrecs
% will have reached $(l,m)$ and have proceeded one more step since the while condition is satisfied.
% However, since $\Biitthhkk{i}{s}{l+1}{2m-1}>\Biitthhkk{i}{s}{l+1}{2m}<0$,
% it would have chosen the left child $(l+1,2m-1)$ instead of $(l+1,2m)$.
% Hence, $\udis\notin\Ihhkk{h}{k'}$, which is a contradiction.
% % of $(h'',k'')$
% % and $\udit\geq \lhk$ was the chosen recommendation,
% % the while loop in 
% % The contradiction follows from the fact that if
% 
% To prove the above claim about the ancestor,
% As $\udis \geq \lhk$, $k'\geq k$.
% From the above, we know that $\Bis(h,k') < 0$.

To complete the proof, note that \mmfs does not allocate more than the requested demand for a user
$i$; therefore,
$
(h,k)\in\Pis
\implies
\normallocis\in\Pis
\implies
\udis \geq \lhk
$.
Using~\eqref{eqn:NitpohkDecomp}, we write,
\begingroup
\allowdisplaybreaks
\begin{align*}
\Nitpohk \;
&\leq \lceil\sigmamax^2\tauht\rceil \,+ \\
&\hspace{0.2in}
    \sum_{s=\lceil\sigmamax^2\tauht\rceil}^t\indfone\left((h,k)\in\Pis \,\wedge\, \Nishk>
                \sigmamax^2\tauht \,\wedge\, \Nishk \leq \sigmamax^2\uithk
            \right) \,+
\\
&\hspace{0.4in}
    \sum_{s=\lceil\sigmamax^2\tauht\rceil}^t\indfone\left((h,k)\in\Pis \,\wedge\, \Nishk>
                \sigmamax^2\tauht \,\wedge\, \Nishk > \sigmamax^2\uithk
            \right)
\\
&\leq \lceil\sigmamax^2\tauht\rceil \,+ \\
&\hspace{0.2in}
    \sum_{s=\lceil\sigmamax^2\tauht\rceil}^t\indfone\left((h,k)\in\Pis \,\wedge\, \Nishk>
                \sigmamax^2\tauht \,\wedge\, \Nishk \leq \sigmamax^2\uithk
            \right)
\end{align*}
\endgroup
In the last step, each term in the second summation vanishes by the above
contradiction and the facts, $\uithk\geq\uishk$ for all $s\leq t$,
and
$(h,k)\in\Pis\implies\udis\geq\lhk$.
% To complete the proof, note that
Now, if $\uithk<\tauht$, each term in the summation is $0$ as
$\Nishk> \sigmamax^2\tauht \,\wedge\, \Nishk \leq \sigmamax^2\uithk$ cannot be true;
therefore, $\Nitpohk\leq \lceil\sigmamax^2\tauht\rceil$.
If $\uithk\geq\tauht$, using a similar reasoning as we did in~\eqref{eqn:NitpohkDecomp},
and the fact that the summation starts at $\lceil\sigmamax^2\tauht\rceil$,
we can bound the sum by $\lceil\sigmamax^2\uithk\rceil - \lceil\sigmamax^2\tauht\rceil$;
Therefore, $\Nitpohk\leq \lceil\sigmamax^2\uithk\rceil$.
\end{proof}

\insertprethmspacing
\begin{remark}\emph{
It is worth pointing out why monotonicity of the confidence intervals is necessary for the
correctness of the algorithm.
If they were not, the $\Bit$ values may be large for large
allocation values since users may not have received large allocations
(see for example, the bottom figure in Figure~\ref{fig:treeillus}); consequently, the
$\Bubit$ values will be large and the $\Blbit$ values will be small.
Hence, the $\Bit=\min(\threshi-\Blbit,\Bubit-\threshi)$ values will also be large causing
\ucgetudrecs to recommend a large allocation.
This could lead to pathological situations where the \ucgetudrecs keeps recommending large
allocations, but a smaller allocation is repeatedly chosen due to contention on limited resources.
If the confidence intervals are monotonic,
then $\Bit$ will be small for large allocations even if they have not been evaluated.
This ensures that \ucgetudrecs does
not recommend large allocations.
In particular, if the lower confidence bound is larger than $\threshi$ for any allocation,
monotonicity ensures that $\Bit$ is negative for any larger allocation.
The proof of Lemma~\ref{lem:Nbadgtr} captures this intuition.
}
\label{rem:monotonicity}
\end{remark}

Our second lemma bounds the number of allocations smaller than the demand
whenever
\mmfs allocates an amount of resources equal to the demand.
For this, we first define $\VSrecithk,\Nrecithk$ below, which are variants of $\VSithk,\Nithk$,
but only consider rounds when  the allocation was equal to the recommendation.
We have:
\begingroup
\allowdisplaybreaks
\begin{align*}
\VSrecithk &= \sum_{s=1}^{t-1} \frac{1}{\sigmais^2}
    \indfone\left(\udit=\normallocit \,\wedge\, (h,k)\in\Pis\right),
\\
\Nrecithk &= \sum_{s=1}^{t-1} \indfone\left(\udit=\normallocit \,\wedge\, (h,k)\in\Pis\right).
% \hspace{0.4in}
% \Nrecpithk = \sum_{s=1}^{t-1} \indfone((H_{it}, K_{it}) = (h,k)).
\numberthis \label{eqn:Nrecithk}
\end{align*}
\endgroup
Our second main lemma in this section bounds $\Nrecithk$ for allocations lower than $\udtruei$.
Its proof follows along similar lines to the proof of Lemma~\ref{lem:Nbadgtr} in several places;
as such, we will frequently refer to calculations from above.

\insertprethmspacing
\begin{lemma}
\label{lem:Nbadles}
Consider user $i$ and let $(h,k)$ be such that $\Ihk\subset[0, \udtruei)$ and
$\Deltaihk=\threshi - \payoffi(\rhk)> L/2^h$.
Under $\Ecal$,
Let $\uithk$ be as defined in~\eqref{eqn:uithk}.
Then, for all $t\geq 1$,
\[
\Nrecitpohk \,\leq\, 
\sigmamax^2 \max\left( \tauht, \uithk \right) + 1
=
\sigmamax^2 \max\left( \frac{\betattilde^2}{L^2}4^h,
        \frac{4\betattilde^2}{\left(\Deltaihk-L2^{-h}\right)^2} \right) + 1.
\]
\end{lemma}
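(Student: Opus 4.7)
\textbf{Proof proposal for Lemma~\ref{lem:Nbadles}.} The plan is to mirror the proof of Lemma~\ref{lem:Nbadgtr}, but with the symmetric configuration where $\Ihk$ lies to the \emph{left} of $\udtruei$ and using $\Nrecit$ in place of $\Nit$. First I would decompose
\begin{align*}
\Nrecitpohk &\leq \lceil \sigmamax^2\tauht\rceil \\
&\quad + \sum_{s=\lceil\sigmamax^2\tauht\rceil}^{t} \indfone\!\left( \udis=\normallocis \,\wedge\, (h,k)\in\Pis \,\wedge\, \Nrecishk > \sigmamax^2\tauht \right),
\end{align*}
exactly as in~\eqref{eqn:NitpohkDecomp}, using that $\Nrecishk$ increases by one each time the inner indicator fires. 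The first term gives the $\tauht$ contribution to the claimed bound; the second term is where the work lies.

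The core step is to show that the following three conditions cannot hold simultaneously at round $s$:
\emph{(a)} $\Nrecishk > \sigmamax^2\uishk$,
\emph{(b)} $\Nrecishk > \sigmamax^2 \tauht$,
\emph{(c)} $\udis = \normallocis$ and $(h,k)\in\Pis$.
From (a) we get $\VSishk \geq \VSrecishk \geq \Nrecishk/\sigmamax^2 > \uishk$, and since $(h,k)$ satisfies the hypothesis of the third case in~\eqref{eqn:Ecalit}, the event $\Ecal$ forces $\fubishk < \threshi$. By the definition of $\Bubit$ for expanded nodes in~\eqref{eqn:BlbBubexpanded} this gives $\Bubishk < \threshi$, and Lemma~\ref{lem:BubBlbmonotonicity} propagates monotonicity to yield $\Bubis(h,k') < \threshi$ for all $k'\leq k$; hence $\Bishkp < 0$ for all $k'\leq k$. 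In particular $\Bishk < 0$.

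Condition (b) together with $\tauht \geq \tau_{h's}$ for $h'\leq h$ (since $\tauht$ is increasing in $h$ and $\betas \leq \betat$) and the fact that $\VS$ values at ancestors dominate $\VS$ values at descendants implies $\VSis$ exceeds the required threshold at every ancestor of $(h,k)$, so the while loop in \ucgetudrec{} does not terminate before reaching $(h,k)$. Combined with (c) and $\normallocis\in\Ihk$, this means $\udis\in\Ihk$ and so \ucgetudrec{}'s chosen path passes through $(h,k)$ (possibly continuing below it). Now let $(\ell,\kthrl)$ be the deepest common ancestor of $(h,k)$ and the threshold node $(h,\kthrh)$; since $k < \kthrh$, the left child $(\ell+1,2\kthrl-1)$ is an ancestor of $(h,k)$ while the right child $(\ell+1,2\kthrl)$ is a threshold node. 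Lemma~\ref{lem:Bitthreshnodes} gives $\Bis(\ell+1,2\kthrl) > 0$, and since \ucgetudrec{} selected the left child at $(\ell,\kthrl)$, $\Bis(\ell+1,2\kthrl-1) \geq \Bis(\ell+1,2\kthrl) > 0$. On the other hand, Lemma~\ref{lem:fgB} implies $\Bis$ is non-decreasing along the chosen path, so $\Bis(\ell+1,2\kthrl-1) \leq \Bis(h,k) < 0$, a contradiction.

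Having ruled out (a)+(b)+(c) simultaneously, the remaining sum is nonzero only when $\Nrecishk \leq \sigmamax^2 \uishk$, which (by the same monotonicity-of-count argument used in the proof of Lemma~\ref{lem:Nbadgtr}) contributes at most $\lceil\sigmamax^2\uithk\rceil - \lceil\sigmamax^2\tauht\rceil$ additional terms when $\uithk \geq \tauht$, and zero terms when $\uithk < \tauht$. Assembling the pieces yields $\Nrecitpohk \leq \sigmamax^2 \max(\tauht,\uithk) + 1$. The main obstacle I anticipate is verifying carefully that the non-decreasing-$\Bit$-along-chosen-path property of \ucgetudrec{} is preserved despite refreshes (via \ucrefreshboundsintree{}) that change $\flbit$ and $\fubit$ between rounds --- but since the argument is applied at a single fixed round $s$ using $\Blbis, \Bubis$ evaluated at that round, and Lemma~\ref{lem:fgB} holds at each round individually, this causes no real difficulty.
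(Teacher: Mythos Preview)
Your proposal is correct and follows essentially the same route as the paper's proof: the same decomposition into the $\tauht$ contribution and the residual sum, the same contradiction argument using that $\udis=\normallocis$ forces \ucgetudrec{}'s chosen path to pass through $(h,k)$, and the same use of Lemma~\ref{lem:fgB} (non-decreasing $\Bis$ along the chosen path) together with Lemma~\ref{lem:Bitthreshnodes} to derive the contradiction. Your extra step propagating $\Bubis(h,k')<\threshi$ to all $k'\leq k$ via Lemma~\ref{lem:BubBlbmonotonicity} is correct but unnecessary here---the paper only needs $\Bishk<0$ at $(h,k)$ itself, which follows directly from $\fubishk<\threshi$ under $\Ecal$.
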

\begin{proof}
The statement is true for unexpanded nodes, so, as before we will consider $(h,k)\in\Tcalit$.
By following the same reasoning as in~\eqref{eqn:NitpohkDecomp}, we have,
\begin{align*}
\Nrecitpohk 
% &= \sum_{s=1}^t\indfone\left(\udis=\normallocis \wedge (h,k)\in\Pis \wedge \Nrecishk\leq \sigmamax^2\tauht\right)
% \\
%  &\hspace{0.5in}
%     +  \sum_{s=1}^t\indfone\left((h,k)\in\Pis \wedge \Nishk> \sigmamax^2\tauht \right)
% \\
&\leq \lceil\sigmamax^2\tauht\rceil 
    +  \sum_{s=\lceil\sigmamax^2\tauht\rceil}^t\indfone\left(\udis=\normallocis \wedge
            (h,k)\in\Pis \wedge \Nrecishk>
                \sigmamax^2\tauht \right).
\numberthis\label{eqn:NrecitpohkDecomp}
\end{align*}
To bound the summation in the RHS above,
consider any round $s\leq t$.
We will show, by way of contradiction, that the following statements cannot hold
simultaneously.
\[
\Nrecishk > \uishk\sigmamax^2,
\hspace{0.3in}
\Nrecishk> \sigmamax^2\tauht,
\hspace{0.3in}
(h,k)\in\Pis,
\hspace{0.3in}
\normallocis = \udis.
\]
Let $(\ell,\kthrl)$ be the last threshold node on the path from $(0,1)$ to $(h,k)$,
with children $(\ell+1,\kthrlpo-1)$ and $(\ell+1,\kthrlpo)$;
here, the left child $(\ell+1,\kthrlpo-1)$ is an ancestor of $(h,k)$ and $(\ell+1,\kthrlpo)$ is
the threshold node at height $\ell+1$ since $\Ihk\subset[0,\udtruei)$.
Since $\VSishk>\VSrecishk>\tauhs$, we also have $\VSiitthhkk{i}{s}{h''}{k''}>\tauht$ for all of
$(h,k)$'s ancestors $(h'',k'')$, including, in particular, $(l,\kthrl)$ and its ancestors.
Moreover, since $\udis$ was the chosen recommendation, the while loop in \ucgetudrecs
will have reached $(\ell,\kthrl)$ and have proceeded one more step to choose
$(\ell+1,\kthrlpo-1)$ since the while condition is satisfied.
% 
% Since $\VSithk\geq\VSrecithk>\tauht$
% Since $\udis=\normallocis$
By Lemma~\ref{lem:fgB}, and the fact that \ucgetudrecs (line~\ref{lin:nonparucgetudrec}, \algnonpar)
chooses the child with the larger $\Bit$ value at each node,
we have that the $\Bit$ values are non-decreasing along a chosen path.
This leads us to the following conclusion:
\[
\Bit(\ell+1,\kthrlpo) \leq
\Bit(\ell+1,\kthrlpo-1) \leq
\Bit(h,k) < 0.
\]
Here, the last inequality uses the definition of the event
$\Ecalit$~\eqref{eqn:Ecalit}, and that $\Nithk\geq \uithk\sigmamax^2\implies\VSithk\geq \uithk$.
However, by Lemma~\ref{lem:Bitthreshnodes}, $\Bit(\ell+1,\kthrlpo)>0$, which is a contradiction.

Finally, using~\eqref{eqn:NrecitpohkDecomp}, we obtain the following bound.
\begingroup
\allowdisplaybreaks
\begin{align*}
\Nrecitpohk \;
&\leq \lceil\sigmamax^2\tauht\rceil \,+ \\
&\hspace{0.1in}
    \sum_{s=\lceil\sigmamax^2\tauht\rceil}^t\hspace{-0.17in}\indfone\hspace{-0.03in}
            \left(\normallocis=\udis,
                (h,k)\in\Pis, \Nrecishk>
                \sigmamax^2\tauht, \Nrecishk \leq \sigmamax^2\uithk
            \right) \,+
\\
&\hspace{0.2in}
    \sum_{s=\lceil\sigmamax^2\tauht\rceil}^t\hspace{-0.17in}\indfone\hspace{-0.03in}
            \left(\normallocis=\udis,
                (h,k)\in\Pis, \Nrecishk>
                \sigmamax^2\tauht, \Nrecishk > \sigmamax^2\uithk
            \right)
\\
&\leq \lceil\sigmamax^2\tauht\rceil \,+ \\
&\hspace{0.1in}
    \sum_{s=\lceil\sigmamax^2\tauht\rceil}^t\hspace{-0.17in}\indfone\hspace{-0.03in}
            \left(\normallocit=\udit,
                (h,k)\in\Pis, \Nrecishk>
                \sigmamax^2\tauht, \Nrecishk \leq \sigmamax^2\uithk
            \right)
\end{align*}
\endgroup
The proof is completed by the same line of reasoning as at the end of the proof of
Lemma~\ref{lem:Nbadgtr} to show
$\Nrecitpohk\leq \max( \lceil\sigmamax^2\tauht\rceil, \lceil\sigmamax^2\uithk\rceil )$.
\end{proof}

We are now ready to prove the Theorem.
% There will be several similarities in the
Some of the calculations used in this proof will be similar to those used in the proof of
Lemma~\ref{lem:udbrubiq}, where we bound $\LiQ$.
However, for the sake of clarity, and to keep this proof self-contained, we will repeat those
calculations.

\textbf{\textit{Proof of Theorem~\ref{thm:nonparnsp}.}}
\textbf{Efficiency:}
Recall the bound on the loss from Lemma~\ref{lem:nsploss}.
By observing $(\allocit - \demtrueit)^+ = \volit(\normallocit - \udtruei)^+ \leq
\volmax(\normallocit - \udtruei)^+$,
and similarly, $(\demtrueit - \allocit)^+ \leq
\volmax(\udtruei - \normallocit)^+$,
we first relax this bound as follows.
\begin{align*}
\LOTT 
% &\leq 1 + \sum_{i=1}^n \sum_{t=2}^T (\allocit-\demit)^+ \,+\,
%     \sum_{i=1}^n \,\sum_{t= 2, \,\normallocit=\udit}^T \hspace{-0.1in}(\demit-\allocit)^+ \\
&\leq 1 + \volmax\sum_{i=1}^n \sum_{t=2}^T (\normallocit-\udtruei)^+ \,+\,
    \volmax\sum_{i=1}^n \,\sum_{t= 2, \,\normallocit=\udit}^T \hspace{-0.1in}(\udtruei-\normallocit)^+
% &\leq 1 + \volmax\sum_{i=1}^n\Bigg( \sum_{t=2}^T  \Bigg)
\numberthis\label{eqn:LTnonpardecompzero}
\end{align*}
We can now bound the individual summations above as shown below.
Recall the definitions of the quantities $\hG, \kidown,\kiup, \lidown,\ridown, \liup, \riup$
from Appendix~\ref{sec:nonparintermediatedefns}.
% By observing that
% $(\normallocit-\udit)^+\leq \udmax$ and $(\udit-\normallocit)^+\leq \udmax$,
% we can bound the two sums in the above display as follows:
We have:
\begingroup
\allowdisplaybreaks
\begin{align*}
\numberthis\label{eqn:LOTnonpardecompone}
\sum_{t=2}^T (\normallocit-\udtruei)^+
&\leq
\udmax\sum_{t=2}^T \indfone(\normallocit\in[\riup, \udmax])
\,+\,  \udmax \sum_{t=2}^T \indfone(\normallocit\in\IhGkiup)
\\
&\hspace{1.0in}
\,+\, \sum_{t=2}^T \indfone(\normallocit\in(\udtruei, \liup))\cdot(\normallocit-\udtruei).
\\
\numberthis\label{eqn:LOTnonpardecomptwo}
\sum_{t=2\,\normallocit=\udit}^T (\udtruei-\normallocit)^+
&\leq
  \udmax \sum_{t=2}^T \indfone(\normallocit\in[0, \lidown))
\,+\,  \udmax \sum_{t=2}^T \indfone(\normallocit\in\IhGkidown)
\\
&\hspace{1.0in}
+ \sum_{t=2}^T \indfone(\udit=\normallocit\wedge\normallocit\in(\ridown, \udtruei))%
            \cdot(\udtruei-\normallocit).
\end{align*}
\endgroup
Here,~\eqref{eqn:LOTnonpardecompone} follows from the fact that $(\normallocit-\udtruei)^+$ is
positive only when $\normallocit\in(\udtruei, \udmax] = (\udtruei, \liup) \cup [\liup, \riup) \cup
[\riup, \udmax]$.
Moreover, we have bound $(\normallocit-\udtruei) \leq \udmax$ when $\normallocit\in
(\udtruei, \liup) \cup [\liup, \riup)$.
We obtain~\eqref{eqn:LOTnonpardecomptwo} via an analogous reasoning on the interval $[0, \udtruei)$.
Combining this with~\eqref{eqn:LTnonpardecompzero}, we have the following bound.
\begingroup
\allowdisplaybreaks
\begin{align*}
&\LOTT \leq 1 + \volmax\udmax\sum_{i=1}^n
\left(\LioneT + \LitwoT + \LithreeT + \LifourT + \frac{1}{G}\LifiveT +
    \frac{1}{G}\LisixT \right),
\hspace{0.2in}
\text{where,}
\numberthis\label{eqn:LOTnonpardecomp}
\\
&\LioneT = \sum_{t=1}^T \indfone(\Hit < \hG),
\hspace{0.6in}
\LitwoT = \sum_{t=2}^T \indfone(\normallocit\in[0, \lidown) \cup [\riup, \udmax]
    \,\wedge\,\Hit\geq\hG),
\\
&\LithreeT = \sum_{t=2}^T \indfone(\normallocit\in\IhGkidown \,\wedge\,\Hit\geq\hG),
% \leq \Niitthhkk{i}{T+1}{\hG}{\kidown}
\hspace{0.6in}
\LifourT = \sum_{t=2}^T \indfone(\normallocit\in\IhGkiup \,\wedge\,\Hit\geq\hG),
\\
&\LifiveT = \sum_{t=2}^T \indfone(\normallocit\in(\udtruei, \liup)\,\wedge\,\Hit\geq\hG)
    \cdot(\payoffi(\normallocit)-\threshi),
\\
&\LisixT = 
 \sum_{t=2}^T \indfone(\udit=\normallocit\wedge\normallocit\in(\ridown, \udtruei)\,\wedge\,\Hit\geq\hG)%
   \cdot(\threshi-\payoffi(\threshi)).
\end{align*}
\endgroup
In~\eqref{eqn:LOTnonpardecomp}, $\LioneT$ bounds the number of rounds in which $\Hit<\hG$; therefore,
when bounding
each of the terms in~\eqref{eqn:LOTnonpardecompone} and~\eqref{eqn:LOTnonpardecomptwo}, we can
focus on the rounds where $\Hit\geq \hG$.
$\LitwoT$ accounts for the first term in the RHS of~\eqref{eqn:LOTnonpardecompone} and the first
term of~\eqref{eqn:LOTnonpardecomptwo}.
$\LithreeT$ accounts for the second term in the RHS of~\eqref{eqn:LOTnonpardecompone},
while $\LifourT$ accounts for the second term of~\eqref{eqn:LOTnonpardecomptwo}.
Finally, $\LifiveT$ accounts for the third term in the RHS of~\eqref{eqn:LOTnonpardecompone},
while $\LisixT$ accounts for the third term of~\eqref{eqn:LOTnonpardecomptwo}.
Here, we have used the fact that when $a \in (\ridown,\udtruei)\cup (\udtruei, \liup)
\subset (\udtruei-\epsG, \udtruei+\epsG)$,
we have, by Assumption~\ref{asm:ntg},
$|a - \udtruei| \leq (G/\udmax)|\payoffi(a) - \threshi|$.
We will now bound each of the above terms individually.
% Our techniques will be similar to those used for bounding
% the $\LioneQ,\LitwoQ,\LithreeQ,\LifourQ$ terms.

% We will repeat the argument for the sake of clarity.

\Litboundheader{Bounding $\LioneT$:}
We will bound this by summing up the $\Npithk$ values for all nodes up to height $\hG-1$.
When assigning points to nodes in \ucrecordfb, recall that we always proceed to the child node
if $\VSithk>\tauht$, in which case it is not counted in $\Npithk$.
Therefore, $\Npithk\leq 1 + \sigmamax^2\tauht$.
This leads us to the following bound,
\begin{align*}
\LioneT
&= \sum_{h=0}^{\hG-1} \sum_{k=1}^{2^h} \Npiitthhkk{i,}{T+1}{h}{k}
\leq \sum_{h=0}^{\hG-1} 2^h \left( 1+\sigmamax^2\frac{\betaTdagger^2}{L^2} 4^h \right)
\leq 2^{\hG} + \frac{\sigmamax^2\betaTdagger^2}{L^2}\sum_{h=0}^{\hG-1} 8^h
\\
&\leq 2^{\hG} + \frac{\sigmamax^2\betaTdagger^2}{7\Lf^2} 8^{\hG}
\leq \frac{8\Lf\udmax}{G\epsG} + \frac{512}{7}\frac{L\sigmamax^2\udmax^3\betaTdagger^2}{G^3\epsG^3}.
\numberthis\label{eqn:LioneTbound}
\end{align*}
Here, the fourth step uses the fact that $\sum_{h=0}^m 8^h = (8^{m+1}-1)/7$, and the last step
uses~\eqref{eqn:hG}.

\Litboundheader{Bounding $\LitwoT$:}
First observe that we can write
$\LitwoT= \sum_{k=1}^{2^\hG}\indfone(k<\kidown \,\vee\,k>\kiup) \NitpohGk$;
this follows from our definition of $\hG, \kiup, \kidown$, and the fact that $\Nithk$ counts all
evaluations at node $h$ and its children.
Additionally, we have the following relations
by the NTG condition and the definition of $\hG$ above,
\begin{align*}
\forall\,  k>\kiup, \quad
\Deltaiihhkk{i}{\hG}{k} &= \payoffi(\lhk) - \threshi
% \payoffi(\lhk) - \threshi
% \\
% &
= \payoffi(\lhk) - \payoffi(\udtruei+\epsG) + \payoffi(\udtruei+\epsG) - \payoffi(\udtruei)
\\
&\geq \payoffi(\udtruei+\epsG) - \payoffi(\udtruei)
\geq \frac{G\epsG}{\udmax},
\\
\forall\,  k<\kidown, \quad
\Deltaiihhkk{i}{\hG}{k} &= \threshi - \payoffi(\rhk)
\geq \payoffi(\udtruei) - \payoffi(\udtruei - \epsG)
\geq \frac{G\epsG}{\udmax}.
\end{align*}
Moreover,  from~\eqref{eqn:hG} we have
$\frac{L}{2^\hG} < \frac{G\epsG}{4\udmax}$, and 
$\frac{2^\hG}{L} < \frac{8\udmax}{G\epsG}$.
Applying these conclusions to Lemmas~\ref{lem:Nbadgtr} and~\ref{lem:Nbadles}, we have
for all $k<\kidown$ or $k>\kiup$,
\begin{align*}
\Niitthhkk{i,}{T+1}{\hG}{k} &\leq\,
\sigmamax^2 \max\left( \frac{\betaTtilde^2}{L^2}4^{\hG},
        \frac{4\betaTtilde^2}{\left(\Deltaihk-L2^{-h}\right)^2} \right) + 1
\\
&\leq
\sigmamax^2 \betaTtilde^2 \max\left( \frac{64\udmax^2}{G^2\epsG^2}, \frac{64\udmax^2}{9G^2\epsG^2}
        \right) + 1
\leq
\frac{64\udmax^2\sigmamax^2\betaTtilde^2}{G^2\epsG^2} + 1.
\end{align*}
Finally, by applying~\eqref{eqn:hG} once again, we have,
\begin{align*}
\LitwoT &= \sum_{k=1}^{2^\hG}\indfone(k<\kidown \,\vee\,k>\kiup) \Niitthhkk{i,}{T+1}{\hG}{k}
\leq 2^\hG  \left(\frac{64\udmax^2\sigmamax^2\betaTtilde^2}{9G^2\epsG^2} + 1 \right)
\\
&\leq \frac{512\Lf\udmax^3\sigmamax^2}{G^3\epsG^3}\betaTtilde^2 + \frac{8\Lf\udmax}{G\epsG}.
\numberthis\label{eqn:LitwoTbound}
\end{align*}

\Litboundheader{Bounding $\LithreeT$:}
Observe that we can write
$\LithreeT= \Niitthhkk{i,}{T+1}{\hG}{\kidown}$.
By the NTG condition and the definition of $\hG$~\eqref{eqn:hG} we have,
\begin{align*}
\Deltaiihhkk{i}{\hG}{\kidown} = \threshi - \payoffi(\ridown) \geq \frac{G}{\udmax}(\udtruei-\ridown) >
\frac{G\epsG}{2\udmax}.
\end{align*}
Additionally, by~\eqref{eqn:hG}, we have
$\frac{L}{2^\hG} < \frac{G\epsG}{4\udmax}$.
Applying Lemma~\ref{lem:Nbadles}, we get,
\begin{align*}
\LithreeT= \Niitthhkk{i,}{T+1}{\hG}{\kidown}
\leq \frac{64\udmax^2\sigmamax^2}{G^2\epsG^2}\betaTtilde^2 + 1.
\numberthis\label{eqn:LithreeTbound}
\end{align*}

\Litboundheader{Bounding $\LifourT$:}
Following a similar argument as $\LithreeT$ and via an application of
Lemma~\ref{lem:Nbadgtr}, we have
\begin{align*}
\LifourT= \Niitthhkk{i}{T+1}{\hG}{\kiup}
\leq \frac{64\udmax^2\sigmamax^2}{G^2\epsG^2}\betaTtilde^2 + 1.
\numberthis\label{eqn:LifourTbound}
\end{align*}

\Litboundheader{Bounding $\LifiveT$:}
Recall the definitions of $\Icalh, \Jcalh$ from Appendix~\ref{sec:nonparintermediatedefns}.
Let $H\geq\hG$ be a positive integer whose value will be determined shortly.
We will define three subsets of nodes $\Ncalone, \Ncaltwo,\Ncalthree$ in our infinite tree.
% (Recall, from the beginning of Appendix~\ref{sec:proofs_nonpar}, that a descendant of a node
% $(h,k)$ could be $(h,k)$ or its children, its children's children, etc.)
Let $\Ncalone$ denote the descendants of $\IcalH$;
let $\Ncaltwo = \bigcup_{h=\hG}^{H-1} \Icalh$;
and let $\Ncalthree$ denote the descendants of
$\bigcup_{h=\hG}^{H} \Jcalh$.
% \]
% For a set of nodes $A$, let
% $\descendant(A) = \{(h,k): (h,k)\in A \,\vee\, (h,k)\text{ is a descendant of some }
% (h',k')\in A \}$ denote the set nodes which are either in $A$ or are descendants of
% the nodes in $A$.
We can now see that $\LifiveT$ can be bound as follows.
\begin{align*}
\LifiveT \leq \Lcalone + \Lcaltwo + \Lcalthree,
\hspace{0.3in}
\Lcal_i =  \sum_{t=2}^T \indfone((\Hit,\Kit)\in\Ncal_i
        \,\wedge\,\Hit\geq\hG)
    \cdot(\payoffi(\normallocit)-\threshi).
\end{align*}
Recall from~\eqref{eqn:Nithk}, that $(\Hit,\Kit)$ are the last nodes in the path $\Pit$ chosen
by \ucrecordfb.
The above bound follows from the fact that for any allocation satisfying,
$\allocit\in(\udtruei, \liup) \wedge \Hit\geq \hG$,
the last node $(\Hit,\Kit)$ should be in $\Ncalone\cup\Ncaltwo\cup\Ncalthree$.

% \toworkon{maybe we can get rid of the $1/G$ in front of $\LifiveT$ as it isn't required for
% $\Ncalone$?}

By~\eqref{eqn:Icalhpayoffbound}, we have $\payoffi(a)-\threshi<4L2^{-H}$
when $a\in \Ihk$, for any $(h,k)\in\Ncalone$.
% By using~\eqref{eqn:Icalh} node $(h,k)$ in $\IcalH$,
% $\payoffi(a)-\threshi \leq 2 L 2^{-H}$,
% we can bound $\Lcalone$ simply by
This leads to the following straightforward bound for $\Lcalone$,
\begin{align*}
\numberthis\label{eqn:Lcalonebound}
\Lcalone =  \sum_{t=2}^T \indfone((\Hit,\Kit)\in\Ncalone
        \,\wedge\,\Hit\geq\hG)
    \cdot(\payoffi(\normallocit)-\threshi)
\leq \frac{4L}{2^{H}} T.
\end{align*}
Next, we bound $\Lcaltwo$ as shown below.
% we can bound $\Lcaltwo$ as follows.
\begin{align*}
\Lcaltwo
&= \sum_{h=\hG}^{H-1}\sum_{(h,k)\in\Icalh} \frac{4\Lf}{2^h}\NpiTpohk
\leq  \sum_{h=\hG}^{H-1} \frac{4\Lf}{2^h} \frac{4\Lf}{G}
\left(1 + \frac{ 4^h\sigmamax^2 \betaTtilde^2}{\Lf^2}\right)
\\
&\leq \frac{16\Lf^2}{G} \sum_{h=\hG}^{H-1} 2^{-h}
+  \frac{16\sigmamax^2 \betaTtilde^2}{G} \sum_{h=\hG}^{H-1} 2^{h}
\leq \frac{32\Lf^2}{G} 
+  \frac{16\sigmamax^2 \betaTtilde^2}{G} 2^H.
\numberthis\label{eqn:Lcaltwobound}
\end{align*}
In the first step we have used $\Npithk$,
since $\Lcaltwo$ only counts allocations where $(\Hit,\Kit)$ were in
$\bigcup_{h=\hG}^{H-1} \Icalh$;
additionally, we have used~\eqref{eqn:Icalhpayoffbound} to bound $\payoffi(\allocit) - \threshi$.
In the second step, by the same reasoning used in the bound for $\LioneT$, we have
$\NpiTpohk\leq 1 + \sigmamax^2\tauhT$ for any node $(h,k)$;
moreover, we have used~\eqref{eqn:IcalhJcalh} to bound the number of nodes in $\Icalh$.
The remaining steps are obtained by algebraic manipulations.
% leads to the following bound for $\Lcaltwo$,
Finally, we bound $\Lcalthree$ as follows.
\begin{align*}
\Lcalthree
&= \sum_{h=\hG}^{H}\sum_{(h,k)\in\Jcalh} \frac{8\Lf}{2^{h}}\NiTpohk
\leq  \sum_{h=\hG}^{H} \frac{8\Lf}{2^h} \frac{8\Lf}{G}
\left(1 + \frac{ 4\betaTtilde^2\sigmamax^2}{\Lf^2}4^h\right)
\\
&\leq \frac{64\Lf^2}{G} \sum_{h=\hG}^{H} 2^{-h}
+  \frac{256\sigmamax^2 \betaTtilde^2}{G} \sum_{h=\hG}^{H} 2^{h}
\leq \frac{128\Lf^2}{G} 
+  \frac{512\sigmamax^2 \betaTtilde^2}{G} 2^H.
\numberthis\label{eqn:Lcalthreebound}
\end{align*}
Above, in the first step we have used $\Nithk$ (instead of $\Npithk$),
since $\Lcalthree$ counts allocations where $(\Hit,\Kit)$ belonged to the descendants of
$\bigcup_{h=\hG}^{H-1} \Jcalh$;
additionally, we have used~\eqref{eqn:Icalhpayoffbound} and the fact that
parents of nodes in $\Jcalh$ are in $\Icalhmo$ to bound $\payoffi(\allocit) - \threshi$.
In the second step, first we have used~\eqref{eqn:IcalhJcalh} to bound the number of nodes in
$\Jcalh$;
to bound the number of evaluations in each such node, we have applied Lemma~\ref{lem:Nbadgtr}
along with the fact that $\Deltaihk>2L2^{-h}$ for nodes in $\Jcalh$ by their definition;
therefore,
\[
\NiTpohk \leq 1 + \sigmamax^2 \max\left(\frac{\betaTtilde^2}{L^2}4^h,
\frac{4\betaTtilde^2}{\left(\Deltaihk-L2^{-h}\right)^2} \right)
\leq 1 + \frac{4\sigmamax^2 \betaTtilde^2}{\Lf^2} 4^h.
\]
The remaining steps in~\eqref{eqn:Lcalthreebound} are obtained via algebraic manipulations.
Combining~\eqref{eqn:Lcalonebound},~\eqref{eqn:Lcaltwobound}, and~\eqref{eqn:Lcalthreebound}
results in the following bound for $\LifiveT$.
\begin{align*}
\LifiveT \leq \frac{4L}{2^{H}} T
+  \frac{528\sigmamax^2 \betaTtilde^2}{G} 2^H
+ \frac{160\Lf^2}{G}
\leq C'\frac{\Lf^{\nicefrac{1}{2}}\sigmamax \betaTtilde T^{\nicefrac{1}{2}}}{G^{\nicefrac{1}{2}}}
+ \frac{160\Lf^2}{G}
\numberthis\label{eqn:LifiveTbound}
\end{align*}
Here $C'$ is a global constant.
The last step is obtained by choosing $H$ such that $2^H \asymp
\frac{\sqrt{GLT}}{\sigmamax\betaTtilde}$.

\Litboundheader{Bounding $\LisixT$:}
Our method for obtaining this bound follows along similar lines to the bound of $\LifiveT$,
but using the $\Icalph,\Jcalph$ sets as defined in Appendix~\ref{sec:nonparintermediatedefns}.
Therefore, we will outline the argument highlighting only the important differences.

As before, let $H\geq\hG$ be a positive integer whose value will be determined shortly.
Next, let $\Ncalone$ denote the descendants of $\IcalpH$;
let $\Ncaltwo = \bigcup_{h=\hG}^{H-1} \Icalph$;
and let $\Ncalthree$ denote the descendants of
$\bigcup_{h=\hG}^{H} \Jcalph$.
It follows that $\LisixT$ can be bound as follows.
\begin{align*}
\LisixT \leq \Lcalone + \Lcaltwo + \Lcalthree,
\hspace{0.20in}
\Lcal_i =  \sum_{t=2}^T \indfone(\udit=\normallocit \,\wedge\, (\Hit,\Kit)\in\Ncal_i
        \,\wedge\,\Hit\geq\hG)
    \cdot(\payoffi(\normallocit)-\threshi).
\end{align*}
We will now bound $\Lcalone, \Lcaltwo, \Lcalthree$.
As in~\eqref{eqn:Lcalonebound}, we can bound 
$\Lcalone
\leq \frac{4L}{2^{H}} T$.
Denoting
$\Nrecpithk = \sum_{s=1}^{t-1} \indfone(\udis=\normallocis, (H_{it}, K_{it}) = (h,k))$,
we bound $\Lcaltwo$ as shown below.
\begingroup
\allowdisplaybreaks
\begin{align*}
\Lcaltwo
&= \sum_{h=\hG}^{H-1}\sum_{(h,k)\in\Icalh} \frac{4\Lf}{2^h}\NrecpiTpohk
\leq \sum_{h=\hG}^{H-1}\sum_{(h,k)\in\Icalh} \frac{4\Lf}{2^h}\NreciTpohk
\\
&\leq \frac{32\Lf^2}{G} 
+  \frac{16\sigmamax^2 \betatwoT^2}{G} 2^H.
% \numberthis\label{eqn:Lcaltwobound}
\end{align*}
\endgroup
The first step simply uses $\Nrecpithk\leq\Nrecithk$, while the last step follows from the same
calculations as in~\eqref{eqn:Lcaltwobound}.
Finally, we have the following bound for $\Lcalthree$.
\begin{align*}
\Lcalthree
&= \sum_{h=\hG}^{H}\sum_{(h,k)\in\Icalh} \frac{8\Lf}{2^{h-1}}\NreciTpohk
\leq  \sum_{h=\hG}^{H} \frac{8\Lf}{2^h} \frac{8\Lf}{G}
\left(1 + \frac{ 4\betaTtilde^2\sigmamax^2}{\Lf^2}4^h\right)
\\
&\leq \frac{128\Lf^2}{G} 
+  \frac{512\sigmamax^2 \betaTtilde^2}{G} 2^H.
\end{align*}
Above, in the first step we have used the bounds on $\threshi-\payoffi$ for points
in $\Jcalph$ given at the end of Appendix~\ref{sec:nonparintermediatedefns}.
To bound $\Nrecithk$, we have applied Lemma~\ref{lem:Nbadles}
along with the fact that $\Deltaihk>2L2^{-h}$ for nodes in $\Jcalph$ by their definition;
therefore,
\[
\Nrecithk \leq 1 + \sigmamax^2 \max\left(\frac{\betat^2}{L^2}4^h,
\frac{4\betat^2}{\left(\Deltaihk-L2^{-h}\right)^2} \right)
\leq 1 + \frac{4\sigmamax^2 \betat^2}{\Lf^2} 4^h.
\]
The remainder of the calculations for bounding $\Lcalthree$, are similar
to~\eqref{eqn:Lcalthreebound}.
The expressions for the bounds for $\Lcalone,\Lcaltwo,\Lcalthree$ are identical to the bounds
for $\LifiveT$, and hence we obtain the following bound for $\LisixT$,
which is the same as the RHS of~\eqref{eqn:LifiveTbound}. 
Here $C'$ is a global constant.
\begin{align*}
\LisixT
\leq C'\frac{\Lf^{\nicefrac{1}{2}}\sigmamax \betaT T^{\nicefrac{1}{2}}}{G^{\nicefrac{1}{2}}}
+ \frac{160\Lf^2}{G}
\numberthis\label{eqn:LisixTbound}
\end{align*}
% The last step is obtained by choosing $H$ such that $2^H \asymp \frac{\sqrt{GLT}}{\sigmamax\betat}$.

The bound on the loss is obtained by combining~\eqref{eqn:LOTnonpardecomp},%
~\eqref{eqn:LioneTbound},%
~\eqref{eqn:LitwoTbound},%
~\eqref{eqn:LithreeTbound},%
~\eqref{eqn:LifourTbound},%
~\eqref{eqn:LifiveTbound},
and~\eqref{eqn:LisixTbound}
and observing that
$\betaTtilde\leq \betatwoT\asymp \sqrt{\log(nT/\delta)}$~\eqref{eqn:trelationttilde}.

\textbf{Fairness:}
We will bound $\UeiiT-\UiT$ for our asymptotic fairness result in the following manner.
% We will use the bound on $\UeiiT-\UiT$ in Lemma~\ref{lem:nspfairness} to prove our asymptotic
% fairness result.
\begin{align*}
\UeiiT - \UiT
&\leq \sum_{t=1}^T
    \indfone(\udit=\normallocit\,\wedge\,\normallocit<\udtruei) \cdot \left(\utili(\udtruei) - \utili(\udit)\right)
\\
&\leq
\sum_{t=2}^T \indfone(\normallocit\in[0, \lidown) \,\wedge\,\normallocit=\udit) \Lipi\udmax
\,+\, \sum_{t=2}^T \indfone(\normallocit\in\IhGkidown\,\wedge\,\normallocit=\udit) \Lipi\udmax
\\
&\hspace{1.0in}
+ \sum_{t=2}^T \indfone(\udit=\normallocit\wedge\normallocit\in[\ridown, \udtruei))%
            \cdot\Lipi(\udtruei-\normallocit).
\\
&\leq \Lipi\udmax\left(\LioneT + \LitwoT + \LifourT\right) + \frac{\Lipi\udmax}{G}\LisixT.
\end{align*}
Here, the first step uses the bound for $\UeiiT-\UiT$ in Lemma~\ref{lem:nspfairness}.
The second step
decomposes the allocations in $[0, \udtruei)$ into the intervals
$[0,\lidown)$, $[\lidown, \ridown)$, and $[\ridown, \udtruei)$ and used $\Lipi$-Lipschitzness
of the utility functions.
Moreover, for the first two summations, we have used the fact that
$\utili(\udtruei)-\utili(\udit) \leq \Lipi(\udtruei-\udit)^+ \leq \Lipi\udmax$.
The last step is obtained by comparing the expressions for
$\LioneT, \LitwoT, \LifourT, \LisixT$  in~\eqref{eqn:LOTnonpardecomptwo}
and~\eqref{eqn:LOTnonpardecomp}.
% $\LioneT, \LitwoT, \LifourT, \LisixT$ are as defined in~\eqref{eqn:LOTnonpardecomp}.
The claim follows from the bounds
% for $\LioneT, \LitwoT, \LifourT, \LisixT$
in%
~\eqref{eqn:LioneTbound},%
~\eqref{eqn:LitwoTbound},%
~\eqref{eqn:LifourTbound},
and~\eqref{eqn:LisixTbound}.
\qedwhite

{
\bibliography{kk,bib_mmf}
}

\end{document}